\newtheorem{theorem}{Theorem}
\newtheorem{proposition}{Proposition}
\newtheorem{lemma}{Lemma}
\theoremstyle{definition}
\newtheorem{definition}{Definition}
\theoremstyle{remark}
\newtheorem{remark}{Remark}
\DeclareMathOperator*{\argmin}{arg\,min}
\newcommand\numberthis{\addtocounter{equation}{1}\tag{\theequation}}
\newcommand*\diff{\mathop{}\!\mathrm{d}}
\newsavebox{\leftbox}
\newsavebox{\rightbox}
\newcommand{\cmark}{\ding{51}}%
\newcommand{\xmark}{\ding{55}}%
\definecolor{Gray}{gray}{0.9}
\newcommand{\cc}[1]{\cellcolor{gray!#1}}
\newcommand{\printfnsymbol}[1]{%
	\textsuperscript{\@fnsymbol{#1}}%
}
\newcommand{\leqnomode}{\tagsleft@true}
\newcommand{\reqnomode}{\tagsleft@false}
\title{Maximum Likelihood Training of\\Implicit Nonlinear Diffusion Models}
\author{%
	Dongjun Kim$^{*}$\\
	KAIST \\
	\texttt{dongjoun57@kaist.ac.kr} \\
	\And
	Byeonghu Na$^{*}$\\
	KAIST \\
	\texttt{wp03052@kaist.ac.kr} \\
	\And
	Se Jung Kwon \\
	NAVER CLOVA \\
	\And
	Dongsoo Lee \\
	NAVER CLOVA \\
	\And
	Wanmo Kang \\
	KAIST \\
	\And
	Il-Chul Moon \\
	KAIST / Summary.AI \\
}
\begin{document}

	\maketitle
	\doparttoc
	\parttoc
	\def\thefootnote{*}\footnotetext{Equal contribution}
	\def\thefootnote{\arabic{footnote}}
	
	\begin{abstract}
		Whereas diverse variations of diffusion models exist, extending the linear diffusion into a nonlinear diffusion process is investigated by very few works. The nonlinearity effect has been hardly understood, but intuitively, there would be promising diffusion patterns to efficiently train the generative distribution towards the data distribution. This paper introduces a data-adaptive nonlinear diffusion process for score-based diffusion models. The proposed Implicit Nonlinear Diffusion Model (INDM) learns by combining a normalizing flow and a diffusion process. Specifically, INDM implicitly constructs a nonlinear diffusion on the \textit{data space} by leveraging a linear diffusion on the \textit{latent space} through a flow network. This flow network is key to forming a nonlinear diffusion, as the nonlinearity depends on the flow network. This flexible nonlinearity improves the learning curve of INDM to nearly Maximum Likelihood Estimation (MLE) against the non-MLE curve of DDPM++, which turns out to be an inflexible version of INDM with the flow fixed as an identity mapping. Also, the discretization of INDM shows the sampling robustness. In experiments, INDM achieves the state-of-the-art FID of 1.75 on CelebA. We release our code at \url{https://github.com/byeonghu-na/INDM}.
	\end{abstract}
	
	\reqnomode
	
	\section{Introduction}
	\label{sec:Introduction}
	
	Diffusion models have recently achieved success on the task of sample generation, and some works \cite{song2020score, dhariwal2021diffusion} claim state-of-the-art performance over Generative Adversarial Networks (GAN) \citep{karras2019style}. This success is highlighted particularly in likelihood-based models, including normalizing flows \citep{grcic2021densely}, autoregressive models \citep{parmar2018image}, and Variational Auto-Encoders (VAE) \citep{vahdat2020nvae}. Moreover, this success is noteworthy because it is achieved merely using linear diffusion processes, such as Variance Exploding (VE) Stochastic Differential Equation (SDE) \citep{song2020improved}, and Variance Preserving (VP) SDE \citep{ho2020denoising}.
	
	This paper extends linear diffusions of VE/VP SDEs to a data-adaptive trainable nonlinear diffusion. To motivate the extension, though there are structural similarities between diffusion models and VAEs, the inference part of a linear diffusion process has not been trained while its counterpart of VAE (the encoder) has been trained. We introduce Implicit Nonlinear Diffusion Models (INDM) to train its \textit{forward} SDE, the inference part in diffusion models. INDM constructs the nonlinearity of the data diffusion by transforming a linear \textit{latent} diffusion back to the data space.
	
	We implement the transformation between the data and latent spaces with a normalizing flow. The invertibility of the flow mapping is key to learning a nonlinear inference part. Invertibility is necessary for constructing the nonlinearity, and we clarify this by comparing INDM with LSGM \cite{vahdat2021score}, a latent diffusion model with VAE. Altogether, INDM provides the following advantages over the existing models.
	\vspace{-2mm}
	\begin{itemize}\setlength\itemsep{0.2em}
	\item INDM achieves fast and tractable optimization with \textit{implicit} modeling.
	\item INDM learns not only drift but \textit{volatility} coefficients of the forward SDE.
	\item INDM trains its network with \textit{Maximum Likelihood Estimation} (MLE).
	\item INDM is \textit{robust} on the sampling discretization.
	\end{itemize}
	\vspace{-2mm}
	
	\section{Preliminary}\label{preliminary}
	
	A diffusion model is constructed with bidirectional \textit{forward} and \textit{reverse} stochastic processes. 
	
	\textbf{Forward and Reverse Diffusions} A forward diffusion process diffuses an input data variable, $\mathbf{x}_{0}\sim p_{r}$, to a noise variable, and the corresponding reverse diffusion process \cite{anderson1982reverse} of this forward diffusion denoises a noise variable to regenerate the input variable. The forward diffusion is fully described by an SDE of $\diff\mathbf{x}_{t}=\mathbf{f}(\mathbf{x}_{t},t)\diff t+\mathbf{G}(\mathbf{x}_{t},t)\diff\mathbf{w}_{t}$, and the corresponding reverse SDE becomes $\diff\mathbf{x}_{t}=\big[\mathbf{f}(\mathbf{x}_{t},t)-\text{div}(\mathbf{G}\mathbf{G}^{T})(\mathbf{x}_{t},t)-(\mathbf{G}\mathbf{G}^{T})(\mathbf{x}_{t},t)\nabla_{\mathbf{x}_{t}}\log{p_{t}(\mathbf{x}_{t})}\big]\diff \bar{t}+\mathbf{G}(\mathbf{x}_{t},t)\diff\mathbf{\bar{w}}_{t}$. Here, $\mathbf{w}_{t}\in\mathbb{R}^{d}$ is an abstraction of a random walk process with independent increments, where $d$ is the data dimension, and $\diff\mathbf{\bar{w}}_{t}$ is the standard Wiener processes with backwards in time. 
	
	\textbf{Generative Diffusion} Having that the drift ($\mathbf{f}\in\mathbb{R}^{d}$) and the volatility ($\mathbf{G}\in\mathbb{R}^{d\times d}$) terms are given a-priori, diffusion models \citep{song2020score} estimate the data score, $\nabla_{\mathbf{x}_{t}}\log{p_{t}(\mathbf{x}_{t})}$, with the score network, $\mathbf{s}_{\bm{\theta}}(\mathbf{x}_{t},t)$. By plugging the score network in the data score, we obtain another diffusion process, called the \textit{generative} SDE, described by $\diff\mathbf{x}_{t}^{\bm{\theta}}=\big[\mathbf{f}(\mathbf{x}_{t}^{\bm{\theta}},t)-\text{div}(\mathbf{G}\mathbf{G}^{T})(\mathbf{x}_{t}^{\bm{\theta}},t)-(\mathbf{G}\mathbf{G}^{T})(\mathbf{x}_{t}^{\bm{\theta}},t)\mathbf{s}_{\bm{\theta}}(\mathbf{x}_{t}^{\bm{\theta}},t)\big]\diff \bar{t} + \mathbf{G}(\mathbf{x}_{t}^{\bm{\theta}},t)\diff\mathbf{\bar{w}}_{t}$. Starting from a prior distribution of $\mathbf{x}_{T}^{\bm{\theta}}\sim\pi$ and solving the SDE time backwards, \citet{song2020score} construct the generative stochastic process of $\{\mathbf{x}_{t}^{\bm{\theta}}\}_{t=0}^{T}$ that perfectly reconstructs the reverse process of $\{\mathbf{x}_{t}\}_{t=0}^{T}$ under two conditions: 1) $\mathbf{s}_{\bm{\theta}}(\mathbf{x}_{t},t)=\nabla_{\mathbf{x}_{t}}\log{p_{t}(\mathbf{x}_{t})}$ and 2) $\mathbf{x}_{T}\sim\pi$. We define a generative distribution, $p_{\bm{\theta}}$, as the distribution of $\mathbf{x}_{0}^{\bm{\theta}}$.
	
	\begin{figure*}[t]
		\centering
		\includegraphics[width=\linewidth]{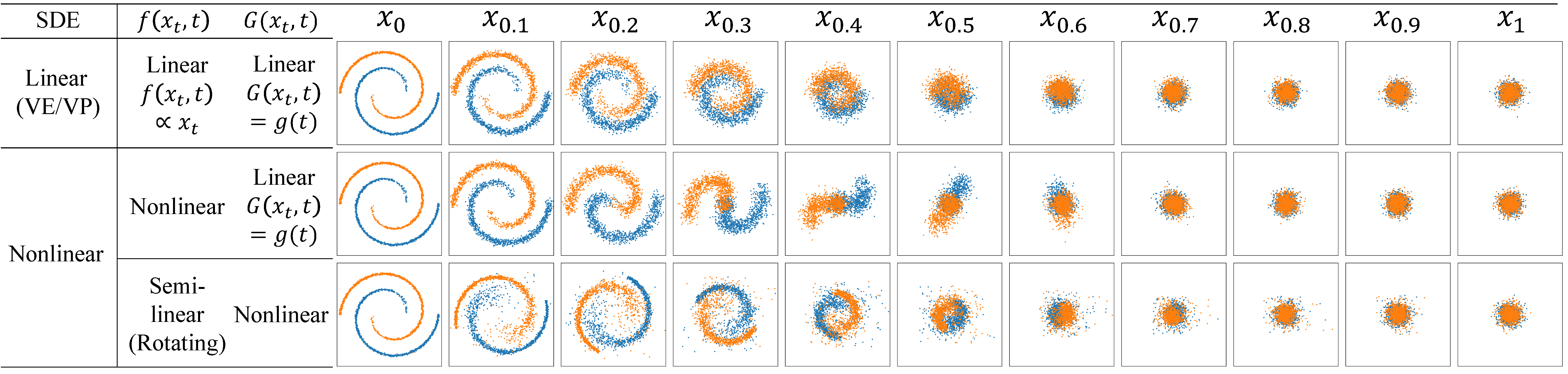}
		\caption{Examples of linear (top row) and nonlinear (middle/bottom rows) diffusion processes.}
		\label{fig:thumbnail_icml}
	\end{figure*}
	
	\textbf{Score Estimation} The diffusion model estimates the data score with the score network by minimizing the denoising score loss \citep{song2020score}, given by $\mathcal{L}(\{\mathbf{x}_{t}\}_{t=0}^{T},\lambda;\bm{\theta})=\int_{0}^{T}\lambda(t)\mathbb{E}_{\mathbf{x}_{0}, \mathbf{x}_{t}}[\Vert\mathbf{s}_{\bm{\theta}}(\mathbf{x}_{t},t)-\nabla_{\mathbf{x}_{t}}\log{p_{0t}(\mathbf{x}_{t}\vert\mathbf{x}_{0})}\Vert_{2}^{2}]\diff t$, where $p_{0t}(\mathbf{x}_{t}\vert\mathbf{x}_{0})$ is a transition probability of $\mathbf{x}_{t}$ given $\mathbf{x}_{0}$; and $\lambda$ is the weighting function that determines the level of contribution for each diffusion time. When $\mathbf{G}(\mathbf{x}_{t},t)=g(t)$, \citet{song2021maximum, huang2021variational} proved that this loss with the likelihood weighting ($\lambda=g^{2}$) turns out to be a variational bound of the negative log-likelihood: $\mathbb{E}_{\mathbf{x}_{0}}[-\log{p_{\bm{\theta}}(\mathbf{x}_{0})}]\le \mathcal{L}(\{\mathbf{x}_{t}\}_{t=0}^{T},g^{2};\bm{\theta})-\mathbb{E}_{\mathbf{x}_{T}}[\log{\pi(\mathbf{x}_{T})}]$, up to a constant, see Appendix \ref{appendix:derivation_of_variational_bound} for a detailed discussion.
	
	\textbf{Choice of Drift ($\mathbf{f}$) and Volatility ($\mathbf{G}$) Terms} The original diffusion model strictly limits the scope of diffusion process to be a family of linear diffusions that $\mathbf{f}$ is a linear function of $\mathbf{x}_{t}$ and $\mathbf{G}$ is an identity matrix multiplied by a $t$-function. For instance, VESDE \citep{song2020score, song2020improved} satisfies $\mathbf{f}\equiv 0$ with $\mathbf{G}=\sqrt{\diff\sigma^{2}(t)/\diff t}\mathbf{I}$ and VPSDE \citep{song2020score, ho2020denoising} satisfies $\mathbf{f}=-\frac{1}{2}\beta(t)\mathbf{x}_{t}\propto\mathbf{x}_{t}$ with $\mathbf{G}=\sqrt{\beta(t)}\mathbf{I}$. Few concurrent works have extended linear diffusions to nonlinear diffusions by 1) applyng a latent diffusion using VAE in LSGM \cite{vahdat2021score}, 2) applying a flow network to nonlinearize the drift term in DiffFlow \cite{zhang2021diffusion}, and 3) reformulating the diffusion model into a Schrodinger Bridge Problem (SBP) \cite{vargas2021solving,de2021diffusion,chen2021likelihood}. We further analyze these approaches in Section \ref{sec:related_work}. 
	
	\section{Motivation of Nonlinear Diffusion Process}\label{sec:motivation}
	
	Figure \ref{fig:thumbnail_icml} illustrates various diffusion processes on a spiral toy dataset. In the top row, the diffusion path of VPSDE keeps its overall structure of the initial data manifold during the data deformation procedure to $\mathcal{N}(0,\mathbf{I})$. The drift vector field illustrated in Figure \ref{fig:spiral}-(a) as black arrows presents that VPSDE \textit{linearly} deforms its data distribution.

	\begin{wrapfigure}{r}{0.5\textwidth}
		\vskip -0.18in
		\centering
		\begin{subfigure}{0.32\linewidth}
			\includegraphics[width=\linewidth]{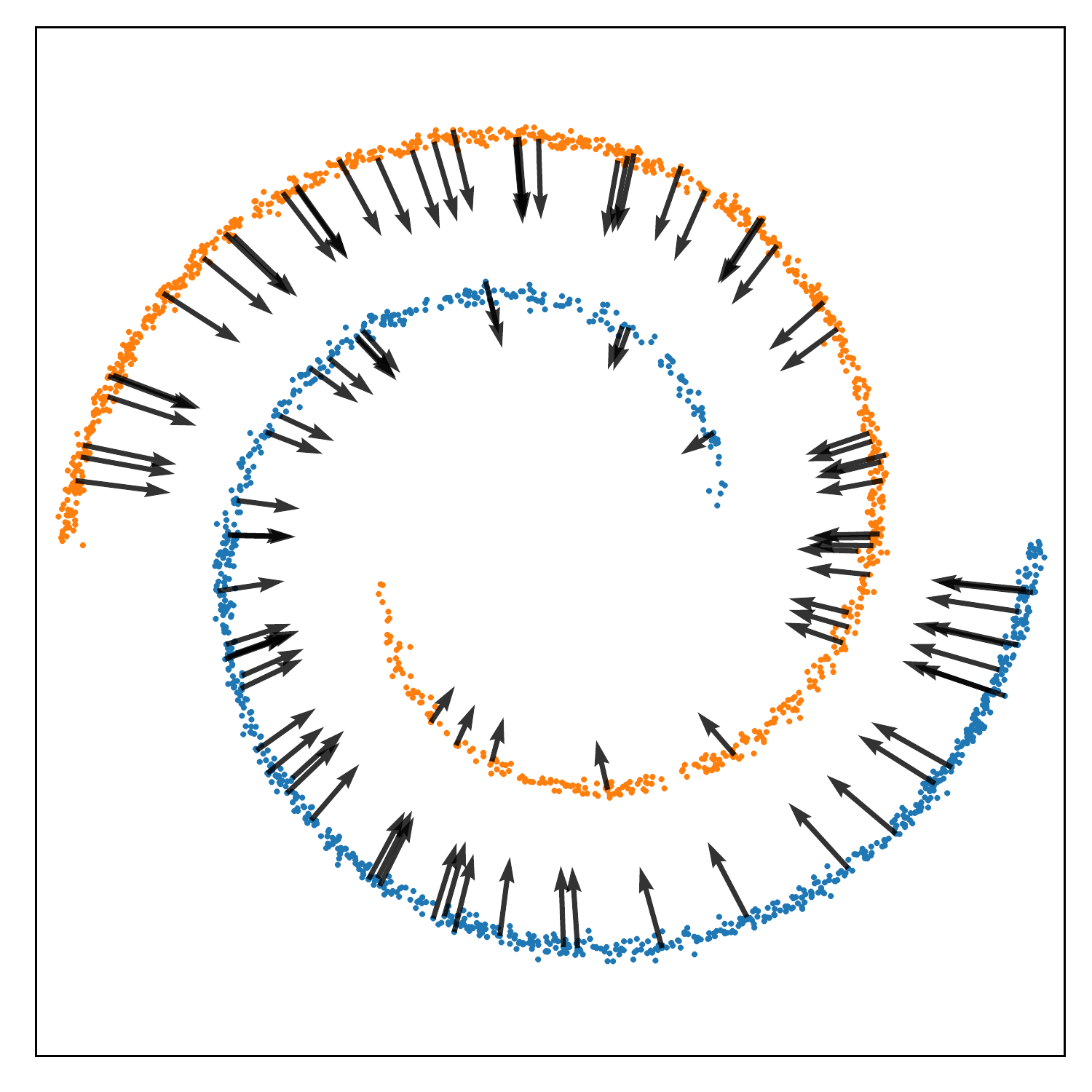}
			\subcaption{Linear $\mathbf{f}\propto\mathbf{x}_{t}$}
		\end{subfigure}
		\hfill
		\begin{subfigure}{0.32\linewidth}
			\includegraphics[width=\linewidth]{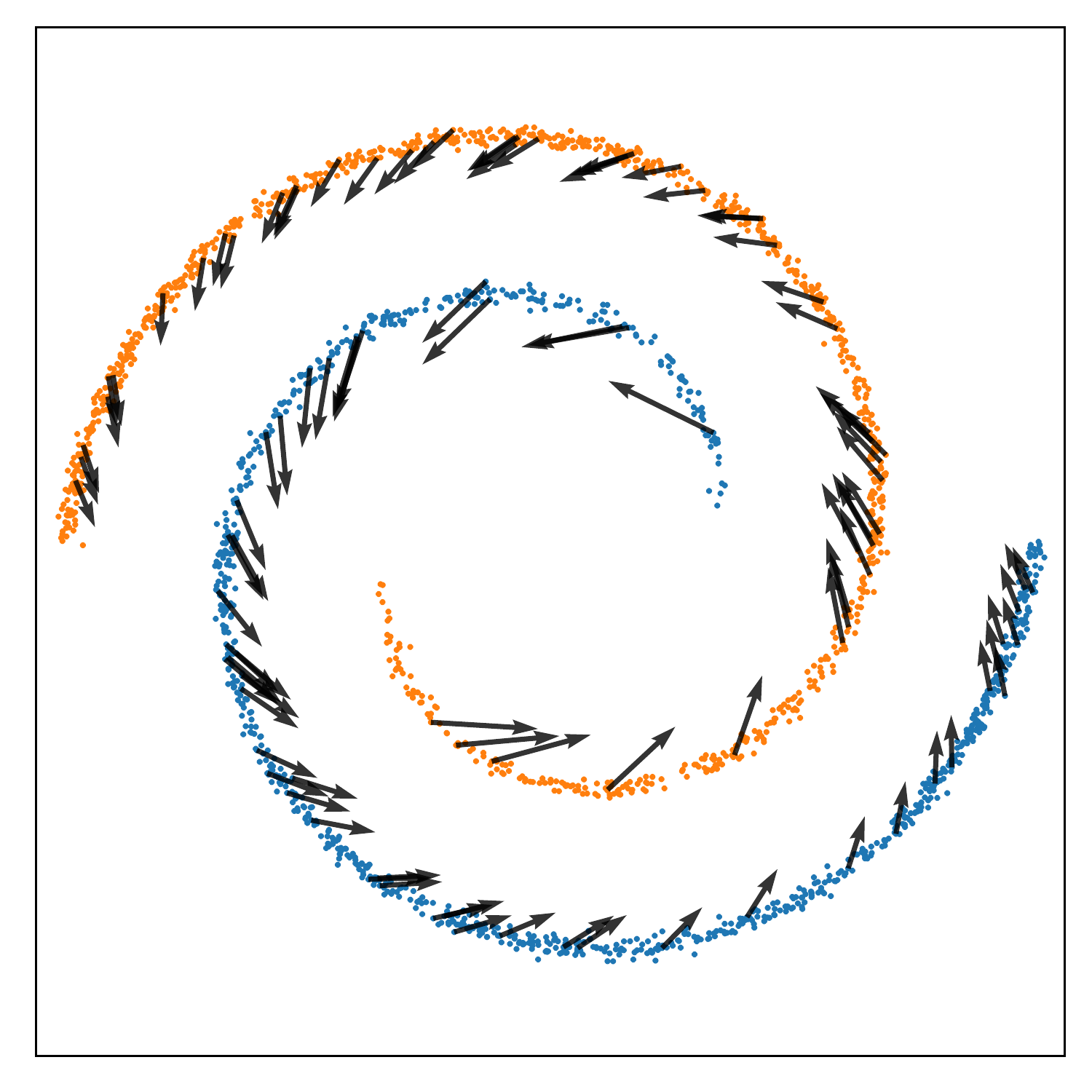}
			\subcaption{Nonlinear $\mathbf{f}$}
		\end{subfigure}
		\hfill
		\begin{subfigure}{0.32\linewidth}
			\includegraphics[width=\linewidth]{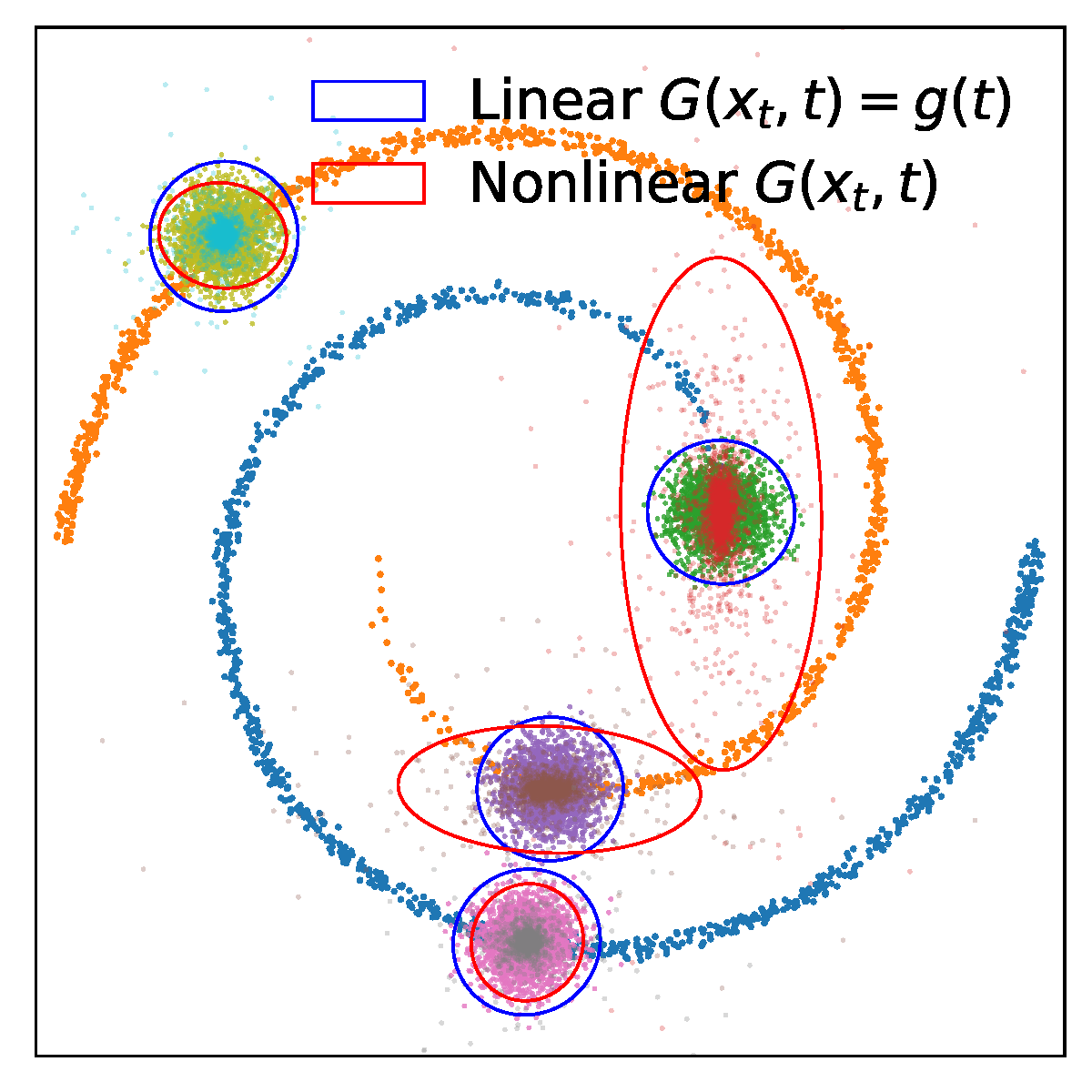}
			\subcaption{Nonlinear $\mathbf{G}$}
		\end{subfigure}
		\vskip -0.05in
		\caption{Vector fields on various SDEs at $t=0$.}
		\label{fig:spiral}
		\vskip -0.15in
	\end{wrapfigure}
	Unlike the linear diffusion, the middle row of Figure \ref{fig:thumbnail_icml} with a nonlinear drift shows that the data is not linearly deformed to $\mathcal{N}(0,\mathbf{I})$. Figure \ref{fig:spiral}-(b) illustrates the corresponding vector field, in which two distinctive components (orange/blue) are forced to separate each other. The nonlinearity of the drift term represented as rotating black arrows is the source of such nonlinear deformation at the intermediate steps, $\mathbf{x}_{0.2}\sim\mathbf{x}_{0.6}$. When it comes to the volatility term, the last row of Figure \ref{fig:thumbnail_icml} presents the process with nonlinear $\mathbf{G}$. Figure \ref{fig:spiral}-(c) illustrates the covariance matrices of the perturbation distribution at $t=0$ with linear and nonlinear volatility terms, where the perturbation distribution induced by the volatility term is $\mathcal{N}(0,\mathbf{G}(\mathbf{x}_{t},t)\mathbf{G}^{T}(\mathbf{x}_{t},t))$\footnote{The covariance is $\frac{\diff}{\diff t}\mathbb{E}_{\mathbf{x}_{t+\diff t}\vert\mathbf{x}_{t}}[(\mathbf{x}_{t+\diff t}-\mathbf{x}_{t}-\mathbf{f}\diff t)(\mathbf{x}_{t+\diff t}-\mathbf{x}_{t}-\mathbf{f}\diff t)^{T}]=\mathbf{G}(\mathbf{x}_{t},t)\mathbf{G}^{T}(\mathbf{x}_{t},t)$.}. It shows the non-diagonal and data-dependent covariances of $\mathbf{G}\mathbf{G}^{T}$ in red ellipses of a nonlinear volatility term, and the isotropic blue circles of linear diffusions.
	
	\begin{figure*}[t]
		\vskip -0.05in
		\centering
		\includegraphics[width=\linewidth]{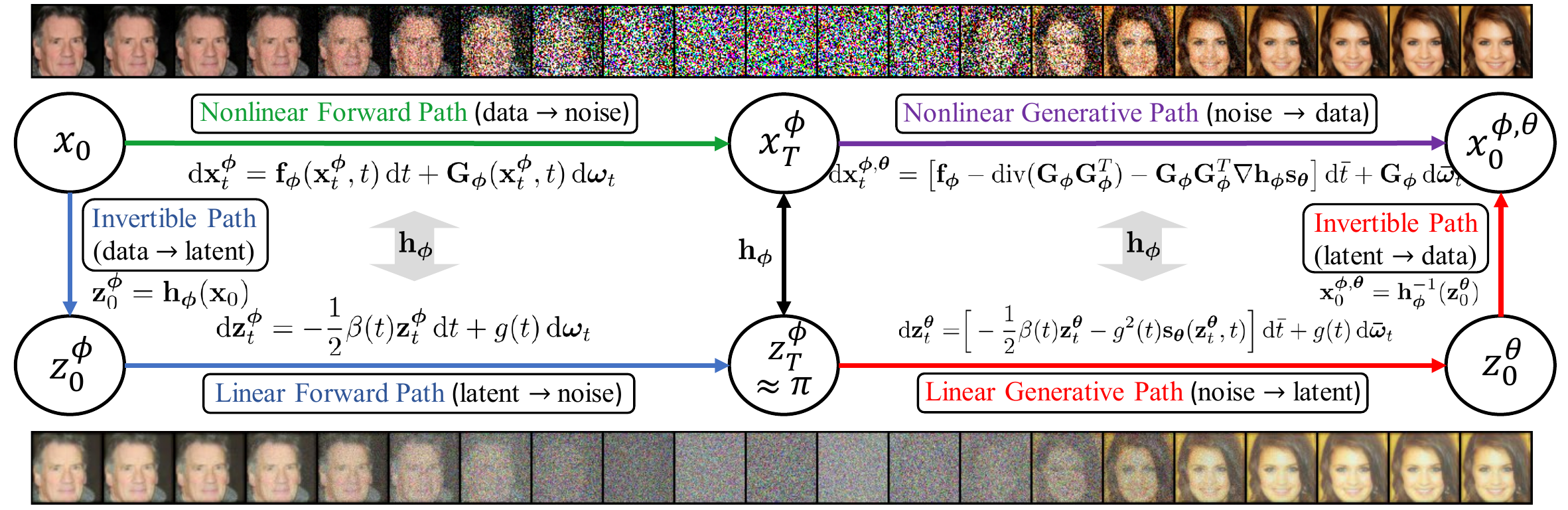}
		\caption{INDM attains a ladder structure between the data space and the latent space. The latent vector is visualized by normalizing the latent value, see Appendix \ref{appendix:visualization} for further visualization.}
		\label{fig:pdm_gm}
		\vskip -0.1in
	\end{figure*}
	
	\leqnomode
	
	\section{Implicit Nonlinear Diffusion Model}\label{sec:methodology}
	
	There are two ways to nonlinearize the drift and volatility coefficients in SDE: explicit and implicit parametrizations. While explicit is a straightforward way to model the nonlinearity, it becomes impractical particularly in the training procedure. Concretely, in each of the training iteration, the denoising loss $\mathcal{L}(\{\mathbf{x}_{t}\}_{t=0}^{T},\lambda;\bm{\theta})$ requires 1) the perturbed samples $\mathbf{x}_{t}$ from $p_{0t}(\mathbf{x}_{t}\vert\mathbf{x}_{0})$ and 2) the calculation of $\nabla\log{p_{0t}(\mathbf{x}_{t}\vert\mathbf{x}_{0})}$, and these two steps require long execution time because the transition probability $p_{0t}(\mathbf{x}_{t}\vert\mathbf{x}_{0})$ is intractable for nonlinear diffusions in general. Therefore, we parametrize $\mathbf{f}_{\bm{\phi}}$ and $\mathbf{G}_{\bm{\phi}}$ \textit{implicitly} for fast and tractable optimization. As visualized in Figure \ref{fig:pdm_gm}, we impose a linear diffusion model on the latent space, and connect this latent variable with the data variable through a normalizing flow. The nonlinear diffusion on the data space, then, is induced from the latent diffusion leveraged to the data space.
	
	\subsection{Data and Latent Diffusion Processes}
	
	\textbf{Latent Diffusion Processes} Let us define $\mathbf{z}_{0}^{\bm{\phi}}$ to be a transformed latent variable $\mathbf{z}_{0}^{\bm{\phi}}=\mathbf{h}_{\bm{\phi}}(\mathbf{x}_{0})$, where $\mathbf{h}_{\bm{\phi}}$ is a transformation of the normalizing flow. Then, a forward linear diffusion
	\begin{align*}
	\diff\mathbf{z}_{t}^{\bm{\phi}}=-\frac{1}{2}\beta(t)\mathbf{z}_{t}^{\bm{\phi}}\diff t+g(t)\diff \mathbf{w}_{t},\tag{\text{Latent Forward SDE}}
	\end{align*}
	starting at $\mathbf{z}_{0}^{\bm{\phi}}=\mathbf{h}_{\bm{\phi}}(\mathbf{x}_{0})$ with $\mathbf{x}_{0}\sim p_{r}$, describes the forward diffusion process on the latent space (blue diffusion path in Figure \ref{fig:pdm_gm}). The corresponding reverse latent diffusion is given by $\diff\mathbf{z}_{t}^{\bm{\phi}}=[-\frac{1}{2}\beta(t)\mathbf{z}_{t}^{\bm{\phi}}-g^{2}(t)\nabla_{\mathbf{z}_{t}^{\bm{\phi}}}\log{p_{t}^{\bm{\phi}}(\mathbf{z}_{t}^{\bm{\phi}})}]\diff\bar{t}+g(t)\diff\mathbf{\bar{w}}_{t}$, where $p_{t}^{\bm{\phi}}$ is the probability of $\mathbf{z}_{t}^{\bm{\phi}}$.
	
	\textbf{Forward Data Diffusion} We have not defined the data diffusion process yet. We build the data diffusion from the latent diffusion and the normalizing flow. From the invertibility, we define random variables on the data space by transforming the latent linear diffusion back to the data space: $\mathbf{x}_{t}^{\bm{\phi}}:=\mathbf{h}_{\bm{\phi}}^{-1}(\mathbf{z}_{t}^{\bm{\phi}})$ for any $t\in [0,T]$. Then, from the Ito's formula \citep{oksendal2013stochastic}, the process $\{\mathbf{x}_{t}^{\bm{\phi}}\}_{t=0}^{T}$ follows
	\begin{align*}
	\diff\mathbf{x}_{t}^{\bm{\phi}}=\mathbf{f}_{\bm{\phi}}(\mathbf{x}_{t}^{\bm{\phi}},t)\diff t+\mathbf{G}_{\bm{\phi}}(\mathbf{x}_{t}^{\bm{\phi}},t)\diff\mathbf{w}_{t},\tag{\text{Data Forward SDE}}
	\end{align*}
starting with $\mathbf{x}_{0}^{\bm{\phi}}=\mathbf{h}_{\bm{\phi}}^{-1}(\mathbf{z}_{0}^{\bm{\phi}})$. From $\mathbf{x}_{0}^{\bm{\phi}}=\mathbf{h}_{\bm{\phi}}^{-1}(\mathbf{h}_{\bm{\phi}}(\mathbf{x}_{0}))=\mathbf{x}_{0}\sim p_{r}$, we call this process by \textit{induced diffusion} that permeates the data variable on the data space. We emphasize that this induced diffusion collapses to a linear diffusion if $\mathbf{h}_{\bm{\phi}_{id}}=id$. See Appendix \ref{appendix:derivation_of_diffusion} for details on drift and volatility terms.
	
	\textbf{Generative Data Diffusion} A diffusion model estimates the forward latent score $\mathbf{s}_{\bm{\phi}}(\mathbf{z},t)=\nabla\log{p_{t}^{\bm{\phi}}(\mathbf{z})}$ with the score network, $\mathbf{s}_{\bm{\theta}}(\mathbf{z},t)$, to mimic the forward linear diffusion on the latent space. Then, the generative SDE on the latent space becomes 
	\begin{align*}
	\diff\mathbf{z}_{t}^{\bm{\theta}}=\bigg[-\frac{1}{2}\beta(t)\mathbf{z}_{t}^{\bm{\theta}}-g^{2}(t)\mathbf{s}_{\bm{\theta}}(\mathbf{z}_{t}^{\bm{\theta}},t)\bigg]\diff \bar{t}+g(t)\diff\bar{\mathbf{w}}_{t} \tag{\text{Latent Gen. SDE}}
	\end{align*}
	with a starting variable $\mathbf{z}_{T}^{\bm{\theta}}\sim\pi$. Thus, the process $\{\mathbf{x}_{t}^{\bm{\phi},\bm{\theta}}\}_{t=0}^{T}$ of $\mathbf{x}_{t}^{\bm{\phi},\bm{\theta}}:=\mathbf{h}_{\bm{\phi}}^{-1}(\mathbf{z}_{t}^{\bm{\theta}})$ becomes a generative data diffusion (purple path in Figure \ref{fig:pdm_gm}) with SDE of
	\begin{align*}
	\diff\mathbf{x}_{t}^{\bm{\phi},\bm{\theta}}=\big[\mathbf{f}_{\bm{\phi}}-\text{div}(\mathbf{G}_{\bm{\phi}}\mathbf{G}_{\bm{\phi}}^{T})-(\mathbf{G}_{\bm{\phi}}\mathbf{G}_{\bm{\phi}}^{T}\nabla\mathbf{h}_{\bm{\phi}})\mathbf{s}_{\bm{\theta}}\big(\mathbf{h}_{\bm{\phi}}(\mathbf{x}_{t}^{\bm{\phi},\bm{\theta}}),t\big)\big]\diff \bar{t} + \mathbf{G}_{\bm{\phi}}\diff\mathbf{\bar{w}}_{t}.\tag{\text{Data Gen. SDE}}
	\end{align*}

	\subsection{Model Training and Sampling}
	
	\textbf{Likelihood Training} Theorem \ref{thm:1} estimates Negative Evidence Lower Bound (NELBO) of Negaitve Log-Likelihood (NLL). For the notational simplicity, we define the targetted score function by
	\begin{align*}
	\mathbf{s}_{\bm{\phi}}(\mathbf{z}_{t}^{\bm{\phi}},t):=\nabla\log{p_{t}^{\bm{\phi}}(\mathbf{z}_{t}^{\bm{\phi}})}.\tag{\text{Target of Score Estimation}}
	\end{align*}
	Also, suppose $\mathcal{L}\big(\{\mathbf{z}_{t}^{\bm{\phi}}\}_{t=0}^{T},g^{2};\bm{\theta}\big)=\frac{1}{2}\int_{0}^{T}g^{2}(t)\mathbb{E}_{\mathbf{z}_{0}^{\bm{\phi}},\mathbf{z}_{t}^{\bm{\phi}}}\big[\Vert\mathbf{s}_{\bm{\theta}}(\mathbf{z}_{t}^{\bm{\phi}},t)-\nabla\log{p_{0t}(\mathbf{z}_{t}^{\bm{\phi}}\vert\mathbf{z}_{0}^{\bm{\phi}})}\Vert_{2}^{2}\big]\diff t$, where $p_{0t}(\mathbf{z}_{t}^{\bm{\phi}}\vert\mathbf{z}_{0}^{\bm{\phi}})$ is the transition probability of the latent forward diffusion. In Theorem \ref{thm:1}, we drop the constant terms that do not hurt the essence of the theorem to keep the simplicity. See full details and the proof in Appendix \ref{appendix:proofs}.
	\reqnomode
	\begin{theorem}\label{thm:1}
		Suppose that $p_{\bm{\phi},\bm{\theta}}$ is the likelihood of a generative random variable $\mathbf{x}_{0}^{\bm{\phi},\bm{\theta}}$. Then, the negative log-likelihood is upper bounded by
		\begin{align*}
		\mathbb{E}_{\mathbf{x}_{0}}\big[-\log{p_{\bm{\phi},\bm{\theta}}(\mathbf{x}_{0})}\big]\le\mathcal{L}\big(\{\mathbf{x}_{t}\}_{t=0}^{T},g^{2};\{\bm{\phi},\bm{\theta}\}\big),
		\end{align*}
		where
		\begin{align}
		\mathcal{L}\big(\{\mathbf{x}_{t}\}_{t=0}^{T}&,g^{2};\{\bm{\phi},\bm{\theta}\}\big)=\frac{1}{2}\int_{0}^{T}g^{2}(t)\mathbb{E}_{\mathbf{z}_{t}^{\bm{\phi}}}\big[\Vert\mathbf{s}_{\bm{\theta}}(\mathbf{z}_{t}^{\bm{\phi}},t)-\mathbf{s}_{\bm{\phi}}(\mathbf{z}_{t}^{\bm{\phi}},t)\Vert_{2}^{2}\big]\diff t+D_{KL}(p_{T}^{\bm{\phi}}\Vert \pi)\label{main_eq:nelbo}\\[1ex]
		&=-\mathbb{E}_{\mathbf{x}_{0}}\big[\log{\big\vert\det\big(\nabla\mathbf{h}_{\bm{\phi}}(\mathbf{x}_{0})\big)\big\vert}\big]+\mathcal{L}\big(\{\mathbf{z}_{t}^{\bm{\phi}}\}_{t=0}^{T},g^{2};\bm{\theta}\big)-\mathbb{E}_{\mathbf{z}_{T}^{\bm{\phi}}}\big[\log{\pi(\mathbf{z}_{T}^{\bm{\phi}})}\big].\label{main_eq:training_loss}
		\end{align}
	\end{theorem}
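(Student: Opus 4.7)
The plan is to reduce the NELBO for the data-space generative density to the NELBO for the latent-space generative density, and then apply the known variational bound for linear SDEs (Song et al.\ 2021, Huang et al.\ 2021) in the latent space, where the diffusion is a standard VPSDE-type process. Since the nonlinearity of the data diffusion is entirely carried by the flow $\mathbf{h}_{\bm{\phi}}$, all the hard SDE machinery happens on the latent side where the coefficients are linear in $\mathbf{z}$ and the transition kernel is Gaussian.

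\textbf{Route to equation (2).} Invertibility of $\mathbf{h}_{\bm{\phi}}$ gives the change-of-variables identity $\log p_{\bm{\phi},\bm{\theta}}(\mathbf{x}_0)=\log p^{\bm{\phi}}_{0,\bm{\theta}}(\mathbf{h}_{\bm{\phi}}(\mathbf{x}_0))+\log|\det\nabla\mathbf{h}_{\bm{\phi}}(\mathbf{x}_0)|$, where $p^{\bm{\phi}}_{0,\bm{\theta}}$ is the density of $\mathbf{z}_0^{\bm{\theta}}$ produced by the latent generative SDE. Applying Song et al.'s NELBO to this latent linear-SDE model yields $\mathbb{E}_{\mathbf{z}_0^{\bm{\phi}}}[-\log p^{\bm{\phi}}_{0,\bm{\theta}}(\mathbf{z}_0^{\bm{\phi}})]\le \mathcal{L}(\{\mathbf{z}_t^{\bm{\phi}}\}_{t=0}^T,g^2;\bm{\theta})-\mathbb{E}_{\mathbf{z}_T^{\bm{\phi}}}[\log\pi(\mathbf{z}_T^{\bm{\phi}})]$, where expectations over $\mathbf{z}_0^{\bm{\phi}}$ come from sampling $\mathbf{x}_0\sim p_r$ and pushing through $\mathbf{h}_{\bm{\phi}}$. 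Taking $-\mathbb{E}_{\mathbf{x}_0}[\,\cdot\,]$ of the change-of-variables identity and plugging in the latent bound produces exactly line (2).

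\textbf{Route to equation (1).} Starting from the latent linear SDE, the Girsanov/Fokker--Planck chain rule of Song et al.\ gives $D_{KL}(p_0^{\bm{\phi}}\|p^{\bm{\phi}}_{0,\bm{\theta}})\le D_{KL}(p_T^{\bm{\phi}}\|\pi)+\tfrac12\int_0^T g^2(t)\,\mathbb{E}\|\mathbf{s}_{\bm{\theta}}-\mathbf{s}_{\bm{\phi}}\|_2^2\,dt$ with $\mathbf{s}_{\bm{\phi}}=\nabla\log p_t^{\bm{\phi}}$ the true latent score. Expanding $\mathbb{E}_{\mathbf{x}_0}[-\log p_{\bm{\phi},\bm{\theta}}(\mathbf{x}_0)]=D_{KL}(p_0^{\bm{\phi}}\|p^{\bm{\phi}}_{0,\bm{\theta}})+H(p_0^{\bm{\phi}})-\mathbb{E}_{\mathbf{x}_0}[\log|\det\nabla\mathbf{h}_{\bm{\phi}}|]$ and using $H(p_0^{\bm{\phi}})=H(p_r)+\mathbb{E}_{\mathbf{x}_0}[\log|\det\nabla\mathbf{h}_{\bm{\phi}}|]$ makes the two Jacobian terms cancel, leaving only $H(p_r)$ as a genuine parameter-free constant. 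Dropping it produces equation (1). The equivalence of (1) and (2) up to constants is just the denoising-vs-explicit score-matching identity applied in the latent space.

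\textbf{Main obstacle.} The only substantive worry is that the latent initial distribution $p_0^{\bm{\phi}}$ is itself parameter-dependent, so one must check that the hypotheses for Girsanov and for the chain-rule KL bound transfer verbatim; because the latent drift $-\tfrac12\beta(t)\mathbf{z}$ and volatility $g(t)\mathbf{I}$ are independent of $\bm{\phi}$, the parameter enters only through the initial condition and the standard proof goes through. The other delicate point is bookkeeping of which ``dropped constants'' are truly parameter-free: each such term arising in the derivation (flow log-determinant, push-forward entropy, and the DSM correction) either cancels across the two forms or collapses into $H(p_r)$, which is what the theorem hides.
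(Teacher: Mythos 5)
Your proof is correct and takes essentially the same route as the paper's: reduce to the latent space via the flow change-of-variables, invoke the Song et al.\ Girsanov/entropy NELBO for the linear latent SDE, and pass from explicit to denoising score matching. The only cosmetic difference is that the paper first establishes $D_{KL}\big(\bm{\mu}_{\bm{\phi}}(\{\mathbf{x}_{t}\})\Vert\bm{\nu}_{\bm{\phi},\bm{\theta}}(\{\mathbf{x}_{t}\})\big)=D_{KL}\big(\bm{\mu}_{\bm{\phi}}(\{\mathbf{z}_{t}\})\Vert\bm{\nu}_{\bm{\theta}}(\{\mathbf{z}_{t}\})\big)$ at the path-measure level before applying the data-processing inequality, whereas you apply the change-of-variables only to the $t=0$ marginal and work on the latent side throughout; both yield the same cancellation of the Jacobian terms and the same parameter-free residual $\mathcal{H}(p_{r})$ plus $\frac{d}{2}\int_{0}^{T}(\beta(t)-g^{2}(t)/\sigma^{2}(t))\diff t$, which the theorem drops.
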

	Eq. \eqref{main_eq:nelbo} is the KL divergence $D_{KL}(\bm{\mu}_{\bm{\phi}}\Vert\bm{\nu}_{\bm{\phi},\bm{\theta}})$, where $\bm{\mu}_{\bm{\phi}}$ and $\bm{\nu}_{\bm{\phi},\bm{\theta}}$ are the path measures of the forward and generative diffusions on the data space. Eq. \eqref{main_eq:nelbo} explains the reasoning of why $\mathbf{s}_{\bm{\phi}}$ is the target of the score estimation. However, the KL divergence is intractable, and Theorem \ref{thm:1} provides an equivalent tractable loss by Eq. \eqref{main_eq:training_loss}, the summation of the flow loss with the denoising loss.
	
	Algorithm \ref{alg:INDM} describes the line-by-line algorithm of INDM training. We obtain the flow loss by taking a flow evaluation. Afterward, we compute the denoising loss. We train the flow with Eq. \eqref{main_eq:training_loss}. However, we train the score with $\mathcal{L}\big(\{\mathbf{x}_{t}\}_{t=0}^{T},\lambda;\{\bm{\phi},\bm{\theta}\}\big)$ with various $\lambda$ settings for a better Fr\'echet Inception Distance (FID) \cite{heusel2017gans}.
	
	\textbf{Latent Sampling} While either of red or purple path in Figure \ref{fig:pdm_gm} could synthesize the samples, we choose the red path for the fast sampling (because the red path feed-forwards the flow only once). Starting from a pure noise $\mathbf{z}_{T}^{\bm{\theta}}\sim\pi$, we denoise $\mathbf{z}_{T}^{\bm{\theta}}$ to $\mathbf{z}_{0}^{\bm{\theta}}$ by solving the generative process backward on the latent space. Then, we transform the fully denoised latent $\mathbf{z}_{0}^{\bm{\theta}}$ to the data space $\mathbf{x}_{0}^{\bm{\phi},\bm{\theta}}=\mathbf{h}_{\bm{\phi}}^{-1}(\mathbf{z}_{0}^{\bm{\theta}})$.
	
	\section{Related Work}\label{sec:related_work}
	
	\begin{wrapfigure}{R}{0.59\textwidth}
		\vskip -0.63in
		\begin{minipage}{0.59\textwidth}
			\begin{algorithm}[H]
				\centering
				\caption{Implicit Nonlinear Diffusion Model}\label{alg:INDM}
				\begin{algorithmic}[1]
					\Repeat
					\State Get latent with flow by $\mathbf{z}_{0}^{\bm{\phi}}=\mathbf{h}_{\bm{\phi}}(\mathbf{x}_{0})$ for $\mathbf{x}_{0}\sim p_{r}$ 
					\State Compute $-\mathbb{E}_{\mathbf{x}_{0}}\big[\log{\big\vert\det\big(\nabla\mathbf{h}_{\bm{\phi}}(\mathbf{x}_{0})\big)\big\vert}\big]$
					\State Get diffused latents $\{\mathbf{z}_{t}^{\bm{\phi}}\}_{t=0}^{T}$ with a linear SDE
					\State Compute $\mathcal{L}\big(\{\mathbf{z}_{t}^{\bm{\phi}}\}_{t=0}^{T},g^{2};\bm{\theta}\big)-\mathbb{E}_{\mathbf{z}_{T}^{\bm{\phi}}}\big[\log{\pi(\mathbf{z}_{T}^{\bm{\phi}})}\big]$
					\State Compute flow loss $\mathcal{L}_{f}=\mathcal{L}\big(\{\mathbf{x}_{t}\}_{t=0}^{T},g^{2};\{\bm{\phi},\bm{\theta}\}\big)$
					\State Update $\bm{\phi}\leftarrow \bm{\phi}-\eta\frac{\partial\mathcal{L}_{f}}{\partial\bm{\phi}}$
					\State Compute $\mathcal{L}\big(\{\mathbf{z}_{t}^{\bm{\phi}}\}_{t=0}^{T},\lambda;\bm{\theta}\big)-\mathbb{E}_{\mathbf{z}_{T}^{\bm{\phi}}}\big[\log{\pi(\mathbf{z}_{T}^{\bm{\phi}})}\big]$
					\State Compute score loss $\mathcal{L}_{s}=\mathcal{L}\big(\{\mathbf{x}_{t}\}_{t=0}^{T},\lambda;\{\bm{\phi},\bm{\theta}\}\big)$
					\State Update $\bm{\theta}\leftarrow \bm{\theta}-\eta\frac{\partial\mathcal{L}_{s}}{\partial\bm{\theta}}$
					\Until {converged}
				\end{algorithmic}
			\end{algorithm}
		\end{minipage}
		\vskip -0.2in
	\end{wrapfigure}
	In this section, we compare INDM with previous works, and summarize our arguments in Table \ref{tab:previous_research}.
	
	\textbf{LSGM} \citet{vahdat2021score} put a linear diffusion on the latent space like INDM but uses an auto-encoder structure. From this modeling choice, LSGM cannot be categorized as a nonlinear diffusion model in a strict sense. Concretely, recall that a diffusion process is (mathematically) defined as a sequence of random variables connected via a Markov chain. From this definition, one needs to satisfy two requirements to call it a diffusion process: 1) there must be multiple (possibly infinite) numbers of random variables; 2) the random variables should be connected via a Markov chain. Unlike INDM, LSGM cannot build forward data variables from the forward latent variables because there is no exact inverse function of the encoder map, as long as the data dimension differs to the latent dimension (Lemma \ref{lemma:3} of Appendix \ref{appendix:LSGM}). This leads that LSGM has no forward data diffusion process. From this point, analyzing the data nonlinearity becomes infeasible in LSGM.
	
	\begin{table*}[t]
		\vskip -0.05in
		\caption{Comparison of INDM with previous works. $N$ is the number of random variables.}
		\label{tab:previous_research}
		\vskip -0.05in
		\scriptsize
		\centering
		\begin{adjustbox}{max width=\textwidth}
			\begin{tabular}{lcccccccc}
				\toprule
				\multirow{2}{*}{Model} & \multirow{2}{*}{\shortstack{Nonlinear\\Diffusion}} & \multirow{2}{*}{\shortstack{Implemented\\Data Diffusion}} & \multirow{2}{*}{\shortstack{Latent\\Diffusion}} & \multirow{2}{*}{\shortstack{Nonlinear\\$\mathbf{f}$-Modeling}} & \multirow{2}{*}{\shortstack{Nonlinear\\$\mathbf{G}$-Modeling}} & \multirow{2}{*}{\shortstack{Explicit $\mathbf{f}\&\mathbf{G}$\\Derived}} & \multirow{2}{*}{\shortstack{Training\\Complexity}} & \multirow{2}{*}{\shortstack{Sampling\\Cost}} \\
				&&&&&&&&\\\midrule
				DDPM++ & \xmark & Continuous & \xmark & \xmark & \xmark & \cmark & $O(1)$ & $\downarrow$ \\
				LSGM & \xmark & \xmark & Continuous & \xmark & \xmark & \xmark & $O(1)$ & $\downarrow$ \\
				SBP & $\bm{\triangle}$ & Discrete & \xmark & Explicit & \xmark & \cmark & $O(N)$ & $\downarrow$ \\
				DiffFlow & \cmark & Discrete & \xmark & Explicit & \xmark & \cmark & $O(N)$ & $\uparrow$ \\
				\cc{15}INDM & \cc{15}\cmark & \cc{15}Continuous & \cc{15}Continuous & \cc{15}Implicit & \cc{15}Implicit & \cc{15}\cmark & \cc{15}$O(1)$ & \cc{15}$\downarrow$ \\
				\bottomrule
			\end{tabular}
		\end{adjustbox}
		\vskip -0.1in
	\end{table*}
	
	Moreover, LSGM has a pair of key differences in its training. First, the latent dimension of LSGM is 40,080, which is \textit{15$\times$ higher} dimension than the data dimension (3,072) on CIFAR-10 \cite{krizhevsky2009learning}. In contrast, INDM always keeps its latent dimension by the data dimension. See Table \ref{tab:dimension} to compare the latent dimensions of INDM with LSGM on benchmark datasets. Furthermore, LSGM is repeatedly reported \cite{vahdat2021score, dockhorn2021score} for its training instability on the best FID setting of $\lambda=\sigma^{2}$ (i.e., $L_{simple}$ \cite{ho2020denoising}). Meanwhile, INDM is consistently stable for any of training configurations, see Table \ref{tab:lsgm_comparison}.
	
	\textbf{DiffFlow} \citet{zhang2021diffusion} explicitly model the drift term $\mathbf{f}_{\bm{\phi}}$ as a flow network, so the forward diffusion becomes $\diff\mathbf{x}_{t}=\mathbf{f}_{\bm{\phi}}\diff t+g\diff\mathbf{w}_{t}$. However, there are differences between DiffFlow and INDM: 1) DiffFlow does not nonlinearize the volatility; 2) DiffFlow is too slow for its explicit parametrization (Table \ref{tab:elapsed_time}); 3) the flexibility of $\mathbf{f}_{\bm{\phi}}$ is too restricted; 4) DiffFlow has a larger loss variance (Table \ref{tab:variance}). See Appendix \ref{appendix:DiffFlow} for the full details of our arguments. Focusing on the slow training, observe that the denoising loss $\mathbb{E}_{\mathbf{x}_{0},\mathbf{x}_{t}^{\bm{\phi}}}[\Vert\mathbf{s}_{\bm{\theta}}(\mathbf{x}_{t}^{\bm{\phi}},t)-\nabla\log{p_{0t}(\mathbf{x}_{t}^{\bm{\phi}}\vert\mathbf{x}_{0})}\Vert_{2}^{2}]$ requires a pair of heavy computations: (A) sampling from $\mathbf{x}_{t}^{\bm{\phi}}$, and (B) computation of $\nabla\log{p_{0t}(\mathbf{x}_{t}^{\bm{\phi}}\vert\mathbf{x}_{0})}$. Intractable transition probability $p_{0t}(\mathbf{x}_{t}^{\bm{\phi}}\vert\mathbf{x}_{0})$ is the major bottleneck of the slow training. 
	
	To overcome the bottleneck, DiffFlow discretizes the continuous diffusion with $N$ variables of a discrete diffusion and uses the DDPM-style loss \cite{ho2020denoising}, which does not need to calculate the transition probability. Under the discretization, however, the forward sampling of $\mathbf{x}_{t}^{\bm{\phi}}$ takes $O(N)$ flow evaluations for every network update. This sampling issue is an inevitable fundamental problem when we parametrize the coefficients explicitly. Having that the flow evaluation is generally more expensive than score evaluation given the same number of parameters, a fast sampling is achievable only if we reduce $N$. However, it hurts the flexibility of a diffusion process, so DiffFlow suffers from the trade-off between training speed and model flexibility. On the other hand, the training of INDM is invariant of $N$, and INDM is free from such a trade-off. Analogously, DiffFlow generates a sample with the purple path in Figure \ref{fig:pdm_gm}, so it takes $O(N)$ flow evaluations, contrastive to INDM with a single flow evaluation in its sampling with the red path.
	
	\textbf{SBP} \citet{de2021diffusion} learn the diffusion process with a problem of $\min_{\bm{\rho}_{\bm{\theta}}\in\mathcal{P}(p_{r},\pi)}D_{KL}(\bm{\rho}_{\bm{\theta}}\Vert\bm{\mu})$, where $\mathcal{P}(p_{r},\pi)$ is the collection of path measure with $p_{r}$ and $\pi$ as its marginal distributions at $t=0$ and $t=T$, respectively. It is a bi-constrained optimization problem as any path measure on the search space that should satisfy boundary conditions at both $t=0$ and $t=T$. $\bm{\mu}$ is the reference measure of a linear diffusion $\diff\mathbf{x}_{t}=\mathbf{f}(\mathbf{x}_{t},t)\diff t+g(t)\mathbf{w}_{t}$; and the forward and reverse SDEs of $\bm{\rho}_{\bm{\theta}}$ are $\diff\mathbf{x}_{t}=[\mathbf{f}(\mathbf{x}_{t},t)+g^{2}(t)\nabla\log{\Psi(\mathbf{x}_{t},t)}]\diff t+g(t)\diff\mathbf{w}_{t}$ and $\diff\mathbf{x}_{t}=[\mathbf{f}(\mathbf{x}_{t},t)-g^{2}(t)\nabla\log{\hat{\Psi}(\mathbf{x}_{t},t)}]\diff \bar{t}+g(t)\diff\mathbf{\bar{w}}_{t}$, respectively, where $(\Psi,\hat{\Psi})$ is the solution of a coupled PDE, called Hopf-Cole transform \cite{leger2021hopf}. Solving this coupled PDE is intractable, so the estimation target of SBP is $\nabla\log{\Psi}$ and $\nabla\log{\hat{\Psi}}$. As $\mathbf{f}$ and $g$ are assumed to be linear functions, the nonlinearity of SBP is fully determined by $(\Psi,\hat{\Psi})$. 
		
	\begin{wrapfigure}{r}{0.3\textwidth}
		\vskip -0.2in
		\centering
		\includegraphics[width=\linewidth]{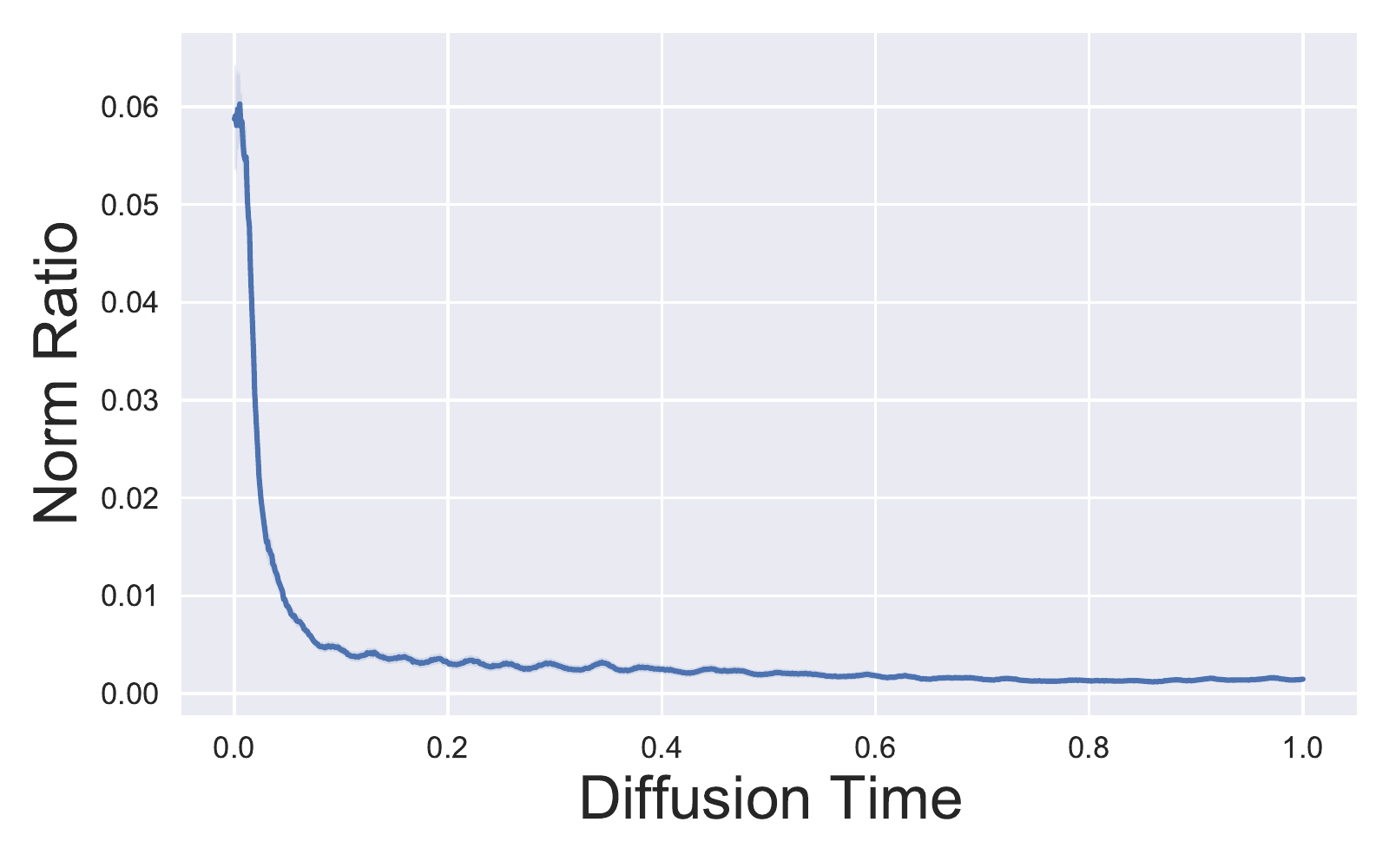}
		\vskip -0.1in
		\caption{Norm Ratio of SBP.}
		\label{fig:norm_ratio}
		\vskip -0.2in
	\end{wrapfigure}
	Analogous to DiffFlow, sampling from $\mathbf{x}_{t}$ in SBP needs a long time. Few works \cite{de2021diffusion,chen2021likelihood} detour this training issue using the experience replay memory. Aside from the training time, the KL minimization puts the global optimal nonlinear diffusion $\bm{\rho}_{\bm{\theta}^{*}}$ near a neighborhood of the linear diffusion $\bm{\mu}$. In other words, the optimal $\bm{\rho}_{\bm{\theta}^{*}}$ is the closest path measure on $\mathcal{P}(p_{r},\pi)$ to $\bm{\mu}$, so the inferred nonlinear diffusion would be the \textit{most} linear diffusion on the space of $\mathcal{P}(p_{r},\pi)$. For the demonstration, we illustrate $\Vert g^{2}(t)\nabla\log{\Psi(\mathbf{x}_{t},t)}\Delta t\Vert_{2}/\Vert g(t)\Delta\mathbf{w}_{t}\Vert_{2}$ in Figure \ref{fig:norm_ratio}. We used the released checkpoint of SB-FBSDE \cite{chen2021likelihood}, an algorithm for solving SBP, trained with VESDE on CIFAR-10. As $\mathbf{f}\equiv 0$ in VESDE, this norm ratio measures how much nonlinearity is counted on a diffusion trajectory compared to the linear effect. Figure \ref{fig:norm_ratio} shows that the ratio approaches zero except at the small range around $t\approx 0$, meaning that the nonlinear effect is virtually ignorable than the linear effect. Therefore, Figure \ref{fig:norm_ratio} implies that the diffusion process is nearly linear in most of the diffusion time. We give a detailed discussion of SBP in Appendix \ref{appendix:SBP}.
	
	\section{Discussion}\label{sec:discussion}
	
	This section investigates characteristics of INDM. We show that INDM training is faster and nearly MLE in Section \ref{sec:better_optimization}, and INDM sampling is robust on discretization step sizes in Section \ref{sec:sampling_robustness}.
	
	\subsection{Benefit of INDM in Training}\label{sec:better_optimization}
		
	Having that DDPM++ is a collapsed INDM with a fixed identity transformation $\mathbf{h}_{\bm{\phi}_{id}}=id$, the difference lies in whether to train $\bm{\phi}$ or not. This trainable nonlinearity provides the better optimization of INDM, as evidenced in Figure \ref{fig:discussion}-(a), experimented on CIFAR-10 using VPSDE. It shows a pair of critical characteristics of INDM training: 1) it is faster than DDPM++ training, and 2) it is asymptotically an MLE training. For the training speed, recall that the regression target of the score estimation is $\mathbf{s}_{\bm{\phi}}$, and this target is fixed in DDPM++ while keep moving in INDM. The target is constantly updated through the direction of $\mathbf{s}_{\bm{\theta}}$ in Eq. \eqref{main_eq:nelbo} by optimizing $\Vert\mathbf{s}_{\bm{\phi}}-\mathbf{s}_{\bm{\theta}}\Vert_{2}^{2}$. This \textit{bidirectional} attraction between $\mathbf{s}_{\bm{\theta}}$ and $\mathbf{s}_{\bm{\phi}}$ is what flow learning does in the optimization.
	
	For the MLE training, as the flow training is intricately entangled with the score training, we analyze INDM training for a specific class of score networks. First, we define $\mathbf{S}_{sol}$ (Definition \ref{def:1} in Appendix \ref{appendix:variational_gap}) to be the class of forward score functions of a linear diffusion with some initial distribution. Then, it turns out that it is the whole class of zero variational gap (=NLL-NELBO).
	\begin{theorem}\label{cor:2}
		$\textup{Gap}(\bm{\mu}_{\bm{\phi}},\bm{\nu}_{\bm{\phi},\bm{\theta}}):=D_{KL}(\bm{\mu}_{\bm{\phi}}\Vert\bm{\nu}_{\bm{\phi},\bm{\theta}})-D_{KL}(p_{r}\Vert p_{\bm{\phi},\bm{\theta}})=0$ if and only if $\mathbf{s}_{\bm{\theta}}\in\mathbf{S}_{sol}$. 
	\end{theorem}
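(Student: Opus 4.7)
The plan is to reduce the statement to the latent space using the invertibility of the flow, express the gap as an integrated score-matching functional via Girsanov's theorem, and then use Anderson's time-reversal theorem to identify the vanishing condition with membership in $\mathbf{S}_{sol}$. Since $\mathbf{h}_{\bm{\phi}}$ is a measurable bijection, pushforward by $\mathbf{h}_{\bm{\phi}}$ is an isomorphism both on the state space and on the path space, so $D_{KL}(\bm{\mu}_{\bm{\phi}}\Vert\bm{\nu}_{\bm{\phi},\bm{\theta}}) = D_{KL}(\bm{\mu}^{z}_{\bm{\phi}}\Vert\bm{\nu}^{z}_{\bm{\phi},\bm{\theta}})$ and $D_{KL}(p_r\Vert p_{\bm{\phi},\bm{\theta}}) = D_{KL}(p_0^{\bm{\phi}}\Vert q_0)$, where the superscript $z$ denotes the latent path measures and $q_t$ denotes the time-$t$ marginal of the latent generative process (so $q_T=\pi$).

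Next I would apply Girsanov between these two latent path measures. By Anderson's reversal, the latent generative SDE, viewed in forward time, is a forward SDE sharing the diffusion coefficient $g(t)$ with the latent forward SDE of $\bm{\mu}^{z}_{\bm{\phi}}$ and having drift $-\tfrac{1}{2}\beta(t)\mathbf{z} + g^2(t)\bigl[\nabla\log q_t(\mathbf{z}) - \mathbf{s}_{\bm{\theta}}(\mathbf{z},t)\bigr]$ with initial distribution $q_0$. Girsanov then yields
\begin{equation*}
D_{KL}(\bm{\mu}^{z}_{\bm{\phi}}\Vert\bm{\nu}^{z}_{\bm{\phi},\bm{\theta}})
= D_{KL}(p_0^{\bm{\phi}}\Vert q_0) + \frac{1}{2}\int_0^T g^2(t)\,\mathbb{E}_{p_t^{\bm{\phi}}}\!\bigl[\Vert\mathbf{s}_{\bm{\theta}}(\mathbf{z}_t,t) - \nabla\log q_t(\mathbf{z}_t)\Vert_2^2\bigr]\,dt,
\end{equation*}
so cancelling the $t=0$ marginal KL shows that $\textup{Gap}(\bm{\mu}_{\bm{\phi}},\bm{\nu}_{\bm{\phi},\bm{\theta}})$ equals the non-negative integral. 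Hence $\textup{Gap}=0$ iff $\mathbf{s}_{\bm{\theta}}(\mathbf{z},t) = \nabla\log q_t(\mathbf{z})$ a.e., which under the customary smoothness/positivity of $q_t$ and $p_t^{\bm{\phi}}$ upgrades to pointwise equality on the support.

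The remaining step is to translate this self-consistency into membership in $\mathbf{S}_{sol}$. Plugging $\mathbf{s}_{\bm{\theta}}=\nabla\log q_t$ into the forward-time drift derived above collapses it to $-\tfrac{1}{2}\beta(t)\mathbf{z}$, so the forward-time version of $\bm{\nu}^{z}_{\bm{\phi},\bm{\theta}}$ is precisely the linear forward SDE started from $q_0$; uniqueness of the Fokker-Planck solution then shows $\{q_t\}$ is the family of linear-SDE marginals starting from $q_0$, so $\mathbf{s}_{\bm{\theta}}\in\mathbf{S}_{sol}$ (and automatically $q_T=\pi$). Conversely, if $\mathbf{s}_{\bm{\theta}}\in\mathbf{S}_{sol}$ with witnessing linear-SDE marginals $\{\tilde q_t\}$ and initial $\tilde q_0$ (which per Definition~\ref{def:1} satisfy $\tilde q_T = \pi$), then the generative reverse SDE is exactly the Anderson reversal of that linear SDE initialized at its correct terminal $\pi$; uniqueness of the reverse process forces $q_t=\tilde q_t$, so $\mathbf{s}_{\bm{\theta}}=\nabla\log q_t$ and $\textup{Gap}=0$.

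The main obstacle is the validity of the Girsanov/Anderson machinery: one must check mutual absolute continuity of $\bm{\mu}^{z}_{\bm{\phi}}$ and $\bm{\nu}^{z}_{\bm{\phi},\bm{\theta}}$, verify a Novikov-type integrability condition on the drift difference, and enforce sufficient regularity (smoothness and positivity) of the generative marginals $q_t$ to make $\nabla\log q_t$ well defined and to justify the forward-time rewriting. These hypotheses are standard under the smoothness implicit in $\mathbf{h}_{\bm{\phi}}$, $\mathbf{s}_{\bm{\theta}}$, $p_r$, and $\pi$, but they are the piece of the argument that demands the most care; the rest is essentially algebraic.
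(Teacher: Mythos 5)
Your derivation of the gap identity is a legitimate alternative to the paper's: Proposition~\ref{thm:2} differentiates $D_{KL}(p_t^{\bm{\phi}}\Vert q_t^{\bm{\theta}})$ in time via the two Fokker--Planck equations and subtracts the result from the Girsanov expression for $D_{KL}(\bm{\mu}_{\bm{\phi}}\Vert\bm{\nu}_{\bm{\phi},\bm{\theta}})$, whereas you use Anderson's lemma to rewrite the latent generative SDE forward in time and apply Girsanov between the two forward SDEs directly, reading off $\textup{Gap}=\frac{1}{2}\int_0^T g^2(t)\,\mathbb{E}_{p_t^{\bm{\phi}}}\big[\Vert\nabla\log q_t^{\bm{\theta}}-\mathbf{s}_{\bm{\theta}}\Vert_2^2\big]\diff t$ from the Markov disintegration of the path-space KL. Both roads land on the same formula, and yours makes the data-processing slack somewhat more transparent. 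Your $\Rightarrow$ direction (collapse of the forward-time drift, then uniqueness of the Fokker--Planck flow) matches the paper's argument, up to the explicit continuity step the paper uses to extend $\mathbf{s}_{\bm{\theta}}=\nabla\log q_t^{\bm{\theta}}$ from $t>0$ to $t=0$.

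The $\Leftarrow$ direction, however, has a genuine error. You assert that the witnessing linear-SDE marginals $\{\tilde q_t\}$ for $\mathbf{s}_{\bm{\theta}}\in\mathbf{S}_{sol}$ ``per Definition~\ref{def:1} satisfy $\tilde q_T=\pi$.'' They need not: the paper explicitly remarks after Definition~\ref{def:1} that no condition is imposed on the starting or ending variable of the witnessing process. If $\tilde q_T\neq\pi$ while the generative SDE is solved backward from $\pi$, the generative marginals $q_t^{\bm{\theta}}$ do not coincide with $\tilde q_t$: the ratio $r_t:=q_t^{\bm{\theta}}/\tilde q_t$ solves the backward Kolmogorov equation $\partial_t r_t=-\mathbf{f}\cdot\nabla r_t-\frac{g^2(t)}{2}\Delta r_t$ with terminal data $r_T=\pi/\tilde q_T$, so a non-constant $r_T$ forces $\nabla\log r_t=\nabla\log q_t^{\bm{\theta}}-\mathbf{s}_{\bm{\theta}}$ to be non-vanishing, and your gap integral is then strictly positive. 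The uniqueness claim you invoke therefore does not close the argument as written. The paper sidesteps this direction by deferring to Lemma~\ref{lemma:2} (Theorem~2 of \citet{song2021maximum}); a self-contained proof should instead exploit the freedom, granted by the remark following Proposition~\ref{thm:2}, to take the generative terminal to be $\tilde q_T$ rather than $\pi$, after which your Anderson-reversal-and-uniqueness argument does go through.
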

	\citet{song2021maximum} partially reveal the connection between the gap with $\mathbf{S}_{sol}$, by proving the \textit{if} part of Theorem \ref{cor:2}, in Theorem 2 of \citet{song2021maximum} (see Lemma \ref{lemma:2} in Appendix \ref{appendix:variational_gap}). We completely characterize this connection by proving the \textit{only-if} part in Theorem \ref{cor:2}. Surprisingly, the variational gap is irrelevant to the flow parameters, and the MLE training of INDM implies that the score network is nearby $\mathbf{S}_{sol}$ throughout the training. Combining Theorem \ref{cor:2} with the global optimality analysis raises a qualitative discrepancy in the optimization of DDPM++ and INDM by Theorem \ref{thm:3}.
	\begin{theorem}\label{thm:3}
		For any fixed $\mathbf{s}_{\bm{\theta}}\in\mathbf{S}_{sol}$, if $\bm{\phi}^{*}\in\argmin_{\bm{\phi}}{D_{KL}(\bm{\mu}_{\bm{\phi}}\Vert \bm{\nu}_{\bm{\phi},\bm{\theta}})}$, then $\mathbf{s}_{\bm{\phi}^{*}}(\mathbf{z},t)=\nabla\log{p_{t}^{\bm{\phi}^{*}}(\mathbf{z})}=\mathbf{s}_{\bm{\theta}}(\mathbf{z},t)$, and $D_{KL}(\bm{\mu}_{\bm{\phi}^{*}}\Vert \bm{\nu}_{\bm{\phi}^{*},\bm{\theta}})=D_{KL}(p_{r}\Vert p_{\bm{\phi}^{*},\bm{\theta}})=\text{Gap}(\bm{\mu}_{\bm{\phi}^{*}},\bm{\nu}_{\bm{\phi}^{*},\bm{\theta}})=0$.
	\end{theorem}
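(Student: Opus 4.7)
The plan is to combine Theorem~\ref{cor:2} with the explicit decomposition of the path-measure KL given in Theorem~\ref{thm:1}, and then exhibit a concrete flow at which both non-negative summands of that decomposition vanish simultaneously. Since $\mathbf{s}_{\bm{\theta}}\in\mathbf{S}_{sol}$, Theorem~\ref{cor:2} gives $\textup{Gap}(\bm{\mu}_{\bm{\phi}},\bm{\nu}_{\bm{\phi},\bm{\theta}})=0$ for \emph{every} $\bm{\phi}$, hence
\[
D_{KL}(\bm{\mu}_{\bm{\phi}}\Vert\bm{\nu}_{\bm{\phi},\bm{\theta}})=D_{KL}(p_{r}\Vert p_{\bm{\phi},\bm{\theta}})
\]
identically in $\bm{\phi}$. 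Consequently, the argmin $\bm{\phi}^{*}$ also minimizes the data-space KL, and the whole chain of zero equalities in the statement reduces to showing that the common minimum value is zero.

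To prove the minimum is zero, I would invoke Eq.~\eqref{main_eq:nelbo} of Theorem~\ref{thm:1}, which writes the path-measure KL as the sum of the non-negative score-matching integral $\tfrac12\int_0^T g^2(t)\mathbb{E}[\lVert\mathbf{s}_{\bm{\theta}}-\mathbf{s}_{\bm{\phi}}\rVert_2^2]\,dt$ and the non-negative term $D_{KL}(p_T^{\bm{\phi}}\Vert\pi)$. The hypothesis $\mathbf{s}_{\bm{\theta}}\in\mathbf{S}_{sol}$ (Definition~\ref{def:1}) furnishes an initial latent density $q_{0}$ whose forward linear SDE marginals $\{q_{t}\}$ satisfy $\nabla\log q_{t}=\mathbf{s}_{\bm{\theta}}$ and whose terminal $q_{T}$ matches the prior $\pi$ (this boundary condition is what makes $\mathbf{S}_{sol}$ the exact zero-gap class). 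Choosing a flow parameter $\bm{\phi}_{0}$ whose pushforward $(\mathbf{h}_{\bm{\phi}_{0}})_{\#}p_{r}$ equals $q_{0}$, the linearity of the latent SDE forces $p_{t}^{\bm{\phi}_{0}}=q_{t}$ for all $t$, so $\mathbf{s}_{\bm{\phi}_{0}}\equiv\mathbf{s}_{\bm{\theta}}$ and $p_{T}^{\bm{\phi}_{0}}=\pi$; both summands vanish and the minimum value is $0$.

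Reading off consequences at $\bm{\phi}^{*}$: the score-matching integrand must vanish (a.e., and everywhere by continuity of the scores together with full support of the Gaussian latent marginals), which gives the identity $\mathbf{s}_{\bm{\phi}^{*}}(\mathbf{z},t)=\nabla\log p_{t}^{\bm{\phi}^{*}}(\mathbf{z})=\mathbf{s}_{\bm{\theta}}(\mathbf{z},t)$. The terminal term forces $p_{T}^{\bm{\phi}^{*}}=\pi$. Plugging back into Step~1 yields $D_{KL}(p_{r}\Vert p_{\bm{\phi}^{*},\bm{\theta}})=0$, and Theorem~\ref{cor:2} again gives $\textup{Gap}(\bm{\mu}_{\bm{\phi}^{*}},\bm{\nu}_{\bm{\phi}^{*},\bm{\theta}})=0$, completing all four equalities in the statement.

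The main obstacle is the existence of $\bm{\phi}_{0}$ realizing the pushforward $p_{r}\mapsto q_{0}$: this is a universal-approximation assumption on the normalizing-flow family, and the cleanest resolution is to phrase Theorem~\ref{thm:3} under the standing expressiveness hypothesis used elsewhere in the paper, or to restrict to flow classes (e.g.\ neural spline flows) with known distributional universality. A second, minor technical point is passing from ``$\mathbf{s}_{\bm{\phi}^{*}}=\mathbf{s}_{\bm{\theta}}$ a.e.\ with respect to the latent marginals'' to a pointwise identity; this is handled by the continuity of both sides and the fact that each $p_{t}^{\bm{\phi}^{*}}$ has full support under a VP-type linear latent SDE.
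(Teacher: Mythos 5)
Your route through Theorem~\ref{cor:2} is a genuinely different argument from the paper's. The paper never invokes Theorem~\ref{cor:2}; it directly decomposes $D_{KL}(\bm{\mu}_{\bm{\phi}}\Vert\bm{\nu}_{\bm{\phi},\bm{\theta}})$ as the non-negative score-matching integral $\int_{0}^{T}\frac{g^{2}(t)}{2}\mathbb{E}\big[\Vert\nabla\log p_{t}^{\bm{\phi}}-\nabla\log q_{t}\Vert_{2}^{2}\big]\diff t$ plus a non-negative terminal KL, shows this sum vanishes exactly when $p_{0}^{\bm{\phi}}=q_{0}$, and then pushes the zero path-level KL down to the marginal via the one-sided data-processing inequality $0=D_{KL}(\bm{\mu}_{\bm{\phi}^{*}}\Vert\bm{\nu}_{\bm{\phi}^{*},\bm{\theta}})\ge D_{KL}(p_{r}\Vert p_{\bm{\phi}^{*},\bm{\theta}})\ge 0$. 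You instead invoke the only-if direction of Theorem~\ref{cor:2} upfront to get the identity $D_{KL}(\bm{\mu}_{\bm{\phi}}\Vert\bm{\nu}_{\bm{\phi},\bm{\theta}})=D_{KL}(p_{r}\Vert p_{\bm{\phi},\bm{\theta}})$ uniformly in $\bm{\phi}$. That gives a landscape-level equivalence of the two objectives, which is elegant but heavier than the theorem requires: the elementary data-processing step already closes all four equalities once the path-level KL reaches zero.

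The genuine error is the parenthetical claim that $\mathbf{s}_{\bm{\theta}}\in\mathbf{S}_{sol}$ forces $q_{T}=\pi$, and that this boundary condition ``is what makes $\mathbf{S}_{sol}$ the exact zero-gap class.'' Definition~\ref{def:1}, and the Remark that immediately follows it, explicitly impose no condition on the initial or terminal law of the forward SDE realizing $\mathbf{s}_{\bm{\theta}}$, and the only-if proof of Theorem~\ref{cor:2} never uses any endpoint constraint. Reading Eq.~\eqref{main_eq:nelbo} literally with fixed prior $\pi$, choosing $p_{0}^{\bm{\phi}_{0}}=q_{0}$ zeroes the score-matching integral but leaves a terminal term $D_{KL}(q_{T}\Vert\pi)$ that is strictly positive for generic $q_{0}$. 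The paper's appendix proof avoids this by working in the prior-free setting introduced in the Remark after Proposition~\ref{thm:2}: the generative path measure is taken to start at $q_{T}$, so the terminal contribution is $D_{KL}(p_{T}^{\bm{\phi}}\Vert q_{T})$, which vanishes automatically by linearity of the forward SDE once $p_{0}^{\bm{\phi}}=q_{0}$. Replacing your $q_{T}=\pi$ claim by ``the generative prior is the $\mathbf{S}_{sol}$-induced terminal law $q_{T}$'' repairs the argument. Your explicit flagging of the flow-expressiveness hypothesis is correct; the paper needs it too and only acknowledges it in the discussion after the theorem (``if the flow is flexible enough''), not in the proof.
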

	Theorem \ref{thm:3} implies that there exists an optimal flow that the forward and generative SDEs on the latent space coincide, for any score network in $\mathbf{S}_{sol}$, if the flow is flexible enough. Therefore, INDM attains infinitely many ($=\vert\mathbf{S}_{sol}\vert$) global optimums in its optimization space. On the other hand, DDPM++ has only a unique optimal score network, i.e., $\mathbf{s}_{\bm{\theta}^{*}}=\mathbf{s}_{\bm{\phi}_{id}}$. Thus, Theorem \ref{thm:3} potentially explains the faster convergence of INDM. We give a detailed analysis in Appendix \ref{appendix:variational_gap}.
	
	\begin{figure*}[t]
		\centering
		\vskip -0.05in
		\begin{subfigure}{0.32\linewidth}
			\centering
			\includegraphics[width=\linewidth]{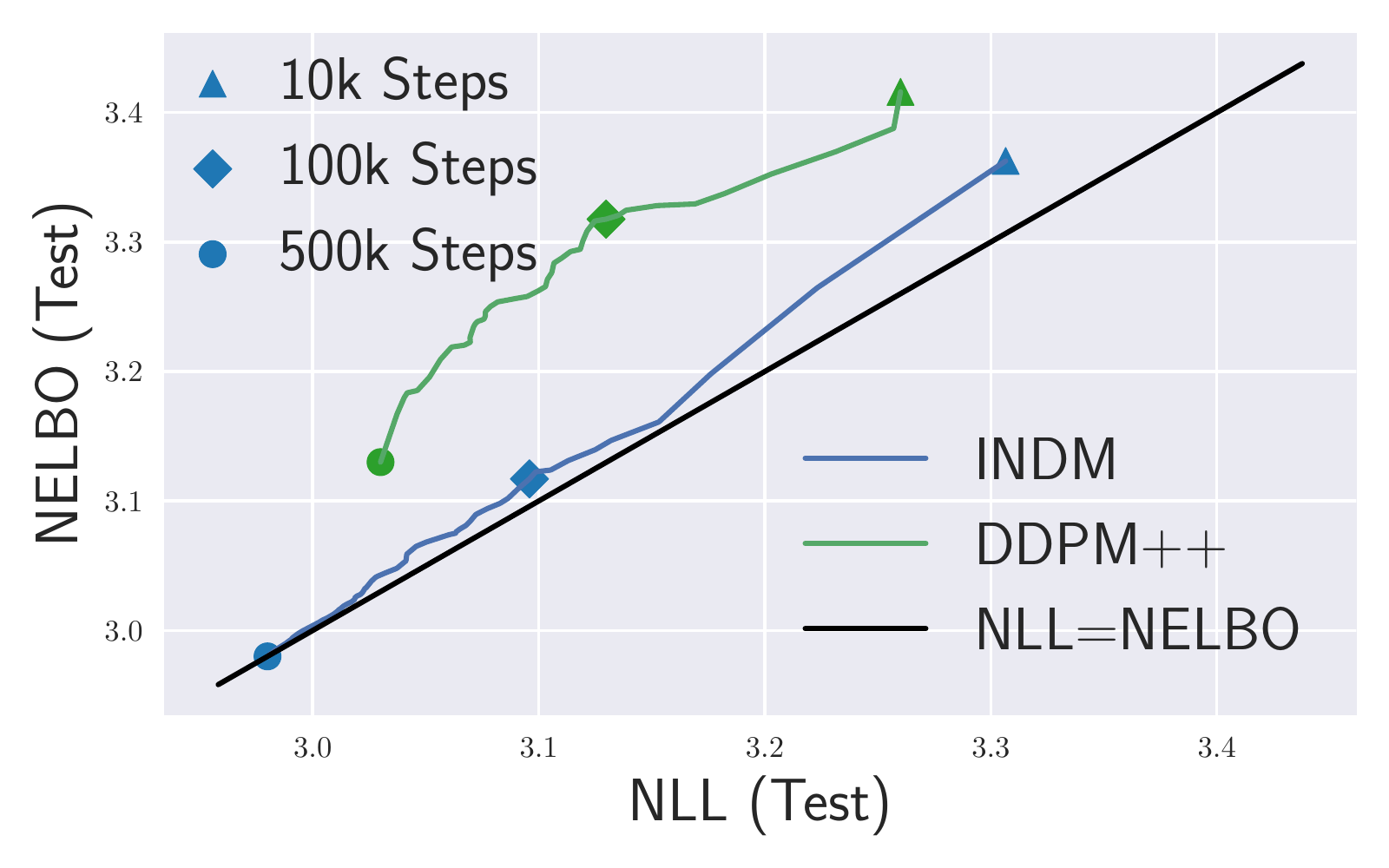}
			\vskip -0.05in
			\subcaption{Training Curve}
		\end{subfigure}
		\hfill
		\begin{subfigure}{0.32\linewidth}
			\centering
			\includegraphics[width=\linewidth]{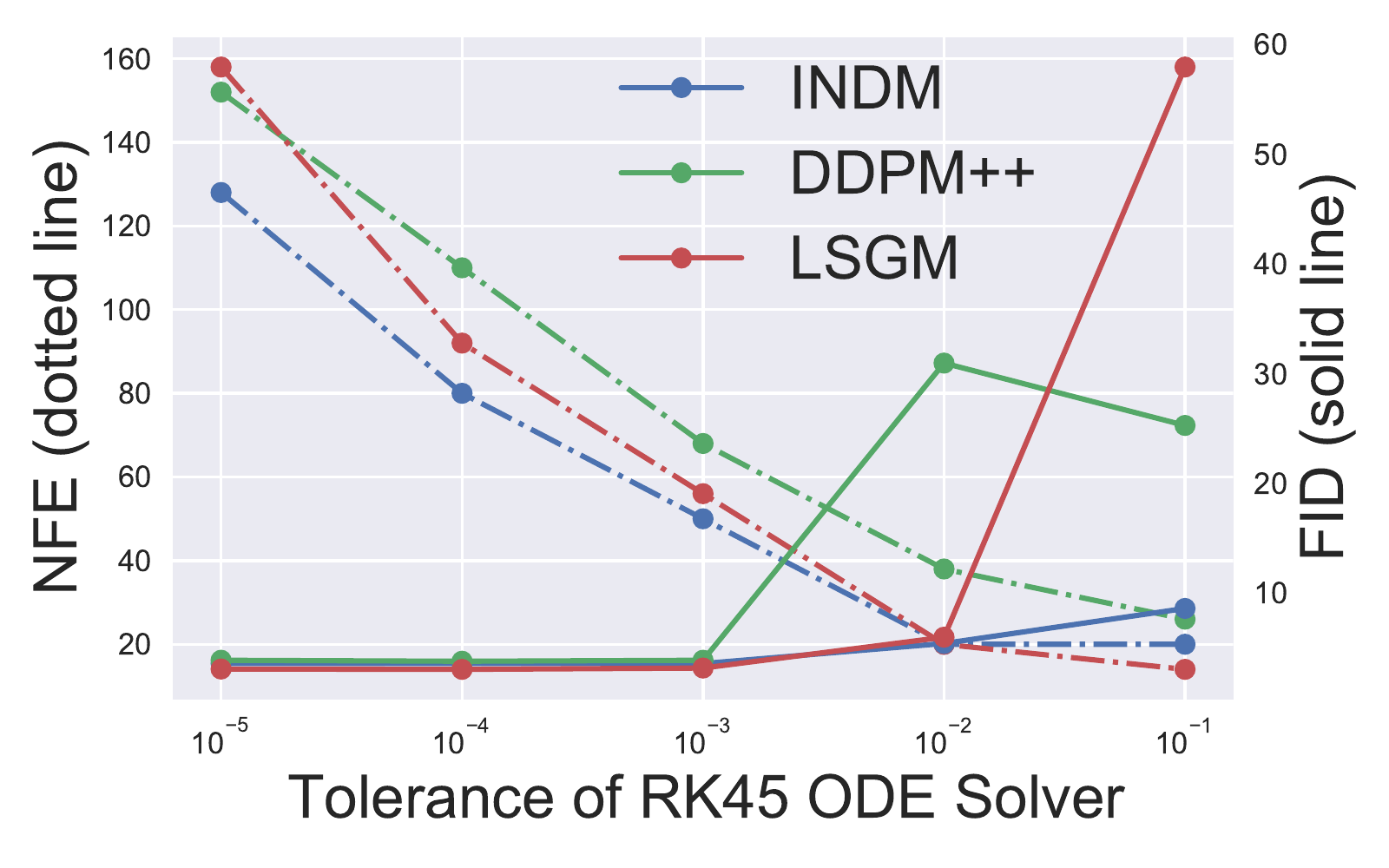}
			\vskip -0.05in
			\subcaption{FID by Tolerance}
		\end{subfigure}
		\hfill
		\begin{subfigure}{0.32\linewidth}
			\centering
			\includegraphics[width=\linewidth]{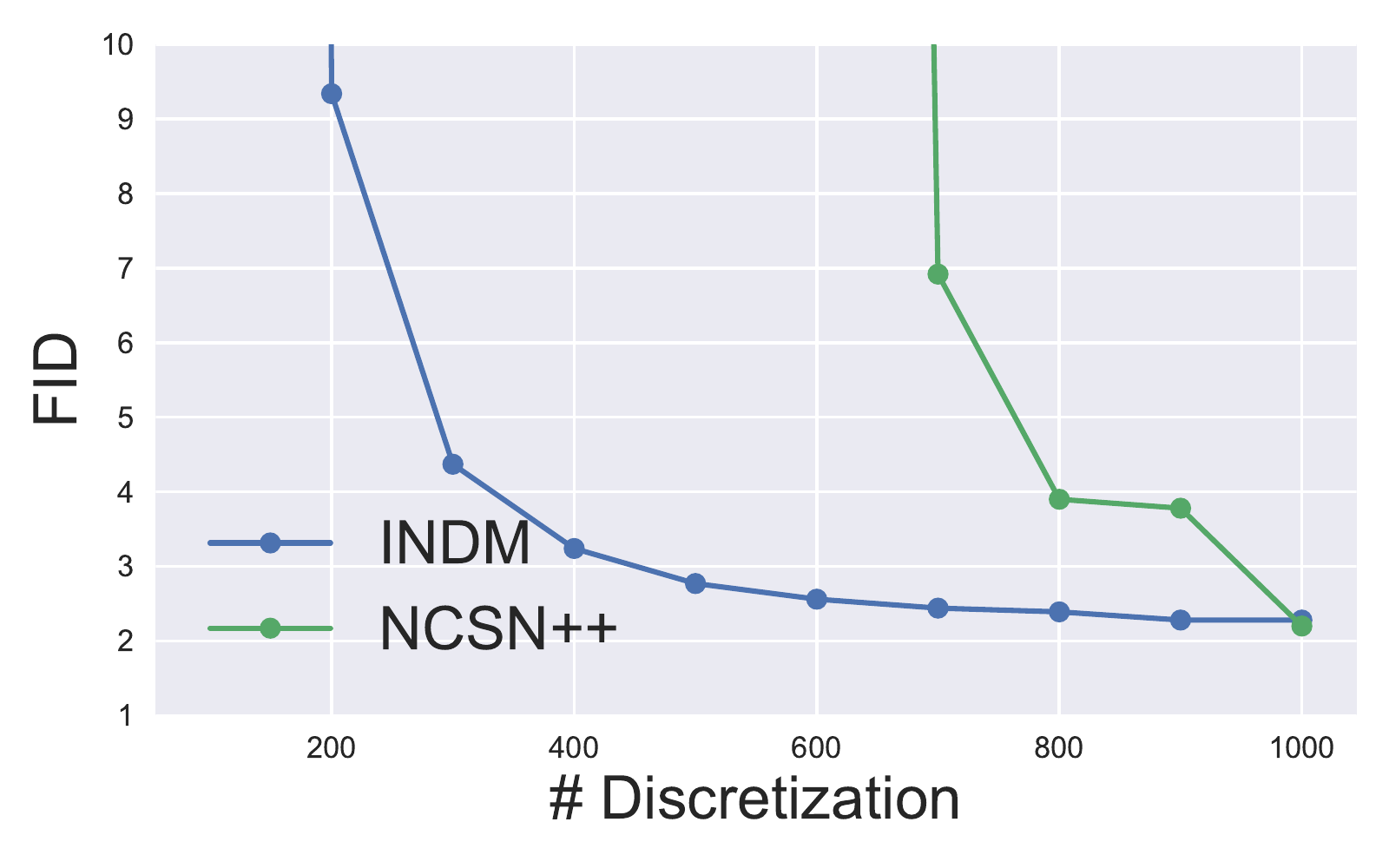}
			\vskip -0.05in
			\subcaption{FID by Discretization}
		\end{subfigure}
		\vskip -0.05in
		\caption{Comparison of INDM with baseline models, experimented on CIFAR-10.}
		\label{fig:discussion}
		\vskip -0.1in
	\end{figure*}
	
	\subsection{Benefit of INDM in Sampling}\label{sec:sampling_robustness}
	
	Figure \ref{fig:discussion}-(b,c) equally illustrate that INDM is more robust on the discretization step sizes in FID than DDPM++/NCSN++. To analyze the sample quality with respect to discretizations, recall that the Euler-Maruyama discretization of the generative SDE (or called the reverse diffusion sampler, or simply the predictor \cite{song2020score}) iteratively updates the sample $\mathbf{\tilde{z}}_{k}$ with
	\begin{align*}
	\mathbf{\tilde{z}}_{t_{k-1}}=\mathbf{\tilde{z}}_{t_{k}}+\gamma_{k}\bigg(\frac{1}{2}\beta(t_{k})\mathbf{\tilde{z}}_{t_{k}}+g^{2}(t_{k})\mathbf{s}_{\bm{\theta}}(\mathbf{\tilde{z}}_{t_{k}},t_{k})\bigg)+g(t_{k})\sqrt{\gamma_{k}}\bm{\epsilon},
	\end{align*}
	until time reaches to zero, where $\gamma_{k}=t_{k}-t_{k-1}$ is the step size of the discretized sampler and $\bm{\epsilon}\sim \mathcal{N}(0,\mathbf{I})$. The sampling error is the distributional discrepancy between the sample distribution of $\mathbf{h}_{\bm{\phi}}^{-1}(\mathbf{\tilde{z}}_{0})$ and the data distribution. Theorem \ref{thm:dsb} decomposes the sampling error with three factors: 1) the prior error $E_{pri}$, 2) the discretization error $E_{dis}$, and 3) the score error $E_{est}$. Note that Theorem \ref{thm:dsb} is a straightforward application of the analysis done by \citet{de2021diffusion} and \citet{guth2022wavelet}. We omit regularity conditions to avoid unnecessary complications; see Appendix \ref{appendix:dsb}.
	\begin{theorem}[\citet{de2021diffusion} and \citet{guth2022wavelet}]\label{thm:dsb}
		Assume that 1) $\sup_{\mathbf{z},t}\Vert\mathbf{s}_{\bm{\theta}^{*}}(\mathbf{z},t)-\nabla\log{p_{t}^{\bm{\phi}}(\mathbf{z})}\Vert\le M$, 2) $\sup_{\mathbf{z},t}\Vert\nabla^{2}\log{p_{t}^{\bm{\phi}}(\mathbf{z})}\Vert\le K$, and 3) $\sup_{\mathbf{z},t}\Vert\partial_{t}\nabla\log{p_{t}^{\bm{\phi}}(\mathbf{z})}\Vert/\Vert\mathbf{z}\Vert\le L e^{-\alpha t}$, for some $K,L,M,\alpha>0$. Then
		\begin{align*}
		\Vert p_{r}-(\mathbf{h}_{\bm{\phi}}^{-1})_{\#}\circ p_{0,N}^{\bm{\theta}}\Vert_{TV}\le E_{pri}(\bm{\phi})+E_{dis}(\bm{\phi})+E_{est}(\bm{\phi},\bm{\theta})+o(\sqrt{\delta}+e^{-T}),
		\end{align*}
		where $E_{pri}(\bm{\phi})=\sqrt{2}e^{-T}D_{KL}(p_{T}^{\bm{\phi}}\Vert\pi)^{1/2}$, $E_{dis}(\bm{\phi})=6\sqrt{\delta}(1+\mathbb{E}[\Vert\mathbf{z}\Vert^{4}]^{1/4})(1+K+L(1+1/\sqrt{2\alpha}))$, and $E_{est}(\bm{\phi},\bm{\theta})=2TM^{2}$ with $\delta=\max{\gamma_{k}}^{2}/\min{\gamma_{k}}$.
	\end{theorem}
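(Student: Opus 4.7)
The plan is to derive this as the INDM counterpart of the sampling-error analyses in \citet{de2021diffusion} and \citet{guth2022wavelet}: reduce the problem to the latent space where the dynamics is linear, then invoke those results term by term.

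\textbf{Reduction to the latent space.} First I would note that since $\mathbf{h}_{\bm{\phi}}$ is a diffeomorphism, total variation is invariant under its pushforward, so
\begin{align*}
\Vert p_{r}-(\mathbf{h}_{\bm{\phi}}^{-1})_{\#}p_{0,N}^{\bm{\theta}}\Vert_{TV}=\Vert (\mathbf{h}_{\bm{\phi}})_{\#}p_{r}-p_{0,N}^{\bm{\theta}}\Vert_{TV}=\Vert p_{0}^{\bm{\phi}}-p_{0,N}^{\bm{\theta}}\Vert_{TV}.
\end{align*}
The right-hand side is a TV distance between two laws on the latent space, where the forward dynamics is the linear VPSDE and the sampler is the standard Euler--Maruyama reverse diffusion --- precisely the framework of the cited works.

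\textbf{Triangle-inequality decomposition.} Next I would introduce three latent marginals at $t=0$: the exact reverse marginal $p_{0}^{\bm{\phi}}$ (obtained from the true-score reverse SDE started at $p_T^{\bm{\phi}}$), the auxiliary marginal $r_0$ coming from the true-score reverse SDE but started at $\pi$, and the continuous-time generative marginal $q_0^{\bm{\theta}}$ (learned-score reverse SDE started at $\pi$). The triangle inequality
\begin{align*}
\Vert p_0^{\bm{\phi}}-p_{0,N}^{\bm{\theta}}\Vert_{TV}\le \Vert p_0^{\bm{\phi}}-r_0\Vert_{TV}+\Vert r_0-q_0^{\bm{\theta}}\Vert_{TV}+\Vert q_0^{\bm{\theta}}-p_{0,N}^{\bm{\theta}}\Vert_{TV}
\end{align*}
then matches the three error sources in the statement: prior, estimation, and discretization.

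\textbf{Bounding the three terms.} For the prior term I would exploit that the reverse VPSDE acts as a contractive (time-reversed Ornstein--Uhlenbeck) semigroup; combining the data-processing inequality with the $e^{-T}$ contraction rate and Pinsker yields $E_{pri}(\bm{\phi})=\sqrt{2}e^{-T}D_{KL}(p_T^{\bm{\phi}}\Vert\pi)^{1/2}$. For the estimation term, Girsanov's theorem bounds the KL between two reverse SDEs with shared diffusion coefficient $g(t)$ by the time integral of $g^2(t)\Vert\mathbf{s}_{\bm{\theta}}-\mathbf{s}_{\bm{\phi}}\Vert^{2}$; plugging in assumption~1 yields the $E_{est}=2TM^{2}$ bound. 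For the discretization term I would invoke the Euler--Maruyama TV analysis of \citet{guth2022wavelet}, whose hypotheses translate exactly to assumptions~2 and~3 (uniform bound on $\nabla^{2}\log p_t^{\bm{\phi}}$ and exponential time-regularity of $\partial_t\nabla\log p_t^{\bm{\phi}}$); applied to our latent VPSDE, it produces $E_{dis}(\bm{\phi})$ with the $o(\sqrt{\delta})$ remainder.

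\textbf{Main obstacle.} The hard part will be the discretization bound. Since the uniform Hessian and time-regularity estimates are imposed as hypotheses, the remaining nontrivial verification is that the fourth moments $\mathbb{E}[\Vert\mathbf{z}_t^{\bm{\phi}}\Vert^{4}]$ stay finite and produce the displayed $(1+\mathbb{E}[\Vert\mathbf{z}\Vert^{4}]^{1/4})$ factor; this follows from a standard Gronwall estimate on the linear latent SDE. Once that moment bound is in place, the three pieces combine with $o(\sqrt{\delta}+e^{-T})$ lower-order contributions to give the stated inequality.
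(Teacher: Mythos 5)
Your reduction to latent-space TV via the invertibility of $\mathbf{h}_{\bm{\phi}}$ and the idea of a three-term triangle-inequality split both match the paper, and your treatments of the prior term (contraction plus Pinsker) and the estimation term (Girsanov/Pinsker, giving a $TM^{2}$-scale bound) are sound. The problem is where you place the discretization error. You compare the \emph{learned}-score continuous reverse SDE started at $\pi$ (your $q_{0}^{\bm{\theta}}$) with its own Euler--Maruyama discretization $p_{0,N}^{\bm{\theta}}$. But the regularity hypotheses available to control an Euler--Maruyama TV error --- assumptions 2 and 3, bounding $\nabla^{2}\log p_{t}^{\bm{\phi}}$ and $\partial_{t}\nabla\log p_{t}^{\bm{\phi}}$ --- are imposed on the \emph{true} forward score $\mathbf{s}_{\bm{\phi}}=\nabla\log p_{t}^{\bm{\phi}}$, not on the network $\mathbf{s}_{\bm{\theta}}$. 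Assumption 1 is only a sup-norm bound $\Vert\mathbf{s}_{\bm{\theta}}-\nabla\log p_{t}^{\bm{\phi}}\Vert\le M$, which gives no control whatsoever on $\nabla\mathbf{s}_{\bm{\theta}}$ or $\partial_{t}\mathbf{s}_{\bm{\theta}}$ (a uniformly close function can have wildly larger derivatives). So you cannot ``invoke the Euler--Maruyama TV analysis of Guth et al.'' on the learned-score dynamics; its hypotheses do not translate.

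The paper avoids this by interposing the Euler--Maruyama chain $\pi Q_{N}^{\bm{\phi}}$ driven by the \emph{true} score, and decomposes
\begin{align*}
\Vert p_{0}^{\bm{\phi}}-p_{0,N}^{\bm{\theta}}\Vert_{TV}\le\underbrace{\Vert p_{0}^{\bm{\phi}}\mathbb{P}_{T\vert 0}-\pi\Vert_{TV}}_{E_{pri}}+\underbrace{\Vert\pi{\mathbb{P}^{R}}_{T\vert 0}-\pi Q_{N}^{\bm{\phi}}\Vert_{TV}}_{E_{dis}}+\underbrace{\Vert\pi Q_{N}^{\bm{\phi}}-\pi Q_{N}^{\bm{\theta}}\Vert_{TV}}_{E_{est}},
\end{align*}
so the discretization term compares the true-score continuous reverse with its own discretization; assumptions 2 and 3 then apply verbatim, and the estimation term is a discrete-kernel comparison handled by Lemma S11/S13 of de Bortoli et al. To salvage your version you would need additional regularity assumptions on $\mathbf{s}_{\bm{\theta}}$, which the theorem does not grant. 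The moment-bound concern you raise for $E_{dis}$ is not the real obstacle --- $\mathbb{E}[\Vert\mathbf{z}\Vert^{4}]^{1/4}$ appears explicitly in the statement as a quantity about $p_{0}^{\bm{\phi}}$ and needs no extra argument; the obstacle is which process gets discretized.
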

	
	\begin{wrapfigure}{r}{0.3\textwidth}
		\vskip -0.25in
		\centering
		\includegraphics[width=\linewidth]{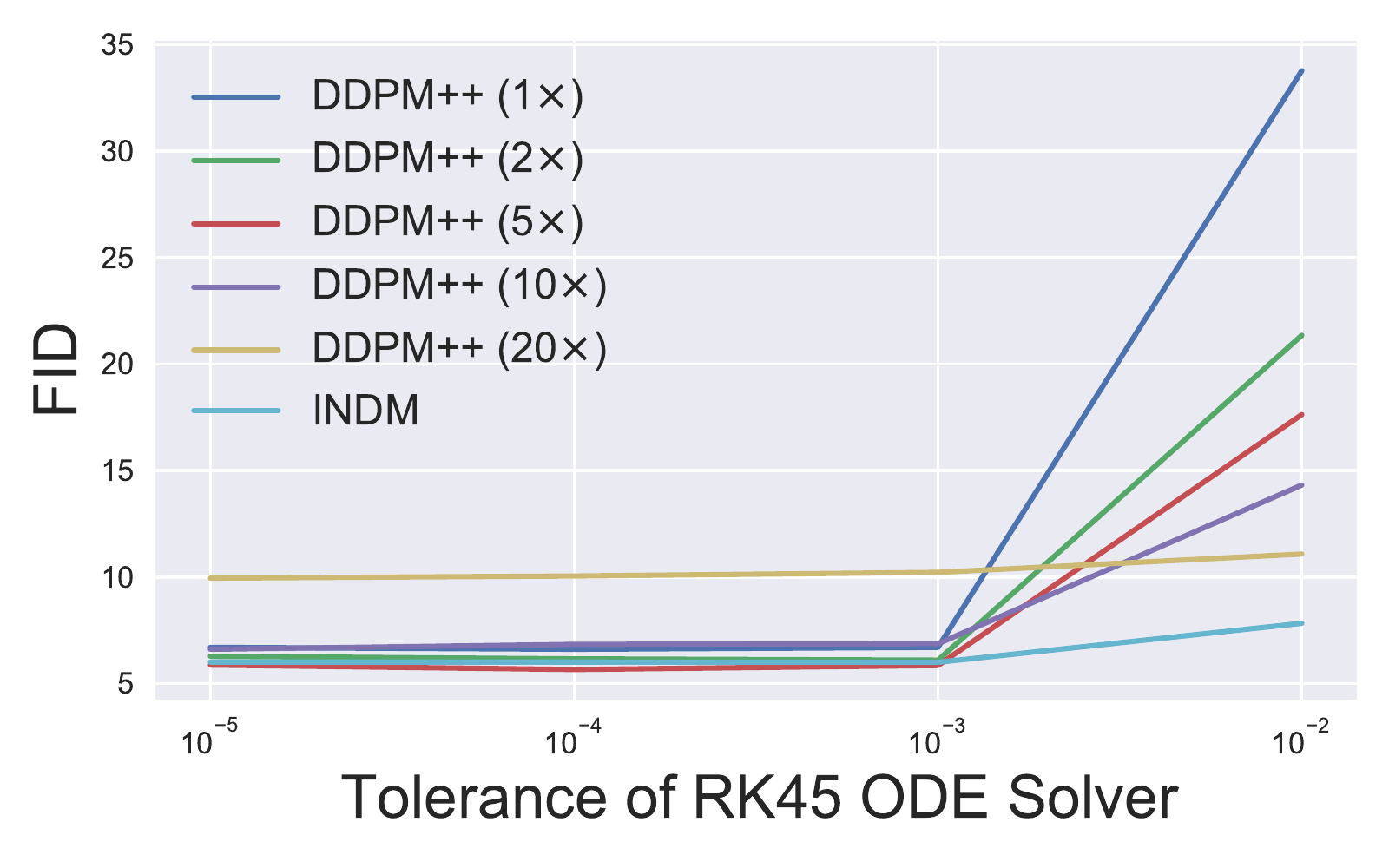}
		\vskip -0.1in
		\caption{Sensitivity analysis on scaled-up scenario.}
		\label{fig:sensitivity_analysis}
		\vskip -0.1in
	\end{wrapfigure}
	There are a pair of implications from Theorem \ref{thm:dsb}.
	\vspace{-2mm}
	\begin{itemize}\setlength\itemsep{0.2em}
	\item[\checkmark] $E_{pri}(\bm{\phi})$ and $E_{est}(\bm{\phi},\bm{\theta})$ are independent of the discretization steps.
	\item[\checkmark] $E_{dis}(\bm{\phi})/\sqrt{\delta}$ is the discretization sensitivity, entirely determined by the latent distribution's smoothness.
	\end{itemize}
	\vspace{-2mm}
	To the deep understanding of the second implication, let us assume $\mathbf{h}_{\bm{\phi}_{a}}(\mathbf{x})=a\mathbf{x}$ for some scalar $a>1$, then the sensitivity is anti-proportional to $a$ with the identical discretizations, i.e., $E_{dis}(\bm{\phi}_{a})\approx\frac{1}{a}E_{dis}(\bm{\phi}_{id})$. With a smaller sensitivity of $\bm{\phi}_{a}$, there is more room to reduce the number of discretization steps for $\bm{\phi}_{a}$. Figure \ref{fig:sensitivity_analysis} empirically supports the theory, showing that the sampler (at the large tolerance with $10^{-2}$) becomes more robust as $a$ increases, on CIFAR-10.
	
	\begin{figure*}[t]
		\centering
		\begin{subfigure}{0.32\linewidth}
			\includegraphics[width=\linewidth]{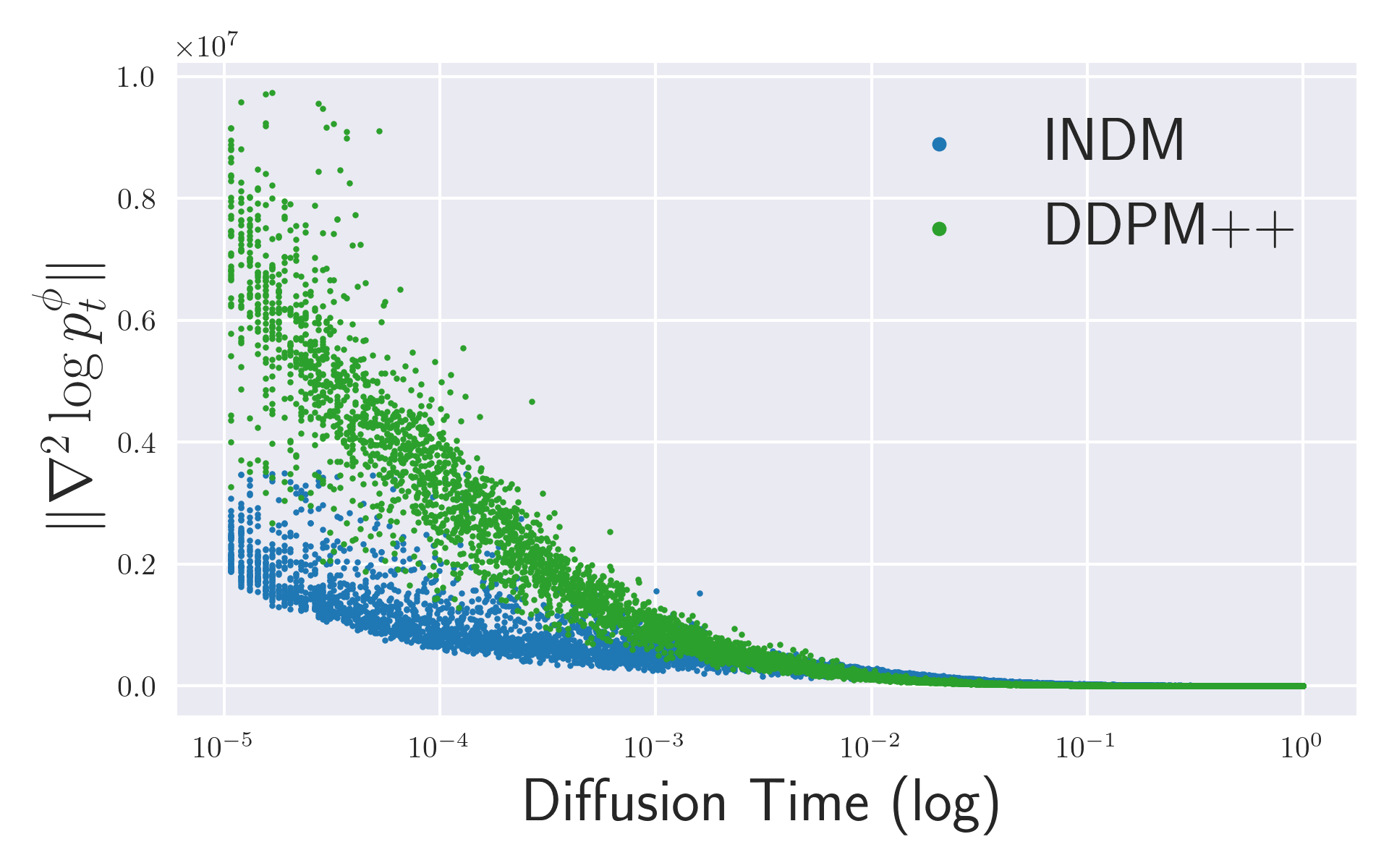}
			\vskip -0.05in
			\subcaption{$\Vert\nabla^{2}\log{p_{t}^{\bm{\phi}}(\mathbf{z})}\Vert$}
		\end{subfigure}
		\hfill
		\begin{subfigure}{0.32\linewidth}
			\includegraphics[width=\linewidth]{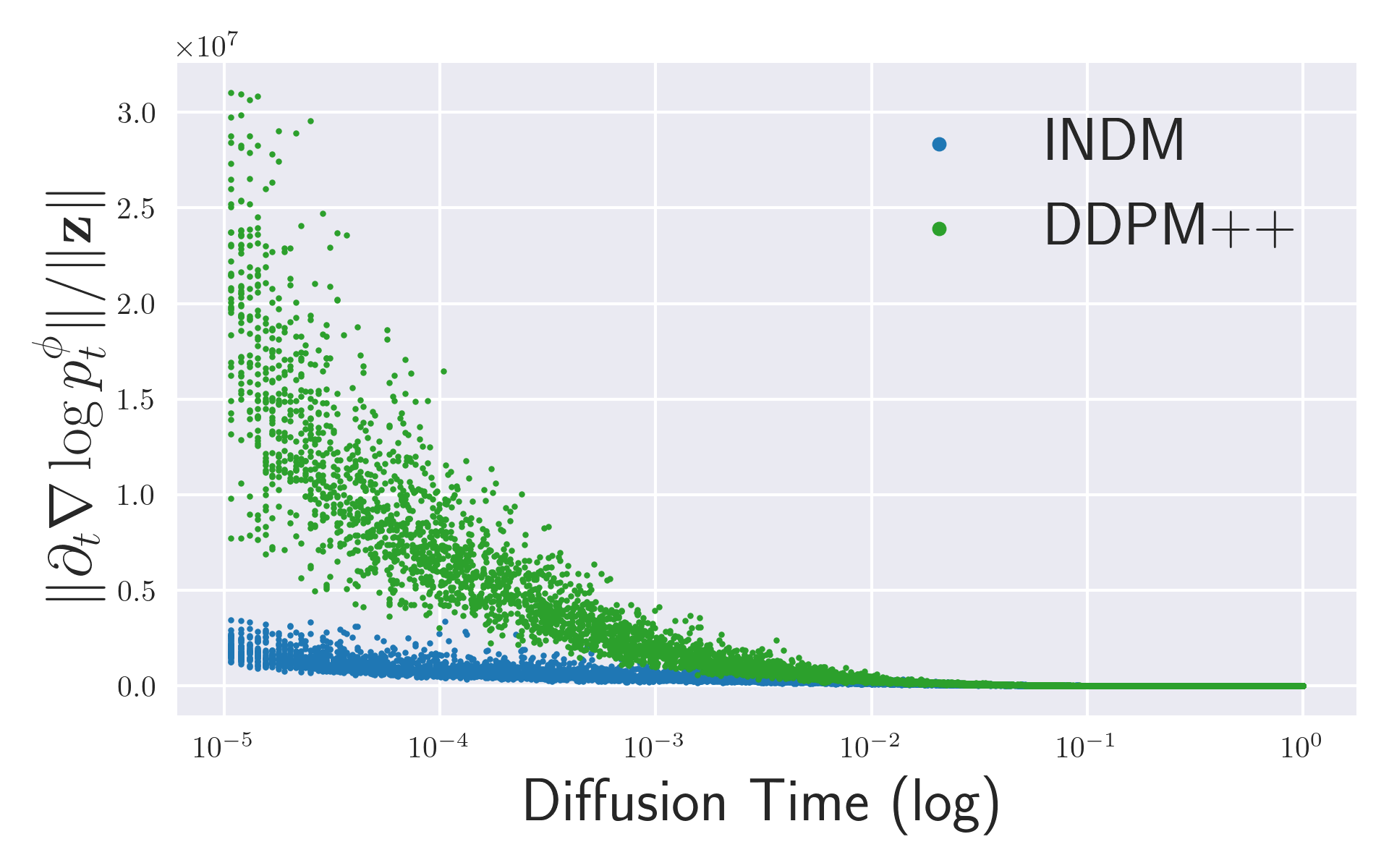}
			\vskip -0.05in
			\subcaption{$\Vert\partial_{t}\nabla\log{p_{t}^{\bm{\phi}}(\mathbf{z})}\Vert/\Vert\mathbf{z}\Vert$}
		\end{subfigure}
		\hfill
		\begin{subfigure}{0.32\linewidth}
			\includegraphics[width=\linewidth]{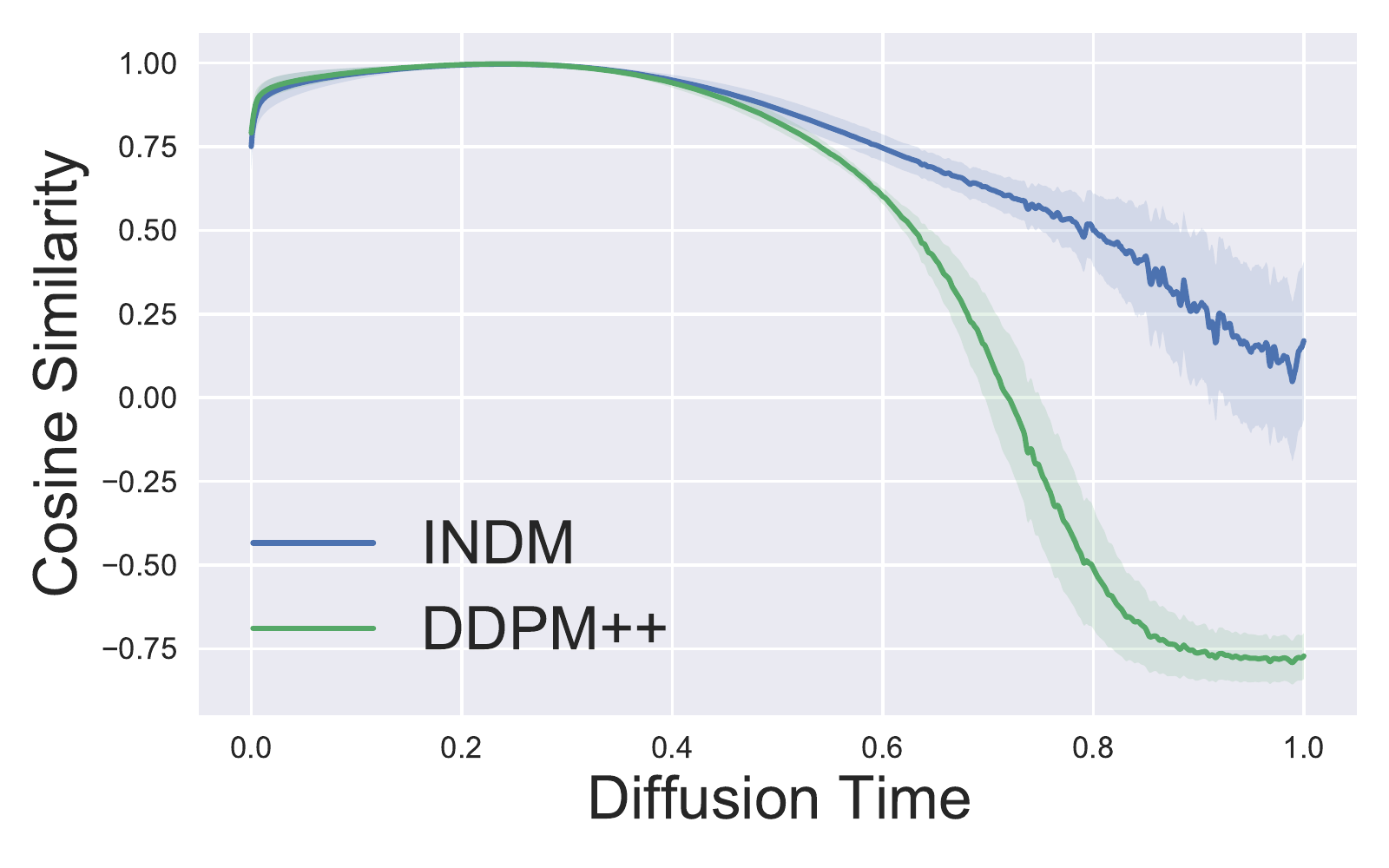}
			\vskip -0.05in
			\subcaption{Cosine Similarity}
		\end{subfigure}
		\vskip -0.05in
		\caption{(a,b) Comparison of INDM with DDPM++ for $K,L,\alpha$. (c) Cosine similarity of forward diffusion trajectories on CIFAR-10.}
		\label{fig:latent_geometry}
		\vskip -0.2in
	\end{figure*}
	
	Before we derive a concrete result from the implication, observe that the flow $\mathbf{h}_{\bm{\phi}}$ is maximizing $\text{det}(\nabla\mathbf{h}_{\bm{\phi}})$ in Eq. \eqref{main_eq:training_loss}. To understand the effect of flow training on the discretization sensitivity, let us restrict the hypothesis class of the transformation to be linear mappings of $\mathbf{h}_{\bm{\phi}_{a}}(\mathbf{x})=a\mathbf{x}$. Then, as the determinant increases by $a$, the trained diffusion model would be insensitive to the discretizations. Now, for the general case, Figure \ref{fig:latent_geometry}-(a,b) illustrate $\Vert\nabla^{2}\log{p_{t}^{\bm{\phi}}(\mathbf{z})}\Vert$ and $\Vert\partial_{t}\nabla\log{p_{t}^{\bm{\phi}}(\mathbf{z})}\Vert/\Vert\mathbf{z}\Vert$ on CIFAR-10, respectively. Also, $\mathbb{E}[\Vert\mathbf{z}\Vert^{4}]^{1/4}$ of INDM is slightly larger (1.3x) than DDPM++. Therefore, with these observations combined, we conclude that INDM is less sensitive to the discretization steps than DDPM++, from its loss design.
	
	\begin{wraptable}{r}{0.16\textwidth}
		\vskip -0.05in
		\caption{Average $L_{2}^{2}$ Norm.}
		\label{tab:manifold_geometry}
		\vskip -0.05in
		\tiny
		\centering
		\begin{tabular}{cc}
			\toprule
			Manifold & Norm \\\midrule
			Data & 776 \\
			Latent & 5,385\\
			Prior & 3,072\\
			\bottomrule
		\end{tabular}
		\vskip -0.2in
	\end{wraptable}
	Second, the robustness could originate from the geometry of the diffusion trajectory. The forward solution of VPSDE is $\mathbf{x}_{t}=\mu(t)\mathbf{x}_{0}+\sqrt{1-\mu^{2}(t)}\bm{\epsilon}$ with $\mu(t)=e^{-\frac{1}{2}\int_{0}^{t}\beta(s)\diff s}$, where the first term is a contraction mapping and the second term is the random perturbation. The contraction mapping points toward the origin, but the overall vector field of the diffusion path points outward because the prior manifold lies outside the data manifold, as shown in Table \ref{tab:manifold_geometry}. This contrastive force leads the drift and volatility coefficients works in repulsive and raises a highly nonlinear diffusion trajectory in DDPM++, see Figure \ref{fig:sample_trajectory_2d} for a toy illustration. On the other hand, the flow mapping of INDM pushes the latent manifold outside the prior manifold, and the drift and volatility coefficients act coherently. Hence, INDM has the relatively linear diffusion path; see Figures \ref{fig:linear_diffusion_by_range}, \ref{fig:diffusion_bridge}, and \ref{fig:2d_toy} for a quick intuition. Figure \ref{fig:latent_geometry}-(c) measures the cosine similarity of the ODE's diffusion trajectory with the straight line connecting the initial-final points of each trajectory. Figure \ref{fig:latent_geometry}-(c) implies that DDPM++ is under an inefficient nonlinear trajectory that reverts backward near the end of the trajectory, as in Figure \ref{fig:sample_trajectory_2d}. In contrast, INDM trajectory is relatively efficient and linear (Figure \ref{fig:illustrative_particle_trajectory}), which yields robust sampling by discretization steps; see Appendix \ref{appendix:latent_manifold} for details.
	
	\section{Experiments}\label{sec:experiments}
	
	This section quantitatively analyzes suggested INDM on CIFAR-10 and CelebA $64\times 64$. Throughout the experiments, we use NCSN++ with VESDE and DDPM++ with VPSDE \cite{song2020score} as the backbones of diffusion models, and a ResNet-based flow model \citep{chen2019residual, ma2020decoupling} as the backbone of the flow model. See Appendix \ref{appendix:experimental_details} for experimental details. We experiment with a pair of weighting functions for the score training. One is the likelihood weighting \cite{song2021maximum} with $\lambda(t)=g^{2}(t)$, and we denote INDM (NLL) for this weighing choice. The other is the variance weighting \cite{ho2020denoising} $\lambda(t)=\sigma^{2}(t)$ with an emphasis on FID, and we denote INDM (FID) for this weighting choice. 
	
	We use either the Predictor-Corrector (PC) sampler \cite{song2020score} or a numerical ODE solver (RK45 \cite{dormand1980family}) of the probability flow ODE \cite{song2020score}. For a better FID, we find the optimal signal-to-noise value (Table \ref{tab:snr_search}), sampling temperature (Table \ref{tab:temperature_search}), and stopping time (Table \ref{tab:line_search}). Moreover, sampling from $\mathbf{z}_{T}^{\bm{\phi}}$ rather than $\pi$ improves FID because $E_{pri}(\bm{\phi})$ collapses to zero in Theorem \ref{thm:dsb}, see Appendix \ref{appendix:sampling_tricks}. We compute NLL/NELBO for performances of density estimation with Bits Per Dimension (BPD). We compute NLL with the uniform dequantization, instead of the variational dequantization \cite{ho2019flow++} because it requires training an auxiliary network \cite{song2021maximum} only for the evaluation after the model training.
	
	\subsection{Correction on Likelihood Evaluation}
	
	A continuous diffusion model truncates the time horizon from $[0,T]$ to $[\epsilon,T]$ to avoid training instability \cite{kim2022soft}. In the model evaluation, this positive truncation could potentially be the primary source of poor evaluation (Figure 1-(c) of \citet{kim2022soft}), so we fix $\epsilon=10^{-5}$ as default in our training and evaluation. In the model evaluation, as the score network is untrained on $[0,\epsilon)$, we calculate NLL by the Right-Hand-Side (RHS) of Eq. \eqref{eq:nll_computation},
	\begin{align}\label{eq:nll_computation}
	\text{NLL}=\mathbb{E}_{\mathbf{x}_{0}}[-\log{p_{0}^{m}(\mathbf{x}_{0})}]\le\mathbb{E}_{\mathbf{x}_{0},\mathbf{x}_{\epsilon}}\bigg[-\log{p_{\epsilon}^{m}(\mathbf{x}_{\epsilon})}+\log{\frac{p_{\epsilon 0}^{m}(\mathbf{x}_{0}\vert\mathbf{x}_{\epsilon})}{p_{0\epsilon}(\mathbf{x}_{\epsilon}\vert\mathbf{x}_{0})}}\bigg].
	\end{align}
	Here, $p_{0}^{m}$ and $p_{\epsilon}^{m}$ are the model probability distributions at $t=0$ and $t=\epsilon$, respectively; and $p_{\epsilon 0}^{m}(\cdot\vert\mathbf{x}_{\epsilon})$ is the model's reconstruction probability of $\mathbf{x}_{0}$ given $\mathbf{x}_{\epsilon}$. RHS of Eq. \eqref{eq:nll_computation} is a generic formula to compute NLL in continuous diffusion models, including DDPM++, LSGM, and INDM. Previous continuous models \cite{song2020score, song2021maximum, vahdat2021score} have approximated $\mathbb{E}_{\mathbf{x}_{0}}[-\log{p_{0}^{m}(\mathbf{x}_{0})}]$ by $\mathbb{E}_{\mathbf{x}_{0}}[-\log{p_{\epsilon}^{m}(\mathbf{x}_{0})}]$. 
	
	There are two significant differences between our and the previous calculation: 1) the input of $p_{\epsilon}^{m}$ is replaced with $\mathbf{x}_{\epsilon}$ from $\mathbf{x}_{0}$ (Table \ref{tab:difference}); 2) the residual term of $\log{\frac{p_{\epsilon 0}^{m}(\mathbf{x}_{0}\vert\mathbf{x}_{\epsilon})}{p_{0\epsilon}(\mathbf{x}_{\epsilon}\vert\mathbf{x}_{0})}}$ is added. With this modification, our NLL differs from the previous NLL of $\mathbb{E}_{\mathbf{x}_{0}}[-\log{p_{\epsilon}^{m}(\mathbf{x}_{0})}]$ by about 0.03-0.06 in VPSDE, see Table \ref{tab:performance_cifar10}. We report both previous/corrected ways in Table \ref{tab:performance_cifar10} and report corrected NLL/NELBO as default; see Appendix \ref{appendix:correction_of_nll} for theoretical justification of our NLL/NELBO corrections.
	
	\subsection{Quantitative Results on Image Generation}
	
	\begin{wraptable}{r}{0.25\textwidth}
		\vskip -0.2in
		\caption{Effect of Pre-training.}
		\label{tab:pretraining_}
		\vskip -0.05in
		\tiny
		\centering
		\begin{tabular}{ccc}
			\toprule
			Model & NLL & FID \\\midrule
			DDPM++ & 3.03 & 6.70 \\
			\cc{15}INDM (w/ pre) & \cc{15}2.98 & \cc{15}6.01 \\
			\cc{15}INDM (w/o pre) & \cc{15}2.98 & \cc{15}8.49 \\
			\bottomrule
		\end{tabular}
		\vskip -0.1in
	\end{wraptable}
	\textbf{FID Boost with Pre-training} Training INDM from scratch improves NLL with the sacrifice of FID compared to DDPM++ in Table \ref{tab:pretraining_}. Therefore, we pre-train the score network by DDPM++ as default. This pre-training is intended to search the data nonlinearity near well-trained linear diffusions. Table \ref{tab:pretraining_} shows that training INDM after 500k of pre-training steps performs better than DDPM++ on both NLL and FID. Appendix \ref{appendix:pretraining} conducts the ablation study of pre-training steps.
	
	\begin{table*}[t]
		\vskip -0.05in
		\caption{Performance comparison to linear/nonlinear diffusion models on CIFAR-10. We report the performance of linear diffusions by training our PyTorch implementation based on \citet{song2020score, song2021maximum} with identical hyperparameters and score networks on both linear/nonlinear diffusions to quantify the effect of nonlinearity in a fair setting. Boldface numbers represent the best performance in a column.}
		\label{tab:performance_cifar10}
		\vskip -0.05in
		\tiny
		\centering
		\begin{adjustbox}{max width=\textwidth}
			\begin{tabular}{c|lc@{\hskip 0.3cm}r|cccccccc}
				\toprule
				\multirow{3}{*}{SDE} & \multirow{3}{*}{Model} & \multirow{3}{*}{\shortstack{Nonlinear Data\\Diffusion}} & \multirow{3}{*}{$\#$ Params} & \multicolumn{2}{c|}{NLL ($\downarrow$)} & \multicolumn{2}{c|}{NELBO ($\downarrow$)} & \multicolumn{2}{c|}{Gap ($\downarrow$)} & \multicolumn{2}{c}{FID ($\downarrow$)} \\
				&&&& \multirow{2}{*}{\shortstack{after\\correction}} & \multicolumn{1}{c|}{\multirow{2}{*}{\shortstack{before\\correction}}} &\multicolumn{1}{c}{w/ residual} & \multicolumn{1}{c|}{w/o residual} &\multicolumn{2}{c|}{(=NELBO-NLL)}& \multirow{2}{*}{ODE} & \multirow{2}{*}{PC} \\
				&&&& & \multicolumn{1}{c|}{} & \multicolumn{1}{c}{(after)} & \multicolumn{1}{c|}{(before)} & after & \multicolumn{1}{c|}{before} & &\\\midrule
				\multirow{2}{*}[-0pt]{VE} & NCSN++ (FID) & \xmark & 63M & 4.86 & 3.66 & 4.89 & 4.45 & 0.03 & 0.79 & - & 2.38 \\
				& \cc{15}INDM (FID) & \cc{15}\cmark & \cc{15}76M & \cc{15}3.22 & \cc{15}3.13 & \cc{15}3.28 & \cc{15}3.24 & \cc{15}0.06 & \cc{15}0.11 & \cc{15}- & \cc{15}\textbf{2.29} \\\midrule
				\multirow{5}{*}[-0pt]{VP} & DDPM++ (FID) & \xmark & 62M & 3.21 & 3.16 & 3.34 & 3.32 & 0.13 & 0.16 & 3.90 & 2.89 \\
				& \cc{15}INDM (FID) & \cc{15}\cmark & \cc{15}75M & \cc{15}3.17 & \cc{15}3.11 & \cc{15}3.23 & \cc{15}3.18 & \cc{15}0.06 & \cc{15}0.07 & \cc{15}\textbf{3.61} & \cc{15}2.90 \\\cmidrule(lr){2-12}
				& DDPM++ (NLL) & \xmark & 62M & 3.03 & 2.97 & 3.13 & 3.11 & 0.10 & 0.14 & 6.70 & 5.17 \\
				& \cc{15}INDM (NLL) & \cc{15}\cmark & \cc{15}75M & \cc{15}\textbf{2.98} & \cc{15}\textbf{2.95} & \cc{15}\textbf{2.98} & \cc{15}\textbf{2.97} & \cc{15}\textbf{0.00} & \cc{15}\textbf{0.02} & \cc{15}6.01 & \cc{15}5.30 \\
				\bottomrule
			\end{tabular}
		\end{adjustbox}
		\vskip -0.1in
	\end{table*}
	
	\begin{table}[t]
		\vskip -0.1in
		\begin{minipage}[c]{0.5\textwidth}
			\centering
			\caption{Performance comparison on CIFAR-10.}
			\label{tab:comparison}
			\tiny
			\centering
			\begin{tabular}{ccl@{\hskip 0.0cm}r@{\hskip 0.35cm}c@{\hskip 0.2cm}c}
				\toprule
				SDE&Type&Model&$\#$ Params& NLL & FID\\\midrule
				\multirow{3}{*}[-7pt]{Linear} && NCSN++ (FID) \cite{song2020score} & 108M & 4.85 & 2.20\\
				&& DDPM++ (FID) \cite{song2020score} & 108M & 3.19 & 2.64\\
				&& DDPM++ (NLL) \cite{song2020score} & 108M & 3.01 & 4.88\\
				&& VDM \cite{kingma2021variational} & - & \textbf{2.65}\tnote{\textdagger} & 7.41 \\
				&& CLD-SGM \cite{dockhorn2021score} & 108M & 3.31 & 2.25\\\midrule
				\multirow{10}{*}[-8pt]{Nonlinear} & SBP& SB-FBSDE \cite{chen2021likelihood} & 102M & 2.98 & 3.18\\\cmidrule(lr){2-6}
				&\multirow{5}{*}{\shortstack[c]{VAE\\-based}}& LSGM (FID) \cite{vahdat2021score} & 476M & 3.45 & \textbf{2.10} \\
				&& LSGM (NLL) \cite{vahdat2021score} & 269M & 2.97 & 6.15 \\
				&& LSGM (NLL) \cite{vahdat2021score} & 506M & 2.87 & 6.89 \\
				&& LSGM (balanced) \cite{vahdat2021score} & 109M & 2.96 & 4.60 \\
				&& LSGM (balanced) \cite{vahdat2021score} & 476M & 2.98 & 2.17 \\\cmidrule(lr){2-6}
				&\multirow{4}{*}[-2pt]{\shortstack[c]{Flow\\-based}} & DiffFlow (FID) \cite{zhang2021diffusion} & $\approx$36M & 3.04 & 14.14 \\\cmidrule(lr){3-6}
				&& \cc{15}INDM (FID) & \cc{15}118M & \cc{15}3.09 & \cc{15}2.28\\
				&& \cc{15}INDM (NLL) & \cc{15}121M & \cc{15}2.97 & \cc{15}4.79\\
				&& \cc{15}INDM (NLL) + ST & \cc{15}75M & \cc{15}3.01 & \cc{15}3.25 \\
				\bottomrule
			\end{tabular}
			\vskip -0.2in
		\end{minipage}
		\hfill
		\begin{minipage}[c]{0.44\textwidth}
			\centering
			\caption{Performance comparison on CelebA $64\times 64$.}
			\label{tab:performance_celeba}
			\centering
			\tiny
			\begin{tabular}{lcccc}
				\toprule
				Model & NLL & NELBO & Gap & FID\\\midrule
				UNCSN++ \cite{kim2022soft} & \textbf{1.93} & - & - & 1.92 \\
				DDGM \cite{nachmani2021non} & - & - & - & 2.92 \\
				Efficient-VDVAE \cite{hazami2022efficient} & - & \textbf{1.83} & - & - \\
				CR-NVAE \cite{sinha2021consistency} & - & 1.86 & - & - \\
				DenseFlow-74-10 \cite{grcic2021densely} & 1.99 & - & - & - \\
				StyleFormer \cite{park2022styleformer} & - & - & - & 3.66\\\midrule
				NCSN++ (VE, FID) & 3.41 & 3.42 & 0.01 & 3.95 \\
				\cc{15}INDM (VE, FID) & \cc{15}2.31 & \cc{15}2.33 & \cc{15}0.02 & \cc{15}2.54\\\cmidrule(lr){1-1}
				DDPM++ (VP, FID) & 2.14 & 2.21 & 0.07 & 2.32 \\
				\cc{15}INDM (VP, FID) & \cc{15}2.27 & \cc{15}2.31 & \cc{15}0.04 & \cc{15}\textbf{1.75} \\\cmidrule(lr){1-1}
				DDPM++ (VP, NLL) & 2.00 & 2.09 & 0.09 & 3.95 \\
				\cc{15}INDM (VP, NLL) & \cc{15}2.05 & \cc{15}2.05 & \cc{15}\textbf{0.00} & \cc{15}3.06 \\
				\bottomrule
			\end{tabular}
		\end{minipage}
		\vskip -0.15in
	\end{table}
	
	\textbf{Effect of Flow Training} Table \ref{tab:performance_cifar10} investigates how the flow training affects to performances, under the various linear diffusions and weighting functions. It compares the pre-trained NCSN++/DDPM++ with INDM, of which these pre-trained models initialize the score network of INDM. Experiments in Table \ref{tab:performance_cifar10} presents that INDM improves NELBO in any setting, and minimizes the variational gap to zero if we train the score network with the likelihood weighting.
	
	\textbf{SOTA on CelebA} Tables \ref{tab:comparison} and \ref{tab:performance_celeba} compare INDM to baseline models. With the emphasis on FID, LSGM is the State-Of-The-Art (SOTA) model on CIFAR-10, but INDM-118M (FID) is on par with LSGM-476M (FID). Moreover, we use Soft Truncation \cite{kim2022soft} to compare with LSGM (balanced). Soft Truncation softens the smallest diffusion time as a random variable in the training stage to boost sample performance by improving the score accuracy, particularly on large diffusion time. In the inference stage, Soft Truncation uses the fixed smallest diffusion time ($\epsilon$). INDM (NLL) + ST outperforms LSGM-109M (balanced) in terms of FID with comparable NLL. Also, INDM-121M (NLL) outperforms LSGM-269M (NLL) in FID with identical NLLs. We achieve SOTA FID of 1.75 on CelebA in Table \ref{tab:performance_celeba}. See Appendix \ref{appendix:quantitative_tables} for an extended comparison and Appendix \ref{appendix:samples} for samples.
	
	\subsection{Application Task: Dataset Interpolation}\label{sec:dataset_interpolation}
	
	\begin{figure*}[t]
		\vskip -0.05in
		\centering
		\includegraphics[width=0.9\linewidth]{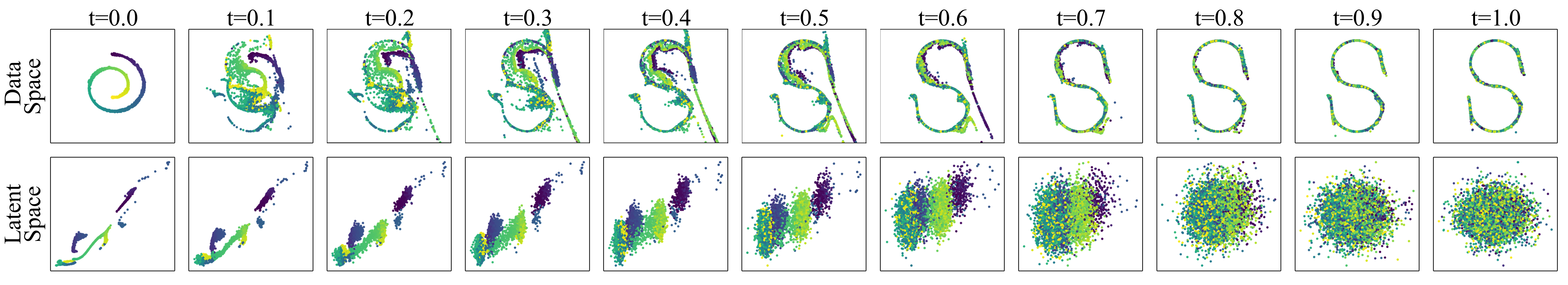}
		\vskip -0.05in
		\caption{INDM enables to learn a diffusion bridge between two distinctive data distributions.}
		\label{fig:translation_2d}
		\vskip -0.1in
	\end{figure*}
	
	The linear SDEs fixedly perturb data variables, so such SDEs should have the end distribution $p_{T}(\mathbf{x}_{T})$ as an easy-to-compute distribution. With the nonlinear extension, a complex diffusion process exists to transport $p_{r}^{(1)}$ to another arbitrary $p_{r}^{(2)}$. However, a common practice of diffusion models constrains only the starting variable by $\mathbf{x}_{0}^{\bm{\phi}}\sim p_{r}^{(1)}$, so the ending variable of $\mathbf{x}_{T}^{\bm{\phi}}$ is free to deviate from $p_{r}^{(2)}$. Previous works have tackled this data interpolation task by using a conditional diffusion model \cite{saharia2021palette} or a couple of jointly trained source-and-target diffusion models \cite{sasaki2021unit} on paired datasets. Among unconditional diffusion models using unpaired datasets, SBP \cite{de2021diffusion} is a natural approach for the task by imposing bi-constraints with $\mathcal{P}(p_{r},\pi)$ replaced by $\mathcal{P}(p_{r}^{(1)},p_{r}^{(2)})$.
	
	\begin{wrapfigure}{r}{0.5\textwidth}
		\vskip -0.15in
		\centering
		\includegraphics[width=\linewidth]{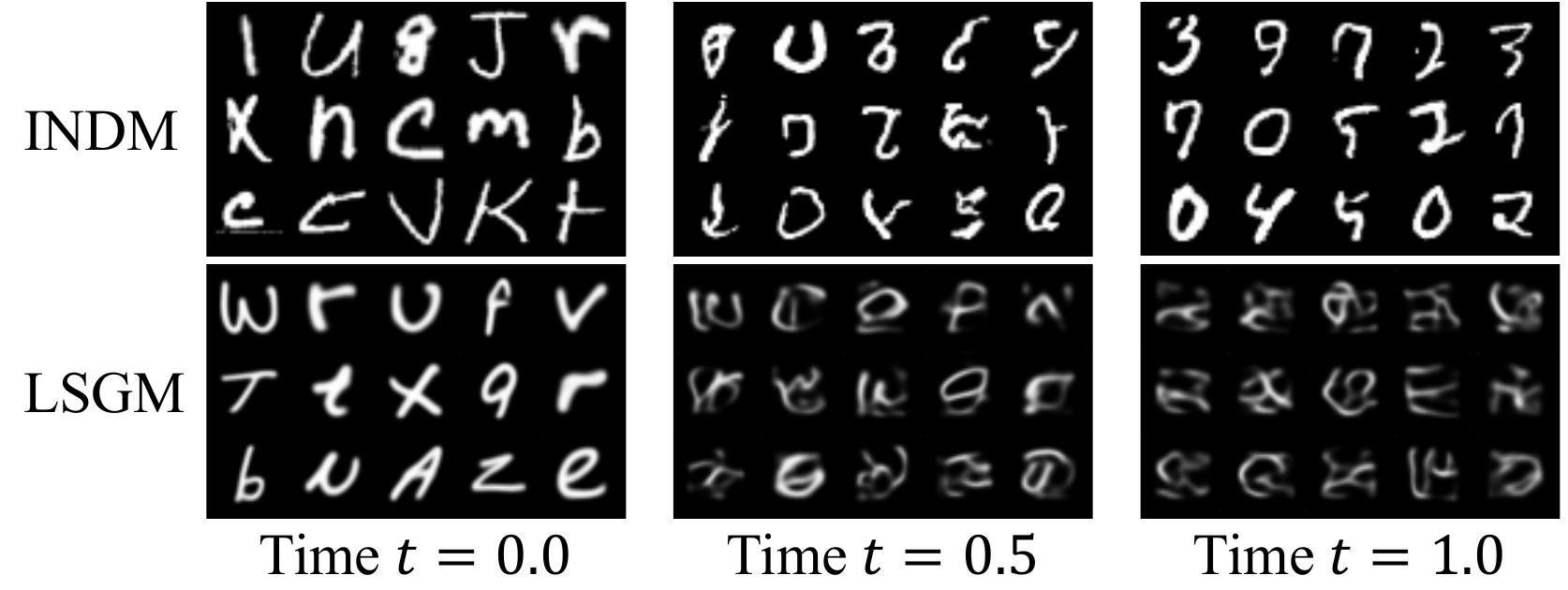}
		\vskip -0.05in
		\caption{Image-to-image translation from EMNIST letters dataset to MNIST digits dataset.}
		\label{fig:EMNIST_MNIST}
		\vskip -0.25in
	\end{wrapfigure}
	INDM can alternatively train the nonlinear diffusion from $p_{r}^{(1)}$ to $p_{r}^{(2)}$ with unpaired datasets. We train the score and flow networks with a loss
	\begin{align*}
	\underbrace{D_{KL}(\bm{\mu}_{\bm{\phi}}\Vert\bm{\nu}_{\bm{\phi},\bm{\theta}})}_{\text{INDM NELBO}}+\underbrace{D_{KL}(p_{r}^{(2)}\Vert p_{\bm{\phi}})}_{\text{Interpolation Loss}},
	\end{align*}
	where $p_{\bm{\phi}}$ is the probability distribution of $\mathbf{x}_{T}^{\bm{\phi}}$, which is calculated by a single feed-forward computation of the flow network, see Algorithm \ref{alg:interpolation} in Appendix \ref{appendix:interpolation__}. The additional interpolation loss forces the diffusion bridge $\{\mathbf{x}_{t}^{\bm{\phi}}\}_{t=0}^{T}$ to ahead towards $p_{r}^{(2)}$ by minimizing the KL divergence between $\mathbf{x}_{T}^{\bm{\phi}}\sim p_{\bm{\phi}}$ and $p_{r}^{(2)}$. As the destined variable of the bridge becomes $\mathbf{x}_{T}^{\bm{\phi}}=\mathbf{h}_{\bm{\phi}}^{-1}(\mathbf{z}_{T})\approx\mathbf{h}_{\bm{\phi}}^{-1}(\pi)$, the flow network is what constructs the interpolated bridge between a couple of data variables, see Figures \ref{fig:translation_2d} and \ref{fig:EMNIST_MNIST}. Particularly, Figure \ref{fig:EMNIST_MNIST} shows that LSGM fails to interpolate a letter to a digit, and we attribute this failure to the non-existence of a diffusion bridge in LSGM. Also, INDM models $p_{r}^{(1)}$ with $p_{\bm{\phi},\bm{\theta}}$ and $p_{r}^{(2)}$ with $p_{\bm{\phi}}$, so we could compute NLL of each dataset \textit{separately}: 1.10 BPD for MNIST and 1.56 BPD for EMNIST. In contrast, SBP cannot separately estimate densities on each dataset. We emphasize that no additional neural network is needed for the interpolation task with INDM.
	
	\section{Conclusion}\label{sec:conclusion}
	
	This paper expands the linear diffusion to trainable nonlinear diffusion by combining an invertible transformation and a diffusion model, where the nonlinear diffusion learns the forward diffusion out of variational family of inference measures. A limitation of INDM lies in the training/evaluation time. Potential risk from this work is the negative use of deep generative models, such as deepfake images.
	
	\section*{Acknowledgements}
	This research was supported by AI Technology Development for Commonsense Extraction, Reasoning, and Inference from Heterogeneous Data(IITP) funded by the Ministry of Science and ICT(2022-0-00077). Also, this work was supported by the National Research Foundation of Korea (NRF) grant funded by the Korea government(MSIT) (NRF-2019R1A5A1028324).
	
	\bibliography{references}

	\newpage
	\appendix
	
	\tableofcontents
	\newpage
	\parttoc
	
	\section{Derivations}\label{appendix:data_diffusion_derivation}

	\subsection{Derivation of Variational Bound for Nonlinear Diffusion}\label{appendix:derivation_of_variational_bound}
	
	The variational bound derived in \citet{song2021maximum} is only applicable when $\mathbf{G}(\mathbf{x}_{t},t)$ is reduced to $g(t)\mathbf{I}$. This section, therefore, derives the variational bound of a general diffusion SDE of $\diff\mathbf{x}_{t}=\mathbf{f}(\mathbf{x}_{t},t)\diff t+\mathbf{G}(\mathbf{x}_{t},t)\diff\mathbf{w}_{t}$, and we analyze why learning $\mathbf{f}$ and $\mathbf{G}$ is infeasible if 1) the transition probability of $p_{0t}(\mathbf{x}_{t}\vert\mathbf{x}_{0})$ is intractable and 2) $\mathbf{G}$ is anisotropic by $\mathbf{x}_{t}$. 
	
	From \citet{anderson1982reverse}, the corresponding reverse SDE of $\diff\mathbf{x}_{t}=\mathbf{f}(\mathbf{x}_{t},t)\diff t+\mathbf{G}(\mathbf{x}_{t},t)\diff\mathbf{w}_{t}$ is
	\begin{align}\label{eq:general_reverse}
	\diff\mathbf{x}_{t}=\big[\mathbf{f}(\mathbf{x}_{t},t)-\text{div}(\mathbf{G}\mathbf{G}^{T})-\mathbf{G}\mathbf{G}^{T}\nabla\log{p_{t}(\mathbf{x}_{t})}\big]\diff \bar{t}+\mathbf{G}(\mathbf{x}_{t},t)\diff\mathbf{\bar{w}}_{t},
	\end{align}
	and the generative SDE becomes
	\begin{align}
	\diff\mathbf{x}_{t}=\big[\mathbf{f}(\mathbf{x}_{t},t)-\text{div}(\mathbf{G}\mathbf{G}^{T})-\mathbf{G}\mathbf{G}^{T}\mathbf{s}(\mathbf{x}_{t},t)\big]\diff \bar{t}+\mathbf{G}(\mathbf{x}_{t},t)\diff\mathbf{\bar{w}}_{t}.\label{eq:general_generative}
	\end{align}	
	Then, from the Girsanov theorem \cite{sarkka2019applied} and the martingale property \cite{oksendal2013stochastic}, using the disintegration property of the KL divergence, we have
	\begin{align}
	\begin{split}\label{eq:generalized_diffusion_loss}
	&D_{KL}(\bm{\mu}\Vert\bm{\nu})=D_{KL}(p_{T}(\mathbf{x}_{T})\Vert \pi(\mathbf{x}_{T}))\\
	&\quad+\frac{1}{2}\int_{0}^{T}\mathbb{E}_{\mathbf{x}_{t}}\Big[\big(\mathbf{s}(\mathbf{x}_{t},t)-\nabla\log{p_{t}(\mathbf{x}_{t})}\big)^{T}\mathbf{G}\mathbf{G}^{T}\big(\mathbf{s}(\mathbf{x}_{t},t)-\nabla\log{p_{t}(\mathbf{x}_{t})}\big)\Big]\diff t,
	\end{split}
	\end{align}
	where $\bm{\mu}$ is the path measure of Eq. \eqref{eq:general_reverse} and $\bm{\nu}$ is the path measure of Eq. \eqref{eq:general_generative}. Therefore, from the data processing inequality \cite{duchi2016lecture}, we get
	\begin{eqnarray*}
		\lefteqn{D_{KL}(p_{r}\Vert p)\le D_{KL}(\bm{\mu}\Vert\bm{\nu})=D_{KL}(p_{T}(\mathbf{x}_{T})\Vert \pi(\mathbf{x}_{T}))}&\\
		&&+\frac{1}{2}\int_{0}^{T}\mathbb{E}_{\mathbf{x}_{t}}\Big[\big(\mathbf{s}(\mathbf{x}_{t},t)-\nabla\log{p_{t}(\mathbf{x}_{t})}\big)^{T}\mathbf{G}\mathbf{G}^{T}\big(\mathbf{s}(\mathbf{x}_{t},t)-\nabla\log{p_{t}(\mathbf{x}_{t})}\big)\Big]\diff t,
	\end{eqnarray*}
	where $p$ is the generative distribution at $t=0$.
	
	Now, from the Fokker-Planck equation, the density function satisfies
	\begin{align*}
	\frac{\partial p_{t}}{\partial t}=-\sum_{j}\frac{\partial}{\partial x_{t,j}}\Big[f_{j}(\mathbf{x}_{t},t)p_{t}(\mathbf{x}_{t})-\sum_{i}\frac{\partial}{\partial x_{t,j}}(H_{ij}(\mathbf{x}_{t},t)p_{t}(\mathbf{x}_{t}))\Big],
	\end{align*}
	where $\mathbf{H}(\mathbf{x}_{t},t)=\frac{1}{2}\mathbf{G}(\mathbf{x}_{t},t)\mathbf{G}(\mathbf{x}_{t},t)^{T}$. Then, analogous to Theorem 4 of \citet{song2021maximum}, the entropy becomes
	\begin{eqnarray*}
		\lefteqn{\mathcal{H}(p_{r})-\mathcal{H}(p_{T})=-\int_{0}^{T}\frac{\partial}{\partial t}\mathcal{H}(p_{t})\diff t}&\\
		&&=\int_{0}^{T}\int\frac{\partial p_{t}}{\partial t}\log{p_{t}(\mathbf{x}_{t})}\diff \mathbf{x}_{t}\diff t\\
		&&=-\int_{0}^{T}\int\sum_{j}\frac{\partial}{\partial x_{t,j}}\Big[f_{j}(\mathbf{x}_{t},t)p_{t}(\mathbf{x}_{t})-\sum_{i}\frac{\partial}{\partial x_{t,i}}(H_{ij}(\mathbf{x}_{t},t)p_{t}(\mathbf{x}_{t}))\Big]\log{p_{t}(\mathbf{x}_{t})}\diff\mathbf{x}_{t}\diff t\\
		&&=\int_{0}^{T}\int \sum_{j}\Big[f_{j}(\mathbf{x}_{t},t)p_{t}(\mathbf{x}_{t})-\sum_{i}\frac{\partial}{\partial x_{t,i}}(H_{ij}(\mathbf{x}_{t},t)p_{t}(\mathbf{x}_{t}))\Big]\frac{\partial \log{p_{t}(\mathbf{x}_{t})}}{\partial x_{t,j}}\diff\mathbf{x}_{t}\diff t\\
		&&=\int_{0}^{T}\int p_{t}(\mathbf{x}_{t})\sum_{j} f_{j}(\mathbf{x}_{t},t)\frac{\partial\log{p_{t}(\mathbf{x}_{t})}}{\partial x_{t,j}}\diff\mathbf{x}_{t}\diff t\\
		&&\quad-\int_{0}^{T}\int \sum_{j}\sum_{i}\bigg(\frac{\partial H_{ij}}{\partial x_{t,i}}p_{t}+H_{ij}\frac{\partial p_{t}}{\partial x_{t,i}}\bigg)\frac{\partial\log{p_{t}}}{\partial x_{t,j}}\diff\mathbf{x}_{t}\diff t\\
		&&=-\int_{0}^{T}\mathbb{E}_{\mathbf{x}_{t}}[\text{div}(\mathbf{f}(\mathbf{x}_{t},t))]\diff t\\
		&&\quad-\int_{0}^{T}\mathbb{E}_{\mathbf{x}_{t}}[\text{div}(\mathbf{H}(\mathbf{x}_{t},t))^{T}\nabla\log{p_{t}(\mathbf{x}_{t})}]+\mathbb{E}_{\mathbf{x}_{t}}[\nabla\log{p_{t}(\mathbf{x}_{t})}^{T}\mathbf{H}(\mathbf{x}_{t},t)\nabla\log{p_{t}(\mathbf{x}_{t})}]\diff t\\
		&&=-\frac{1}{2}\int_{0}^{T}\mathbb{E}_{\mathbf{x}_{t}}\Big[ 2\text{div}(\mathbf{f}(\mathbf{x}_{t},t))+\text{div}(\mathbf{G}(\mathbf{x}_{t},t)\mathbf{G}(\mathbf{x}_{t},t)^{T})^{T}\nabla\log{p_{t}(\mathbf{x}_{t})}\\
		&&\quad\quad\quad\quad\quad\quad\quad\quad+\nabla\log{p_{t}(\mathbf{x}_{t})}^{T}\mathbf{G}(\mathbf{x}_{t},t)\mathbf{G}(\mathbf{x}_{t},t)^{T}\nabla\log{p_{t}(\mathbf{x}_{t})} \Big]\diff t.
	\end{eqnarray*}
	Therefore, the variational bound of the model log-likelihood is derived by
	\begin{eqnarray*}
		\lefteqn{\mathbb{E}_{p_{r}(\mathbf{x}_{0})}\big[-\log{p(\mathbf{x}_{0})}\big]=D_{KL}(p_{r}\Vert p)+\mathcal{H}(p_{r})\le D_{KL}(\bm{\mu}\Vert\bm{\nu})+\mathcal{H}(p_{r})}&\\
		&&=\frac{1}{2}\int_{0}^{T}\mathbb{E}_{\mathbf{x}_{t}}\big[(\nabla\log{p_{t}(\mathbf{x}_{t})}-\mathbf{s}(\mathbf{x}_{t},t))^{T}\mathbf{G}\mathbf{G}^{T}(\nabla\log{p_{t}(\mathbf{x}_{t})}-\mathbf{s}(\mathbf{x}_{t},t))\big]\diff t\\
		&&\quad+\mathbb{E}_{\mathbf{x}_{T}}\big[-\log{\pi(\mathbf{x}_{T})}\big]+\mathcal{H}(p_{r})-\mathcal{H}(p_{T})\\
		&&=\frac{1}{2}\int_{0}^{T}\mathbb{E}_{\mathbf{x}_{t}}\Big[\big(\mathbf{s}(\mathbf{x}_{t},t)-\nabla\log{p_{t}(\mathbf{x}_{t})}\big)^{T}\mathbf{G}\mathbf{G}^{T}\big(\mathbf{s}(\mathbf{x}_{t},t)-\nabla\log{p_{t}(\mathbf{x}_{t})}\big)\\
		&&\quad-\nabla\log{p_{t}(\mathbf{x}_{t})}^{T}\mathbf{G}\mathbf{G}^{T}\nabla\log{p_{t}(\mathbf{x}_{t})}-\text{div}(\mathbf{G}\mathbf{G}^{T})^{T}\nabla\log{p_{t}(\mathbf{x}_{t})}-2\text{div}(\mathbf{f}(\mathbf{x}_{t},t))\Big]\diff t\\
		&&\quad+\mathbb{E}_{\mathbf{x}_{T}}\big[-\log{\pi(\mathbf{x}_{T})}\big].
	\end{eqnarray*}
	Using the integration by parts, this variational bound transforms to
	\begin{align*}
	\mathbb{E}_{p_{r}(\mathbf{x}_{0})}\big[-\log{p_{\bm{\theta}}(\mathbf{x}_{0})}\big]\le&\frac{1}{2}\int_{0}^{T}\mathbb{E}_{\mathbf{x}_{t}}\Big[\mathbf{s}^{T}\mathbf{G}\mathbf{G}^{T}\mathbf{s}+2\text{div}(\mathbf{G}\mathbf{G}^{T}\mathbf{s})\\
	&-\text{div}(\mathbf{G}\mathbf{G}^{T})\nabla\log{p_{t}}-2\text{div}(\mathbf{f})\Big]\diff t+\mathbb{E}_{\mathbf{x}_{T}}\big[-\log{\pi(\mathbf{x}_{T})}\big]
	\end{align*}
	Also, this variational bound is equivalently formulated as
	\begin{align*}
	&\mathbb{E}_{p_{r}(\mathbf{x}_{0})}\big[-\log{p(\mathbf{x}_{0})}\big]\\
	&\quad\le \frac{1}{2}\int_{0}^{T}\mathbb{E}_{\mathbf{x}_{0},\mathbf{x}_{t}}\Big[\big(\mathbf{s}(\mathbf{x}_{t},t)-\nabla\log{p_{0t}(\mathbf{x}_{t}\vert\mathbf{x}_{0})}\big)^{T}\mathbf{G}\mathbf{G}^{T}\big(\mathbf{s}(\mathbf{x}_{t},t)-\nabla\log{p_{0t}(\mathbf{x}_{t}\vert\mathbf{x}_{0})}\big)\\
	&\quad\quad-\nabla\log{p_{0t}(\mathbf{x}_{t}\vert\mathbf{x}_{0})}^{T}\mathbf{G}\mathbf{G}^{T}\nabla\log{p_{0t}(\mathbf{x}_{t}\vert\mathbf{x}_{0})}-\text{div}(\mathbf{G}\mathbf{G}^{T})^{T}\nabla\log{p_{0t}(\mathbf{x}_{t}\vert\mathbf{x}_{0})}-2\text{div}(\mathbf{f})\Big]\diff t\\
	&\quad\quad+\mathbb{E}_{\mathbf{x}_{T}}\big[-\log{\pi(\mathbf{x}_{T})}\big].
	\end{align*}
	Therefore, optimizing the nonlinear drift ($\mathbf{f}$) and diffusion ($\mathbf{G}$) terms are intractable in general for two reasons. First, the transition probability of $p_{0t}(\mathbf{x}_{t}\vert\mathbf{x}_{0})$ is intractable for nonlinear SDEs. To compute $p_{0t}(\mathbf{x}_{t}\vert\mathbf{x}_{0})$, one needs the Feynman-Kac formula \cite{huang2021variational} which requires expectation on every sample paths, see Appendix \ref{appendix:nll_correction}.
	
	Second, even if $p_{0t}(\mathbf{x}_{t}\vert\mathbf{x}_{0})$ is tractable, computing the above variational bound would not be scalable due to the matrix multiplication of $\mathbf{G}\mathbf{G}^{T}$ that is of $O(d^{2})$ complexity and the divergence computation \cite{avron2011randomized}. These would become the main source of training bottleneck if dimension increases.
	
	\subsection{Derivation of Nonlinear Drift and Volatility Terms for INDM}\label{appendix:derivation_of_diffusion}
	
	Throughout this section, we omit $\bm{\phi}$ for notational simplicity. With the linear SDE on latent space
	\begin{align}\label{eq:linear_diffusion_ap}
	\diff\mathbf{z}_{t}=-\frac{1}{2}\beta(t)\mathbf{z}_{t}\diff t+g(t)\diff \mathbf{w}_{t},\quad\mathbf{z}_{0}=\mathbf{h}(\mathbf{x}_{0})\text{ with }\mathbf{x}_{0}\sim p_{r},
	\end{align}
	from $\mathbf{x}_{t}=\mathbf{h}^{-1}(\mathbf{z}_{t})$, the $k$-th component of the induced variable satisfies
	\begin{align}\label{eq:ito_ap}
	\diff \mathrm{x}_{t,k}= \frac{\partial \mathrm{h}_{k}^{-1}}{\partial t} \diff t + \big[\nabla_{\mathbf{z}_{t}}\mathrm{h}_{k}^{-1}(\mathbf{z}_{t})\big]^{T} \diff\mathbf{z}_{t}+ \frac{1}{2} \text{tr} \left ( \nabla_{\mathbf{z}_{t}}^{2} \mathrm{h}_{k}^{-1}(\mathbf{z}_{t})\diff\mathbf{z}_{t}\diff\mathbf{z}_{t}^{T}  \right )
	\end{align}
	by the multivariate Ito's Lemma \citep{oksendal2013stochastic}. Plugging the linear SDE of Eq. \eqref{eq:linear_diffusion_ap}, Eq. \eqref{eq:ito_ap} is transformed to
	\begin{align}\label{eq:derived_ito_ap}
	\begin{split}
	\diff\mathrm{x}_{t,k} &= \big[\nabla_{\mathbf{z}_t}\mathrm{h}_{k}^{-1}(\mathbf{z}_{t})\big]^{T} \left \{ -\frac{1}{2}\beta(t)\mathbf{z}_{t}\diff t+g(t)\diff \mathbf{w}_{t} \right \} + \frac{1}{2} g^{2}(t)\text{tr} \left ( \nabla_{\mathbf{z}_{t}}^{2} \mathrm{h}_{k}^{-1}(\mathbf{z}_{t})   \right )\diff t \\
	&= \left \{ -\frac{1}{2} \beta(t) \big[\nabla_{\mathbf{z}_{t}}\mathrm{h}_{k}^{-1}(\mathbf{z}_{t})\big]^{T}\mathbf{z}_{t} + \frac{1}{2} g^{2}(t) \text{tr} \left ( \nabla_{\mathbf{z}_{t}}^{2} \mathrm{h}_{k}^{-1}(\mathbf{z}_{t})    \right ) \right \} \diff t + g(t) \big[\nabla_{\mathbf{z}_t}\mathrm{h}_{k}^{-1}(\mathbf{z}_{t})\big]^{T} \diff \mathbf{w}_{t},
	\end{split}
	\end{align}
	because $\frac{\partial\mathrm{h}_{k}^{-1}}{\partial t}=0$. Then, Eq. \eqref{eq:derived_ito_ap} in vector form becomes
	\begin{align*}
	\diff\mathbf{x}_{t}=\mathbf{f}(\mathbf{x}_{t},t)\diff t+\mathbf{G}(\mathbf{x}_{t},t)\diff\mathbf{w}_{t},
	\end{align*}
	where the vector-valued drift and the matrix-valued volatility terms are given by
	\begin{align}\label{eq:drift_ap}
	\begin{split}
	\left\{\begin{array}{ll}
	\mathbf{f}(\mathbf{x}_{t},t)=-\frac{1}{2} \beta(t) \nabla_{\mathbf{z}_{t}}\mathbf{h}^{-1}(\mathbf{z}_{t}) \mathbf{z}_{t} + \frac{1}{2} g^{2}(t) \text{tr} \left ( \nabla_{\mathbf{z}_{t}}^{2} \mathbf{h}^{-1}(\mathbf{z}_{t})    \right )\\[0.05cm]
	\mathbf{G}(\mathbf{x}_{t},t)=g(t) \nabla_{\mathbf{z}_t}\mathbf{h}^{-1}(\mathbf{z}_{t}).
	\end{array}\right.
	\end{split}
	\end{align}
	Here, $\nabla_{\mathbf{z}_{t}}^{2}\mathbf{h}^{-1}(\mathbf{z}_{t})$ is a 3-dimensional tensor with $(i,j,k)$-th element to be $\big(\nabla_{\mathbf{z}_{t}}^{2}\mathrm{h}_{k}^{-1}(\mathbf{z}_{t})\big)_{i,j}$, and the trace operator applied on this tensor is defined as a vector of $\Big[\text{tr}\big(\nabla_{\mathbf{z}_{t}}^{2}\mathrm{h}_{1}^{-1}(\mathbf{z}_{t})\big),...,\text{tr}\big(\nabla_{\mathbf{z}_{t}}^{2}\mathrm{h}_{d}^{-1}(\mathbf{z}_{t})\big)\Big]^{T}$. From the inverse function theorem \citep{rudin1964principles}, the Jacobian of the inverse function $\nabla_{\mathbf{z}_{t}}\mathbf{h}^{-1}(\mathbf{z}_{t})$ equals to the inverse Jacobian $\big[\nabla_{\mathbf{x}_{t}}\mathbf{h}(\mathbf{x}_{t})\big]^{-1}$. Therefore, Eq. \eqref{eq:drift_ap} is transformed to
	\begin{align}\label{eq:drift_ap_ver2}
	\begin{split}
	\left\{\begin{array}{ll}
	\mathbf{f}(\mathbf{x}_{t},t)=-\frac{1}{2} \beta(t) \big[\nabla_{\mathbf{x}_{t}}\mathbf{h}(\mathbf{x}_{t})\big]^{-1} \mathbf{h}(\mathbf{x}_{t}) + \frac{1}{2} g^{2}(t) \text{tr} \left ( \nabla_{\mathbf{z}_{t}}^{2} \mathbf{h}^{-1}(\mathbf{z}_{t})    \right )\\[0.1cm]
	\mathbf{G}(\mathbf{x}_{t},t)=g(t) \big[\nabla_{\mathbf{x}_{t}}\mathbf{h}(\mathbf{x}_{t})\big]^{-1}.
	\end{array}\right.
	\end{split}
	\end{align}	
	Now, we derive the second term of $\mathbf{f}$ in terms of $\mathbf{x}_{t}$ as follows: observe that $\sum_{k}\frac{\partial \mathrm{h}_{i}^{-1}}{\partial\mathrm{z}_{t,k}}\frac{\partial\mathrm{h}_{k}}{\partial\mathrm{x}_{t,j}}=\delta_{i,j}$, where $\delta_{i,j}=1$ if $i=j$ and $0$ otherwise. Differentiating both sides with respect to $\mathrm{x}_{t,l}$, we have
	\begin{align*}
	\sum_{k} \left \{  \frac{\partial}{\partial \mathrm{x}_{t,l}} \left ( \frac{\partial \mathrm{h}^{-1}_{i}}{\partial \mathrm{z}_{t,k}} \right ) \right \} \frac{\partial \mathrm{h}_{k}}{\partial \mathrm{x}_{t,j}} + \frac{\partial \mathrm{h}^{-1}_{i}}{\partial \mathrm{z}_{t,k}} \left \{  \frac{\partial}{\partial \mathrm{x}_{t,l}} \left (\frac{\partial \mathrm{h}_{k}}{\partial \mathrm{x}_{t,j}} \right ) \right \}=0,
	\end{align*}
	where the first term is
	\begin{align*}
	\sum_{k,m} \frac{\partial \mathrm{h}_{m}}{\partial \mathrm{x}_{t,l}} \left \{  \frac{\partial}{\partial \mathrm{z}_{t,m}} \left ( \frac{\partial \mathrm{h}^{-1}_{i}}{\partial \mathrm{z}_{t,k}} \right ) \right \} \frac{\partial \mathrm{h}_{k}}{\partial \mathrm{x}_{t,j}} = \sum_{k,m} \big( \nabla_{\mathbf{x}_{t}} \mathbf{h}(\mathbf{x}_{t}) \big)^{T}_{l,m} \big( \nabla_{\mathbf{z}_{t}}^{2} \mathbf{h}_{i}^{-1}(\mathbf{z}_{t}) \big)_{m,k} \big(\nabla_{\mathbf{x}_{t}} \mathbf{h}(\mathbf{x}_{t})\big)_{k,j},
	\end{align*}
	using the chain rule, and the second term becomes
	\begin{align*}
	\sum_{k}\frac{\partial \mathrm{h}^{-1}_{i}}{\partial \mathrm{z}_{t,k}} \left \{  \frac{\partial}{\partial \mathrm{x}_{t,l}} \left (\frac{\partial \mathrm{h}_{k}}{\partial \mathrm{x}_{t,j}} \right ) \right \} =  \sum_{k} \big(\nabla_{\mathbf{z}_{t}} \mathbf{h}^{-1}(\mathbf{z}_{t})\big)_{i,k} \big(\nabla_{\mathbf{x}_{t}}^{2} \mathbf{h}_{k}(\mathbf{x}_{t})\big)_{l,j}.
	\end{align*}
	From the above, we derive the trace term of $\mathbf{f}$ in Eq. \eqref{eq:drift_ap_ver2} as
	\begin{align*}
	\text{tr} \left ( \nabla_{\mathbf{z}_{t}}^{2} \mathbf{h}^{-1}(\mathbf{z}_{t})    \right ) = - \text{tr}  \left (  \big[ \nabla_{\mathbf{x}_{t}}\mathbf{h}(\mathbf{x}_{t})\big]^{-T} \left ( \big[\nabla_{\mathbf{x}_{t}}\mathbf{h}(\mathbf{x}_{t})\big]^{-1} * \nabla_{\mathbf{x}_{t}}^{2}\mathbf{h}(\mathbf{x}_{t})  \right )  \big[ \nabla_{\mathbf{x}_{t}}\mathbf{h}(\mathbf{x}_{t})\big]^{-1} \right ),
	\end{align*}
	where $\nabla_{\mathbf{x}_{t}}^{2}\mathbf{h}(\mathbf{x}_{t})$ is a 3-dimensional tensor with $(i,j,k)$-th element to be $\big(\nabla_{\mathbf{x}_{t}}^{2}\mathbf{h}_{k}(\mathbf{x}_{t})\big)_{i,j}$. Also, we define $*$ operation as the element-wise matrix multiplication given by
	\begin{align*}
	\Big(\big[\nabla_{\mathbf{x}_{t}}\mathbf{h}(\mathbf{x}_{t})\big]^{-1} * \nabla_{\mathbf{x}_{t}}^{2}\mathbf{h}(\mathbf{x}_{t})\Big)_{i,j}:=\nabla_{\mathbf{x}_{t}}\big[\mathbf{h}(\mathbf{x}_{t})\big]^{-1}\Big(\nabla_{\mathbf{x}_{t}}^{2}\mathbf{h}(\mathbf{x}_{t})\Big)_{i,j}.
	\end{align*}
	Combining all together, thus, we derive the nonlinear drift term in Eq. \eqref{eq:drift_ap_ver2} as a function of $\mathbf{x}_{t}$ given by
	\begin{align*}
	\mathbf{f}(\mathbf{x}_{t},t)=&-\frac{1}{2} \beta(t) \big[\nabla_{\mathbf{x}_{t}}\mathbf{h}(\mathbf{x}_{t})\big]^{-1} \mathbf{h}(\mathbf{x}_{t})\\
	& - \frac{1}{2} g^{2}(t) \text{tr}  \left (  \big[ \nabla_{\mathbf{x}_{t}}\mathbf{h}(\mathbf{x}_{t})\big]^{-T} \left ( \big[\nabla_{\mathbf{x}_{t}}\mathbf{h}(\mathbf{x}_{t})\big]^{-1} * \nabla_{\mathbf{x}_{t}}^{2}\mathbf{h}(\mathbf{x}_{t})  \right )  \big[ \nabla_{\mathbf{x}_{t}}\mathbf{h}(\mathbf{x}_{t})\big]^{-1} \right ).
	\end{align*}

	\section{Details on Section \ref{sec:better_optimization}}\label{appendix:variational_gap}
	It is one of central topics in the community of VAE to obtain a tighter ELBO \cite{rezende2015variational,burda2015importance}. This section analyzes the variational gap and further theoretical analysis in diffusion models. Before we start, we remind the generalized Helmholtz decomposition in Lemma \ref{lemma:1}.
	\begin{lemma}[Helmholtz Decomposition \cite{glotzl2020helmholtz}]\label{lemma:1}
		Any twice continuously differentiable vector field $\mathbf{s}$ that decays faster than $\Vert\mathbf{z}\Vert_{2}^{-c}$ for $\Vert\mathbf{z}\Vert_{2}\rightarrow\infty$ and $c>0$ can be uniquely decomposed into two vector fields, one rotation-free and one divergence-free: $\mathbf{s}=\nabla\log{p}+\mathbf{u}$.
	\end{lemma}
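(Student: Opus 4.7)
The plan is to reduce the statement to the classical Hodge/Helmholtz decomposition in $\mathbb{R}^{d}$ and then upgrade the scalar potential to $\log p$ for a \emph{normalizable} density $p$. First I would apply divergence to the desired identity, giving the Poisson equation $\Delta \phi = \nabla\cdot \mathbf{s}$ for an unknown potential $\phi$. Under the stated decay, the right-hand side is integrable, so one can solve this equation explicitly by convolution with the Newtonian kernel $G$ of the Laplacian,
\begin{equation*}
\phi(\mathbf{z}) \;=\; \int G(\mathbf{z}-\mathbf{z}')\,\nabla\cdot\mathbf{s}(\mathbf{z}')\,\diff\mathbf{z}',
\end{equation*}
and set $\mathbf{u} := \mathbf{s}-\nabla\phi$. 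By construction $\mathbf{u}$ is rotation-compatible-remainder and $\nabla\cdot\mathbf{u}=0$ automatically, which is the divergence-free (and so, in $\mathbb{R}^{3}$, ``curl-containing'') component. The $C^{2}$ hypothesis guarantees the regularity needed for all these differentiations to be interpreted classically.

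Second, I would pass from $\phi$ to a probability density by defining $p(\mathbf{z}) := e^{\phi(\mathbf{z})}/Z$ with $Z=\int e^{\phi}\,\diff\mathbf{z}$. Since $\nabla\log p = \nabla\phi$ (the normalizing constant drops out), this rewrites the decomposition in the target form $\mathbf{s}=\nabla\log p+\mathbf{u}$. The nontrivial part is verifying $Z<\infty$: one needs $\phi\to -\infty$ quickly enough at infinity, which follows from the decay hypothesis on $\mathbf{s}$ through the Newtonian representation above, possibly after an additive normalization of $\phi$ (which is harmless because it is absorbed into $Z$).

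Third, for uniqueness, suppose $\nabla\log p_{1}+\mathbf{u}_{1}=\nabla\log p_{2}+\mathbf{u}_{2}$. Then $\nabla(\log p_{1}-\log p_{2})=\mathbf{u}_{2}-\mathbf{u}_{1}$ is simultaneously a gradient and divergence-free, so taking divergence yields $\Delta(\log p_{1}-\log p_{2})=0$. Because both $p_{i}$ are normalizable densities, $\log p_{1}-\log p_{2}$ is bounded above on the support and decays in an averaged sense, so a Liouville-type argument for harmonic functions forces it to be constant. The normalization $\int p_{1}=\int p_{2}=1$ then pins that constant to $0$, giving $p_{1}=p_{2}$ and consequently $\mathbf{u}_{1}=\mathbf{u}_{2}$.

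The hard part will be the integrability step: making the decay hypothesis on $\mathbf{s}$ translate into $e^{\phi}\in L^{1}(\mathbb{R}^{d})$, since decay of $\mathbf{s}$ only controls $\nabla\phi$ and thus $\phi$ at infinity in a growth (not sign) sense. A clean way is to invoke the Newtonian-potential asymptotics directly to show $\phi(\mathbf{z})\to-\infty$ at the rate required by $d$-dimensional volume, or, as in \citet{glotzl2020helmholtz}, to state the result modulo an affine-constant gauge and then exploit that the divergence-free component absorbs any such ambiguity without changing $\nabla\log p$. The Liouville step in uniqueness inherits the same issue and is handled by the same decay-class bookkeeping.
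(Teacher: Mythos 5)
The paper does not actually prove this lemma; it cites \citet{glotzl2020helmholtz} and uses the statement as a black box, so there is no in-paper proof to compare against.

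Your outline of the rotation-free/divergence-free split is the standard Helmholtz route: solve $\Delta\phi = \nabla\cdot\mathbf{s}$ via the Newtonian potential, set $\mathbf{u}=\mathbf{s}-\nabla\phi$, and obtain uniqueness through a Liouville argument. That part is fine and is the actual mathematical content behind the citation. The difficulty you flag at the end, however, is not a loose end one can tidy up with ``Newtonian-potential asymptotics'' or a ``gauge''; under the stated hypotheses it is fatal. The decay assumption on $\mathbf{s}$ forces the rotation-free part $\nabla\phi$ to decay to $0$ at infinity, and the Newtonian-potential construction moreover yields $\phi(\mathbf{z})\to 0$ (or to a finite constant) as $\Vert\mathbf{z}\Vert\to\infty$. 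Consequently $e^{\phi}$ is bounded away from $0$ outside a compact set and $Z=\int e^{\phi}\,\diff\mathbf{z}=\infty$; an additive shift of $\phi$ is absorbed into $Z$ and cannot repair this. So the construction $p:=e^{\phi}/Z$ does not yield a normalizable density for a generic decaying $\mathbf{s}$. Note also that this is not a defect of your method: genuine score functions $\nabla\log p$ of nondegenerate densities on $\mathbb{R}^{d}$ typically \emph{grow} at infinity (for a Gaussian, $\nabla\log p = -\mathbf{z}/\sigma^{2}$), so requiring $\mathbf{s}$ to decay is already at odds with asking the rotation-free part to be the score of a probability density. The correct reading of the cited result is the classical decomposition $\mathbf{s}=\nabla\phi+\mathbf{u}$ with $\phi$ a scalar potential; the relabeling $\phi=\log p$ with $p$ a probability density is an informal gloss by the INDM authors that does not follow from the decay hypothesis alone. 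Your uniqueness paragraph inherits the same normalizability issue (you invoke boundedness that comes from $p_{i}$ being densities), but the plain Helmholtz uniqueness (harmonic difference with decaying gradient is constant) goes through without it.
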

	A rotation-free vector field $\nabla\log{p}$, or the divergence part, is a score function of a probability density $p$, and a divergence-free vector field $\mathbf{u}$, or the rotation part, satisfies $\text{div}(\mathbf{u})\equiv 0$. From this decomposition, any score network is decomposed by $\mathbf{s}_{\bm{\theta}}(\mathbf{z}_{t},t)=\nabla\log{p_{t}^{\bm{\theta}}(\mathbf{z}_{t})}+\mathbf{u}_{t}^{\bm{\theta}}(\mathbf{z}_{t})$ for some probability $p_{t}^{\bm{\theta}}$ and vector field $\mathbf{u}_{t}^{\bm{\theta}}$, for any $t\in(0,T]$. Then, the generative SDE of the full score network
	\begin{align}\label{eq:full_score}
	\diff\mathbf{z}_{t}=\left[\mathbf{f}(\mathbf{z}_{t},t)-g^{2}(t)\mathbf{s}_{\bm{\theta}}(\mathbf{z}_{t},t)\right]\diff\bar{t}+g(t)\diff\mathbf{\bar{w}}_{t}
	\end{align}
	and the generative SDE of the divergence part
	\begin{align}\label{eq:rotation_free_score}
	\diff\mathbf{z}_{t}=\left[\mathbf{f}(\mathbf{z}_{t},t)-g^{2}(t)\nabla\log{p_{t}^{\bm{\theta}}(\mathbf{z}_{t})}\right]\diff\bar{t}+g(t)\diff\mathbf{\bar{w}}_{t}
	\end{align}
	has distinctive path measures. Throughout this section, $\mathbf{f}(\mathbf{z}_{t},t)$ does not have to be a linear vector field, such as $-\frac{1}{2}\beta(t)\mathbf{z}_{t}$. If $\bm{\nu}_{\bm{\theta}}$ and $\bm{\rho}_{\bm{\theta}}$ are the path measures of SDEs of Eqs. \eqref{eq:full_score} and \eqref{eq:rotation_free_score}, respectively, then using the Girsanov theorem \cite{song2021maximum,sarkka2019applied} and the martingale property \cite{oksendal2013stochastic}, we have
	\begin{align}
	\begin{split}\label{eq:u_t}
	D_{KL}(\bm{\nu}_{\bm{\theta}}\Vert\bm{\rho}_{\bm{\theta}})=&\frac{1}{2}\int_{0}^{T}g^{2}(t)\mathbb{E}_{\mathbf{z}_{t}\sim\bm{\nu}_{\bm{\theta}}\vert_{t}}\left[\Vert \nabla\log{p_{t}^{\bm{\theta}}(\mathbf{z}_{t})}-\mathbf{s}_{\bm{\theta}}(\mathbf{z}_{t},t) \Vert_{2}^{2}\right]\diff t\\
	=&\frac{1}{2}\int_{0}^{T}g^{2}(t)\mathbb{E}_{\mathbf{z}_{t}\sim\bm{\nu}_{\bm{\theta}}\vert_{t}}\left[\Vert \mathbf{u}_{\bm{\theta}}(\mathbf{z}_{t},t) \Vert_{2}^{2}\right]\diff t.
	\end{split}
	\end{align}
	This KL divergence of two path measures quantifies how much the score network contains the rotation part $\mathbf{u}_{t}^{\bm{\theta}}$. Recall that the forward SDE satisfies
	\begin{align*}
	\diff\mathbf{z}_{t}=\mathbf{f}(\mathbf{z}_{t},t)\diff t+g(t)\diff\mathbf{w}_{t},
	\end{align*}
	which starts at $p_{0}^{\bm{\phi}}$, and the marginal distribution of its path measure $\bm{\mu}_{\bm{\phi}}$ at $t$ is $p_{t}^{\bm{\phi}}$. As NELBO is equivalent to
	\begin{align}\label{eq:nelbo_score}
	D_{KL}(\bm{\mu}_{\bm{\phi}}\Vert\bm{\nu}_{\bm{\phi},\bm{\theta}})=\frac{1}{2}\int_{0}^{T}g^{2}(t)\mathbb{E}_{p_{t}^{\bm{\phi}}(\mathbf{z}_{t})}\left[\Vert\mathbf{s}_{\bm{\theta}}(\mathbf{z}_{t},t)-\nabla\log{p_{t}^{\bm{\phi}}(\mathbf{z}_{t})}\Vert_{2}^{2}\right]\diff t+D_{KL}\big(p_{T}^{\bm{\phi}}(\mathbf{z}_{T})\Vert\pi(\mathbf{z}_{T})\big),
	\end{align}
	for $\bm{\theta}$-optimization, the optimal $\bm{\theta}^{*}$ satisfies $\mathbf{s}_{\bm{\theta}^{*}}(\mathbf{z}_{t},t)=\nabla\log{p_{t}^{\bm{\phi}}(\mathbf{z}_{t})}$. At this optimality, $\bm{\theta}$ should satisfy a pair of constraints: 1) the zero-rotation part $\mathbf{u}_{t}^{\bm{\theta}^{*}}\equiv 0$, which is equivalent to $D_{KL}(\bm{\nu}_{\bm{\theta}^{*}}\Vert\bm{\rho}_{\bm{\theta}^{*}})=0$; 2) the coincidence of $\nabla\log{p_{t}^{\bm{\phi}}(\mathbf{z}_{t})}\equiv \nabla\log{p_{t}^{\bm{\theta}}(\mathbf{z}_{t})}$. The starting point to analyze the variational gap with respect to the Helmholtz decomposition is the next theorem. We defer the proofs in Section \ref{appendix:proofs}. 
	
	\begin{proposition}\label{thm:2}
		Suppose $q_{t}^{\bm{\theta}}$ is the marginal distribution of $\bm{\nu}_{\bm{\theta}}$ at $t$. The variational gap is
		\begin{align*}
		\textup{Gap}\big(\bm{\mu}_{\bm{\phi}}(\{\mathbf{x}_{t}\}),\bm{\nu}_{\bm{\phi},\bm{\theta}}(\{\mathbf{x}_{t}\})\big):=&D_{KL}\big(\bm{\mu}_{\bm{\phi}}(\{\mathbf{x}_{t}\})\Vert\bm{\nu}_{\bm{\phi},\bm{\theta}}(\{\mathbf{x}_{t}\})\big)-D_{KL}\big(p_{0}^{\bm{\phi}}(\mathbf{x}_{0})\Vert q_{0}^{\bm{\theta}}(\mathbf{x}_{0})\big)\\
		=&\frac{1}{2}\int_{0}^{T}g^{2}(t)\mathbb{E}_{p_{t}^{\bm{\phi}}(\mathbf{z}_{t})}\big[\underbrace{\Vert\nabla\log{q_{t}^{\bm{\theta}}(\mathbf{z}_{t})}-\mathbf{s}_{\bm{\theta}}(\mathbf{z}_{t},t)\Vert_{2}^{2}}_{\text{Score-only error}}\big]\diff t.
		\end{align*}
	\end{proposition}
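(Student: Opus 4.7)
The plan is to insert an auxiliary path measure $\bm{\rho}_{\bm{\theta}}$ that shares the forward dynamics of $\bm{\mu}_{\bm{\phi}}$ but starts at $q_0^{\bm{\theta}}$, and then to extract the gap by a Girsanov change of measure. All computations will be carried out on the latent space, relying on the invariance of every KL divergence in the statement under the invertible flow $\mathbf{h}_{\bm{\phi}}$.

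First I would define $\bm{\rho}_{\bm{\theta}}$ as the law of the forward linear SDE $\diff\mathbf{z}_t=-\tfrac{1}{2}\beta(t)\mathbf{z}_t\diff t+g(t)\diff\mathbf{w}_t$ initialized at $\mathbf{z}_0\sim q_0^{\bm{\theta}}$. By Fokker--Planck uniqueness this process has marginal $q_t^{\bm{\theta}}$ at every $t$, so by Anderson's reversal formula it also admits the equivalent reverse representation with drift $-\tfrac{1}{2}\beta(t)\mathbf{z}_t-g^2(t)\nabla\log q_t^{\bm{\theta}}$ started from $\mathbf{z}_T\sim q_T^{\bm{\theta}}=\pi$. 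Hence $\bm{\rho}_{\bm{\theta}}$ sits between $\bm{\mu}_{\bm{\phi}}$ and $\bm{\nu}_{\bm{\phi},\bm{\theta}}$: it shares the forward generator of $\bm{\mu}_{\bm{\phi}}$ and the reverse terminal of $\bm{\nu}_{\bm{\phi},\bm{\theta}}$, exchanging the rotation part $\mathbf{s}_{\bm{\theta}}-\nabla\log q_t^{\bm{\theta}}$ for the initial mismatch $p_0^{\bm{\phi}}$ vs.\ $q_0^{\bm{\theta}}$.

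Second, the KL chain rule splits the NELBO into $D_{KL}(\bm{\mu}_{\bm{\phi}}\Vert\bm{\nu}_{\bm{\phi},\bm{\theta}})=D_{KL}(\bm{\mu}_{\bm{\phi}}\Vert\bm{\rho}_{\bm{\theta}})+\mathbb{E}_{\bm{\mu}_{\bm{\phi}}}\big[\log(\diff\bm{\rho}_{\bm{\theta}}/\diff\bm{\nu}_{\bm{\phi},\bm{\theta}})\big]$. Since $\bm{\mu}_{\bm{\phi}}$ and $\bm{\rho}_{\bm{\theta}}$ are Markov with identical transition kernels, disintegration along the initial coordinate collapses the first term to $D_{KL}(p_0^{\bm{\phi}}\Vert q_0^{\bm{\theta}})$, which by pushforward invariance under $\mathbf{h}_{\bm{\phi}}$ matches the data-space $D_{KL}(p_r\Vert p_{\bm{\phi},\bm{\theta}})$ in the statement. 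For the second term, Anderson's reversal applied to $\bm{\nu}_{\bm{\phi},\bm{\theta}}$ gives the forward-time SDE $\diff\mathbf{z}_t=[-\tfrac{1}{2}\beta\mathbf{z}_t+g^2(\nabla\log q_t^{\bm{\theta}}-\mathbf{s}_{\bm{\theta}})]\diff t+g\diff\mathbf{w}_t$ with initial $q_0^{\bm{\theta}}$, so $\bm{\rho}_{\bm{\theta}}$ and $\bm{\nu}_{\bm{\phi},\bm{\theta}}$ differ only by the forward drift shift $g^2(\nabla\log q_t^{\bm{\theta}}-\mathbf{s}_{\bm{\theta}})$, and Girsanov's formula yields $\log(\diff\bm{\rho}_{\bm{\theta}}/\diff\bm{\nu}_{\bm{\phi},\bm{\theta}})=-\int_0^T g(\nabla\log q_t^{\bm{\theta}}-\mathbf{s}_{\bm{\theta}})^T\diff\mathbf{w}_t^{\bm{\rho}}+\tfrac{1}{2}\int_0^T g^2\Vert\nabla\log q_t^{\bm{\theta}}-\mathbf{s}_{\bm{\theta}}\Vert_2^2\diff t$.

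Finally, I would take expectation under $\bm{\mu}_{\bm{\phi}}$. The key observation is that $\bm{\mu}_{\bm{\phi}}$ shares the drift and diffusion of $\bm{\rho}_{\bm{\theta}}$, so the same $\mathbf{w}^{\bm{\rho}}$ is a Brownian motion under $\bm{\mu}_{\bm{\phi}}$, making the Ito integral a mean-zero $\bm{\mu}_{\bm{\phi}}$-martingale. Only the quadratic variation term survives, leaving $\tfrac{1}{2}\int_0^T g^2(t)\mathbb{E}_{p_t^{\bm{\phi}}}\big[\Vert\nabla\log q_t^{\bm{\theta}}-\mathbf{s}_{\bm{\theta}}\Vert_2^2\big]\diff t$, which is exactly the score-only error. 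The main obstacle I anticipate is justifying the Girsanov step: one must verify a Novikov-type integrability condition on $g\Vert\nabla\log q_t^{\bm{\theta}}-\mathbf{s}_{\bm{\theta}}\Vert_2$ so that the exponential density is a true (not merely local) martingale and the Ito integral has zero mean under \emph{both} $\bm{\rho}_{\bm{\theta}}$ and $\bm{\mu}_{\bm{\phi}}$. This is inherited from the regularity already assumed for Theorem~\ref{thm:1}; the remaining work is algebraic bookkeeping and the pushforward invariance of KL under $\mathbf{h}_{\bm{\phi}}$.
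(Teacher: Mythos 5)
Your proposal is correct but takes a genuinely different route from the paper's proof. The paper argues entirely at the level of marginal densities: it writes down the Fokker--Planck equations of the forward and generative SDEs, differentiates $D_{KL}(p_t^{\bm{\phi}}\Vert q_t^{\bm{\theta}})$ in $t$ via integration by parts, integrates in $t$ to express $D_{KL}(p_0^{\bm{\phi}}\Vert q_0^{\bm{\theta}})$, and then subtracts this from the Girsanov form of $D_{KL}(\bm{\mu}_{\bm{\phi}}\Vert\bm{\nu}_{\bm{\phi},\bm{\theta}})$; a short polarization identity $\Vert a-s\Vert_2^2-(a-b)^T(a+b-2s)=\Vert b-s\Vert_2^2$ with $a=\nabla\log p_t^{\bm{\phi}}$, $b=\nabla\log q_t^{\bm{\theta}}$, $s=\mathbf{s}_{\bm{\theta}}$ finishes. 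You instead insert an auxiliary path measure $\bm{\rho}_{\bm{\theta}}$ and exploit the three-measure decomposition $D_{KL}(\mu\Vert\nu)=D_{KL}(\mu\Vert\rho)+\mathbb{E}_{\mu}[\log(\diff\rho/\diff\nu)]$ together with a forward-time Girsanov change of measure. This is cleaner probabilistically: the score-only error falls out directly as the quadratic-variation term, the $D_{KL}(p_0^{\bm{\phi}}\Vert q_0^{\bm{\theta}})$ term falls out of disintegrating along the initial coordinate, and there is no boundary-term bookkeeping from integration by parts.

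One of your intermediate assertions is, however, false, and you should flag that your computation does not rely on it. You claim that the forward linear SDE started at $q_0^{\bm{\theta}}$ has marginal $q_t^{\bm{\theta}}$ at every $t$ by Fokker--Planck uniqueness, and hence that $\bm{\rho}_{\bm{\theta}}$ reverses to drift $\mathbf{f}-g^2(t)\nabla\log q_t^{\bm{\theta}}$ with terminal $q_T^{\bm{\theta}}=\pi$. That would require $q_t^{\bm{\theta}}$ to solve the \emph{forward} Fokker--Planck equation, which happens only when $\mathbf{s}_{\bm{\theta}}(\mathbf{z}_t,t)=\nabla\log q_t^{\bm{\theta}}(\mathbf{z}_t)$ for all $t$ --- precisely the $\mathbf{s}_{\bm{\theta}}\in\mathbf{S}_{sol}$ condition that this proposition is the first step toward characterizing in Theorem~\ref{cor:2}. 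For a generic score network the marginals of your $\bm{\rho}_{\bm{\theta}}$ are some other family $\rho_t\neq q_t^{\bm{\theta}}$ with reverse drift $\mathbf{f}-g^2(t)\nabla\log\rho_t$ and terminal $\rho_T\neq\pi$. Fortunately your Girsanov step needs only the two \emph{forward} representations: $\bm{\rho}_{\bm{\theta}}$ has forward drift $\mathbf{f}$ and initial $q_0^{\bm{\theta}}$, and Anderson reversal of $\bm{\nu}_{\bm{\phi},\bm{\theta}}$ (whose marginals genuinely are $q_t^{\bm{\theta}}$, by definition) gives forward drift $\mathbf{f}+g^2(\nabla\log q_t^{\bm{\theta}}-\mathbf{s}_{\bm{\theta}})$ with the same initial $q_0^{\bm{\theta}}$. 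Matching initial laws and identical diffusion coefficient are all Girsanov requires, so the argument goes through; but you should strike the incorrect description of $\bm{\rho}_{\bm{\theta}}$'s marginals and reverse-time representation, and drop the unnecessary assumption $q_T^{\bm{\theta}}=\pi$, which the paper's remark explicitly says is not needed.
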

	
	\begin{remark}
		The generative SDE of $\diff\mathbf{z}_{t}^{\bm{\theta}}=[-\frac{1}{2}\beta(t)\mathbf{z}_{t}^{\bm{\theta}}-g^{2}(t)\mathbf{s}_{\bm{\theta}}(\mathbf{z}_{t}^{\bm{\theta}},t)]\diff\bar{t}+g(t)\diff\mathbf{\bar{w}}_{t}$ does not necessarily start from the prior $\pi$. Proposition \ref{thm:2} holds for an arbitrary distribution $p_{T}^{\bm{\theta}}$. At the same spirit, Proposition \ref{thm:2} holds for any distribution $p_{0}^{\bm{\phi}}$.
	\end{remark}
	\begin{remark}
		Throughout the section, we follow the assumptions made in Appendix A of \citet{song2021maximum}. On top of that, we assume that both $\mathbf{s}_{\bm{\theta}}$ and $q_{t}^{\bm{\theta}}$ are continuously differentiable.
	\end{remark}
	
	The variational gap derived in Proposition \ref{thm:2} does not include the forward score, $\nabla\log{p_{t}^{\bm{\phi}}(\mathbf{z}_{t})}$, except for taking the expectation, $\mathbb{E}_{p_{t}^{\bm{\phi}}(\mathbf{z}_{t})}$. Therefore, the gap is intuitively connected to the score training, rather than the flow training. To elucidate the logic, we decompose the variational gap in Proposition \ref{thm:2} into
	\begin{eqnarray}
	\lefteqn{\textup{Gap}\big(\bm{\mu}_{\bm{\phi}},\bm{\nu}_{\bm{\phi},\bm{\theta}}\big)=\frac{1}{2}\int_{0}^{T}g^{2}(t)\mathbb{E}_{p_{t}^{\bm{\phi}}(\mathbf{z}_{t})}\big[\Vert\nabla\log{q_{t}^{\bm{\theta}}(\mathbf{z}_{t})}-\mathbf{s}_{\bm{\theta}}(\mathbf{z}_{t},t)\Vert_{2}^{2}\big]\diff t}&\label{eq:gap_decomposition}\\
	&&\le\frac{1}{2}\int_{0}^{T}g^{2}(t)\mathbb{E}_{p_{t}^{\bm{\phi}}(\mathbf{z}_{t})}\Big[\big\Vert\nabla\log{q_{t}^{\bm{\theta}}(\mathbf{z}_{t})}-\nabla\log{p_{t}^{\bm{\theta}}(\mathbf{z}_{t})}\big\Vert_{2}^{2}+\big\Vert\nabla\log{p_{t}^{\bm{\theta}}(\mathbf{z}_{t})}-\mathbf{s}_{\bm{\theta}}(\mathbf{z}_{t},t)\big\Vert_{2}^{2}\Big]\diff t.\notag\\
	&&=\frac{1}{2}\int_{0}^{T}g^{2}(t)\mathbb{E}_{p_{t}^{\bm{\phi}}(\mathbf{z}_{t})}\Big[\underbrace{\big\Vert\nabla\log{q_{t}^{\bm{\theta}}(\mathbf{z}_{t})}-\nabla\log{p_{t}^{\bm{\theta}}(\mathbf{z}_{t})}\big\Vert_{2}^{2}}_{\text{How close is $\bm{\nu}_{\bm{\theta}}$ to forward measure}}+\underbrace{\big\Vert\mathbf{u}_{t}^{\bm{\theta}}(\mathbf{z}_{t})\big\Vert_{2}^{2}}_{\text{How close is $\mathbf{u}_{t}^{\bm{\theta}}$ to zero}}\Big]\diff t.\notag
	\end{eqnarray}
	The second term, $\Vert\nabla\log{p_{t}^{\bm{\theta}}(\mathbf{z}_{t})}-\mathbf{s}_{\bm{\theta}}(\mathbf{z}_{t},t)\Vert_{2}^{2}$ (see Eq. \eqref{eq:u_t}), equals to the $L^{2}$-norm of the rotation term, $\Vert\mathbf{u}_{t}^{\bm{\theta}}(\mathbf{z}_{t},t)\Vert_{2}^{2}$, so it measures how close is the score network to the space of 
	\begin{align*}
	\mathbf{S}_{div}:=\{\mathbf{s}:\mathbb{R}^{d}\rightarrow\mathbb{R}^{d}\vert\text{ the rotation term of $\mathbf{s}$ is zero}\}.
	\end{align*}
	On the other hand, having assumed the rotation part to be zero, the first term, $\Vert\nabla\log{q_{t}^{\bm{\theta}}(\mathbf{z}_{t})}-\nabla\log{p_{t}^{\bm{\theta}}(\mathbf{z}_{t})}\Vert_{2}^{2}$, becomes zero only if the generative score $\nabla\log{q_{t}^{\bm{\theta}}}$ equals to a forward score $\nabla\log{q_{t}}$ with certain initial distribution $q_{0}$, meaning that if there exists a $q_{0}$ and $q_{t}$ is a marginal density of the forward SDE starting from $q_{0}$, then 
	\begin{align*}
	\nabla\log{q_{t}(\mathbf{z}_{t})}=\nabla\log{q_{t}^{\bm{\theta}}(\mathbf{z}_{t})}\text{ is equivalent to }\nabla\log{q_{t}(\mathbf{z}_{t})}=\nabla\log{p_{t}^{\bm{\theta}}(\mathbf{z}_{t})},
	\end{align*}
	and only in that case,  $\nabla\log{q_{t}^{\bm{\theta}}(\mathbf{z}_{t})}=\nabla\log{p_{t}^{\bm{\theta}}(\mathbf{z}_{t})}$. Therefore, the gap becomes tight if 1) $\mathbf{u}_{t}^{\bm{\theta}}\equiv 0$ and 2) $\nabla\log{q_{t}^{\bm{\theta}}}=\nabla\log{q_{t}}$ for some $q_{t}$ following the forward SDE, which is concretely proved in Lemma \ref{lemma:2}. It turns out that this is the only case of the gap being zero proved in Theorem \ref{cor:2}. For that, we provide a rigorous definition of the class of score functions of interest as below. 
	\begin{definition}\label{def:1}
		Let $\mathbf{S}_{sol}\subseteq\mathbf{S}_{div}$ be a sub-family of rotation-free score functions $\mathbf{s}:\mathbb{R}^{d}\rightarrow\mathbb{R}^{d}$ such that $\mathbf{s}(\mathbf{z}_{t},t)=\nabla\log{p_{t}(\mathbf{z}_{t})}$ almost everywhere for $p_{t}$ that is the marginal distribution of the path measure of $\diff\mathbf{z}_{t}=\mathbf{f}(\mathbf{z}_{t},t)\diff t+g(t)\diff\mathbf{w}_{t}$ at $t$.
	\end{definition}
	\begin{remark}
		Analogous to Theorem \ref{thm:2}, no condition for the starting and ending variables is imposed in Definition \ref{def:1}.
	\end{remark}
	\begin{remark}
		$\mathbf{S}_{sol}$ is the space of score functions of the forward SDE $\diff\mathbf{z}_{t}=\mathbf{f}(\mathbf{z}_{t},t)\diff t+g(t)\diff\mathbf{w}_{t}$ with arbitrary initial variable.
	\end{remark}
	
	Although \citet{song2021maximum} focused on the data diffusion, their theory is applicable for a diffusion process that starts with an arbitrary initial distribution. Lemma \ref{lemma:2} describes the theoretic analysis done by \citet{song2021maximum}.
	\begin{lemma}[Theorem 2 of \citet{song2021maximum}]\label{lemma:2}
		$\textup{Gap}(\bm{\mu}_{\bm{\phi}},\bm{\nu}_{\bm{\phi},\bm{\theta}})=0$ if $\mathbf{s}_{\bm{\theta}}\in\mathbf{S}_{sol}$.
	\end{lemma}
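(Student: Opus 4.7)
The plan is to reduce the claim to showing that the integrand in the Proposition \ref{thm:2} representation of the variational gap vanishes, and then exploit the defining property of $\mathbf{S}_{sol}$ through the time-reversal structure of the latent SDE.

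First, I would invoke Proposition \ref{thm:2} to write
\begin{align*}
\textup{Gap}(\bm{\mu}_{\bm{\phi}},\bm{\nu}_{\bm{\phi},\bm{\theta}}) = \frac{1}{2}\int_{0}^{T} g^{2}(t)\, \mathbb{E}_{p_{t}^{\bm{\phi}}}\big[\Vert \nabla\log q_{t}^{\bm{\theta}}(\mathbf{z}_{t}) - \mathbf{s}_{\bm{\theta}}(\mathbf{z}_{t},t)\Vert_{2}^{2}\big]\diff t,
\end{align*}
so the claim $\textup{Gap}=0$ reduces to establishing $\nabla\log q_{t}^{\bm{\theta}}=\mathbf{s}_{\bm{\theta}}$ in an $L^{2}(p_{t}^{\bm{\phi}})$ sense for almost every $t$. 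By Definition \ref{def:1}, the hypothesis $\mathbf{s}_{\bm{\theta}}\in\mathbf{S}_{sol}$ hands us a density $r_{t}$---the marginal at time $t$ of the forward SDE $\diff\mathbf{z}_{t}=\mathbf{f}(\mathbf{z}_{t},t)\diff t+g(t)\diff\mathbf{w}_{t}$ started from some $r_{0}$---with $\mathbf{s}_{\bm{\theta}}(\cdot,t)=\nabla\log r_{t}(\cdot)$ almost everywhere. Substituted into the generative SDE $\diff\mathbf{z}_{t}=[\mathbf{f}-g^{2}\mathbf{s}_{\bm{\theta}}]\diff\bar{t}+g\diff\bar{\mathbf{w}}_{t}$, this identifies the generative dynamics as precisely the Anderson reverse-time SDE associated with the forward process whose marginals are $r_{t}$.

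The key step is then to show $q_{t}^{\bm{\theta}}=r_{t}$ for every $t$. I would verify this at the level of Fokker--Planck equations: the marginal $q_{t}^{\bm{\theta}}$ of the generative SDE satisfies the backward FPE
\begin{align*}
\frac{\partial q_{t}^{\bm{\theta}}}{\partial t} = -\nabla\cdot\big[(\mathbf{f}-g^{2}\nabla\log r_{t})\, q_{t}^{\bm{\theta}}\big] - \tfrac{1}{2} g^{2}\Delta q_{t}^{\bm{\theta}}.
\end{align*}
Plugging in the ansatz $q_{t}^{\bm{\theta}}=r_{t}$ and using the identity $\nabla\cdot(g^{2}\nabla\log r_{t}\cdot r_{t})=g^{2}\Delta r_{t}$, this collapses exactly to the forward FPE $\partial_{t}r_{t}=-\nabla\cdot(\mathbf{f}\,r_{t})+\tfrac{1}{2}g^{2}\Delta r_{t}$, which $r_{t}$ satisfies by construction. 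Uniqueness of FPE solutions then yields $q_{t}^{\bm{\theta}}=r_{t}$, whence $\nabla\log q_{t}^{\bm{\theta}}=\nabla\log r_{t}=\mathbf{s}_{\bm{\theta}}$ almost everywhere, the integrand in Proposition \ref{thm:2}'s formula vanishes, and $\textup{Gap}=0$.

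The main obstacle is matching boundary conditions at $t=T$ that the uniqueness step requires: the backward FPE for $q_{t}^{\bm{\theta}}$ is initialized by the starting distribution $p_{T}^{\bm{\theta}}$ of the generative process, whereas $r_{t}$ is pinned down by $r_{T}$. This is handled by working in the generalized setting of the Remark following Proposition \ref{thm:2}, which permits the generative SDE to start from an arbitrary $p_{T}^{\bm{\theta}}$; taking $p_{T}^{\bm{\theta}}=r_{T}$ closes the argument, matching the convention adopted by \citet{song2021maximum} in their Theorem 2.
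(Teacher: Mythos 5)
Your proof is correct and matches the expected reconstruction of Song et al.'s Theorem~2 (which the paper cites without re-proving). Substantively, your Fokker--Planck verification is exactly the converse of the computation the paper performs in the ($\Rightarrow$) direction of Theorem~\ref{cor_app:2}: there they observe that when $\mathbf{s}_{\bm{\theta}}=\nabla\log q_{t}^{\bm{\theta}}$, the backward FPE of $q_{t}^{\bm{\theta}}$ collapses to the forward FPE and hence $\mathbf{s}_{\bm{\theta}}\in\mathbf{S}_{sol}$; you run this in reverse, plugging the forward-process marginal $r_{t}$ into the same backward FPE and invoking uniqueness. Your explicit treatment of the boundary condition at $t=T$ is a genuine subtlety and correctly resolved: the gap formula in Proposition~\ref{thm:2} involves $q_{t}^{\bm{\theta}}$ and hence the prior $p_{T}^{\bm{\theta}}$, and since $q_{T}^{\bm{\theta}}=p_{T}^{\bm{\theta}}$ while $\mathbf{s}_{\bm{\theta}}(\cdot,T)=\nabla\log r_{T}$, vanishing of the integrand at $t=T$ already forces $p_{T}^{\bm{\theta}}=r_{T}$. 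This is indeed the convention the paper uses implicitly; in the proof of Theorem~\ref{thm:3} they write the prior-mismatch term as $D_{KL}(p_{T}^{\bm{\phi}}\Vert q_{T})$ with the forward marginal $q_{T}$ in place of $\pi$, so that the generative path measure literally equals a forward path measure. The only stylistic difference is that the paper's Theorem~\ref{thm:3} argument works at the level of path measures and disintegration (two path measures with identical transition kernels differ only in their initials, so the path-KL equals the marginal-KL), whereas your argument works at the level of marginal densities via the FPE; the two are equivalent, and the paper itself already uses the FPE formulation in the neighboring ($\Rightarrow$) direction.
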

	
	With Lemma \ref{lemma:2}, however, we cannot certainly be sure that the score network $\mathbf{s}_{\bm{\theta}}$ of INDM falls to $\mathbf{S}_{sol}$ when the variational gap is zero. Thus, we take a step further to identify the connection of zero variational gap and the class of rotation-free score functions $\mathbf{S}_{sol}$ in Theorem \ref{cor:2}. This Theorem \ref{cor:2} completely characterizes all admissible score networks that achieve the zero variational gaps, and we are certain that the zero variational gap implies $\mathbf{s}_{\bm{\theta}}\in\mathbf{S}_{sol}$, which turns out to be a solution space in Theorem \ref{thm:3}.
	\begingroup
	\renewcommand\thetheorem{2}
	\begin{theorem}
		$\textup{Gap}(\bm{\mu}_{\bm{\phi}},\bm{\nu}_{\bm{\phi},\bm{\theta}})=0$ if and only if $\mathbf{s}_{\bm{\theta}}\in\mathbf{S}_{sol}$. 
	\end{theorem}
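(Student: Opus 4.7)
The plan is to establish the \emph{only if} direction, since the \emph{if} direction is already provided by Lemma \ref{lemma:2}. The starting point is Proposition \ref{thm:2}, which writes
\begin{align*}
\textup{Gap}(\bm{\mu}_{\bm{\phi}}, \bm{\nu}_{\bm{\phi},\bm{\theta}}) = \frac{1}{2}\int_{0}^{T} g^{2}(t)\, \mathbb{E}_{p_{t}^{\bm{\phi}}(\mathbf{z}_{t})}\big[\Vert \nabla\log q_{t}^{\bm{\theta}}(\mathbf{z}_{t}) - \mathbf{s}_{\bm{\theta}}(\mathbf{z}_{t},t)\Vert_{2}^{2}\big]\diff t,
\end{align*}
so a vanishing gap forces $\mathbf{s}_{\bm{\theta}}(\cdot,t) = \nabla\log q_{t}^{\bm{\theta}}(\cdot)$ to hold $p_{t}^{\bm{\phi}}$-a.e.\ for Lebesgue-a.e.\ $t \in (0,T]$.

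First I would upgrade this to a pointwise identity. Because the latent forward SDE is linear with a Gaussian transition kernel, $p_{t}^{\bm{\phi}}$ is strictly positive on $\mathbb{R}^{d}$ for every $t>0$, so $p_{t}^{\bm{\phi}}$-null sets are Lebesgue-null; combined with the continuity of $\mathbf{s}_{\bm{\theta}}$ and the smoothness of $q_{t}^{\bm{\theta}}$ assumed in the standing regularity hypotheses, the identity $\mathbf{s}_{\bm{\theta}}(\mathbf{z},t) = \nabla\log q_{t}^{\bm{\theta}}(\mathbf{z})$ extends pointwise (for a.e.\ $t$).

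Next I would verify that $q_{t}^{\bm{\theta}}$ is itself the marginal at time $t$ of the linear latent forward SDE started from $q_{0}^{\bm{\theta}}$. Plugging $\mathbf{s}_{\bm{\theta}} = \nabla \log q_{t}^{\bm{\theta}}$ into the generative latent SDE produces the Anderson reversal of $\diff \mathbf{z}_{t} = -\tfrac{1}{2}\beta(t)\mathbf{z}_{t}\diff t + g(t)\diff\mathbf{w}_{t}$; on the density side the evolution of $q_{t}^{\bm{\theta}}$ reduces, after the cancellation
\begin{align*}
g^{2}(t)\nabla\cdot\big(q_{t}^{\bm{\theta}}\nabla\log q_{t}^{\bm{\theta}}\big) - \tfrac{1}{2}g^{2}(t)\Delta q_{t}^{\bm{\theta}} = \tfrac{1}{2}g^{2}(t)\Delta q_{t}^{\bm{\theta}},
\end{align*}
to exactly the Fokker--Planck equation of the forward latent SDE. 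By uniqueness of Fokker--Planck solutions, $q_{t}^{\bm{\theta}}$ therefore coincides with the marginal of the forward SDE initialized at $q_{0}^{\bm{\theta}}$, and Definition \ref{def:1} then places $\mathbf{s}_{\bm{\theta}}(\cdot,t) = \nabla\log q_{t}^{\bm{\theta}}(\cdot) \in \mathbf{S}_{sol}$.

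The main obstacle I anticipate is the regularity bookkeeping needed at both upgrade steps: one must ensure $q_{t}^{\bm{\theta}}$ is strictly positive and sufficiently smooth for $\nabla\log q_{t}^{\bm{\theta}}$ to be well-defined and for Fokker--Planck uniqueness to apply, and one must confirm that $p_{t}^{\bm{\phi}}$ has full support on $\mathbb{R}^{d}$ for every positive $t$. These prerequisites match the standing assumptions inherited from \citet{song2021maximum} and already used for Proposition \ref{thm:2}, so no genuinely new hypotheses beyond them are required.
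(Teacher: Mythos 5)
Your proposal is correct and follows essentially the same route as the paper's proof: invoke Proposition~\ref{thm:2} to force $\mathbf{s}_{\bm{\theta}}(\mathbf{z},t)=\nabla\log q_{t}^{\bm{\theta}}(\mathbf{z})$ for $p_{t}^{\bm{\phi}}$-a.e.\ $\mathbf{z}$, upgrade to a pointwise identity via positivity of $p_{t}^{\bm{\phi}}$ and the continuity hypotheses, then substitute into the generative Fokker--Planck equation to observe the cancellation that recovers the forward Fokker--Planck equation, whence $\mathbf{s}_{\bm{\theta}}\in\mathbf{S}_{sol}$ by Definition~\ref{def:1}. The only minor omission relative to the paper is that you gloss over the $t=0$ endpoint with ``for a.e.\ $t$,'' whereas the paper runs a short contradiction argument using continuity in $t$ to push the equality down to $t=0$; your appeal to the same continuity hypotheses would close that gap, so the substance is unchanged.
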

	\endgroup
	
	\begin{wrapfigure}{r}{0.6\textwidth}
		\vskip -0.25in
		\centering
		\includegraphics[width=\linewidth]{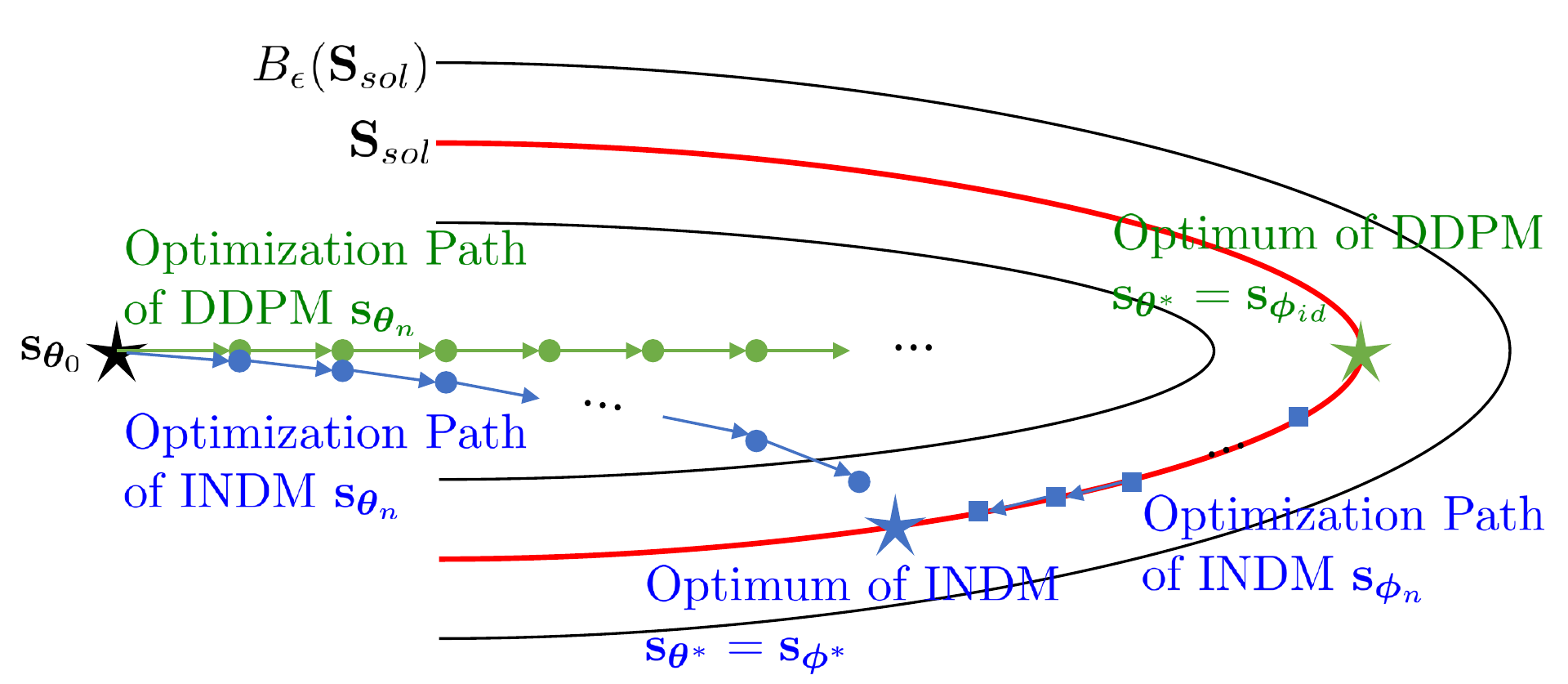}
		\vskip -0.05in
		\caption{Descriptive Illustration On Nearly MLE Training.}
		\label{fig:training_curve}
		\vskip -0.1in
	\end{wrapfigure}
	From Theorem \ref{cor:2}, the variational gap is strictly positive as long as the rotation part of the score network remains to be nonzero. NELBO of Eq. \eqref{eq:nelbo_score} optimizes its score network towards $\mathbf{s}_{\bm{\theta}}(\mathbf{z}_{t},t)\rightarrow\nabla\log{p_{t}^{\bm{\phi}}(\mathbf{z}_{t})}:=\mathbf{s}_{\bm{\phi}}(\mathbf{z}_{t},t)$, which is equivalent to $\log{p_{t}^{\bm{\theta}}(\mathbf{z}_{t})}\rightarrow\nabla\log{p_{t}^{\bm{\phi}}(\mathbf{z}_{t})}$ (or equivalently, $\log{q_{t}^{\bm{\theta}}(\mathbf{z}_{t})}\rightarrow\nabla\log{p_{t}^{\bm{\phi}}(\mathbf{z}_{t})}$) and $\mathbf{u}_{t}^{\bm{\theta}}(\mathbf{z}_{t},t)\rightarrow 0$. In contrast to DDPM++ with fixed $\bm{\phi}$, optimizing $D_{KL}(\bm{\mu}_{\bm{\phi}}\Vert\bm{\nu}_{\bm{\phi},\bm{\theta}})$ w.r.t $\bm{\phi}$ finds the closest $\mathbf{s}_{\bm{\phi}}$ among $\mathbf{S}_{sol}$ to $\mathbf{s}_{\bm{\theta}}$. Thus, if $\mathbf{s}_{\bm{\theta}}\in\mathbf{S}_{sol}$, then $\mathbf{s}_{\bm{\phi}^{*}}=\mathbf{s}_{\bm{\theta}}$, which is proved in Theorem \ref{thm:3}. If $\mathbf{s}_{\bm{\theta}}\notin\mathbf{S}_{sol}$, then $\mathbf{s}_{\bm{\phi}^{*}}$ is not equal to $\mathbf{s}_{\bm{\theta}}$, anymore, but $\mathbf{s}_{\bm{\phi}^{*}}$ will be the closest among $\mathbf{S}_{sol}$ to $\mathbf{s}_{\bm{\theta}}$ because $D_{KL}(\bm{\mu}_{\bm{\phi}}\Vert\bm{\nu}_{\bm{\phi},\bm{\theta}})$ is the weighted $L^{2}$-norm of $\mathbf{s}_{\bm{\phi}}-\mathbf{s}_{\bm{\theta}}$. 
	
	\begingroup
	\renewcommand\thetheorem{3}
	\begin{theorem}
		For any fixed $\mathbf{s}_{\bm{\bar{\theta}}}\in\mathbf{S}_{sol}$, if $\bm{\phi}^{*}\in\argmin_{\bm{\phi}}{D_{KL}(\bm{\mu}_{\bm{\phi}}\Vert \bm{\nu}_{\bm{\phi},\bm{\bar{\theta}}})}$, then $\mathbf{s}_{\bm{\phi}^{*}}(\mathbf{z}_{t},t)=\nabla\log{p_{t}^{\bm{\phi}^{*}}(\mathbf{z}_{t})}=\mathbf{s}_{\bm{\bar{\theta}}}(\mathbf{z}_{t},t)$, and $D_{KL}(\bm{\mu}_{\bm{\phi}^{*}}\Vert \bm{\nu}_{\bm{\phi}^{*},\bm{\bar{\theta}}})=D_{KL}(p_{r}\Vert p_{\bm{\phi}^{*},\bm{\bar{\theta}}})=\text{Gap}(\bm{\mu}_{\bm{\phi}^{*}},\bm{\nu}_{\bm{\phi}^{*},\bm{\bar{\theta}}})=0$.
	\end{theorem}
	\endgroup
	
	Indeed, Theorem \ref{thm:3} implies that the whole class of $\mathbf{S}_{sol}$ is the solution space, which means that any $\mathbf{s}_{\bm{\theta}}$ in $\mathbf{S}_{sol}$ is a candidate for an optimal score function as there always exists $\bm{\phi}^{*}$ corresponding to a given $\bm{\theta}$ that achieves the perfect match of the model distribution to the data distribution. This is contrastive to DDPM++ that only has a unique optimal point of $\mathbf{s}_{\bm{\theta}^{*}}(\mathbf{z}_{t},t)=\nabla\log{p_{t}^{\bm{\phi}_{id}}(\mathbf{z}_{t})}\in\mathbf{S}_{sol}$. Figure \ref{fig:training_curve} illustrates that the optimal point of DDPM is a single point in $\mathbf{S}_{sol}$, whereas any $\mathbf{s}_{\bm{\theta}}\in\mathbf{S}_{sol}$ is a candidate for the optimal point of INDM by Theorem \ref{thm:3}. In other words, the number of DDPM optimality is one, while INDM has infinite number of optimalities.
	
	\subsection{Restricting Search Space of \texorpdfstring{$\mathbf{s}_{\bm{\theta}}$}{TEXT} into \texorpdfstring{$\mathbf{S}_{div}$}{TEXT}}\label{appendix:regularization}
	
	Due to the space limit, the argument in this section has not been included in the main paper. Below, we provide the rationale that it is the number of optimal points that affect the NLL performance. For that, we optimize DDPM++ with a regularization, suggested in Proposition \ref{prop:4}. This regularization restricts the score network from not deviating $\mathbf{S}_{div}$ too far by keeping the rotation term, $\mathbf{u}_{t}^{\bm{\theta}}$, being consistently small. Consequently, a fastly converging rotation term is advantageous in reducing the variational gap (see Inequality \eqref{eq:gap_decomposition}), and this regularization helps the MLE training of DDPM++. 
	
	Proposition \ref{prop:1} proves that $\mathbf{S}_{div}$ is identical to a class of score functions that have symmetric derivatives. From this, Proposition \ref{prop:2} provides a motivation of the regularization by proving that a symmetric matrix satisfies a certain equality.  Then, Proposition \ref{prop:3} implies that the formula suggested in Proposition \ref{prop:2} indeed measures how close is the matrix symmetric. Lastly, Proposition \ref{prop:4} provides the minimum variance estimator of the formula. With these propositions, we conclude that the constraint of
	\begin{align}\label{eq:symmetric}
	\mathbb{E}_{\bm{\epsilon}_{1},\bm{\epsilon}_{2}}\left[\Big(\bm{\epsilon}_{2}^{T}\big(\nabla\mathbf{s}_{\bm{\theta}}(\mathbf{z}_{t},t)-(\nabla\mathbf{s}_{\bm{\theta}})^{T}(\mathbf{z}_{t},t)\big)\bm{\epsilon}_{1}\Big)^{2}\right]=0
	\end{align}
	with $\bm{\epsilon}_{1}$ and $\bm{\epsilon}_{2}$ sampled from the random variable suggested in Proposition \ref{prop:4} would optimize $\mathbf{s}_{\bm{\theta}}$ in the space of $\mathbf{S}_{div}$. Using the Lagrangian form, we could add the left-hand-side of Eq. \eqref{eq:symmetric} as a regularization term in NELBO to force the score network not deviate from $\mathbf{S}_{div}$ too much. 
	
	With the clear mathematical properties, however, obtaining the full matrix of $\nabla\mathbf{s}_{\bm{\theta}}$ is a bottleneck in the computation of the regularization term. Specifically, each row of $\nabla\mathbf{s}_{\bm{\theta}}$ needs to be computed separately \cite{song2020sliced}, so it takes $O(d)$ complexity to compute $\nabla\mathbf{s}_{\bm{\theta}}$, which is prohibitively expensive. Therefore, we use a trick to reduce $O(d)$ to $O(1)$ motivated from the Hutchinson's estimator \cite{hutchinson1989stochastic, chen2019residual}: first, we compute the gradient of $\bm{\epsilon}_{2}^{T}\mathbf{s}_{\bm{\theta}}$ and $\bm{\epsilon}_{1}^{T}\mathbf{s}_{\bm{\theta}}$, separately. Afterwards, we apply vector multiplication between $\bm{\epsilon}_{1}$ and $\nabla(\bm{\epsilon}_{2}^{T}\mathbf{s}_{\bm{\theta}})$, which gives us $\bm{\epsilon}_{2}^{T}\nabla\mathbf{s}_{\bm{\theta}}\bm{\epsilon}_{1}$; and analogously, the multiplication of $\bm{\epsilon}_{2}$ with $\nabla(\bm{\epsilon}_{1}^{T}\mathbf{s}_{\bm{\theta}})$ yields $\bm{\epsilon}_{2}^{T}(\nabla\mathbf{s}_{\bm{\theta}})^{T}\bm{\epsilon}_{1}$. This trick requires only second time of gradient computations to estimate the regularization. Hence, the computational complexity of $\bm{\epsilon}_{2}^{T}(\nabla\mathbf{s}_{\bm{\theta}}-\nabla\mathbf{s}_{\bm{\theta}}^{T})\bm{\epsilon}_{1}$ is $O(1)$.
	
	\begin{proposition}\label{prop:1}
		$\mathbf{s}_{\bm{\theta}}\in\mathbf{S}_{div}$ if and only if $\nabla_{\mathbf{z}_{t}}\mathbf{s}_{\bm{\theta}}(\mathbf{z}_{t},t)$ is symmetric.
	\end{proposition}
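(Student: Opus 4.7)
\textbf{Proof proposal for Proposition \ref{prop:1}.} The plan is to exploit the classical calculus fact that on the simply-connected domain $\mathbb{R}^d$, a smooth vector field is a gradient of a scalar potential if and only if its Jacobian is symmetric (Poincar\'e lemma), and to bridge this with the Helmholtz decomposition (Lemma \ref{lemma:1}) that underlies the definition of $\mathbf{S}_{div}$.

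For the forward direction, I would simply unfold definitions. If $\mathbf{s}_{\bm{\theta}} \in \mathbf{S}_{div}$, then by definition the rotation term in its Helmholtz decomposition is zero, so $\mathbf{s}_{\bm{\theta}}(\mathbf{z}_t,t) = \nabla_{\mathbf{z}_t} \log p_t^{\bm{\theta}}(\mathbf{z}_t)$ for some probability density $p_t^{\bm{\theta}}$. Then $\nabla_{\mathbf{z}_t} \mathbf{s}_{\bm{\theta}}(\mathbf{z}_t,t)$ is the Hessian $\nabla_{\mathbf{z}_t}^2 \log p_t^{\bm{\theta}}(\mathbf{z}_t)$, which is symmetric by Schwarz's theorem on equality of mixed partials (the background regularity assumptions already guarantee that $\mathbf{s}_{\bm{\theta}}$ is $C^1$, i.e.\ $\log p_t^{\bm{\theta}}$ is $C^2$).

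For the converse, I would apply Lemma \ref{lemma:1} to write $\mathbf{s}_{\bm{\theta}}(\cdot,t) = \nabla \log p + \mathbf{u}$ with $\mathbf{u}$ divergence-free (the decay hypotheses on $\mathbf{s}_{\bm{\theta}}$ are assumed to be in force). Since $\nabla\mathbf{s}_{\bm{\theta}}$ is symmetric by hypothesis and $\nabla(\nabla\log p)$ is symmetric (a Hessian), the Jacobian $\nabla\mathbf{u} = \nabla\mathbf{s}_{\bm{\theta}} - \nabla^2\log p$ is also symmetric. Then by the Poincar\'e lemma on $\mathbb{R}^d$, there exists a scalar potential $\psi$ with $\mathbf{u} = \nabla\psi$. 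Being also divergence-free, $\mathbf{u}$ satisfies $\Delta\psi = \text{div}(\mathbf{u}) = 0$, so $\psi$ is harmonic on $\mathbb{R}^d$. Combined with the decay of $\mathbf{u}$ at infinity (inherited from the decay of $\mathbf{s}_{\bm{\theta}}$ and $\nabla\log p$), a Liouville-type argument forces $\psi$ to be constant, whence $\mathbf{u}\equiv 0$ and $\mathbf{s}_{\bm{\theta}} = \nabla\log p \in \mathbf{S}_{div}$.

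The step I expect to be most delicate is the last one: rigorously concluding $\mathbf{u}\equiv 0$ from the combination of "$\mathbf{u}$ is a harmonic gradient field on $\mathbb{R}^d$" and "$\mathbf{u}$ decays at infinity." If this Liouville step feels heavy, an alternative and more self-contained route is to skip the Helmholtz decomposition on the way back: invoke Poincar\'e directly on the symmetric-Jacobian hypothesis to obtain $\mathbf{s}_{\bm{\theta}}(\mathbf{z},t) = \nabla_{\mathbf{z}} \Phi(\mathbf{z},t)$ for some scalar $\Phi$, then define the candidate density $p_t^{\bm{\theta}}(\mathbf{z}) = e^{\Phi(\mathbf{z},t)}/Z(t)$ where $Z(t) = \int e^{\Phi(\mathbf{z},t)}\,\diff\mathbf{z}$. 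Normalizability of $e^{\Phi}$ follows from the same decay hypotheses embedded in Lemma \ref{lemma:1}, and then $\nabla_{\mathbf{z}}\log p_t^{\bm{\theta}} = \nabla_{\mathbf{z}}\Phi = \mathbf{s}_{\bm{\theta}}$, placing $\mathbf{s}_{\bm{\theta}}$ in $\mathbf{S}_{div}$ directly.
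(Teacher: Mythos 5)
Your forward direction matches the paper's exactly: membership in $\mathbf{S}_{div}$ means $\mathbf{s}_{\bm{\theta}}(\cdot,t)=\nabla\log p_t^{\bm{\theta}}$, so $\nabla_{\mathbf{z}_t}\mathbf{s}_{\bm{\theta}}$ is a Hessian and hence symmetric.

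For the converse the paper gives a single terse line --- a symmetric Jacobian makes $\mathbf{s}_{\bm{\theta}}$ a closed 1-form, hence (Poincar\'e on $\mathbb{R}^d$) exact, hence in $\mathbf{S}_{div}$ --- which is exactly what you offer as your ``alternative, more self-contained route.'' Your primary route is genuinely different: you stay inside the Helmholtz framework of Lemma \ref{lemma:1}, decompose $\mathbf{s}_{\bm{\theta}}=\nabla\log p+\mathbf{u}$, observe that $\nabla\mathbf{u}$ is also symmetric, apply Poincar\'e to $\mathbf{u}$ to get a harmonic potential $\psi$, and finish with a Liouville argument (each component of $\nabla\psi$ is harmonic, decays at infinity, hence is bounded, hence constant, hence zero). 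This is more work than the paper's one-liner, but it is arguably the argument that most directly addresses the definition of $\mathbf{S}_{div}$, which is phrased in terms of the rotation term of the \emph{unique} Helmholtz decomposition vanishing; in effect you are re-deriving the uniqueness that the paper silently invokes. It also sidesteps a subtle gap in both the paper's proof and your alternative route: writing $\mathbf{s}_{\bm{\theta}}=\nabla\Phi$ and then declaring $p_t^{\bm{\theta}}\propto e^{\Phi}$ requires $e^{\Phi}$ to be integrable, which does not follow from the $\Vert\mathbf{z}\Vert^{-c}$ decay of $\mathbf{s}_{\bm{\theta}}$ alone (a potential $\Phi$ growing like $\log(1+\Vert\mathbf{z}\Vert^{c'})$ for small $c'>0$ has a decaying gradient but a non-normalizable $e^{\Phi}$). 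The Helmholtz-and-Liouville route avoids this entirely because Lemma \ref{lemma:1} already hands you a genuine probability density $p$, so your primary argument is in fact the more airtight of the two.
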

	
	\begin{proposition}\label{prop:2}
		A matrix $A\in\mathbb{R}^{d\times d}$ is symmetric if and only if $\mathbb{E}_{\bm{\epsilon}_{1},\bm{\epsilon}_{2}\sim\mathcal{N}(0,\mathbf{I})}\left[ (\bm{\epsilon}_{2}^{T}(A-A^{T})\bm{\epsilon}_{1})^{2}\right]=0$.
	\end{proposition}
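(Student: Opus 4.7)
The plan is to reduce the statement to a direct computation of the expectation as a Frobenius norm. Let me set $B = A - A^T$; then the quantity inside the expectation is $(\epsilon_2^T B \epsilon_1)^2$, and the goal is to show this expectation equals $\|B\|_F^2$, from which both implications follow immediately.

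First I would handle the easy $(\Rightarrow)$ direction: if $A$ is symmetric then $B = 0$, so the integrand vanishes pointwise and the expectation is zero. For $(\Leftarrow)$, I would expand the quadratic form in coordinates:
\begin{align*}
(\epsilon_2^T B \epsilon_1)^2 = \sum_{i,j,k,l} B_{ij} B_{kl}\, (\epsilon_2)_i (\epsilon_1)_j (\epsilon_2)_k (\epsilon_1)_l.
\end{align*}
Using the independence of $\epsilon_1$ and $\epsilon_2$ together with $\mathbb{E}[(\epsilon_r)_a (\epsilon_r)_b] = \delta_{ab}$ for $r \in \{1,2\}$, the joint expectation of the monomial factors is $\delta_{ik}\delta_{jl}$, so only the diagonal terms $i=k$ and $j=l$ survive. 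The sum therefore collapses to
\begin{align*}
\mathbb{E}\bigl[(\epsilon_2^T B \epsilon_1)^2\bigr] = \sum_{i,j} B_{ij}^2 = \|B\|_F^2 = \|A - A^T\|_F^2.
\end{align*}
Hence the expectation being zero forces every entry of $A - A^T$ to vanish, i.e., $A$ is symmetric.

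There is no substantial obstacle here: the argument is a one-line moment computation for standard Gaussians, and positivity/definiteness of the Frobenius norm closes both directions. The only care needed is to be explicit about independence of $\epsilon_1$ and $\epsilon_2$ when splitting the fourth-moment expectation into the product of two second moments, since otherwise the mixed terms would not simplify so cleanly.
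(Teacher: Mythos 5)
Your proof is correct, and it is actually a cleaner, more self-contained argument than the paper's own proof of Proposition~\ref{prop:2}. The paper's proof of this proposition is terse: it merely rewrites $\bm{\epsilon}_{2}^{T}(A-A^{T})\bm{\epsilon}_{1}$ as $\bm{\epsilon}_{2}^{T}A\bm{\epsilon}_{1}-\bm{\epsilon}_{1}^{T}A\bm{\epsilon}_{2}$ and asserts the equivalence, implicitly invoking the fact that a nonnegative random variable with zero expectation vanishes almost surely, and that a bilinear form vanishing on a set of full Gaussian measure forces the matrix to be zero. You instead compute the fourth moment directly to obtain the closed form $\mathbb{E}\bigl[(\bm{\epsilon}_{2}^{T}(A-A^{T})\bm{\epsilon}_{1})^{2}\bigr]=\|A-A^{T}\|_{F}^{2}$, from which both directions follow by positive definiteness of the Frobenius norm. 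This direct computation is precisely what the paper carries out later in Proposition~\ref{prop:3} (specialized to $U\sim\mathcal{N}(0,1)$, where $\mathbb{E}[U^{2}]^{2}=1$), so in effect you have anticipated that more general result and used it to give a tighter, self-contained proof of Proposition~\ref{prop:2}. The approaches buy different things: the paper's proof is shorter but relies on a measure-theoretic positivity argument left implicit, whereas yours is slightly longer but produces the explicit value of the expectation for free, which is what Propositions~\ref{prop:3} and~\ref{prop:4} subsequently need.
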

	
	In fact, we can prove a bit stronger results in the next propositions.
	
	\begin{proposition}\label{prop:3}
		Let $\bm{\epsilon}_{1}$ and $\bm{\epsilon}_{2}$ be vectors of $d$ independent samples from a random variable $U$ with mean zero. Then
		\begin{align*}
		\mathbb{E}_{\bm{\epsilon}_{1},\bm{\epsilon}_{2}}[(\bm{\epsilon}_{2}^{T}(A-A^{T})\bm{\epsilon}_{1})^{2}]=\mathbb{E}_{U}[U^{2}]^{2}\Vert A-A^{T}\Vert_{F}^{2}
		\end{align*}
		and
		\begin{align*}
		&\text{Var}\Big(\big(\bm{\epsilon}_{2}^{T}(A-A^{T})\bm{\epsilon}_{1}\big)^{2}\Big)=\text{Var}(U^{2})\Big(\text{Var}(U^{2})+2\big(\text{Var}(U)+\mathbb{E}_{U}[U]^{2}\big)^{2}\Big)\sum_{a,b}(\Delta A)_{ab}^{4}\\
		&\quad+2\big(\text{Var}(U)+\mathbb{E}_{U}[U]^{2}\big)^{2}\Big(3\text{Var}(U^{2})+2\big(\text{Var}(U)+\mathbb{E}_{U}[U]^{2}\big)^{2}\Big)\sum_{a}\sum_{b\ne d}(\Delta A)_{ab}^{2}(\Delta A)_{ad}^{2}\\
		&\quad+2\big(\text{Var}(U)+\mathbb{E}_{U}[U]^{2}\big)^{4}\Big(\sum_{a\neq c}\sum_{b\neq d}(\Delta A)_{ab}^{2}(\Delta A)_{cd}^{2}\\
		&\quad\quad\quad\quad\quad\quad\quad\quad\quad\quad\quad\quad+3\sum_{a\ne c}\sum_{b\ne d}(\Delta A)_{ab}(\Delta A)_{ad}(\Delta A)_{cb}(\Delta A)_{cd}\Big),
		\end{align*}
		where $(\Delta A)_{ab}:=A_{ab}-A_{ba}$.
	\end{proposition}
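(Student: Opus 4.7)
The plan is to treat the bilinear form $\bm{\epsilon}_{2}^{T}(A-A^{T})\bm{\epsilon}_{1}$ as a coordinate sum and expand the required moments using the independence of $\bm{\epsilon}_{1}$ and $\bm{\epsilon}_{2}$ together with the independence of the components inside each vector. Writing $B := A-A^{T}$ so $B_{ab} = (\Delta A)_{ab}$, the form becomes $\sum_{a,b} B_{ab}\epsilon_{2,a}\epsilon_{1,b}$. For the first identity I square this and take expectation: the result is $\sum_{a,b,c,d} B_{ab}B_{cd}\,\mathbb{E}[\epsilon_{2,a}\epsilon_{2,c}]\,\mathbb{E}[\epsilon_{1,b}\epsilon_{1,d}]$. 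Because $U$ has mean zero and the coordinates are independent, each two-point expectation equals $\mathbb{E}[U^{2}]$ times a Kronecker delta, so only the pattern $a=c$, $b=d$ survives, giving $\mathbb{E}[U^{2}]^{2}\sum_{a,b}B_{ab}^{2} = \mathbb{E}[U^{2}]^{2}\Vert A-A^{T}\Vert_{F}^{2}$.

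For the variance I would compute $\mathbb{E}[(\bm{\epsilon}_{2}^{T}B\bm{\epsilon}_{1})^{4}]$ and subtract the square of the expectation already obtained. Expanding the fourth power produces an eight-index sum whose expectation factors into fourth-moment tensors of $\bm{\epsilon}_{1}$ and $\bm{\epsilon}_{2}$. The convenient identity to apply, valid for any independent mean-zero sample, is
\begin{align*}
\mathbb{E}[\epsilon_{a}\epsilon_{b}\epsilon_{c}\epsilon_{d}] = \mathbb{E}[U^{2}]^{2}\bigl(\delta_{ab}\delta_{cd}+\delta_{ac}\delta_{bd}+\delta_{ad}\delta_{bc}\bigr) + \bigl(\mathbb{E}[U^{4}]-3\mathbb{E}[U^{2}]^{2}\bigr)\delta_{abcd},
\end{align*}
where $\delta_{abcd}$ indicates all four indices coincide; all terms with an unpaired index vanish by $\mathbb{E}[U]=0$. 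Applying this to both expectation factors yields a product of two such decompositions, i.e.\ sixteen bilinear cases classified by the coincidence pattern of the four row indices and the four column indices.

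The bookkeeping step then groups these sixteen contributions according to which indices in the $\bm{\epsilon}_{2}$-tuple and in the $\bm{\epsilon}_{1}$-tuple coincide. The fully-coincident/fully-coincident cases yield the $\sum_{a,b}(\Delta A)_{ab}^{4}$ term; the mixed cases where one side is fully coincident and the other is a two-pair pattern yield the $\sum_{a}\sum_{b\ne d}(\Delta A)_{ab}^{2}(\Delta A)_{ad}^{2}$ cross-term (and its transpose); the two-pair/two-pair cases yield the remaining two sums over $a\neq c$, $b\neq d$, including the diagonal pair $\sum_{a\ne c,b\ne d}(\Delta A)_{ab}^{2}(\Delta A)_{cd}^{2}$ and the off-diagonal cross pair $\sum_{a\ne c,b\ne d}(\Delta A)_{ab}(\Delta A)_{ad}(\Delta A)_{cb}(\Delta A)_{cd}$. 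Subtracting the squared expectation $\mathbb{E}[U^{2}]^{4}(\sum_{a,b}B_{ab}^{2})^{2}$ exactly removes the Gaussian diagonal part, so only the non-Gaussian correction $\text{Var}(U^{2})=\mathbb{E}[U^{4}]-\mathbb{E}[U^{2}]^{2}$ and residual $\mathbb{E}[U^{2}]^{2}=(\text{Var}(U)+\mathbb{E}[U]^{2})^{2}$ factors remain, matching the coefficients stated in the proposition after regrouping.

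The main obstacle is purely the combinatorial bookkeeping in the last step: of the sixteen product-cases coming from the two Wick-style decompositions, one must carefully track which contribute to the restricted sums (with $a\ne c$ or $b\ne d$) versus the unrestricted sums, and verify that the Gaussian portion of the two-pair contributions cancels cleanly against $(\mathbb{E}[(\bm{\epsilon}_{2}^{T}B\bm{\epsilon}_{1})^{2}])^{2}$, leaving coefficients expressible purely through $\text{Var}(U^{2})$ and $\text{Var}(U)+\mathbb{E}[U]^{2}$. No non-trivial estimates are needed; the entire argument is an exact moment computation leveraging component independence and $\mathbb{E}[U]=0$.
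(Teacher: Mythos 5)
Your proposal is correct and takes essentially the same route as the paper: expand the fourth power of the bilinear form into an eight-index sum, contract via independence and the zero mean of the components, and subtract the squared second moment. Your up-front statement of the fourth-moment tensor identity $\mathbb{E}[\epsilon_a\epsilon_b\epsilon_c\epsilon_d]=\mathbb{E}[U^2]^2(\delta_{ab}\delta_{cd}+\delta_{ac}\delta_{bd}+\delta_{ad}\delta_{bc})+(\mathbb{E}[U^4]-3\mathbb{E}[U^2]^2)\delta_{abcd}$ is just a more compact way of organizing the contractions that the paper carries out by iterative case-splitting on coincidences of indices; when you do the final regrouping, be sure to use the fact that $(\Delta A)_{ab}^2=(\Delta A)_{ba}^2$ for the antisymmetric matrix $\Delta A=A-A^{T}$, since it is this identity that collapses the row-contraction sum $\sum_{b}\sum_{a\ne c}(\Delta A)_{ab}^2(\Delta A)_{cb}^2$ and the column-contraction sum $\sum_{a}\sum_{b\ne d}(\Delta A)_{ab}^2(\Delta A)_{ad}^2$ into the single term with the stated coefficient.
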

	
	\begin{proposition}\label{prop:4}
		Let $U$ be the discrete random variable which takes the values $1,-1$ each with probability $1/2$. Then $(\bm{\epsilon}_{2}^{T}(A-A^{T})\bm{\epsilon}_{1})^{2}$ is the unbiased estimator of $\Vert A-A^{T}\Vert_{F}^{2}$.
		Moreover, $U$ is the unique random variable amongst zero-mean random variables for which the estimator is an unbiased estimator, and attains a minimum variance.
	\end{proposition}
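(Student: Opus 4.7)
The plan is to read off both claims directly from the expectation and variance formulas in Proposition \ref{prop:3} by substituting the appropriate moment constraints. For unbiasedness, when $U$ is Rademacher we have $\mathbb{E}_U[U^2]=1$, so the first identity of Proposition \ref{prop:3} gives $\mathbb{E}_{\bm{\epsilon}_1,\bm{\epsilon}_2}[(\bm{\epsilon}_2^T(A-A^T)\bm{\epsilon}_1)^2]=\|A-A^T\|_F^2$, which is exactly the claim of unbiasedness.

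For the uniqueness plus minimum-variance statement I would first narrow the admissible $U$: applying the expectation identity of Proposition \ref{prop:3}, unbiasedness demands $\mathbb{E}_U[U^2]^2=1$, and since $\mathbb{E}_U[U^2]\ge 0$ this forces $\mathbb{E}_U[U^2]=1$. Combined with $\mathbb{E}_U[U]=0$ this gives $\text{Var}(U)+\mathbb{E}_U[U]^2=1$, so every factor of the form $(\text{Var}(U)+\mathbb{E}_U[U]^2)^k$ appearing in the variance formula of Proposition \ref{prop:3} collapses to $1$. What remains is the sum of three groups of terms: the first two depend on $U$ only through $\text{Var}(U^2)$, with coefficients that are non-negative linear combinations of squared entries of $A-A^T$, while the third group is entirely independent of the distribution of $U$. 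Consequently the variance is a monotone non-decreasing function of $\text{Var}(U^2)\ge 0$ for every matrix $A$, so the infimum over admissible $U$ is attained if and only if $\text{Var}(U^2)=0$. Together with $\mathbb{E}_U[U^2]=1$ this yields $U^2=1$ almost surely, and combined with $\mathbb{E}_U[U]=0$ this pins down the Rademacher distribution uniquely.

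The main obstacle is conceptual rather than computational: clarifying what ``attains a minimum variance'' means when the variance depends on the free parameter $A$. One must argue that a single zero-mean $U$ minimizes the estimator variance uniformly in $A$, not merely for some particular choice. The monotonicity observation above is precisely what buys this: because every $A$-dependent coefficient multiplying $\text{Var}(U^2)$ or $\text{Var}(U^2)^2$ is non-negative, setting $\text{Var}(U^2)=0$ simultaneously minimizes the variance for every $A$, and for any non-symmetric $A$ this minimum is strict, ruling out all competing zero-mean distributions.
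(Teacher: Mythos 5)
Your proposal is correct and follows essentially the same route as the paper's proof: both reduce the problem to observing that, once unbiasedness fixes $\mathbb{E}_U[U^2]=1$, the variance formula from Proposition~\ref{prop:3} is minimized exactly when $\textup{Var}(U^2)=0$, which together with $\mathbb{E}_U[U]=0$ forces the Rademacher law. You do, however, make explicit several steps the paper leaves implicit --- deriving $\mathbb{E}_U[U^2]=1$ from the unbiasedness requirement, noting that $\textup{Var}(U)+\mathbb{E}_U[U]^2=1$ collapses all the corresponding factors, and verifying that the $\textup{Var}(U^2)$-dependent coefficients are nonnegative so that minimization is uniform in $A$ and strict for non-symmetric $A$ --- which tightens what is otherwise a one-line argument in the paper.
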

	
	Summing altogether, if it is the main focus to eliminate the rotation term in the score estimation, we could optimize $D_{KL}(\bm{\mu}_{\bm{\phi}}\Vert\bm{\nu}_{\bm{\phi},\bm{\theta}})+\lambda\mathbb{E}_{\bm{\bm{\epsilon}}_{1},\bm{\bm{\epsilon}}_{2}}\left[(\bm{\bm{\epsilon}}_{2}^{T}(\nabla\mathbf{s}_{\bm{\theta}}-\nabla\mathbf{s}_{\bm{\theta}}^{T})\bm{\bm{\epsilon}}_{1}^{T})^{2}\right]$, where $\bm{\bm{\epsilon}}_{1}$ and $\bm{\bm{\epsilon}}_{2}$ are the random variables of minimum variance, as proposed in Proposition \ref{prop:4}. In practice, we find that the above regularized training loss is unnecessary for INDM because we already achieves the nearly MLE training, but it helps DDPM++ to reduce the variational gap at the expense of $4\times$ slower training speed than the training with unregularized loss in DDPM++. Even with reduced variational gap, we find that NLL of DDPM++ is improved only marginally only on certain training scenarios, and has no effect in most trials, so we leave the detailed effect of MLE training in diffusion models as a future work. Notably, therefore, we conclude that the NLL gain in INDM, compared to DDPM++, essentially originates from $\bm{\phi}$-training and its consequential expanded solution space to $\mathbf{S}_{sol}$.
	
	\section{Details on Section \ref{sec:sampling_robustness}}\label{appendix:sample_robustness}
	
	\subsection{Full Statement of Theorem \ref{thm:dsb}}\label{appendix:dsb}
	We provide a full statement of Theorem \ref{thm:dsb}. Theorem \ref{thm:dsb} is heavily influenced by the theoretic analysis of \citet{de2021diffusion, guth2022wavelet}, and it could be considered as merely an application of their results. It is possible that the inequality in Theorem \ref{thm:dsb} could not be tight, but empirically the robustness is significantly connected to the initial distribution's smoothness.
	\begingroup
	\renewcommand\thetheorem{4}
	\begin{theorem}
		Assume that there exists $M\ge 0$ such that for any $t\in[0,T]$ and $\mathbf{z}\in\mathbb{R}^{d}$, the score estimation is close enough to the forward score by $M$, $\Vert\mathbf{s}_{\bm{\theta}}(\mathbf{x},t)-\nabla\log{p_{t}^{\bm{\phi}}(\mathbf{x})}\Vert\le M$, with $\mathbf{s}_{\bm{\theta}}\in C([0,T]\times\mathbb{R}^{d},\mathbb{R}^{d})$. Assume that $\nabla\log{p_{t}^{\bm{\phi}}(\mathbf{z})}$ is $C^{2}$ in both $t$ and $\mathbf{z}$, and that $\sup_{\mathbf{z},t}\Vert\nabla^{2}\log{p_{t}^{\bm{\phi}}(\mathbf{z})}\Vert\le K\quad\text{and}\quad\Vert\frac{\partial}{\partial t}\nabla\log{p_{t}^{\bm{\phi}}(\mathbf{z})}\Vert\le M e^{-\alpha t}\Vert\mathbf{z}\Vert$ for some $K,M,\alpha>0$. Suppose $(\mathbf{h}_{\bm{\phi}}^{-1})_{\#}$ s a push-forward map. Then $\Vert p_{r}-(\mathbf{h}_{\bm{\phi}}^{-1})_{\#}p_{0,N}^{\bm{\theta}}\Vert_{TV}\le E_{pri}(\bm{\phi})+E_{dis}(\bm{\phi})+E_{est}(\bm{\phi},\bm{\theta})$, where $E_{pri}(\bm{\phi})=\sqrt{2}e^{-T}D_{KL}(p_{T}^{\bm{\phi}}\Vert\pi)^{1/2}$ is the error originating from the prior mismatch; $E_{dis}(\bm{\phi})=6\sqrt{\delta}(1+\mathbb{E}_{p_{0}^{\bm{\phi}}(\mathbf{z})}[\Vert\mathbf{z}\Vert^{4}]^{1/4})(1+K+M(1+\frac{1}{\sqrt{2\alpha}}))$ is the discretization error with $\delta=\frac{\max{\gamma_{k}}^{2}}{\min{\gamma_{k}}}$; $E_{est}(\bm{\phi},\bm{\theta})=2TM^{2}$ is the score estimation error.
	\end{theorem}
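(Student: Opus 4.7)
The invertibility of the flow $\mathbf{h}_{\bm{\phi}}$ is the key structural ingredient: total variation is preserved under push-forward by a bijection, and $(\mathbf{h}_{\bm{\phi}}^{-1})_{\#}p_{0}^{\bm{\phi}}=p_{r}$, so the entire problem reduces to bounding $\Vert p_{0}^{\bm{\phi}}-p_{0,N}^{\bm{\theta}}\Vert_{TV}$ in the latent space, where the forward SDE is linear. The plan is then to interpolate through two intermediate distributions and split the error via the triangle inequality. Let $q_{0}^{(1)}$ be obtained by integrating the \emph{continuous} reverse SDE with the true latent score $\nabla\log p_{t}^{\bm{\phi}}$ but initial distribution $\pi$, and let $q_{0}^{(2)}$ be obtained by the \emph{discretized} Euler--Maruyama recursion with the true score on the grid $\{t_{k}\}$ starting from $\pi$. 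Then
\begin{align*}
\Vert p_{0}^{\bm{\phi}}-p_{0,N}^{\bm{\theta}}\Vert_{TV}\le\Vert p_{0}^{\bm{\phi}}-q_{0}^{(1)}\Vert_{TV}+\Vert q_{0}^{(1)}-q_{0}^{(2)}\Vert_{TV}+\Vert q_{0}^{(2)}-p_{0,N}^{\bm{\theta}}\Vert_{TV},
\end{align*}
and these three pieces are precisely $E_{pri}$, $E_{dis}$, and $E_{est}$ respectively.

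\textbf{Prior error.} For $\Vert p_{0}^{\bm{\phi}}-q_{0}^{(1)}\Vert_{TV}$, both distributions arise from the same continuous reverse dynamics differing only in initial condition, so by data processing and Pinsker, $\Vert p_{0}^{\bm{\phi}}-q_{0}^{(1)}\Vert_{TV}\le\sqrt{\tfrac{1}{2}D_{KL}(p_{0}^{\bm{\phi}}\Vert q_{0}^{(1)})}\le\sqrt{\tfrac{1}{2}D_{KL}(p_{T}^{\bm{\phi}}\Vert\pi)}$. The improvement to the $e^{-T}$ factor comes from a synchronous coupling of the two reverse SDEs driven by the same Wiener process: because of the linear contracting drift $-\tfrac{1}{2}\beta(t)\mathbf{z}$, trajectories contract exponentially and one recovers the stated $\sqrt{2}e^{-T}D_{KL}(p_{T}^{\bm{\phi}}\Vert\pi)^{1/2}$, as in Proposition~3 of de Bortoli et al.

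\textbf{Discretization error.} For $\Vert q_{0}^{(1)}-q_{0}^{(2)}\Vert_{TV}$, I would couple the continuous and Euler--Maruyama-discretized processes by the same noise and quantify the one-step mismatch. On the interval $[t_{k-1},t_{k}]$, the local error is controlled by the spatial Lipschitz constant of $\nabla\log p_{t}^{\bm{\phi}}$ (giving the $K$ term) and its time regularity (giving the $L$ term after integrating $\int_{0}^{T}e^{-\alpha t}\,dt\le 1/\sqrt{2\alpha}$ when combined in $L^{2}$). Summing these local errors over the (possibly nonuniform) grid, the dependence on step sizes aggregates into $\delta=\max\gamma_{k}^{2}/\min\gamma_{k}$, and a fourth-moment bound on $\mathbf{z}$ enters through a Cauchy--Schwarz step, yielding the $6\sqrt{\delta}(1+\mathbb{E}[\Vert\mathbf{z}\Vert^{4}]^{1/4})(1+K+L(1+1/\sqrt{2\alpha}))$ bound of Guth--Mallat.

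\textbf{Score estimation error and main obstacle.} For $\Vert q_{0}^{(2)}-p_{0,N}^{\bm{\theta}}\Vert_{TV}$, the two discrete processes share initial distribution and noise and differ only in drift, $\nabla\log p_{t}^{\bm{\phi}}$ versus $\mathbf{s}_{\bm{\theta}}$. A discrete Girsanov argument bounds the KL between their path measures by $\tfrac{1}{2}\sum_{k}\gamma_{k}\mathbb{E}\Vert\mathbf{s}_{\bm{\theta}}-\nabla\log p_{t_{k}}^{\bm{\phi}}\Vert^{2}\le\tfrac{1}{2}TM^{2}$ by the uniform assumption, and Pinsker plus the stated constants yield $2TM^{2}$. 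The main technical obstacle is the discretization piece: getting the $\sqrt{\delta}$ rate on a general nonuniform grid requires careful bookkeeping of the local truncation error, combining a spatial Taylor expansion (where $K$ enters) with a temporal one (where the decaying bound $Le^{-\alpha t}$ lets $L^{2}$-type time integrals stay finite), and verifying that the fourth-moment bound propagates through the reverse dynamics. Since these estimates are established in de Bortoli et al.\ and Guth--Mallat in essentially this form, the remaining work is to verify that the latent SDE here satisfies their hypotheses and to assemble the three pieces.
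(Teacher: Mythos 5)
Your proposal matches the paper's proof essentially verbatim: the same reduction to the latent space via invariance of total variation under the invertible flow, the same three intermediate distributions in the triangle inequality (continuous reverse from the true prior, continuous reverse from $\pi$, discretized reverse with the true score from $\pi$), and the same attribution of $E_{pri}$ and $E_{dis}$ to \citet{guth2022wavelet} and of $E_{est}$ to the Girsanov-type Lemma S13 of \citet{de2021diffusion}. The only cosmetic difference is that the paper phrases the decomposition in terms of Markov kernels $\mathbb{P}_{T\vert 0}$, $\mathbb{P}^{R}_{T\vert 0}$, $Q_N^{\bm{\phi}}$, $Q_N^{\bm{\theta}}$ and invokes the data-processing inequality (Lemma S11) explicitly, whereas you describe the same objects as marginals $q_0^{(1)}$, $q_0^{(2)}$.
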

	\endgroup
	
	\subsection{Geometric Interpretation of Latent Diffusion}\label{appendix:latent_manifold}
	
	\begin{wrapfigure}{r}{0.5\textwidth}
		\vskip -0.2in
		\centering
		\includegraphics[width=\linewidth]{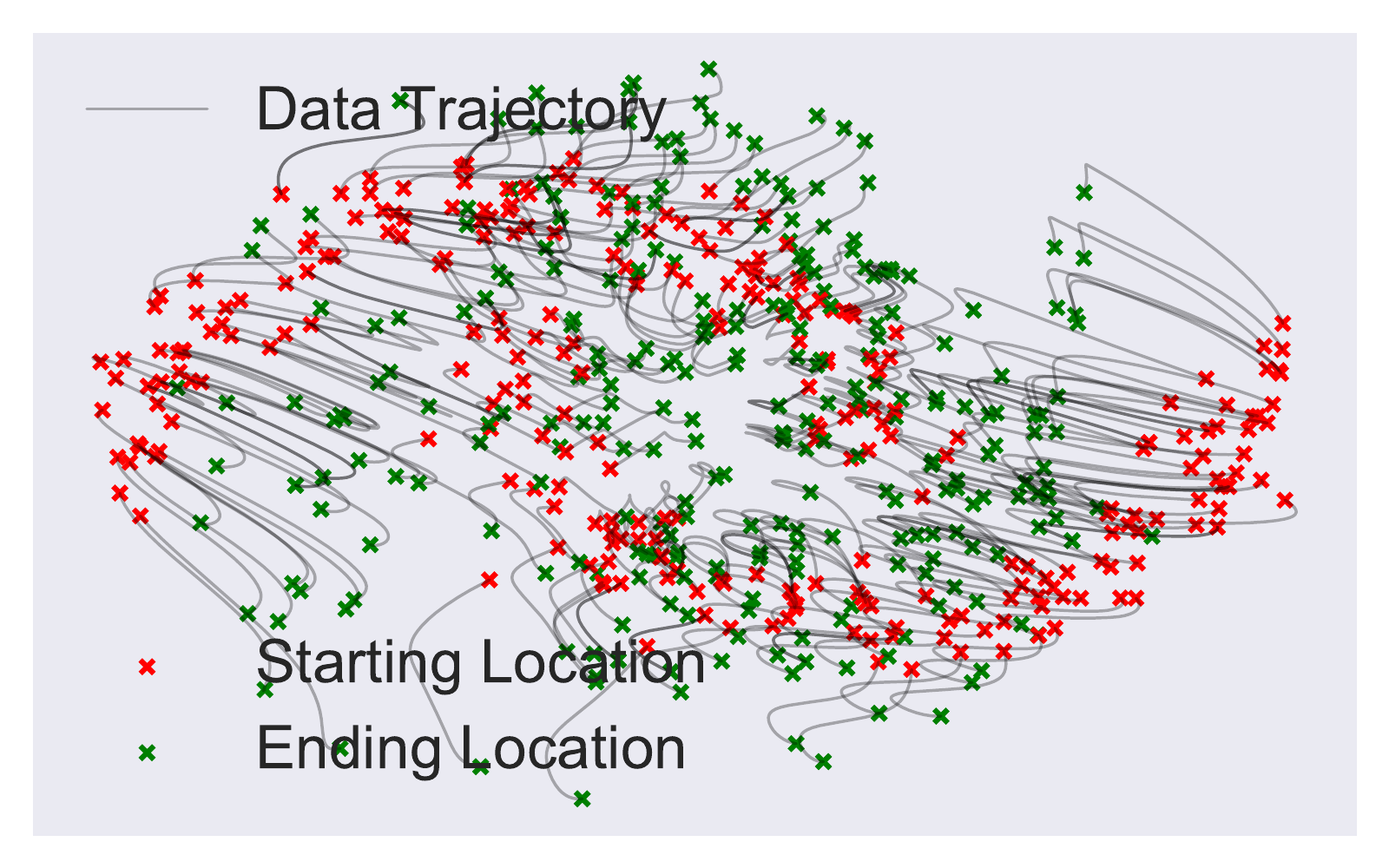}
		\vskip -0.05in
		\caption{Particle trajectories of the probability flow ODE for VPSDE on the synthetic two moons 2d dataset.}
		\label{fig:sample_trajectory_2d}
		\vskip -0.1in
	\end{wrapfigure}
	Figure \ref{fig:sample_trajectory_2d} illustrates the diffusion trajectories of the probability flow ODE of VPSDE. It shows that the trajectories are highly nonlinear, and this section is devoted to analyze why such nonlinear trajectory occurs. Figure \ref{fig:linear_diffusion_by_range} shows two diffusion paths differing only on their scales on (a) the two moons dataset and (b) the ring dataset. The standard Gaussian distribution at $T$ has a larger variance than the initial data at the top row and has a smaller one at the bottom row on each dataset. For the visualization purpose, we zoom in the top row, and we zoom out the bottom row for each dataset, but we fix the xlim and ylim arguments in the matplotlib package \cite{Hunter:2007} row-wisely. With this discrepancy of the initial data scale, the particle trajectory at the bottom row is much more straightforward than in the top row, and it implies that the scale of initial data matters to the straightness of the bridge even if the diffusion SDE is identically linear.
	
	\begin{figure*}[t]
		\vskip -0.1in
		\centering
		\begin{subfigure}{\linewidth}
			\includegraphics[width=\linewidth]{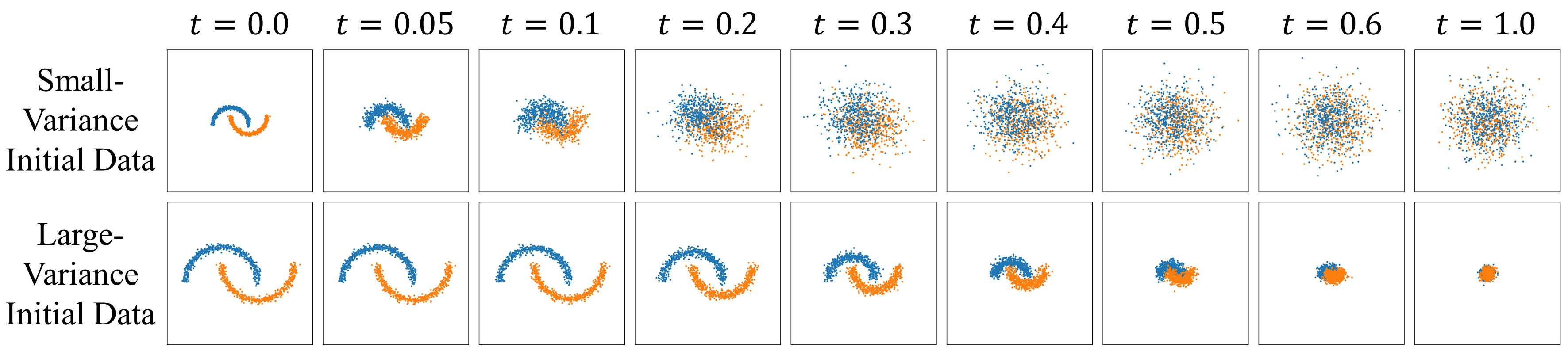}
			\subcaption{Two Moons Dataset}
		\end{subfigure}
		\begin{subfigure}{\linewidth}
			\includegraphics[width=\linewidth]{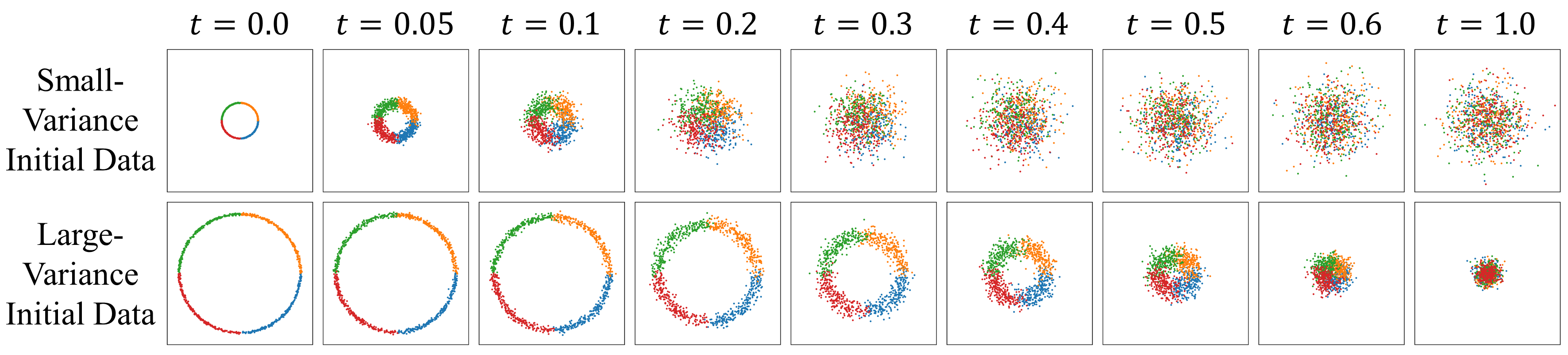}
			\subcaption{Ring Dataset}
		\end{subfigure}
		\vskip -0.05in
		\caption{Comparison of linear diffusion bridges on data and latent spaces in diverse datasets.}
		\label{fig:linear_diffusion_by_range}
		\vskip -0.2in
	\end{figure*}
	
	A behind rationale for this observation comes from the closed-form solution of VPSDE. Suppose the forward diffusion follows VPSDE of $\diff\mathbf{x}_{t}=-\frac{1}{2}\beta(t)\mathbf{x}_{t}\diff t+\sqrt{\beta(t)}\diff\mathbf{w}_{t}$. Then, the solution of this SDE becomes
	\begin{align}\label{eq:sde_solution}
	\mathbf{x}_{t}=\underbrace{e^{-\frac{1}{2}\int_{0}^{t}\beta(s)\diff s}\mathbf{x}_{0}}_{\text{linearly contraction mapping}}+\underbrace{\sqrt{1-e^{-\int_{0}^{t}\beta(s)\diff s}}\bm{\epsilon}}_{\text{random perturbation}},
	\end{align}
	\begin{wrapfigure}{r}{0.6\textwidth}
		\vskip -0.2in
		\centering
		\includegraphics[width=\linewidth]{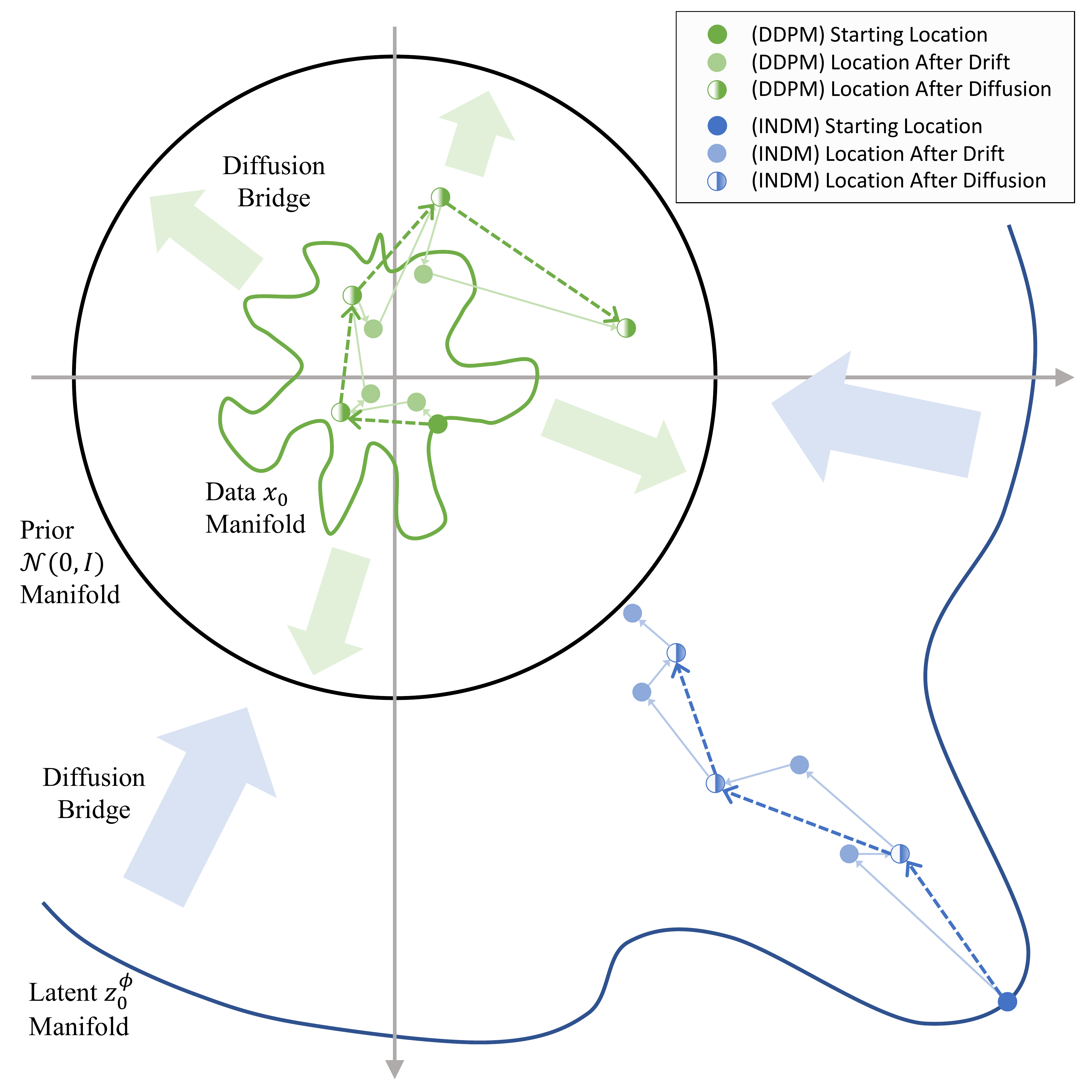}
		\vskip -0.05in
		\caption{Descriptive Illustration On Diffusion Bridge.}
		\label{fig:diffusion_bridge}
		\vskip -0.2in
	\end{wrapfigure}
	where $\bm{\epsilon}\sim\mathcal{N}(0,\mathbf{I})$. As the drift term $-\frac{1}{2}\beta(t)\mathbf{x}_{t}$ ahead towards the origin of $\mathbb{R}^{d}$, the solution in Eq. \eqref{eq:sde_solution} is a summation of the contraction mapping to the origin, $0\in\mathbb{R}^{d}$, with a random noise function, where the magnitude of the random perturbation depends solely on the diffusion coefficient, $g(t)=\sqrt{\beta(t)}$. If $\mathbf{x}_{0}$ is inflated by $c\mathbf{x}_{0}$, then it becomes $\mathbf{x}_{t}=c\times e^{-\frac{1}{2}\int_{0}^{t}\beta(s)\diff s}\mathbf{x}_{0}+\sqrt{1-e^{-\int_{0}^{t}\beta(s)\diff s}}\bm{\epsilon}$ with contraction mapping multiplied by $c$. Therefore, as $c$ increases, the contraction force outweighs the random perturbing effect, and the particle trajectory is becoming straight. 
	
	\begin{table}[t]
		\caption{Statistics of data variable and latent variable on CIFAR-10. All statistics are averaged by dimension.}
		\label{tab:statistics}
		\centering
		\tiny
		\begin{tabular}{ccccc}
			\toprule
			& Mean & Variance & Min & Max \\\midrule
			DDPM++ ($\mathbf{x}_{0}=\mathbf{z}_{0}^{\bm{\phi}_{id}}$) & -0.05 & 0.25 & -1 & 1 \\
			INDM ($\mathbf{z}_{0}^{\bm{\phi}}$) & 0.70 & 9.74 & -8.66 & 12.17 \\
			\bottomrule
		\end{tabular}
	\end{table}

	On a high-dimensional dataset, most of the mass of the standard Gaussian $\pi=\mathcal{N}(0,\mathbf{I})$, which is the prior, is concentrated on a thin spherical shell with squared radius of $d$, according to the Gaussian annulus theorem \cite{blum2020foundations}, as described in the black circle of Figure \ref{fig:diffusion_bridge}. On CIFAR-10, the data distribution has the smaller average square radius of $\mathbb{E}_{p_{r}(\mathbf{x}_{0})}\big[\Vert\mathbf{x}_{0}\Vert_{2}^{2}\big]=776<3072=d$, whereas the latent distribution has a larger average square radius of $\mathbb{E}_{p_{r}(\mathbf{x}_{0})}\big[\Vert\mathbf{h}_{\bm{\phi}}(\mathbf{x}_{0})\Vert_{2}^{2}\big]=\mathbb{E}_{p_{0}^{\bm{\phi}}(\mathbf{z}_{0}^{\bm{\phi}})}\big[\Vert\mathbf{z}_{0}^{\bm{\phi}}\Vert_{2}^{2}\big]>d$ than a standard Gaussian distribution. The latent radius varies from $5,385$ to $31,399$ by experimental settings. Thus, the latent manifold is located outside of the prior on CIFAR-10 as depicted in Figure \ref{fig:diffusion_bridge}. 
	
	When the latent manifold envelops the prior manifold, i.e., $\Vert\mathbf{z}_{0}^{\bm{\phi}}\Vert_{2}>\Vert\mathbf{z}_{T}^{\bm{\phi}}\Vert_{2}$, the drift term, $-\frac{1}{2}\beta(t)\mathbf{z}_{t}^{\bm{\phi}}$, and the vector of $\mathbf{z}_{T}^{\bm{\phi}}-\mathbf{z}_{0}^{\bm{\phi}}$ aligns towards the origin. On the other hand, if the initial manifold is located inside the prior manifold, i.e., $\Vert\mathbf{x}_{0}\Vert_{2}<\Vert\mathbf{x}_{T}\Vert_{2}$, then the drift term points towards the opposite direction of $\mathbf{x}_{T}-\mathbf{x}_{0}$. This leads that the contraction mapping disturbs the particle to move towards $\mathbf{x}_{T}$, and it is the random perturbation that leads the particle to converge to $\mathbf{x}_{T}$. In latent trajectory, the contraction mapping driven by the drift term helps the particle moving towards $\mathbf{z}_{T}^{\bm{\phi}}$. Therefore, the particle trajectory is more straightforward in the latent trajectory, which moves \textit{outside} of the prior manifold, compared to the data trajectory that lives \textit{inside} of the prior manifold. This clarifies why the sampling-friendly bridge is constructed in INDM.
	
	\begin{wrapfigure}{r}{0.6\textwidth}
		\vskip -0.2in	
		\begin{subfigure}{0.48\linewidth}
			\centering
			\includegraphics[width=\linewidth]{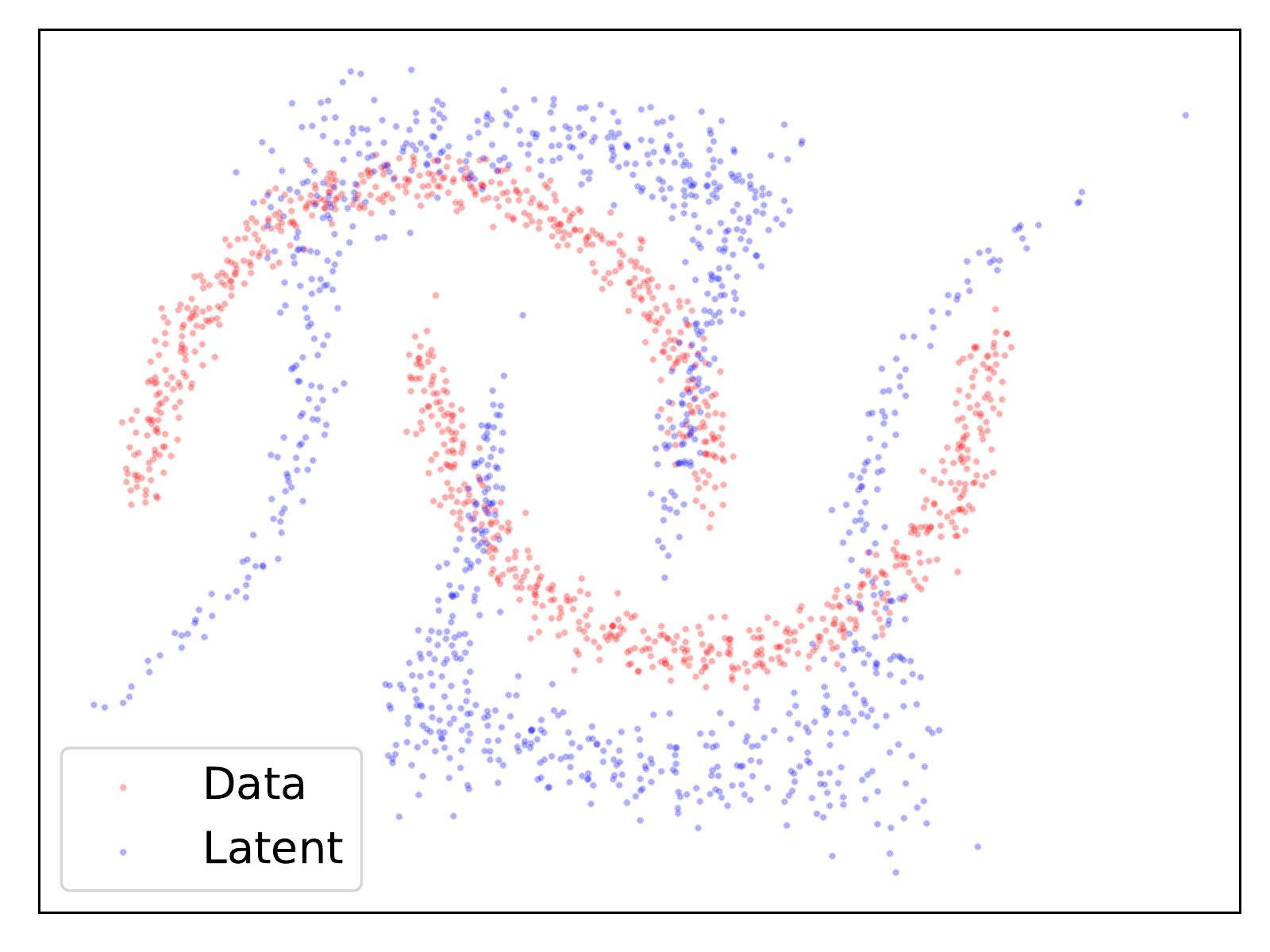}
			\subcaption{Data and Latent Manifolds At Initial Stage of Training}
		\end{subfigure}
		\hfill
		\begin{subfigure}{0.48\linewidth}
			\centering
			\includegraphics[width=\linewidth]{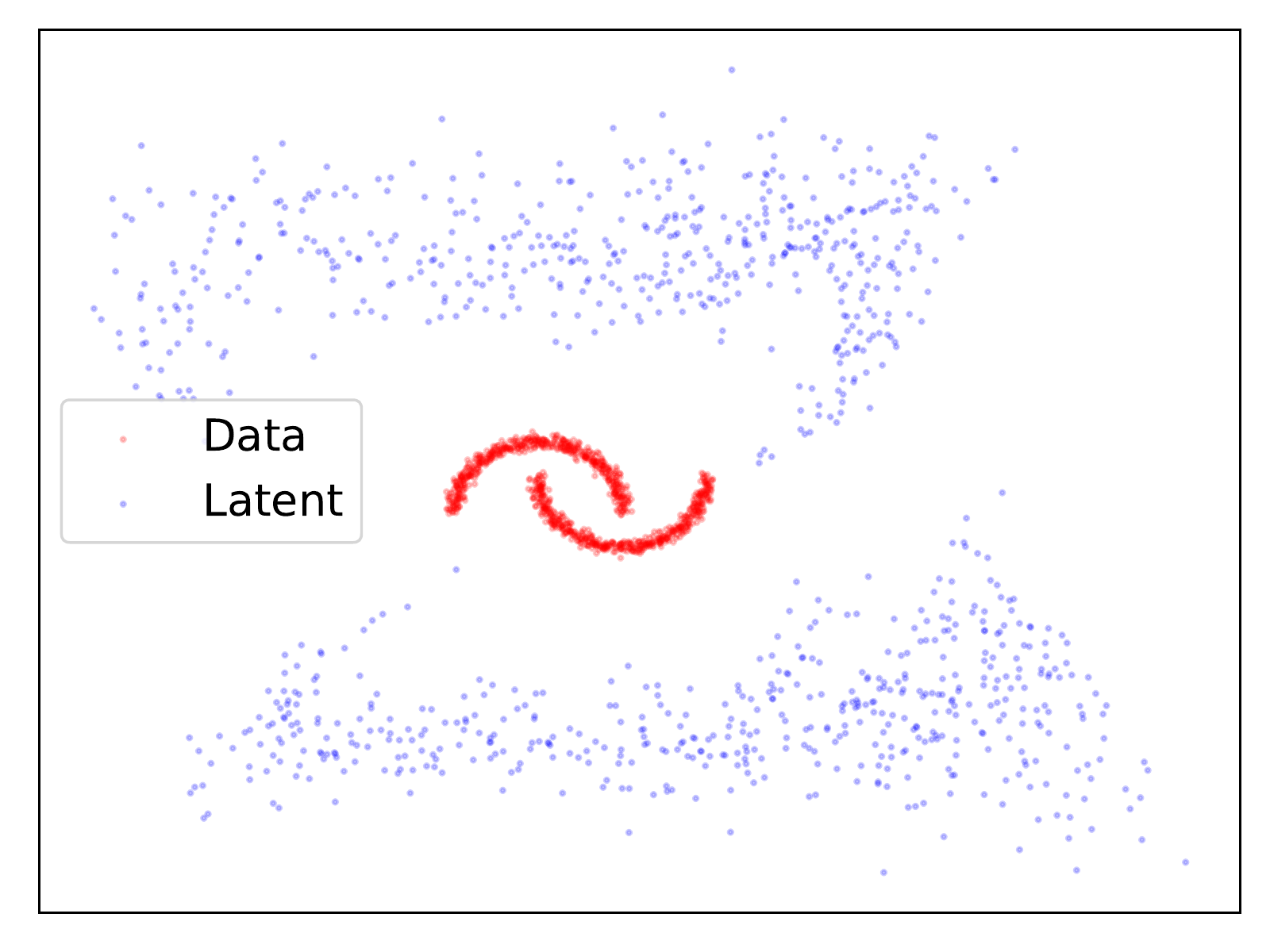}
			\subcaption{Data and Latent Manifolds Afer Training of 10k Steps}
		\end{subfigure}
		\bigskip 
		\begin{subfigure}{0.48\linewidth}
			\centering
			\includegraphics[width=\linewidth]{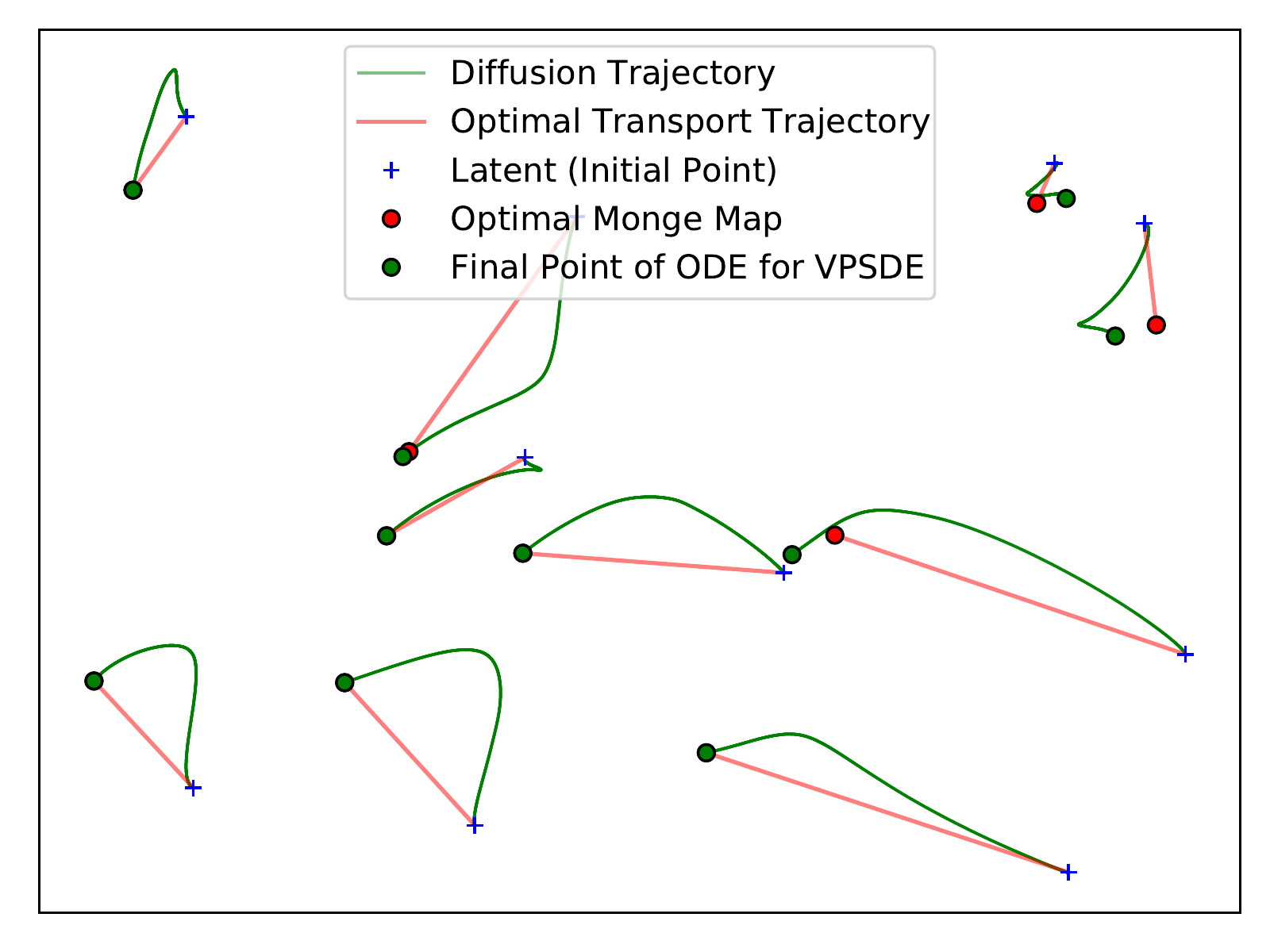}
			\subcaption{Diffusion and (optimal) Monge Trajectories At Initial Stage of Training}
		\end{subfigure}
		\hfill
		\begin{subfigure}{0.48\linewidth}
			\centering
			\includegraphics[width=\linewidth]{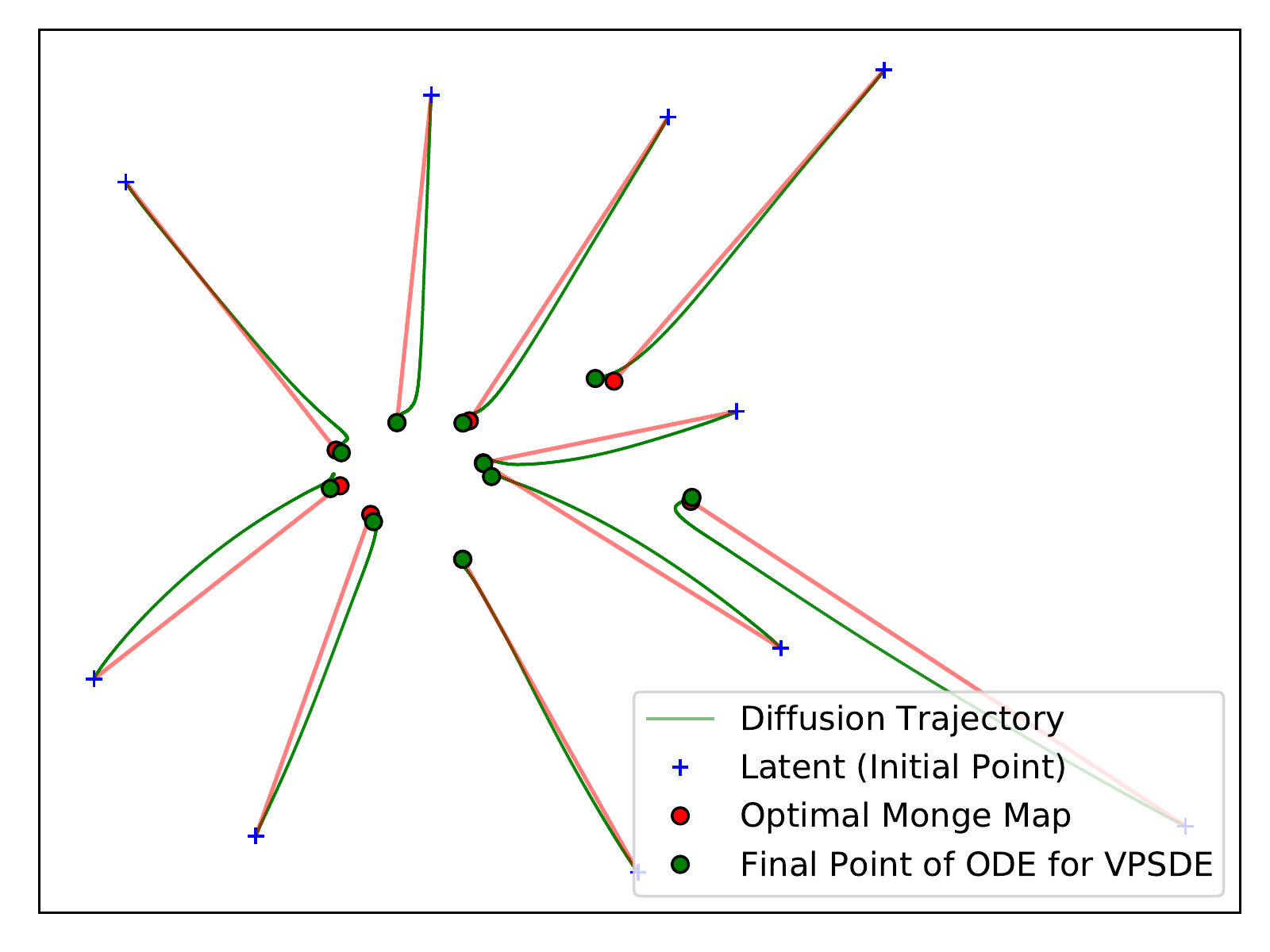}
			\subcaption{Diffusion and (optimal) Monge Trajectories After Training of 10k Steps}
		\end{subfigure}
		\vskip -0.15in
		\caption{(a,b) Latent manifold by training iterations (c,d) Diffusion trajectories by training iterations. We use Python Optimal Transport (POT) library \cite{flamary2021pot} to obtain the optimally transported Monge map between 1,000 samples from the latent starting variable and the latent ending variable. We only visualize 10 samples out of 1,000 transport maps for a clear implication. In (c), we train the score network further until converged (with the fixed flow) to visualize accurate diffusion paths.}
		\label{fig:2d_toy}
		\vskip -0.1in
	\end{wrapfigure}
	Figure \ref{fig:2d_toy} presents the 2d toy case of the two moons dataset. It illustrates a simple visualization of the flow training. Figure \ref{fig:2d_toy} shows that even though the latent manifold is located near the data manifold at the initial phase of training in Figure \ref{fig:2d_toy}-(a), after the training, the latent manifold is inflated to the outside of the real data in Figure \ref{fig:2d_toy}-(b). Therefore, the probability flow ODE (deterministic trajectory), after the training, transports the initial mass to the final mass with a nearly linear line in Figure \ref{fig:2d_toy}-(d), in contrast to the curvy VPSDE trajectory at the initial phase of training in Figure \ref{fig:2d_toy}-(c). In this example, the flow training puts the latent manifold out of the data manifold, and this helps the robust sampling.
	
	In addition, Figure \ref{fig:2d_toy} illustrates the Monge trajectories between the latent initial distribution and the prior distribution. As theoretically demonstrated in Gaussian and empirically shown in general distribution in \citet{khrulkov2022understanding}, the encoder map of VPSDE is nearly optimal transport under the squared Euclidean cost function, where the encoder map is the mapping from the initial point to the final point passed through the probability flow ODE. Figure \ref{fig:2d_toy} supports this, and the diffusion trajectory becomes more straight alike to the optimal Monge map after the training.
	
	\begin{wrapfigure}{r}{0.6\textwidth}
		\vskip -0.25in
		\centering
		\includegraphics[width=\linewidth]{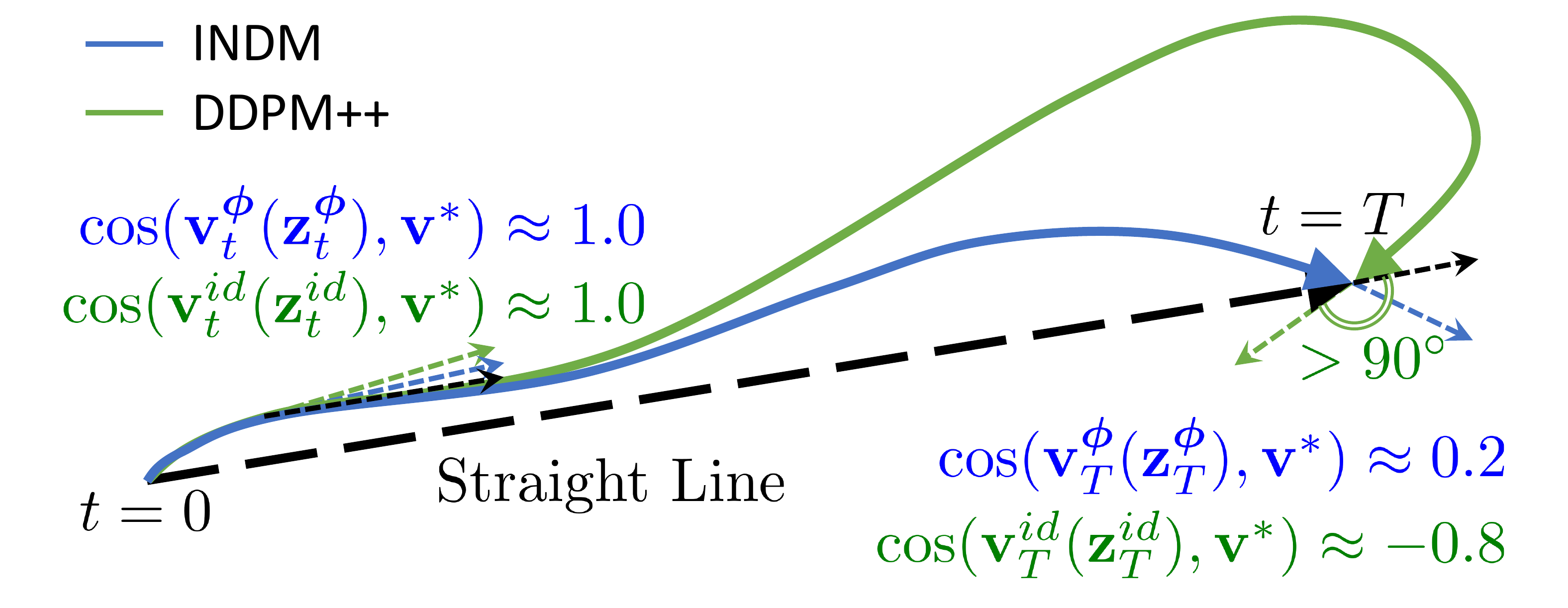}
		\vskip -0.05in
		\caption{Illustrative Particle Trajectory.}
		\label{fig:illustrative_particle_trajectory}
		\vskip -0.35in
	\end{wrapfigure}
	Figure \ref{fig:illustrative_particle_trajectory} illustrates the concept of linearized diffusion path. As the flow inflates the latent manifold, the diffusion trajectory becomes more linear, and Figure \ref{fig:latent_geometry} supports the conceptual illustration of Figure \ref{fig:illustrative_particle_trajectory} on CIFAR-10.
	
	\section{Related Work}\label{appendix:related_work_appendix}

	\subsection{Latent Score-based Generative Model (LSGM)}\label{appendix:LSGM}
	
	The diffusion process on latent space is firstly introduced in LSGM. LSGM transforms the data variable to a latent variable, and estimates the prior distribution with a diffusion model. Suppose $\bm{\theta}$, $\bm{\phi}$, and $\bm{\psi}$ represent for the parameters for the score network, the encoder network, and the decoder network, respectively. Then, LSGM optimizes the loss of
	\begin{eqnarray*}
		\lefteqn{D_{KL}(p_{r}\Vert p_{\bm{\theta},\bm{\psi}})\le D_{KL}\big(p_{r}(\mathbf{x}_{0})q_{\bm{\phi}}(\mathbf{z}_{0}\vert\mathbf{x}_{0})\Vert p_{\bm{\theta}}(\mathbf{z}_{0})p_{\bm{\psi}}(\mathbf{x}_{0}\vert\mathbf{z}_{0})\big)}&\\
		&&= D_{KL}\big(p_{r}(\mathbf{x}_{0})q_{\bm{\phi}}(\mathbf{z}_{0}\vert\mathbf{x}_{0})\Vert q_{\bm{\phi}}(\mathbf{z}_{0})p_{\bm{\psi}}(\mathbf{x}_{0}\vert\mathbf{z}_{0})\big)+D_{KL}\big(q_{\bm{\phi}}(\mathbf{z}_{0})\Vert p_{\bm{\theta}}(\mathbf{z}_{0})\big)\\
		&&\le D_{KL}\big(p_{r}(\mathbf{x}_{0})q_{\bm{\phi}}(\mathbf{z}_{0}\vert\mathbf{x}_{0})\Vert q_{\bm{\phi}}(\mathbf{z}_{0})p_{\bm{\psi}}(\mathbf{x}_{0}\vert\mathbf{z}_{0})\big)+D_{KL}\big(\bm{\mu}_{\bm{\phi}}(\{\mathbf{z}_{t}\}_{t=0}^{T})\Vert \bm{\nu}_{\bm{\theta}}(\{\mathbf{z}_{t}\}_{t=0}^{T})\big)\\
		&&=\mathcal{L}_{LSGM}(\bm{\theta},\bm{\phi},\bm{\psi})
	\end{eqnarray*}
	where $q_{\bm{\phi}}(\mathbf{z}_{0})$ is the marginal distribution of the encoder posterior, $q_{\bm{\phi}}(\mathbf{z}_{0})=\int p_{r}(\mathbf{x}_{0})q_{\bm{\phi}}(\mathbf{z}_{0}\vert\mathbf{x}_{0})\diff\mathbf{x}_{0}$.
	
	As well as INDM, LSGM also optimizes the log-likelihood of the model distribution by using a diffusion model in the latent space. Though both INDM and LSGM losses include a denoising score loss on the latent space (which is the KL divergence between path measures on the latent space), $\mathcal{L}_{LSGM}(\bm{\theta},\bm{\phi},\bm{\psi})$ is not equivalent to the KL divergence between the forward and generative path measures on the data space, in contrast to INDM with $D_{KL}(\bm{\mu}_{\bm{\phi}}(\{\mathbf{x}_{t}\}_{t=0}^{T}\Vert\bm{\nu}_{\bm{\phi},\bm{\theta}}(\{\mathbf{x}_{t}\}_{t=0}^{T})$ as its loss function. In fact, there is no forward SDE (green path in Figure 3) on the data space in LSGM according to Lemma \ref{lemma:3}, which is a direct application of the Borsuk-Ulam theorem \cite{bredon2013topology}. 
	\begin{lemma}[\textbf{$\mathbb{R}^{n}$ is not homeomorphic to $\mathbb{R}^{m}$} \cite{bredon2013topology}]\label{lemma:3}
		If $n\neq m$, there is no continuous map $E:\mathbb{R}^{n}\rightarrow\mathbb{R}^{m}$ that has the continuous inverse map $E^{-1}:\mathbb{R}^{m}\rightarrow\mathbb{R}^{n}$.
	\end{lemma}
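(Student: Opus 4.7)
The statement is the classical invariance of dimension theorem, so I would reduce it to a standard homological obstruction. Without loss of generality, assume $n < m$ and suppose, for contradiction, that $E:\mathbb{R}^{n}\to\mathbb{R}^{m}$ is a homeomorphism with continuous inverse $E^{-1}$. The goal is to extract a topological invariant that distinguishes $\mathbb{R}^{n}$ from $\mathbb{R}^{m}$ and contradicts the existence of $E$.

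The first step is local: fix any point $p\in\mathbb{R}^{n}$ and consider the restriction $E:\mathbb{R}^{n}\setminus\{p\}\to\mathbb{R}^{m}\setminus\{E(p)\}$, which is again a homeomorphism by the bijectivity and continuity of $E$ and $E^{-1}$. The punctured Euclidean spaces deformation retract onto spheres, giving homotopy equivalences $\mathbb{R}^{n}\setminus\{p\}\simeq S^{n-1}$ and $\mathbb{R}^{m}\setminus\{E(p)\}\simeq S^{m-1}$. Since homeomorphism implies homotopy equivalence and homotopy equivalence is transitive, $S^{n-1}\simeq S^{m-1}$.

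The second step is to distinguish these spheres by a homotopy invariant. I would use singular homology: for any $k\ge 1$, $H_{k}(S^{k})\cong\mathbb{Z}$ while $H_{k}(S^{\ell})=0$ whenever $\ell>k$. Applied with $k=n-1$ and $\ell=m-1$ (valid because $m>n\ge 1$; the degenerate cases $n=0$ or $n=1$ can be handled separately by a connectedness argument, since removing a point disconnects $\mathbb{R}$ but not $\mathbb{R}^{m}$ for $m\ge 2$), the homology groups disagree, contradicting the homotopy equivalence $S^{n-1}\simeq S^{m-1}$.

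The only subtlety is ensuring that a continuous bijection with continuous inverse indeed induces a homotopy equivalence on the punctured spaces and that the deformation retractions $\mathbb{R}^{k}\setminus\{0\}\to S^{k-1}$ via $x\mapsto x/\|x\|$ are standard. These are textbook facts; the edge cases $n\in\{0,1\}$ are dispatched by cardinality and connectedness rather than homology. Since this lemma is cited from \cite{bredon2013topology} purely to justify that no continuous inverse of the LSGM encoder exists when the latent and data dimensions differ, a pointer to the standard proof suffices and no new machinery is required.
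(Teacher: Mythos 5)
The paper does not supply its own proof of this lemma: it cites it directly from Bredon's textbook, where the result is established by precisely the homological argument you give (restrict to punctured spaces, deformation retract onto $S^{n-1}$ and $S^{m-1}$, distinguish the spheres via $H_{n-1}$). Your proof is correct and matches that standard derivation, and you correctly handle the low-dimensional cases $n\in\{0,1\}$ by cardinality and connectedness rather than homology.
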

	Lemma \ref{lemma:3} implies that there is no inverse function of the encoder as long as the latent dimension is different from the data dimension (and the activation function is continuous, such as ReLU). From this, LSGM cannot define a random variable on the data space by $\mathbf{x}_{t}^{\bm{\phi}}=E_{\bm{\phi}}^{-1}(\mathbf{z}_{t})$, in contrast to INDM that defines $\mathbf{x}_{t}^{\bm{\phi}}=\mathbf{h}_{\bm{\phi}}^{-1}(\mathbf{z}_{t})$. This non-existence of random variables on the data space implies that \textit{the forward diffusion process does not exists as long as the latent dimension differs to the data dimension}.
	
	With the above theoretic dilemma of LSGM, one could build a generative diffusion process on the data space. If $\mathbf{x}_{t}^{\bm{\psi},\bm{\theta}}:=D_{\bm{\psi}}(\mathbf{z}_{t}^{\bm{\theta}})$, where $\mathbf{z}_{t}^{\bm{\theta}}$ is a generative random variable on the latent space, and $D_{\bm{\psi}}$ is a decoder map, then we could build a generative diffusion process on the data space through the Ito's formula in the same way as we did in INDM. Inspired by this, one could argue that the forward diffusion could be constructed by $\mathbf{x}_{t}^{\bm{\psi}}:=D_{\bm{\psi}}(\mathbf{z}_{t})$, where $\mathbf{z}_{t}$ is a forward random variable on the latent space. This construction enables to construct a forward diffusion process on the latent space, but there are a couple of caveats to this construction.

	\begin{wraptable}{r}{0.4\textwidth}
		\vskip -0.2in
		\caption{LSGM training fails when using the variance weighting function.}
		\label{tab:lsgm_comparison}
		\scriptsize
		\centering
		\begin{tabular}{lccc}
			\toprule
			& NLL & NELBO & FID\\\midrule
			LSGM (VP, FID) & NaN & NaN & NaN \\
			INDM (VP, FID) & 3.23 & 3.17 & 2.90 \\
			\bottomrule
		\end{tabular}
		\vskip -0.1in
	\end{wraptable}
	Theoretically, this forward diffusion process starts from the reconstructed variable, $\mathbf{x}_{0}^{\bm{\psi}}=D_{\bm{\psi}}(\mathbf{z}_{0})=D_{\bm{\psi}}(E_{\bm{\phi}}(\mathbf{x}_{0}))=\mathbf{x}_{rec}$, where $\mathbf{x}_{0}$ and $\mathbf{x}_{rec}$ differs throughout the training procedure. In addition, even if we admit $\{\mathbf{x}_{t}^{\bm{\psi}}\}$ as a forward diffusion, $\mathcal{L}_{LSGM}(\bm{\theta},\bm{\phi},\bm{\psi})$ cannot be derived as the KL divergence of path measures for the forward diffusion (admittably $\{\mathbf{x}_{t}^{\bm{\psi}}\}$, but not true to be precise) and the generative diffusion ($\mathbf{x}_{t}^{\bm{\psi},\bm{\theta}}$) on the data space. Instead, the loss contains the encoder parameters to optimize, and the loss diverges from the KL divergence on the data space. Also, hypothetically, even if the loss is the KL divergence of the forward and generative path measures on the data space, the optimization could be drifted away from the optimal point because the forward diffusion starts from untrained reconstructed variable, $\mathbf{x}_{rec}$, which is not close to the data variable, $\mathbf{x}_{0}$. This analysis provides a clue to explain the training instability of LSGM as reported in \citet{vahdat2021score} and \citet{dockhorn2021score}, in contrast to INDM that is stable to train in any training configuration. Table \ref{tab:lsgm_comparison} shows a fast comparison of LSGM and INDM with variance weighting function, sampled from $t\in\mathcal{U}[0,1]$. \textit{NaN} indicates experiments that fail due to training instability, see Section 5.2 and Table 6 of \citet{vahdat2021score} and Section E.2.7 of \citet{dockhorn2021score}. 
	
	\begin{table*}[t]
		\caption{Comparison of latent dimension of INDM and LSGM.}
		\label{tab:dimension}
		\scriptsize
		\centering
		\begin{tabular}{lccc}
			\toprule
			Datset & Data Dimension & Latent Dimension of INDM & Latent Dimension of INDM \\\midrule
			MNIST & 784 & 784 & 2,560 \\
			CIFAR-10 & 3,072 & 3,072 & 46,080 \\
			CelebA-HQ 256 & 196,608 & 196,608 & 819,200 \\
			\bottomrule
		\end{tabular}
	\end{table*}
	Moreover, Table \ref{tab:dimension} compares INDM with LSGM in terms of the latent dimension. We compute the latent dimension of LSGM, according to their paper and released checkpoint. Contrary to the dimensional reduction property which is the crux of the auto-encoding structure, LSGM maps data into a latent space of a much higher dimension than the data dimension. LSGM is known to perform well, but having observed 15x higher latent dimension than the data dimension on CIFAR-10, the good performance was not gained for free. On the other hand, INDM always retains the same dimension to the data, while keeping the invertibility.
	
	\subsection{Diffusion Normalizing Flow (DiffFlow)}\label{appendix:DiffFlow}
	
	The Girsanov theorem \cite{sarkka2019applied} proves that the variational bound is derived by
	\begin{align}\label{eq:ncsn_loss}
	D_{KL}(p_{r}\Vert p_{\bm{\theta}})\le \frac{1}{2}\int_{0}^{T}g^{2}(t)\mathbb{E}_{p_{r}(\mathbf{x}_{0})}\mathbb{E}_{p_{0t}(\mathbf{x}_{t}\vert\mathbf{x}_{0})}\big[\Vert\mathbf{s}_{\bm{\theta}}(\mathbf{x}_{t},t)-\nabla_{\mathbf{x}_{t}}\log{p_{0t}(\mathbf{x}_{t}\vert\mathbf{x}_{0})}\Vert_{2}^{2}\big]\diff t+D_{KL}(p_{T}\Vert\pi).
	\end{align}
	When the forward diffusion is given as $\diff\mathbf{x}_{t}=\mathbf{f}_{\bm{\phi}}(\mathbf{x}_{t},t)\diff t+g(t)\diff\mathbf{w}_{t}$, where $\mathbf{f}_{\bm{\phi}}$ is an explicit parametrization of the drift term by a normalizing flow with parameters $\bm{\phi}$, then the transition probability, $p_{0t}(\mathbf{x}_{t}\vert\mathbf{x}_{0})$, becomes intractable. Therefore, optimizing the continuous variational bound is not feasible. One might detour this issue by alternatively optimizing the continuous DDPM++ loss of
	\begin{align}\label{eq:ddpm_loss}
	\int_{0}^{T}\tilde{\lambda}(t)\mathbb{E}_{p_{r}(\mathbf{x}_{0})}\mathbb{E}_{\bm{\epsilon}\sim\mathcal{N}(0,\mathbf{I})}\big[\Vert\bm{\epsilon}-\bm{\hat{\epsilon}}_{\bm{\theta}}(\mathbf{x}_{t},t)\Vert_{2}^{2}\big]\diff t,
	\end{align}
	but the denoising score loss of Eq. \eqref{eq:ncsn_loss} is not equivalent to the continuous DDPM++ loss of Eq. \eqref{eq:ddpm_loss} when the transition probability is no longer a Gaussian distribution.
	
	DiffFlow detours the intractability issue of the continuous loss of Eq. \eqref{eq:ncsn_loss} by discretizing the nonlinear SDE in the Euler-Maruyama (EM) fashion \cite{higham2001algorithmic}. We construct the discrete random variables that approximate the nonlinear SDE by the induction. If $\mathbf{x}_{t_{0}}^{\bm{\phi},\text{EM}}:=\mathbf{x}_{0}^{\bm{\phi}}$ and $\Delta t_{i}:=t_{i}-t_{i-1}$, where $\{t_{i}\}_{t=0}^{N}$ are discretization timesteps with $t_{0}=0$ and $t_{N}=T$, then the solution of the nonlinear SDE that starts from $\mathbf{x}_{t_{i-1}}^{\bm{\phi},\text{EM}}$ is
	\begin{align}\label{eq:approximate_diffflow}
	\mathbf{x}_{t_{i}}^{\bm{\phi}}-\mathbf{x}_{t_{i-1}}^{\bm{\phi},\text{EM}}=&\int_{t_{i-1}}^{t_{i}}\mathbf{f}_{\bm{\phi}}(\mathbf{x}_{t}^{\bm{\phi}},t)\diff t+\int_{t_{i-1}}^{t_{i}}g(t)\diff\mathbf{w}_{t}.
	\end{align}
	Here, the integral of the drift term is 
	\begin{align*}
	\int_{t_{i-1}}^{t_{i}}\mathbf{f}_{\bm{\phi}}(\mathbf{x}_{t}^{\bm{\phi}},t)\diff t=&\int_{t_{i-1}}^{t_{i}}\mathbf{f}_{\bm{\phi}}\big(\mathbf{x}_{t_{i-1}}^{\bm{\phi},\text{EM}}+(\mathbf{x}_{t}^{\bm{\phi}}-\mathbf{x}_{t_{i-1}}^{\bm{\phi},\text{EM}}),t_{i-1}+(t-t_{i-1})\big)\diff t\\
	=&\int_{t_{i-1}}^{t_{i}}\mathbf{f}_{\bm{\phi}}(\mathbf{x}_{t_{i-1}}^{\bm{\phi},\text{EM}},t_{i-1})\diff t+O(\Delta t_{i}^{3/2})\\
	=&\mathbf{f}_{\bm{\phi}}(\mathbf{x}_{t_{i-1}}^{\bm{\phi},\text{EM}},t_{i-1})\Delta t_{i}+O(\Delta t_{i}^{3/2}),
	\end{align*}
	and the integral of the volatility term is 
	\begin{align*}
	\int_{t_{i-1}}^{t_{i}}g(t)\diff\mathbf{w}_{t}=g(t_{i-1})(\mathbf{w}_{t_{i}}-\mathbf{w}_{t_{i-1}})+O(\Delta t_{i}^{3/2})=g(t_{i-1})\bm{\epsilon}\sqrt{\Delta t_{i}}+O(\Delta t_{i}^{3/2}),
	\end{align*}
	where $\bm{\epsilon}\sim\mathcal{N}(0,\mathbf{I})$. Therefore, DiffFlow defines the next discretized random variable, $\mathbf{x}_{t_{i}}^{\bm{\phi},\text{EM}}$, to be
	\begin{align*}
	\mathbf{x}_{t_{i}}^{\bm{\phi}}=&\mathbf{x}_{t_{i-1}}^{{\bm{\phi}},\text{EM}}+\mathbf{f}_{\bm{\phi}}(\mathbf{x}_{t_{i-1}}^{{\bm{\phi}},\text{EM}},t_{i-1})\Delta t_{i}+g(t_{i-1})\bm{\epsilon}\sqrt{\Delta t_{i}}+O(\Delta t_{i}^{2/3})\\
	\approx&\mathbf{x}_{t_{i-1}}^{{\bm{\phi}},\text{EM}}+\mathbf{f}_{\bm{\phi}}(\mathbf{x}_{t_{i-1}}^{{\bm{\phi}},\text{EM}},t_{i-1})\Delta t_{i}+g(t_{i-1})\bm{\epsilon}\sqrt{\Delta t_{i}}\\
	=&\mathbf{x}_{t_{i}}^{{\bm{\phi}},\text{EM}},
	\end{align*}
	and this Euler-Maruyama random variable $\mathbf{x}_{t_{i}}^{\bm{\phi},\text{EM}}$ follows a Gaussian distribution of mean $\mathbf{x}_{t_{i-1}}^{\bm{\phi},\text{EM}}+\mathbf{f}_{\bm{\phi}}(\mathbf{x}_{t_{i-1}}^{\bm{\phi},\text{EM}},t_{i-1})\Delta t_{i}$ and variance $g^{2}(t_{i-1})\Delta t_{i}$. Note that this discretization approximates the nonlinear SDE with a finite Markov chain of $\{\mathbf{x}_{t_{i}}^{\text{EM}}\}_{i=0}^{N}$. 
	
	DiffFlow constructs the generative process as
	\begin{align*}
	\mathbf{x}_{t_{i-1}}^{\bm{\theta}}=\mathbf{x}_{t_{i}}^{\bm{\theta}}-\big[\mathbf{f}_{\bm{\phi}}(\mathbf{x}_{t_{i}}^{\bm{\phi}},t_{i})-g^{2}(t_{i})\mathbf{s}_{\bm{\theta}}(\mathbf{x}_{t_{i}}^{\bm{\phi}},t_{i})\big]\Delta t_{i}+g(t_{i})\bm{\epsilon}\sqrt{\Delta t_{i}}.
	\end{align*}
	Then, from the Jensen's inequality, the discrete DDPM loss satisfies
	\begin{align}\label{eq:discrete_ddpm_loss}
	D_{KL}(p_{r}\Vert p_{\bm{\phi},\bm{\theta}})\le\sum_{i=1}^{N-1}\mathbb{E}_{p_{r}(\mathbf{x}_{t_{0}}^{\text{EM}})}\mathbb{E}_{p_{\bm{\phi}}(\mathbf{x}_{t_{i}}^{\text{EM}},\mathbf{x}_{t_{i-1}}^{\text{EM}}\vert\mathbf{x}_{t_{0}}^{\text{EM}})}\big[D_{KL}(p_{\bm{\phi}}(\mathbf{x}_{t_{i-1}}^{\text{EM}}\vert\mathbf{x}_{t_{i}}^{\text{EM}},\mathbf{x}_{t_{0}}^{\text{EM}})\Vert p_{\bm{\theta}}(\mathbf{x}_{t_{i-1}}^{\text{EM}}\vert\mathbf{x}_{t_{i}}^{\text{EM}}))\big].
	\end{align}
	While the true inference distribution on the continuous variables, $p_{\bm{\phi}}(\mathbf{x}_{t_{i-1}}\vert\mathbf{x}_{t_{i}},\mathbf{x}_{t_{0}})$, is not a Gaussian distribution due to terms related to $O(\Delta t_{i}^{3/2})$, the inference distribution on the \textit{discretized} variables, $p_{\bm{\phi}}(\mathbf{x}_{t_{i-1}}^{\text{EM}}\vert\mathbf{x}_{t_{i}}^{\text{EM}},\mathbf{x}_{t_{0}}^{\text{EM}})$, becomes a Gaussian distribution by the Euler-Maruyama-style discretization. Therefore, Eq. \eqref{eq:discrete_ddpm_loss} reduces to a tractable loss that does not need to compute the transition probability:
	\begin{align}\label{eq:ddpm_style_loss}
	\begin{split}
	D_{KL}(p_{r}\Vert p_{\bm{\phi},\bm{\theta}})&\le\sum_{i=1}^{N-1}\mathbb{E}_{p_{r}(\mathbf{x}_{t_{0}}^{\text{EM}})}\mathbb{E}_{p_{\bm{\phi}}(\mathbf{x}_{t_{i}}^{\text{EM}},\mathbf{x}_{t_{i-1}}^{\text{EM}}\vert\mathbf{x}_{t_{0}}^{\text{EM}})}\big[D_{KL}(p_{\bm{\phi}}(\mathbf{x}_{t_{i-1}}^{\text{EM}}\vert\mathbf{x}_{t_{i}}^{\text{EM}},\mathbf{x}_{t_{0}}^{\text{EM}})\Vert p_{\bm{\theta}}(\mathbf{x}_{t_{i-1}}^{\text{EM}}\vert\mathbf{x}_{t_{i}}^{\text{EM}}))\big]\\
	&=\frac{1}{2}\sum_{i=1}^{N-1}\mathbb{E}_{p_{r}(\mathbf{x}_{t_{0}}^{\text{EM}})}\mathbb{E}_{p_{\bm{\phi}}(\mathbf{x}_{t_{i}}^{\text{EM}},\mathbf{x}_{t_{i-1}}^{\text{EM}}\vert\mathbf{x}_{t_{0}}^{\text{EM}})}\bigg[\frac{1}{g^{2}(t_{i})\Delta t_{i}}\Big\Vert\mathbf{x}_{t_{i-1}}^{\text{EM}}-\mathbf{x}_{t_{i}}^{\text{EM}}\\
	&\quad\quad\quad\quad\quad\quad\quad+\big[\mathbf{f}_{\bm{\phi}}(\mathbf{x}_{t_{i}}^{\text{EM}},t_{i})-g^{2}(t_{i})\mathbf{s}_{\bm{\theta}}(\mathbf{x}_{t_{i}}^{\text{EM}},t_{i})\big]\Delta t_{i}\Big\Vert_{2}^{2}\bigg]\\
	&=\mathcal{L}_{\text{DiffFlow}}(\bm{\phi},\bm{\theta})
	\end{split}
	\end{align}
	While Eq. \eqref{eq:ddpm_style_loss} does not need to compute the transition probability, another issue of optimizing the variational bound originates from the expectation of $\mathbb{E}_{p_{\bm{\phi}}(\mathbf{x}_{t_{i}}^{\text{EM}},\mathbf{x}_{t_{i-1}}^{\text{EM}}\vert\mathbf{x}_{t_{0}}^{EM})}$. The empirical Monte-Carlo estimation is too expensive because a realization of $\mathbf{x}_{t_{i}}^{\text{EM}}$ needs $i$ number of flow evaluations. In total, summing $i$ over $i=1$ to $N$ requires $O(N^{2})$ flow evaluations to estimate the discrete variational bound of Eq. \eqref{eq:ddpm_style_loss}. Therefore, DiffFlow exchanges the summation and the expectation to reduce the number of flow evaluations by
	\begin{align}
	\begin{split}\label{eq:exchange_sum_and_expectation}
	&\mathcal{L}_{\text{DiffFlow}}(\bm{\phi},\bm{\theta})=\frac{1}{2}\mathbb{E}_{\{\mathbf{x}_{t_{i}}\}_{i=0}^{N-1}\sim p_{\bm{\phi}}(\mathbf{x}_{t_{0}},...,\mathbf{x}_{t_{N-1}})}\bigg[\sum_{i=1}^{N-1}\frac{1}{g^{2}(t_{i})\Delta t_{i}}\Big\Vert\mathbf{x}_{t_{i-1}}-\mathbf{x}_{t_{i}}\\
	&\quad\quad\quad\quad\quad\quad\quad+\big[\mathbf{f}_{\bm{\phi}}(\mathbf{x}_{t_{i}},t_{i})-g^{2}(t_{i})\mathbf{s}_{\bm{\theta}}(\mathbf{x}_{t_{i}},t_{i})\big]\Delta t_{i}\Big\Vert_{2}^{2}\bigg].
	\end{split}
	\end{align}
	This reformulated Eq. \eqref{eq:exchange_sum_and_expectation} estimates $\mathcal{L}_{\text{DiffFlow}}$ with a single sample path from the Markov chain of $\{\mathbf{x}_{t_{i}}^{\text{EM}}\}_{t=1}^{N}$, so it requires $O(N)$ flow evaluations to estimate $\mathcal{L}_{\text{DiffFlow}}(\bm{\phi},\bm{\theta})$. Therefore, DiffFlow takes $O(N)$ computational complexity in total for every optimization step.
	
	There are five differences between DiffFlow and INDM. Basically, these differences arise from the different usage of the flow transformation between DiffFlow and INDM. First, INDM enables to train the continuous diffusion model without the sacrifice on training time, while DiffFlow is limited on the discrete diffusion model at the expense of slower training time. DiffFlow approximates the forward nonlinear SDE with a finite Markov chain. Suppose $\mathbf{x}_{t}^{\text{EM}}$ to be the continuous-time random variable defined by $\mathbf{x}_{t}^{\text{EM}}=\mathbf{x}_{t_{i-1}}^{\text{EM}}+\mathbf{f}_{\bm{\phi}}(\mathbf{x}_{t_{i-1}}^{\text{EM}},t_{i-1})(t-t_{i-1})+g(t_{i-1})\bm{\epsilon}\sqrt{t-t_{i-1}}$ on time range of $t\in [t_{i-1},t_{i})$, then we have
	\begin{align}\label{eq:error}
	\mathbb{E}\big[\Vert\mathbf{x}_{t}-\mathbf{x}_{t}^{\text{EM}}\Vert_{2}\big]\le C\sqrt{\Delta t_{i}},
	\end{align}
	where $C=C(T, K, \mathbb{E}[\Vert\mathbf{x}_{0}\Vert_{2}^{2}])\ge O(K^{2})$ is a constant with $K$ being a Lipschits constant of
	\begin{align*}
	\Vert\mathbf{f}_{\bm{\phi}}(\mathbf{x},t)-\mathbf{f}_{\bm{\phi}}(\mathbf{y},t)\Vert_{2}\le K\Vert\mathbf{x}-\mathbf{y}\Vert_{2}
	\end{align*}
	and
	\begin{align*}
	\Vert\mathbf{f}_{\bm{\phi}}(\mathbf{x},t)\Vert_{2}+\vert g(t)\vert\le K(1+\Vert\mathbf{x}\Vert_{2})
	\end{align*}
	for all $\mathbf{x},\mathbf{y}\in\mathbb{R}^{d}$ and $t\in[t_{i-1},t_{i})$. Having that $\Delta t_{i}$ is fixed a-priori, the upper bound in Inequality \eqref{eq:error} could be arbitrarily large becuase it depends on $K$ that represents the magnitude of nonlinearity of $\mathbf{f}_{\bm{\phi}}$. For instance, if $\mathbf{f}_{\bm{\phi}}(\mathbf{x}_{t},t)=\mathbf{x}_{t}^{2}$, then there does not exist any $K>0$ that satisfies above Lipschitz bounds. In such case, it is unable to guarantee the tightness of the discretized Markov chain to the continuous nonlinear SDE in the classical sense. Therefore, the Euler-Maruyama approximation of the nonlinear SDE should take $N$ as many as possible if we want to regard the finite Markov chain as a discretized nonlinear SDE, which would eventually increase the training, evaluation, and sampling time.
	
	Second, the computational complexity of INDM is $O(1)$ because the flow is evaluated only once at every optimization step. This is because the INDM loss is simply an addition of the flow loss and the linear diffusion loss. The training time of DiffFlow will be prohibitive as $N$ increases.
	
	Third, our INDM jointly models both drift and volatility terms nonlinearly, whereas DiffFlow nonlinearly models only the drift term. As illustrated in Figure 1 and 2-(c) in the main paper, nonlinearizing the volatility term brings a different diffusion to the overall process, compared to a diffusion that arises from a nonlinear drift. In particular, Figure 2-(c) depicts that the data-dependent volatility term yields an ellipsoidal covariance in the noise distribution, which was assumed to have a fixed diagonal covariance in previous research, as illustrated in Figure 6. In INDM, this covariance becomes the subject of matter to optimize.
	
	DiffFlow, as its current form, cannot impose nonlinearity to the volatility term because the discretized Markov chain is not a Gaussian distribution, anymore. To clarify, suppose a SDE of $\diff\mathbf{x}_{t}=\mathbf{f}_{\bm{\phi}}(\mathbf{x}_{t},t)\diff t+\mathbf{G}_{\bm{\phi}}(\mathbf{x}_{t},t)\diff\mathbf{w}_{t}$ (think of the green path of Figure 3 in the main paper) starts from a random variable $\mathbf{x}_{t_{i-1}}^{\text{EM}}$. The next discrete random variable of the Euler-Maruyama discretization is the approximate solution of this SDE at $t=t_{i}$, so let us approximate the right-hand-side of Eq. \eqref{eq:approximate_}:
	\begin{align}\label{eq:approximate_}
	\mathbf{x}_{t_{i}}-\mathbf{x}_{t_{i-1}}^{\text{EM}}=&\int_{t_{i-1}}^{t_{i}}\mathbf{f}_{\bm{\phi}}(\mathbf{x}_{t},t)\diff t+\int_{t_{i-1}}^{t_{i}}\mathbf{G}_{\bm{\phi}}(\mathbf{x}_{t},t)\diff\mathbf{w}_{t}.
	\end{align}
	The integral of the volatility term is
	\begin{align*}
	\int_{t_{i-1}}^{t_{i}}\mathbf{G}_{\bm{\phi}}(\mathbf{x}_{t},t)\diff\mathbf{w}_{t}=\int_{t_{i-1}}^{t_{i}}\mathbf{G}_{\bm{\phi}}\big(\mathbf{x}_{t_{i-1}}^{\text{EM}}+(\mathbf{x}_{t}-\mathbf{x}_{t_{i-1}}^{\text{EM}}),t_{i-1}+(t-t_{i-1})\big)\diff\mathbf{w}_{t}
	\end{align*}
	and since $\mathbf{x}_{t}-\mathbf{x}_{t_{i-1}}^{\text{EM}}=\mathbf{G}_{\bm{\phi}}(\mathbf{x}_{t_{i-1}}^{\text{EM}},t_{i-1})(\mathbf{w}_{t}-\mathbf{w}_{t_{i-1}})+O(\Delta t_{i})$, we get
	\begin{eqnarray*}
		\lefteqn{\int_{t_{i-1}}^{t_{i}}\mathbf{G}_{\bm{\phi}}(\mathbf{x}_{t},t)\diff\mathbf{w}_{t}}&\\
		&&=\mathbf{G}_{\bm{\phi}}(\mathbf{x}_{t_{i-1}}^{\text{EM}},t_{i-1})(\mathbf{w}_{t_{i}}-\mathbf{w}_{t_{i-1}})\\
		&&\quad+\mathbf{G}_{\bm{\phi}}(\mathbf{x}_{t_{i-1}}^{\text{EM}},t_{i-1})\frac{\partial\mathbf{G}_{\bm{\phi}}(\mathbf{x}_{t},t)}{\partial\mathbf{x}_{t}}\vert_{\mathbf{x}_{t_{i-1}}^{\text{EM}}}\int_{t_{i-1}}^{t_{i}}\mathbf{w}_{t}-\mathbf{w}_{t_{i-1}}\diff\mathbf{w}_{t}+O(\Delta t_{i}^{2})\\
		&&=\mathbf{G}_{\bm{\phi}}(\mathbf{x}_{t_{i-1}}^{\text{EM}},t_{i-1})(\mathbf{w}_{t_{i}}-\mathbf{w}_{t_{i-1}})\\
		&&\quad+\mathbf{G}_{\bm{\phi}}(\mathbf{x}_{t_{i-1}}^{\text{EM}},t_{i-1})\nabla_{\mathbf{x}_{t_{i-1}}^{\text{EM}}}\mathbf{G}_{\bm{\phi}}(\mathbf{x}_{t_{i-1}}^{\text{EM}},t_{i-1})\frac{1}{2}\big((\mathbf{w}_{t_{i}}-\mathbf{w}_{t_{i-1}})^{2}-\Delta t_{i}\big)+O(\Delta t_{i}^{2})\\
		&&=\mathbf{G}_{\bm{\phi}}(\mathbf{x}_{t_{i-1}}^{\text{EM}},t_{i-1})\bm{\epsilon}\sqrt{\Delta t_{i}}\\
		&&\quad+\frac{1}{2}\mathbf{G}_{\bm{\phi}}(\mathbf{x}_{t_{i-1}}^{\text{EM}},t_{i-1})\nabla_{\mathbf{x}_{t_{i-1}}^{\text{EM}}}\mathbf{G}_{\bm{\phi}}(\mathbf{x}_{t_{i-1}}^{\text{EM}},t_{i-1})\big(\bm{\epsilon}^{2}-1\big)\Delta t_{i}+O(\Delta t_{i}^{2}),
	\end{eqnarray*}
	where $\bm{\epsilon}\sim\mathcal{N}(0,\mathbf{I})$ and $\int_{t_{i-1}}^{t_{i}}\mathbf{w}_{t}-\mathbf{w}_{t_{i-1}}\diff\mathbf{w}_{t}=\int_{t_{i-1}}^{t_{i}}\mathbf{w}_{t}\diff\mathbf{w}_{t}-\mathbf{w}_{t_{i-1}}(\mathbf{w}_{t_{i}}-\mathbf{w}_{t_{i-1}})=\int_{t_{i-1}}^{t_{i}}\frac{1}{2}\diff(\mathbf{w}_{t}^{2})-\int_{t_{i-1}}^{t_{i}}\frac{1}{2}\diff t-\mathbf{w}_{t_{i-1}}(\mathbf{w}_{t_{i}}-\mathbf{w}_{t_{i-1}})=\frac{1}{2}(\mathbf{w}_{t_{i}}^{2}-\mathbf{w}_{t_{i-1}}^{2}-\Delta t_{i})-\mathbf{w}_{t_{i-1}}(\mathbf{w}_{t_{i}}-\mathbf{w}_{t_{i-1}})=\frac{1}{2}\big((\mathbf{w}_{t_{i}}-\mathbf{w}_{t_{i-1}})^{2}-\Delta t_{i}\big)$ is according to the Ito's formula \cite{oksendal2013stochastic}. As $\mathbf{G}_{\bm{\phi}}(\mathbf{x}_{t},t)$ now depends on $\mathbf{x}_{t}$, the term including $(\bm{\epsilon}^{2}-1)$ does not vanish. Therefore, $\mathbf{x}_{t_{i}}^{\text{EM}}$ is approximated by
	\begin{align}\label{eq:exact_discretization}
	\begin{split}
	\mathbf{x}_{t_{i}}=&\mathbf{x}_{t_{i-1}}^{\text{EM}}+\mathbf{f}_{\bm{\phi}}(\mathbf{x}_{t_{i-1}}^{\text{EM}},t_{i-1})\Delta t_{i}+\mathbf{G}_{\bm{\phi}}(\mathbf{x}_{t_{i-1}}^{\text{EM}},t_{i-1})\bm{\epsilon}\sqrt{\Delta t_{i}}\\
	&+\frac{1}{2}\mathbf{G}_{\bm{\phi}}(\mathbf{x}_{t_{i-1}}^{\text{EM}},t_{i-1})\nabla_{\mathbf{x}_{t_{i-1}}^{\text{EM}}}\mathbf{G}_{\bm{\phi}}(\mathbf{x}_{t_{i-1}}^{\text{EM}},t_{i-1})\big(\bm{\epsilon}^{2}-1\big)\Delta t_{i}+O(\Delta t_{i}^{3/2})\\
	\approx&\mathbf{x}_{t_{i-1}}^{\text{EM}}+\mathbf{f}_{\bm{\phi}}(\mathbf{x}_{t_{i-1}}^{\text{EM}},t_{i-1})\Delta t_{i}+\mathbf{G}(\mathbf{x}_{t_{i-1}}^{\text{EM}},t_{i-1})\bm{\epsilon}\sqrt{\Delta t_{i}}\\
	&+\frac{1}{2}\mathbf{G}_{\bm{\phi}}(\mathbf{x}_{t_{i-1}}^{\text{EM}},t_{i-1})\nabla_{\mathbf{x}_{t_{i-1}}^{\text{EM}}}\mathbf{G}_{\bm{\phi}}(\mathbf{x}_{t_{i-1}}^{\text{EM}},t_{i-1})\big(\bm{\epsilon}^{2}-1\big)\Delta t_{i}\\
	:=&\mathbf{x}_{t_{i}}^{\text{EM}}.
	\end{split}
	\end{align}
	The order of the term $\frac{1}{2}\mathbf{G}_{\bm{\phi}}(\mathbf{x}_{t_{i-1}}^{\text{EM}},t_{i-1})\nabla_{\mathbf{x}_{t_{i-1}}^{\text{EM}}}\mathbf{G}_{\bm{\phi}}(\mathbf{x}_{t_{i-1}}^{\text{EM}},t_{i-1})\big(\bm{\epsilon}^{2}-1\big)\Delta t_{i}$ is $O(\Delta t_{i})$, which is the same order of the term $\mathbf{f}_{\bm{\phi}}(\mathbf{x}_{t_{i-1}}^{\text{EM}},t_{i-1})\Delta t_{i}$. Thus, this last term including $\bm{\epsilon}^{2}$ cannot be ignored in the approximation. 
	
	With this approximation, the discretized random variable, $\mathbf{x}_{t_{i}}^{\text{EM}}$, includes a term of $\bm{\epsilon}^{2}$, which is the square of the Brownian motion that does not follow a Gaussian distribution. Therefore, the variational bound of Eq. \eqref{eq:discrete_ddpm_loss} is no longer reduced to a tractable loss, such as Eq. \eqref{eq:ddpm_style_loss}, and as a consequence, Eq. \eqref{eq:discrete_ddpm_loss} is not optimizable even though the nonlinear SDE is discretized. Therefore, we have to ignore the last term, $\frac{1}{2}\mathbf{G}_{\bm{\phi}}\nabla\mathbf{G}_{\bm{\phi}}(\bm{\epsilon}^{2}-1)\Delta t_{i}$, to tractably optimize the variational bound, but such ingorance equals to the approximation of DiffFlow, which would incur a large approximation error if $\mathbf{G}_{\bm{\phi}}$ nonlinearly depends on $\mathbf{x}_{t}$. This leads DiffFlow limited on $\mathbf{G}_{\bm{\phi}}(\mathbf{x}_{t},t)=g_{\bm{\phi}}(t)$, at its maximal capacity. This is contrastive to the result of INDM illustrated in Figure 6.
	
	Fourth, as the generative process of DiffFlow starts from an easy-to-sample prior distribution, the flexibility of $\mathbf{f}_{\bm{\phi}}$ is severely restricted to constrain $p_{T}^{\bm{\phi}}(\mathbf{x}_{T}^{\bm{\phi}})\approx \pi(\mathbf{x}_{T}^{\bm{\phi}})$. The feasible space of nonlinear $\mathbf{f}_{\bm{\phi}}$ that satisfies this constraint does not seem to be derived explicitly. Contrastive to DiffFlow, the data diffusion does not have to end at $\pi$ in INDM. Instead, INDM assumes the linear diffusion on the latent variable, so the ending variable on the latent space, $\mathbf{z}_{T}^{\bm{\phi}}$, is already close to the prior distribution. Therefore, the space of admissible nonlinear drift in INDM, which is \textit{explicitly} desribed in Eq. \eqref{eq:drift_ap}, should be larger than the space of DiffFlow. A lesson from this is that the explicit parametrization seems to be intuitive, but underneath the surface, not many properties could be uncovered explicitly, whereas the implicit parametrization using the invertible transformation enjoys its explicit derivations that enable to analyze concrete properties.
	
	Fifth, DiffFlow estimates its loss of Eq. \eqref{eq:exchange_sum_and_expectation} using a single (or multiple) path to update the parameters with the reparametrization trick \cite{kingma2013auto}. On the other hand, the discretized diffusion model estimates its loss with Eq. \eqref{eq:ddpm_style_loss}, where the sampling from $p_{0t}(\mathbf{x}_{t}\vert\mathbf{x}_{0})$ is inexpensive because the transition probability is a Gaussian distribution. Therefore, the losses of Eqs. \eqref{eq:exchange_sum_and_expectation} and \eqref{eq:ddpm_style_loss} coincide in the expectation sense, but they are estimated differently between DiffFlow and diffusion models with analytic transition probabilities. Taking $\frac{1}{g^{2}(t_{i})\Delta t_{i}}\big\Vert\mathbf{x}_{t_{i-1}}^{\text{EM}}-\mathbf{x}_{t_{i}}^{\text{EM}}+\big[\mathbf{f}_{\bm{\phi}}(\mathbf{x}_{t_{i}}^{\text{EM}},t_{i})-g^{2}(t_{i})\mathbf{s}_{\bm{\theta}}(\mathbf{x}_{t_{i}}^{\text{EM}},t_{i})\big]\Delta t_{i}\big\Vert_{2}^{2}$ as a random variable $X_{i}$, Eq. \eqref{eq:ddpm_style_loss} is reduced to $\frac{1}{2}\sum\mathbb{E}[X_{i}]$, and Eq. \eqref{eq:exchange_sum_and_expectation} is reduced to $\frac{1}{2}\mathbb{E}_{\text{sample-path}}[\sum X_{i}]$. Therefore, the variance of the Monte-Carlo estimation of Eq. \eqref{eq:ddpm_style_loss} becomes $\frac{1}{2}\sum \text{Var}(X_{i})$, whereas the variance of the Monte-Carlo estimation of Eq. \eqref{eq:exchange_sum_and_expectation} becomes 
	\begin{align*}
	\frac{1}{2}\text{Var}\Big(\sum X_{i}\Big)=\frac{1}{2}\Big[\sum\text{Var}(X_{i})+2\sum\text{Cov}(X_{i},X_{j})\Big],
	\end{align*}
	where $\text{Cov}(X_{i},X_{j})$ represents the covariance of two random variables $X_{i}$ and $X_{j}$. Table \ref{tab:variance} represents the ratio of these two variances,
	\begin{align*}
	\text{Ratio}:=\frac{\text{Var}(\sum X_{i})}{\sum\text{Var}(X_{i})}=\frac{\sum\text{Var}(X_{i})+2\sum\text{Cov}(X_{i},X_{j})}{\sum\text{Var}(X_{i})}=1+2\frac{\sum\text{Cov}(X_{i},X_{j})}{\sum\text{Var}(X_{i})},
	\end{align*}
	and it shows that the DiffFlow loss has prohibitively large variance as $N$ increases, compared to the INDM loss, which computes its Monte-Carlo estimation in spirit of Eq. \eqref{eq:ddpm_style_loss} with $N=\infty$.
	
	Note that throughout our argument, we have omitted the prior and reconstruction terms on the variational bounds in this section.
	
	\begin{table}[t]
		\caption{The variance ratio between the variances of the analytic transition probability-based estimation of Eq. \eqref{eq:ddpm_style_loss} and the sample-based estimation of Eq. \eqref{eq:exchange_sum_and_expectation}.}
		\label{tab:variance}
		\scriptsize
		\centering
		\begin{tabular}{lccccc}
			\toprule
			& \multicolumn{5}{c}{Number of Random Variables ($N$)}\\\cmidrule(lr){2-6}
			& 1 & 10 & 100 & 1000 & 10000\\\midrule
			\multirow{2}{*}{\shortstack{Estimation\\Variance Ratio}} & \multirow{2}{*}{1.00} & \multirow{2}{*}{1.02} & \multirow{2}{*}{2.08} & \multirow{2}{*}{16.68} & \multirow{2}{*}{76.08}\\
			&&&&&\\
			\bottomrule
		\end{tabular}
		\vskip -0.1in
	\end{table}
	
	\subsection{Schr\"odinger Bridge Problem (SBP)}\label{appendix:SBP}
	
	Schr\"odinger Bridge Problem (SBP) \cite{vargas2021solving, de2021diffusion, chen2021likelihood} has recently been highlighted in machine learning for its connection to the score-based diffusion model. Schr\"odinger Bridge Problem is a bi-constrained problem of
	\begin{align*}
	\min_{\bm{\rho}\in\mathcal{P}(p_{r},\pi)}D_{KL}(\bm{\rho}\Vert\bm{\mu}),
	\end{align*}
	where $\mathcal{P}(p_{r},\pi)$ is a family of path measure with bi-constraints of $p_{r}$ and $\pi$ as its marginal distributions at $t=0$ and $t=T$, respectively, and $\bm{\mu}$ is a reference path measure that is governed by
	\begin{align}\label{eq:reference_SDE}
	\diff\mathbf{x}_{t}=\mathbf{f}(\mathbf{x}_{t},t)\diff t+g(t)\diff\mathbf{w}_{t}, \quad\mathbf{x}_{0}\sim p_{r}.
	\end{align}
	As the KL divergence becomes infinite if the diffusion coefficient of $\bm{\rho}$ is not equal to $g(t)$ (because quadratic variations of $\bm{\mu}$ and $\bm{\rho}$ becomes different), SBP is equivalently formulated as
	\begin{align*}
	\min_{\bm{\rho}\in\mathcal{P}(p_{r},\pi)}D_{KL}(\bm{\rho}\Vert\bm{\mu})=\min_{\bm{\rho}_{\mathbf{v}}\in\mathcal{P}(p_{r},\pi)}D_{KL}(\bm{\rho}_{\mathbf{v}}\Vert\bm{\mu}),
	\end{align*}
	where the path measure $\bm{\rho}_{\mathbf{v}}\in\mathcal{P}(p_{r},\pi)$ follows the SDE of
	\begin{align}\label{eq:forward_SDE_SBP}
	\diff\mathbf{x}_{t}=\big[\mathbf{f}(\mathbf{x}_{t},t)+g^{2}(t)\mathbf{v}(\mathbf{x}_{t},t)\big]\diff t+g(t)\mathbf{w}_{t}.
	\end{align}
	
	From the Girsanov theorem and the Martingale property \cite{chen2016relation}, we have
	\begin{align*}
	D_{KL}(\bm{\rho}_{\mathbf{v}}\Vert\bm{\mu})=\frac{1}{2}\int_{0}^{T}g^{2}(t)\mathbb{E}_{\bm{\rho}_{\mathbf{v}}}[\Vert\mathbf{v}(\mathbf{x}_{t},t)\Vert_{2}^{2}]\diff t+D_{KL}(\pi\Vert p_{T}),
	\end{align*}
	where $p_{T}$ is the marginal distribution of $\bm{\mu}$ at $t=T$. If $\mathcal{V}(p_{r},\pi)$ is the space of all vector fields $\mathbf{v}$ of which forward SDE with Eq. \eqref{eq:forward_SDE_SBP} satisfies the boundary conditions, then SBP is equivalent to
	\begin{align}\label{eq:SBP_equivalent}
	\min_{\bm{\nu}\in\mathcal{P}(p_{r},\pi)}D_{KL}(\bm{\nu}\Vert\bm{\mu})=\min_{\mathbf{v}\in\mathcal{V}(p_{r},\pi)}\frac{1}{2}\int_{0}^{T}g^{2}(t)\mathbb{E}_{\bm{\rho}_{\mathbf{v}}}[\Vert\mathbf{v}(\mathbf{x}_{t},t)\Vert_{2}^{2}]\diff t,
	\end{align}
	where $\bm{\rho}_{\mathbf{v}}$ is the associated path measure of Eq. \eqref{eq:forward_SDE_SBP}. Eq. \eqref{eq:SBP_equivalent} interprets the solution of SBP as the least energy (weighted by $g^{2}$) of the auxiliary vector field ($\mathbf{v}$) among admissible space of vector fields ($\mathcal{V}(p_{r},\pi)$). Hence, if $\bm{\mu}\in\mathcal{P}(p_{r},\pi)$, then the trivial vector field, $\mathbf{v}\equiv 0$, is the solution of SBP. When the reference SDE of Eq. \eqref{eq:reference_SDE} is one of the family of linear SDEs, such as VESDE or VPSDE, then $\bm{\mu}\notin\mathcal{P}(p_{r},\pi)$, so the trivial vector field is not the solution of SBP, anymore. Instead, $\bm{\mu}$'s ending variable is close enough to $\pi$ (e.g., $D_{KL}(p_{T}\Vert\pi)\approx 10^{-5}$ in bpd scale \cite{ho2020denoising}), so the closest path measure in $\mathcal{V}(p_{r},\pi)$ to $\bm{\mu}$ is nearly identical to a trivial vector field, $\mathbf{v}^{*}\approx 0$, and the nonlinearity of SBP is limited.

	\citet{chen2021likelihood} connects the optimal solution of SBP with PDEs. At the optimal point, if we denote by $\bm{\rho}^{*}=\argmin_{\bm{\rho}\in\mathcal{P}(p_{r},\pi)}D_{KL}(\bm{\rho}\Vert\bm{\mu})$, then this \textit{optimal} diffusion process follows a forward diffusion SDE \cite{chen2021likelihood} of
	\begin{align*}
	\diff\mathbf{x}_{t}=\big[\mathbf{f}(\mathbf{x}_{t},t)+g^{2}(t)\nabla_{\mathbf{x}_{t}}\log{\Psi(\mathbf{x}_{t},t)}\big]\diff t+g(t)\diff\mathbf{w}_{t}, \quad\mathbf{x}_{0}\sim p_{r},
	\end{align*}
	with the corresponding reverse diffusion as
	\begin{align*}
	\diff\mathbf{x}_{t}=\big[\mathbf{f}(\mathbf{x}_{t},t)-g^{2}(t)\nabla_{\mathbf{x}_{t}}\log{\hat{\Psi}(\mathbf{x}_{t},t)}\big]\diff \bar{t}+g(t)\diff\mathbf{\bar{w}}_{t}, \quad\mathbf{x}_{T}\sim \pi,
	\end{align*}
	where $\Psi(\mathbf{x}_{t},t)$ and $\hat{\Psi}(\mathbf{x}_{t},t)$ are the solutions of a system of PDEs \cite{leger2021hopf}:
	\begin{align}\label{eq:hopf_cole_transform}
	\begin{split}
	\frac{\partial\Psi}{\partial t}=&-\nabla_{\mathbf{x}_{t}}\Psi^{T}\mathbf{f}-\frac{1}{2}\text{tr}(g^{2}\nabla_{\mathbf{x}_{t}}^{2}\Psi)\\
	\frac{\partial\hat{\Psi}}{\partial t}=&-\text{div}(\hat{\Psi}\mathbf{f})+\frac{1}{2}\text{tr}(g^{2}\nabla_{\mathbf{x}_{t}}^{2}\hat{\Psi}),
	\end{split}
	\end{align}
	such that $\Psi(\mathbf{x}_{0},0)\hat{\Psi}(\mathbf{x}_{0},0)=p_{r}(\mathbf{x}_{0})$ and $\Psi(\mathbf{x}_{T},T)\hat{\Psi}(\mathbf{x}_{T},T)=\pi(\mathbf{x}_{T})$. With $\Psi$ and $\hat{\Psi}$, the forward diffusion SDE ends exactly at $\pi$, and the corresponding reverse SDE ends at $p_{r}$. Therefore, SBP is equivalent to solve the system of PDEs given by Eq. \eqref{eq:hopf_cole_transform}.
	
	\citet{chen2021likelihood} solves the system of coupled PDEs with Eq. \eqref{eq:hopf_cole_transform} using a theory of forward-backward SDEs, which requires a deep understanding of PDE theory. SB-FBSDE \cite{chen2021likelihood} uses the fact that the solution $(\Psi,\hat{\Psi})$ of Hopf-Cole transform in Eq. \eqref{eq:hopf_cole_transform} is derived from the solution of the forward-backward SDEs of
	\begin{align}\label{eq:FBSDE}
	\left\{\begin{array}{l}
	\diff\mathbf{x}_{t}=\big[\mathbf{f}(\mathbf{x}_{t},t)+g(t)\mathbf{z}_{t}(\mathbf{x}_{t},t)\big]\diff t+g(t)\diff\mathbf{w}_{t}\\[0.5em]
	\diff\mathbf{y}_{t}=\frac{1}{2}(\mathbf{z}_{t}^{T}\mathbf{z}_{t})(\mathbf{x}_{t},t)\diff t+\mathbf{z}_{t}^{T}(\mathbf{x}_{t},t)\diff\mathbf{w}_{t}\\[0.5em]
	\diff\mathbf{\hat{y}}_{t}=\Big[\frac{1}{2}(\mathbf{\hat{z}}_{t}^{T}\mathbf{\hat{z}}_{t})(\mathbf{x}_{t},t)+\text{div}\big(g(t)\mathbf{\hat{z}}_{t}(\mathbf{x}_{t},t)-\mathbf{f}(\mathbf{x}_{t},t)\big)+(\mathbf{\hat{z}}_{t}^{T}\mathbf{z}_{t})(\mathbf{x}_{t},t)\Big]\diff t+\mathbf{\hat{z}}_{t}^{T}(\mathbf{x}_{t},t)\diff\mathbf{w}_{t},
	\end{array}\right.
	\end{align}
	where the boundary conditions are given by $\mathbf{x}(0)=\mathbf{x}_{0}$ and $\mathbf{y}_{T}+\mathbf{\hat{y}}_{T}=\log{\pi(\mathbf{x}_{T})}$. The solution of the above system of forward-backward SDEs satisfies $\mathbf{z}_{t}(\mathbf{x}_{t},t)=g(t)\nabla\log{\Psi(\mathbf{x}_{t},t)}$ and $\mathbf{\hat{z}}_{t}(\mathbf{x}_{t},t)=g(t)\nabla\log{\hat{\Psi}(\mathbf{x}_{t},t)}$, where $(\Psi,\hat{\Psi})$ is the solution of Eq. \eqref{eq:hopf_cole_transform}. SB-FBSDE parametrizes $(\mathbf{z}_{t},\mathbf{\hat{z}}_{t})$ as $\bm{\theta}$ and $\bm{\phi}$, and it estimates the solution $(\mathbf{z}_{t},\mathbf{\hat{z}}_{t})$ of Eq. \eqref{eq:FBSDE} from MLE training of the log-likelihood $\log{p_{\bm{\phi},\bm{\theta}}(\mathbf{x}_{0})}$.

	Other than the PDE-driven approach \cite{chen2021likelihood}, SBP has been traditionally solved via Iterative Proportional Fitting (IPF) \cite{ruschendorf1995convergence}. Concretely, suppose
	\begin{align}\label{eq:forward_SBP}
	\diff\mathbf{x}_{t}^{\bm{\phi}}=\big[\mathbf{f}(\mathbf{x}_{t}^{\bm{\phi}},t)+g^{2}(t)\mathbf{s}_{\bm{\phi}}(\mathbf{x}_{t}^{\bm{\phi}},t)\big]\diff t+g(t)\diff\mathbf{w}_{t}, \quad\mathbf{x}_{0}^{\bm{\phi}}\sim p_{r},
	\end{align}
	is a forward diffusion with a parametrized vector field of $\mathbf{s}_{\bm{\phi}}$, and
	\begin{align*}
	\diff\mathbf{x}_{t}^{\bm{\theta}}=\big[\mathbf{f}(\mathbf{x}_{t}^{\bm{\theta}},t)-g^{2}(t)\mathbf{s}_{\bm{\theta}}(\mathbf{x}_{t}^{\bm{\theta}},t)\big]\diff \bar{t}+g^{2}(t)\mathbf{\bar{w}}_{t},\quad\mathbf{x}_{T}^{\bm{\theta}}\sim\pi,
	\end{align*}
	is a generative diffusion with a parametrized vector field of $\mathbf{s}_{\bm{\theta}}$. Then, IPF get its optimal vector fields by alternatively solving below half-bridge problems
	\begin{align}
	\bm{\nu}_{\bm{\phi}_{n}}=&\argmin_{\bm{\nu}_{\bm{\phi}}\in\mathcal{P}(p_{r},\cdot)}D_{KL}(\bm{\nu}_{\bm{\phi}}\Vert\bm{\nu}_{\bm{\theta}_{n-1}}),\label{eq:SBP1}\\
	\bm{\nu}_{\bm{\theta}_{n}}=&\argmin_{\bm{\nu}_{\bm{\theta}}\in\mathcal{P}(\cdot,\pi)}D_{KL}(\bm{\nu}_{\bm{\theta}}\Vert\bm{\nu}_{\bm{\phi}_{n}}),\label{eq:SBP2}
	\end{align}
	where the convergence of $\bm{\nu}_{\bm{\phi}^{n}}\rightarrow\bm{\nu}_{\bm{\phi}^{*}}$ and $\bm{\mu}_{\bm{\theta}^{n}}\rightarrow\bm{\nu}_{\bm{\theta}^{*}}$ is guaranteed in \citet{de2021diffusion}. Here, analogously, $\mathcal{P}(\cdot,\pi)$ is a family of path measure with $\pi$ as its marginal distribution at $t=T$. Notably, each of the half-bridge problem is a diffusion problem with the KL divergence replaced with the reverse KL divergence. Since SBP learns the forward SDE, sampling particle paths is expensive as it requires to solve an SDE numerically, so the training of IPF is slow.

	\section{Correction of Density Estimation Metrics of Diffusion Models with Time Truncation}\label{appendix:correction_of_nll}
	
	\subsection{Equivalent Reverse SDEs}\label{appendix:equivalent_reverse_sdes}
	
	Throughout Section \ref{appendix:correction_of_nll}, the diffusion process is assumed to follow a SDE of $\diff\mathbf{x}_{t}=\mathbf{f}(\mathbf{x}_{t},t)\diff t+g(t)\diff\mathbf{w}_{t}$ because the below argument is generally applicable for any continuous diffusion models. For INDM, we apply the below argument on the latent space, which has a linear drift term. Let $\diff\mathbf{x}_{t}=\big[\mathbf{f}(\mathbf{x}_{t},t)-\frac{1+\lambda^{2}}{2}g^{2}(t)\nabla_{\mathbf{x}_{t}}\log{p_{t}^{\lambda}(\mathbf{x}_{t})}\big]\diff \bar{t}+\lambda g(t)\bar{\mathbf{w}}_{t}$ be the reverse SDEs starting from $p_{T}$, where $p_{t}^{\lambda}$ is the probability law of the solution at $t$. Then, the reverse Kolmogorov equation (or Fokker-Planck equation) becomes
	\begin{align*}
	\frac{\partial p_{t}^{\lambda}(\mathbf{x}_{t},t)}{\partial t}=&-\sum_{i=1}^{d}\frac{\partial}{\partial x_{i}}\left(\left[ f_{i}(\mathbf{x}_{t},t)-\frac{1+\lambda^{2}}{2}g^{2}(t)\big(\nabla_{\mathbf{x}_{t}}\log{p_{t}^{\lambda}(\mathbf{x}_{t},t)}\big)_{i} \right]p_{t}^{\lambda}(\mathbf{x}_{t},t)\right)\\
	&-\frac{\lambda^{2}g^{2}(t)}{2}\sum_{i=1}^{d}\frac{\partial^{2}}{\partial x_{i}^{2}}\big[p_{t}^{\lambda}(\mathbf{x}_{t},t)\big]\\
	=&-\sum_{i=1}^{d}\frac{\partial}{\partial x_{i}}\left( f_{i}(\mathbf{x}_{t},t)p_{t}^{\lambda}(\mathbf{x}_{t},t)-\frac{1+\lambda^{2}}{2}g^{2}(t)\frac{\partial p_{t}^{\lambda}(\mathbf{x}_{t},t)}{\partial x_{i}} \right)\\
	&-\frac{\lambda^{2}g^{2}(t)}{2}\sum_{i=1}^{d}\frac{\partial^{2}}{\partial x_{i}^{2}}\big[p_{t}^{\lambda}(\mathbf{x}_{t},t)\big]\\
	=&-\sum_{i=1}^{d}\frac{\partial}{\partial x_{i}}\big[f_{i}(\mathbf{x}_{t},t)p_{t}^{\lambda}(\mathbf{x}_{t},t)\big]+\frac{1}{2}g^{2}(t)\sum_{i=1}^{d}\frac{\partial^{2}}{\partial x_{i}^{2}}\big[p_{t}^{\lambda}(\mathbf{x}_{t},t)\big],
	\end{align*}
	which is independent of $\lambda$. Therefore, it satisfies $p_{t}^{\lambda}=p_{t}^{\lambda'}$ for any $\lambda\neq\lambda'$. 
	
	For any $\lambda\in[0,1]$, the generative SDE is constructed by plugging $\mathbf{s}_{\bm{\theta}}(\mathbf{x}_{t},t)$ in place of $\nabla_{\mathbf{x}_{t}}\log{p_{t}^{\lambda}(\mathbf{x}_{t})}$ in the reverse SDE as $\diff\mathbf{x}_{t}=\big[\mathbf{f}(\mathbf{x}_{t},t)-\frac{1+\lambda^{2}}{2}g^{2}(t)\mathbf{s}_{\bm{\theta}}(\mathbf{x}_{t},t)\big]\diff \bar{t}+\lambda g(t)\bar{\mathbf{w}}_{t}$. Suppose we denote $p_{t}^{\lambda,\bm{\theta}}$ as the marginal distribution of the model at $t$. Then, the generative SDEs with different $\lambda$ have distinctive marginal distributions: $p_{t}^{\lambda,\bm{\theta}}\neq p_{t}^{\lambda',\bm{\theta}}$ for $\lambda\neq\lambda'$.
	
	\subsection{Log-Likelihood for Diffusion Models with Time Truncation}
	Due to the unbounded score loss illustrated in \cite{kim2022soft}, a diffusion model truncates the diffusion time to be $[\epsilon,1]$ for small enough $\epsilon>0$. However, since the small range of diffusion time contributes significant portion of the log-likelihood \cite{kim2022soft}, the effect of truncation should be counted both on training and evaluation. To describe, as we have no knowledge on the score estimation at $t\in[0,\epsilon)$, we have estimate the data log-likelihood by using the variational inferecence:
	\begin{eqnarray*}
		\lefteqn{\log{p_{0}^{\lambda,\bm{\theta}}(\mathbf{x}_{0})}=\log{\int p_{0}^{\lambda,\bm{\theta}}(\mathbf{x}_{0},\mathbf{x}_{\epsilon}) \diff\mathbf{x}_{\epsilon}}}&\\
		&&\ge \int p_{0\epsilon}(\mathbf{x}_{\epsilon}\vert\mathbf{x}_{0})\log{\frac{p_{\epsilon}^{\lambda,\bm{\theta}}(\mathbf{x}_{\epsilon})p_{\epsilon 0}^{\bm{\theta}}(\mathbf{x}_{0}\vert\mathbf{x}_{\epsilon})}{p_{0\epsilon}(\mathbf{x}_{\epsilon}\vert\mathbf{x}_{0})}}\diff\mathbf{x}_{\epsilon}\\
		&&=\mathbb{E}_{p_{0\epsilon}(\mathbf{x}_{\epsilon}\vert\mathbf{x}_{0})}\big[\log{p_{\epsilon}^{\lambda,\bm{\theta}}(\mathbf{x}_{\epsilon})}\big]+\mathbb{E}_{p_{0\epsilon}(\mathbf{x}_{\epsilon}\vert\mathbf{x}_{0})}\bigg[\log{\frac{p_{\epsilon 0}^{\bm{\theta}}(\mathbf{x}_{0}\vert\mathbf{x}_{\epsilon})}{p_{0\epsilon}(\mathbf{x}_{\epsilon}\vert\mathbf{x}_{0})}}\bigg],
	\end{eqnarray*}
	where $p_{\epsilon}^{\lambda,\bm{\theta}}$ is the generative distribution perturbed by $\epsilon$, and $p_{\epsilon 0}^{\bm{\theta}}(\mathbf{x}_{0}\vert\mathbf{x}_{\epsilon})$ is the reconstruction transition probability given $\mathbf{x}_{\epsilon}$. Then, we have
	\begin{eqnarray*}
		\lefteqn{\mathbb{E}_{p_{r}(\mathbf{x}_{0})}[-\log{p_{0}^{\lambda,\bm{\theta}}(\mathbf{x}_{0})}]\le\mathbb{E}_{\mathbf{x}_{\epsilon}}\big[-\log{p_{\epsilon}^{\lambda,\bm{\theta}}(\mathbf{x}_{\epsilon})}\big]-\mathbb{E}_{\mathbf{x}_{0},\mathbf{x}_{\epsilon}}\bigg[\log{\frac{p_{\epsilon 0}^{\bm{\theta}}(\mathbf{x}_{0}\vert\mathbf{x}_{\epsilon})}{p_{0\epsilon}(\mathbf{x}_{\epsilon}\vert\mathbf{x}_{0})}}\bigg]}&\\
		&&=D_{KL}(p_{\epsilon}\Vert p_{\epsilon}^{\lambda,\bm{\theta}})-\mathbb{E}_{\mathbf{x}_{0},\mathbf{x}_{\epsilon}}\bigg[\log{\frac{p_{\epsilon 0}^{\bm{\theta}}(\mathbf{x}_{0}\vert\mathbf{x}_{\epsilon})}{p_{0\epsilon}(\mathbf{x}_{\epsilon}\vert\mathbf{x}_{0})}}\bigg]+\mathcal{H}(p_{\epsilon}),
	\end{eqnarray*}
	which is equivalent to
	\begin{eqnarray*}
		\lefteqn{D_{KL}(p_{r}\Vert p_{0}^{\lambda,\bm{\theta}})\le D_{KL}(p_{\epsilon}\Vert p_{\epsilon}^{\lambda,\bm{\theta}})-\mathbb{E}_{\mathbf{x}_{0},\mathbf{x}_{\epsilon}}\bigg[\log{\frac{p_{\epsilon 0}^{\bm{\theta}}(\mathbf{x}_{0}\vert\mathbf{x}_{\epsilon})}{p_{0\epsilon}(\mathbf{x}_{\epsilon}\vert\mathbf{x}_{0})}}\bigg]+\mathcal{H}(p_{\epsilon})-\mathcal{H}(p_{r})}&\\
		&&=D_{KL}(p_{\epsilon}\Vert p_{\epsilon}^{\lambda,\bm{\theta}})+D_{KL}\big(p_{r}(\mathbf{x}_{0})p_{0\epsilon}(\mathbf{x}_{\epsilon}\vert\mathbf{x}_{0})\Vert p_{\epsilon}(\mathbf{x}_{\epsilon})p_{\epsilon 0}^{\bm{\theta}}(\mathbf{x}_{0}\vert\mathbf{x}_{\epsilon})\big).
	\end{eqnarray*}
	
	\subsection{NELBO Correction}
	
	Suppose $\bm{\mu}_{\epsilon}$ is the path measure of $\diff\mathbf{x}_{t}=\mathbf{f}(\mathbf{x}_{t},t)\diff t+g(t)\diff\mathbf{w}_{t}$ on $[\epsilon,T]$, and $\bm{\nu}_{\bm{\theta},\epsilon}^{\lambda}$ is the path measure of $\diff\mathbf{x}_{t}=\big[\mathbf{f}(\mathbf{x}_{t},t)-\frac{1+\lambda^{2}}{2}g^{2}(t)\nabla_{\mathbf{x}_{t}}\log{p_{t}^{\lambda}(\mathbf{x}_{t})}\big]\diff \bar{t}+\lambda g(t)\bar{\mathbf{w}}_{t}$ on $[\epsilon,T]$. Then, the continuous variational bound on the truncated diffusion model becomes
	\begin{eqnarray*}
		\lefteqn{\mathbb{E}_{p_{r}(\mathbf{x}_{0})}\big[-\log{p_{0}^{\lambda,\bm{\theta}}(\mathbf{x}_{0})}\big]\le D_{KL}(p_{\epsilon}\Vert p_{\epsilon}^{\lambda,\bm{\theta}})-\mathbb{E}_{\mathbf{x}_{0},\mathbf{x}_{\epsilon}}\bigg[\log{\frac{p_{\epsilon 0}^{\bm{\theta}}(\mathbf{x}_{0}\vert\mathbf{x}_{\epsilon})}{p_{0\epsilon}(\mathbf{x}_{\epsilon}\vert\mathbf{x}_{0})}}\bigg]+\mathcal{H}(p_{\epsilon})}&\\
		&&\le D_{KL}(\bm{\mu}_{\epsilon}\Vert \bm{\nu}_{\bm{\theta},\epsilon}^{\lambda})-\mathbb{E}_{\mathbf{x}_{0},\mathbf{x}_{\epsilon}}\bigg[\log{\frac{p_{\epsilon 0}^{\bm{\theta}}(\mathbf{x}_{0}\vert\mathbf{x}_{\epsilon})}{p_{0\epsilon}(\mathbf{x}_{\epsilon}\vert\mathbf{x}_{0})}}\bigg]+\mathcal{H}(p_{\epsilon})\\
		&&=\frac{1}{2}\frac{(1+\lambda^{2})^{2}}{4\lambda^{2}}\int_{\epsilon}^{T}g^{2}(t)\mathbb{E}_{\mathbf{x}_{t}}\big[\Vert\mathbf{s}_{\bm{\theta}}(\mathbf{x}_{t},t)-\log{p_{t}(\mathbf{x}_{t})}\Vert_{2}^{2}\big]\diff t-\mathbb{E}_{\mathbf{x}_{T}}\big[\log{\pi(\mathbf{x}_{T})}\big]\\
		&&\quad-\mathbb{E}_{\mathbf{x}_{0},\mathbf{x}_{\epsilon}}\bigg[\log{\frac{p_{\epsilon 0}^{\bm{\theta}}(\mathbf{x}_{0}\vert\mathbf{x}_{\epsilon})}{p_{0\epsilon}(\mathbf{x}_{\epsilon}\vert\mathbf{x}_{0})}}\bigg]+\mathcal{H}(p_{\epsilon})-\mathcal{H}(p_{T})\\
		&&=\frac{1}{2}\int_{\epsilon}^{T}\mathbb{E}_{\mathbf{x}_{0},\mathbf{x}_{t}}\bigg[\frac{(1+\lambda^{2})^{2}}{4\lambda^{2}}g^{2}(t)\Vert\log{p_{0t}(\mathbf{x}_{t}\vert\mathbf{x}_{0})}-\mathbf{s}_{\bm{\theta}}(\mathbf{x}_{t},t)\Vert_{2}^{2}-g^{2}(t)\Vert\nabla_{\mathbf{x}_{t}}\log{p_{0t}(\mathbf{x}_{t}\vert\mathbf{x}_{0})}\Vert_{2}^{2}\\
		&&\quad\quad\quad\quad\quad\quad\quad-2\nabla_{\mathbf{x}_{t}}\cdot \mathbf{f}(\mathbf{x}_{t},t)\bigg]\diff t-\mathbb{E}_{\mathbf{x}_{T}}\big[\log{\pi(\mathbf{x}_{T})}\big]-\mathbb{E}_{\mathbf{x}_{0},\mathbf{x}_{\epsilon}}\bigg[\log{\frac{p_{\epsilon 0}^{\bm{\theta}}(\mathbf{x}_{0}\vert\mathbf{x}_{\epsilon})}{p_{0\epsilon}(\mathbf{x}_{\epsilon}\vert\mathbf{x}_{0})}}\bigg],
	\end{eqnarray*}
	where $\mathcal{H}(p_{\epsilon})-\mathcal{H}(p_{T})$ is derived to be $-\frac{1}{2}\int_{\epsilon}^{T}\mathbb{E}\big[g^{2}\Vert\nabla\log{p_{0t}}\Vert_{2}^{2}+2\nabla\cdot \mathbf{f}\big]\diff t$ by Theorem 4 of \citet{song2021maximum}. The residual term, $\mathbb{E}_{p_{r}(\mathbf{x}_{0})p_{0\epsilon}(\mathbf{x}_{\epsilon}\vert\mathbf{x}_{0})}\big[\log{\frac{p_{\epsilon 0}^{\bm{\theta}}(\mathbf{x}_{0}\vert\mathbf{x}_{\epsilon})}{p_{0\epsilon}(\mathbf{x}_{\epsilon}\vert\mathbf{x}_{0})}}\big]$, has been ignored both on training and evaluation in previous research. Therefore, we report the \textit{correct} NELBO (denoted by \textit{w/ residual} in the main paper) by counting the residual term $\mathbb{E}_{p_{r}(\mathbf{x}_{0})p_{0\epsilon}(\mathbf{x}_{\epsilon}\vert\mathbf{x}_{0})}\big[\log{\frac{p_{\epsilon 0}^{\bm{\theta}}(\mathbf{x}_{0}\vert\mathbf{x}_{\epsilon})}{p_{0\epsilon}(\mathbf{x}_{\epsilon}\vert\mathbf{x}_{0})}}\big]$ into account. Note that $\frac{(1+\lambda^{2})^{2}}{4\lambda^{2}}$ is minimized when $\lambda=1$, so our reported NELBO is based on the generative SDE at $\lambda=1$.
	
	\subsection{NLL Correction}\label{appendix:nll_correction}
	
	NLL of the generative SDE can be computed through the Feynman-Kac formula \cite{huang2021variational} by
	\begin{align}\label{eq:feynman-kac_formula}
	p_{\bm{\theta},\epsilon}^{\lambda}(\mathbf{x}_{\epsilon})=\mathbb{E}_{\{\mathbf{x}_{t}\}_{t>\epsilon}\vert\mathbf{x}_{\epsilon}}\bigg[\pi(\mathbf{x}_{T})\exp{\bigg(-\int_{\epsilon}^{T} \text{tr}\Big( \nabla_{\mathbf{x}_{t}}\Big[\mathbf{f}(\mathbf{x}_{t},t) - \frac{1+\lambda^{2}}{2}g^{2}(t)\mathbf{s}_{\bm{\theta}}(\mathbf{x}_{t},t)\Big] \Big) \diff t\bigg)}\bigg].
	\end{align}
	However, the expectation is intractable because there are infinitely-many sample paths. Fortunately, the sample variance diminishes as $\lambda\rightarrow 0$, and the generative SDE collapses to a generative ODE when $\lambda=0$ \cite{song2020score}, i.e., the generative SDE of $\lambda=0$ becomes
	\begin{align*}
	\diff\mathbf{x}_{t}=\bigg[\mathbf{f}(\mathbf{x}_{t},t)-\frac{1}{2}g^{2}(t)\mathbf{s}_{\bm{\theta}}(\mathbf{x}_{t},t)\bigg]\diff \bar{t},
	\end{align*}
	which corresponds to the generative ODE of forward time as
	\begin{align}\label{eq:generative_ode_forward}
	\diff\mathbf{x}_{t}=\bigg[\mathbf{f}(\mathbf{x}_{t},t)-\frac{1}{2}g^{2}(t)\mathbf{s}_{\bm{\theta}}(\mathbf{x}_{t},t)\bigg]\diff t.
	\end{align}
	Then, the sample path becomes deterministic, and the expectation in Eq. \eqref{eq:feynman-kac_formula} is degenerated as the single sample path of ODE with Eq. \eqref{eq:generative_ode_forward} starting $\mathbf{x}_{\epsilon}$. The instantaneous change-of-variable formula \cite{song2020score}, which is a collapsed Feynman-Kac formula in Eq. \eqref{eq:feynman-kac_formula}, guarantees that there is a corresponding ODE of Eq. \eqref{eq:generative_ode_forward} as
	\begin{align}\label{eq:instantaneous_change_of_variable_ode}
	\frac{\diff \log{p_{t}^{0,\bm{\theta}}(\mathbf{y}_{t})}}{\diff t}=-\text{tr}\bigg(\nabla_{\mathbf{y}_{t}}\Big[\mathbf{f}(\mathbf{y}_{t},t)-\frac{1}{2}g^{2}(t)\mathbf{s}_{\bm{\theta}}(\mathbf{y}_{t},t)\Big]\bigg).
	\end{align}
	From the fact that the reverse SDEs have the identical marginal distributions described in Section \ref{appendix:equivalent_reverse_sdes}, we approximate the model log-likelihood at $\lambda=1$ by the log-likelihood at $\lambda=0$ at the expense of slight difference between the model distributions of different $\lambda$s. When computing the model log-likelihood at $\lambda=0$, we integrate the ODE of Eq. \eqref{eq:instantaneous_change_of_variable_ode} over $[\epsilon,1]$ using an ODE solver, such as the Runge-Kutta 45 method \cite{dormand1980family}.
	
	There are minor subtleties in computing the log-likelihood at $\lambda=0$ that significantly affects to bpd evaluation. To the best of our knowledge, all the current practice on continuous diffusion models computes bpd by integrating
	\begin{align*}
	\frac{\diff \log{p_{t}^{0,\bm{\theta}}(\mathbf{y}_{t})}}{\diff t}=-\text{tr}\bigg(\nabla_{\mathbf{y}_{t}}\Big[\mathbf{f}(\mathbf{y}_{t},t)-\frac{1}{2}g^{2}(t)\mathbf{s}_{\bm{\theta}}(\mathbf{y}_{t},t)\Big]\bigg),
	\end{align*}
	on $t\in[\epsilon,T]$, where $\{\mathbf{y}_{t}\}_{t=\epsilon}^{T}$ is a sample path starting from $\mathbf{y}_{\epsilon}:=\mathbf{x}_{0}$. This is equivalent of computing $\log{p_{\epsilon}^{0,\bm{\theta}}(\mathbf{x}_{0})}$. However, strarting from $\mathbf{x}_{0}$ incurs large discrepancy on the NLL output, compared to starting from instead of $\mathbf{x}_{\epsilon}$. Since the integration is on $[\epsilon,1]$, the starting variable should follow $\mathbf{x}_{\epsilon}$, which is a slightly perturbed variable.
	
	To fix this subtlety, we solve the below alternative differential equation of
	\begin{align}\label{eq:modified_ode}
	\frac{\diff \log{p_{t}^{0,\bm{\theta}}}}{\diff t}=-\text{tr}\bigg(\nabla_{\mathbf{y}_{t}}\Big[\mathbf{f}(\mathbf{y}_{t},t)-\frac{1}{2}g^{2}(t)\mathbf{s}_{\bm{\theta}}(\mathbf{y}_{t},t)\Big]\bigg),
	\end{align}
	on $t\in[\epsilon,T]$, where $\{\mathbf{y}_{t}^{0}\}_{t=\epsilon}^{T}$ is a sample path starting from $\mathbf{y}_{\epsilon}:=\mathbf{x}_{\epsilon}$. By replacing the initial value to $\mathbf{x}_{\epsilon}$ from $\mathbf{x}_{0}$, we could correctly compute $\log{p_{\epsilon}^{\bm{\theta}}(\mathbf{x}_{\epsilon})}$. Table \ref{tab:difference} presents the difference of $\mathbb{E}_{\mathbf{x}_{\epsilon}}\big[-\log{p_{\epsilon}^{0,\bm{\theta}}(\mathbf{x}_{\epsilon})}\big]-\mathbb{E}_{\mathbf{x}_{0}}\big[-\log{p_{\epsilon}^{0,\bm{\theta}}(\mathbf{x}_{0})}\big]$ with various $\epsilon$.
	\begin{table}[t]
		\caption{The difference of the integration with different initial points of $\mathbf{x}_{\epsilon}$ and $\mathbf{x}_{0}$ on DDPM++ (VP, NLL). The difference increases by $\epsilon$.}
		\label{tab:difference}
		\centering
		\tiny
		\begin{tabular}{ccccc}
			\toprule
			& \multicolumn{4}{c}{$\epsilon$}\\\cmidrule(lr){2-5}
			& $10^{-2}$ & $10^{-3}$ & $10^{-4}$ & $10^{-5}$\\\midrule
			$\mathbb{E}_{\mathbf{x}_{\epsilon}}\big[-\log{p_{\epsilon}^{\bm{\theta}}(\mathbf{x}_{\epsilon})}\big]-\mathbb{E}_{\mathbf{x}_{0}}\big[-\log{p_{\epsilon}^{\bm{\theta}}(\mathbf{x}_{0})}\big]$ & 1.13 & 0.73 & 0.24 & 0.05 \\
			\bottomrule
		\end{tabular}
	\end{table}
	We report the \textit{correct} NLL as
	\begin{align*}
	\mathbb{E}_{\mathbf{x}_{\epsilon}}\big[-\log{p_{\epsilon}^{0,\bm{\theta}}(\mathbf{x}_{\epsilon})}\big]-\mathbb{E}_{\mathbf{x}_{0},\mathbf{x}_{\epsilon}}\bigg[\log{\frac{p_{\epsilon 0}^{\bm{\theta}}(\mathbf{x}_{0}\vert\mathbf{x}_{\epsilon})}{p_{0\epsilon}(\mathbf{x}_{\epsilon}\vert\mathbf{x}_{0})}}\bigg],
	\end{align*}
	where $\log{p_{\epsilon}^{0,\bm{\theta}}(\mathbf{x}_{\epsilon})}$ is computed based on the initial point of $\mathbf{x}_{\epsilon}$ when $\lambda=0$.
	
	\subsection{Calculating the Residual Term}
	
	This section calculates the residual term, $\mathbb{E}_{p_{r}(\mathbf{x}_{0})p_{0\epsilon}(\mathbf{x}_{\epsilon}\vert\mathbf{x}_{0})}\big[\log{\frac{p_{\epsilon 0}^{\bm{\theta}}(\mathbf{x}_{0}\vert\mathbf{x}_{\epsilon})}{p_{0\epsilon}(\mathbf{x}_{\epsilon}\vert\mathbf{x}_{0})}}\big]$.	The transition probability of $p_{0\epsilon}(\mathbf{x}_{\epsilon}\vert\mathbf{x}_{0})$ is the Gaussian distribution of $\mathcal{N}(\mathbf{x}_{\epsilon};\mu(\epsilon)\mathbf{x}_{0},\sigma^{2}(\epsilon)\mathbf{I})$ if $\mathbf{f}(\mathbf{x}_{t},t)=-\frac{1}{2}\beta(t)\mathbf{x}_{t}$ with
	\begin{align*}
	\mu(\epsilon)=e^{-\frac{1}{2}\int_{0}^{\epsilon}\beta(s)\diff s}\text{ and }\sigma^{2}(\epsilon)=\Big(\frac{g^{2}(t)}{\beta(t)}-\frac{g^{2}(0)}{\beta(0)}+1-e^{-\int_{0}^{t}\beta(s)\diff s}\Big),
	\end{align*}
	see Appendix A.1 of \citet{kim2022soft} for detailed computation. On the other hand, the generative distribution of $p_{\bm{\theta},\epsilon 0}(\mathbf{x}_{0}\vert\mathbf{x}_{\epsilon})$ is assumed to be a Gaussian distribution of $\mathcal{N}\big(\mathbf{x}_{0};\bm{\mu}_{\bm{\theta},\epsilon 0}(\mathbf{x}_{\epsilon}),\sigma_{\bm{\theta},\epsilon 0}^{2}\mathbf{I}\big)$, where $\bm{\mu}_{\bm{\theta},\epsilon 0}(\mathbf{x}_{\epsilon})=\frac{1}{\mu(\epsilon)}\big(\mathbf{x}_{\epsilon}+\sigma^{2}(\epsilon)\mathbf{s}_{\bm{\theta}}(\mathbf{x}_{\epsilon},\epsilon)\big)$ \cite{kingma2021variational}. Then, we have
	\begin{eqnarray*}
		\lefteqn{\mathbb{E}_{p_{0\epsilon}(\mathbf{x}_{\epsilon}\vert\mathbf{x}_{0})}\bigg[\log{\frac{p_{\epsilon 0}^{\bm{\theta}}(\mathbf{x}_{0}\vert\mathbf{x}_{\epsilon})}{p_{0\epsilon}(\mathbf{x}_{\epsilon}\vert\mathbf{x}_{0})}}\bigg]}&\\
		&&=\log{\mu(\epsilon)}-\frac{1}{2\sigma_{\bm{\theta},\epsilon 0}^{2}(\epsilon)}\mathbb{E}_{p_{0\epsilon}(\mathbf{x}_{\epsilon}\vert\mathbf{x}_{0})}\bigg[\Big\Vert\mathbf{x}_{0}-\frac{1}{\mu(\epsilon)}\Big(\mathbf{x}_{t}+\sigma^{2}(\epsilon)\mathbf{s}_{\bm{\theta}}(\mathbf{x}_{\epsilon},\epsilon)\Big)\Big\Vert_{2}^{2}\bigg]+\frac{1}{2}.
	\end{eqnarray*}
	
	We could approximate the variance of $p_{\epsilon 0}^{\bm{\theta}}(\mathbf{x}_{0}\vert\mathbf{x}_{\epsilon})$ to be the variance of $p_{\epsilon 0}(\mathbf{x}_{0}\vert\mathbf{x}_{\epsilon})$, if $p_{\epsilon 0}(\mathbf{x}_{0}\vert\mathbf{x}_{\epsilon})$ is derived as a closed-form. For that, let us assume $\mathbf{x}_{0}\sim\mathcal{N}(0,\sigma^{2})$. Then,
	\begin{eqnarray*}
		\lefteqn{p(\mathbf{x}_{0},\mathbf{x}_{\epsilon})=p(\mathbf{x}_{0})p_{0\epsilon}(\mathbf{x}_{\epsilon}\vert\mathbf{x}_{0})}&\\
		&&\propto\exp{\left( -\frac{\Vert\mathbf{x}_{0}\Vert_{2}^{2}}{2\sigma^{2}} - \frac{\Vert\mathbf{x}_{\epsilon}-\mu(\epsilon)\mathbf{x}_{0}\Vert_{2}^{2}}{2\sigma^{2}(\epsilon)} \right)}\\
		&&=\exp{\left( -\frac{1}{2}\left( \frac{1}{\sigma^{2}}+\frac{\mu^{2}(\epsilon)}{\sigma^{2}(\epsilon)} \right) \left\Vert\mathbf{x}_{0}-\frac{\mu(\epsilon)\sigma^{2}}{\sigma^{2}(\epsilon)+\mu^{2}(\epsilon)\sigma^{2}}\mathbf{x}_{\epsilon}\right\Vert_{2}^{2}+O(\Vert\mathbf{x}_{\epsilon}\Vert_{2}^{2}) \right)}.
	\end{eqnarray*}
	Therefore, $p_{\epsilon 0}(\mathbf{x}_{0}\vert\mathbf{x}_{\epsilon})=\mathcal{N}\big(\mathbf{x}_{0}\big\vert\frac{\mu(\epsilon)\sigma^{2}}{\sigma^{2}(\epsilon)+\mu^{2}(\epsilon)\sigma^{2}}\mathbf{x}_{\epsilon},1/(\frac{1}{\sigma^{2}}+\frac{\mu^{2}(\epsilon)}{\sigma^{2}(\epsilon)})\big)$. When $\sigma$ is sufficiently large compared to $\frac{\sigma^{2}(\epsilon)}{\mu^{2}(\epsilon)}$, the variance of $p_{\epsilon 0}(\mathbf{x}_{0}\vert\mathbf{x}_{\epsilon})$ is approximately $\frac{\sigma^{2}(\epsilon)}{\mu^{2}(\epsilon)}$. Now, if $\mathbf{x}_{0}\sim p_{r}$, then the variance of $\mathbf{x}_{0}$ is large enough compared to $\frac{\sigma^{2}(\epsilon)}{\mu^{2}(\epsilon)}$, so we could approximate $\sigma_{\bm{\theta},\epsilon 0}^{2}(\epsilon)$ to be $\frac{\sigma^{2}(\epsilon)}{\mu^{2}(\epsilon)}$. Note that DDPM \cite{ho2020denoising} assumes the variance to be $\sigma^{2}(\epsilon)$. We compute the residual term with $\frac{\sigma^{2}(\epsilon)}{\mu^{2}(\epsilon)}$ variance for both VESDE and VPSDE. Note that this residual term is inspired from the released code of \citet{song2021maximum}.
	
	\section{Experimental Details and Additional Results}\label{appendix:experimental_details}

	\subsection{Model Architecture}
	
	\textbf{Diffusion Model} We implement two diffusion models as backbone: NCSN++ (VE) \citep{song2020score} and DDPM++ (VP) \citep{song2020score}, where two backbones are one of the best performers in CIFAR-10 dataset. In our setting, NCSN++ assumes the score network with parametrization of $\mathbf{s}_{\bm{\theta}}(\mathbf{z}_{t},\log{\sigma^{2}(t)})$, where $\sigma^{2}(t)=\sigma_{min}^{2}(\frac{\sigma_{max}}{\sigma_{min}})^{2t}$ is the variance of the transition probability $p_{0t}(\mathbf{z}_{t}\vert\mathbf{z}_{0})$ with VESDE. As introduced in \citet{song2020score}, we use the Gaussian Fourier embeddings \citep{tancik2020fourier} to model the high frequency details across the temporal embedding. DDPM++ models the score network with parametrization of $\mathbf{\epsilon}_{\bm{\theta}}(\mathbf{z}_{t},t)$, which targets to estimate $-\sigma(t)\nabla_{\mathbf{z}_{t}}\log{p_{t}(\mathbf{z}_{t})}$. We use the Transformer sinusoidal temporal embedding \citep{vaswani2017attention}.
	
	We use the U-Net \citep{ronneberger2015u} architecture for the score networks on both NCSN++ and DDPM++ based on \citep{ho2020denoising}. We stack U-Net resblocks of up-and-down convolutions with skip connections that give input information to the output layer. Also, we follow \citet{ho2020denoising} by applying the global attention at the resolution of $16\times 16$. We use four U-Net resblocks with four feature map resolutions ($32\times 32$ to $4\times 4$). On CIFAR-10, we use four and eight resblocks for shallow and deep settings, respectively. The performances of shallow and deep models turn out to be insignificant, so we use four resblocks on CelebA. We provide the identical diffusion model structures to compare the baseline linear diffusion model and the INDM model in a fair setting.
	
	\textbf{Flow Model} We build a normalizing flow model as follows. \citet{ma2020decoupling} uses the autoencoding structure of decouple the global information and the local representation. The compression encoder extracts the global information, and the invertible decoder is a conditional flow conditioned by the encoded latent representation. \citet{ma2020decoupling} utilizes a shallow network for the compressive encoder, and it applies Glow \cite{kingma2018glow} for the invertible decoder. We empirically find that resnet-based flow network outperforms the Glow-based flow, so we replace Glow to ResFlow \cite{chen2019residual}.
	
	For the ResFlow, we drop three components from the original paper: 1) the activation normalization \cite{kingma2018glow}, 2) the batch normalization \cite{ioffe2015batch}, and 3) the fully connected layers. For the activation function, we use the sine function \cite{lu2021implicit} on quantitative comparisons in Section 7, and we use swish function \cite{ramachandran2017searching} on qualitative analysis in otherwise sections including Section 6. With the sine activation, the training becomes more stable, and the FID performance is significantly improved while maintaining the NLL performance. For the multi-GPU training, we use the Neumann log-determinant gradient estimator, instead of the memory-efficient estimator \cite{chen2019residual}.
	
	\subsection{Experimental details}
	
	\begin{table}[t]
		\caption{Despite of our implementation is built deeply based on \citet{song2020score}, our pytorch implementation and the jax implementation of \citet{song2021maximum} differs in their final performances.}
		\label{tab:comparison_with_reported}
		\scriptsize
		\centering
		\begin{tabular}{clccccccc}
			\toprule
			\multirow{2}{*}{SDE} & \multirow{2}{*}{Model} & \multicolumn{2}{c}{NLL} & \multicolumn{2}{c}{NELBO} & \multicolumn{2}{c}{Gap} & FID \\
			&& after & before & w/ residual & w/o residual & after & before & ODE \\\midrule
			\multirow{3}{*}[-3pt]{VP} & DDPM++ (NLL, reported) \cite{song2021maximum} & - & 2.95 & - & 3.08 & - & 0.13 & 6.03 \\\cmidrule(lr){2-9}
			& DDPM++ (NLL, ours) & 3.03 & 2.97 & 3.13 & 3.11 & 0.10 & 0.14 & 6.70 \\
			& \cc{15}INDM (NLL) & \cc{15}2.98 & \cc{15}2.95 & \cc{15}2.98 & \cc{15}2.97 & \cc{15}0.00 & \cc{15}0.02 & \cc{15}6.01 \\
			\bottomrule
		\end{tabular}
	\end{table}
	
	With the batch size of 128, we train the diffusion model with Exponential Moving Average (EMA) \cite{kingma2021variational} rate of 0.9999, and we do not use EMA on our flow model. Using EMA on the flow model is advantageous on NLL at the expense of FID, and we build our model with emphasis on FID. We train the model by two step. The pre-training stage trains the diffusion model about five days with a flow model fixed as an identity function on four P40 GPUs with 96Gb GPU memory for all experiments. After the pre-training, we train both flow and diffusion networks about five days. In this stage, we apply the learning rate scheduling to boost the FID score. We initiate the learning rate after the sample generation performance is saturated. For the diffusion model, we drop the learning rate from $2\times 10^{-4}$ to $10^{-5}$. For the flow model, we drop the learning rate from $10^{-3}$ to $10^{-5}$ for VPSDE and $5\times 10^{-5}$ to $10^{-5}$ for VESDE.
	
	VESDE and VPSDE have different training details. We apply INDM on VESDE with $\sigma_{min}=10^{-2}$. Throughout the experiments, VESDE has $\sigma_{max}=50$ on CIFAR-10 and $\sigma_{max}=90$ on CelebA. On the other hand, VPSDE assumes $\beta(t)=\beta_{min}+(\beta_{max}-\beta_{min})t$ with $\beta_{min}=0.1$ and $\beta_{max}=20$. Both VESDE and VPSDE truncate the diffusion time on $[\epsilon,T]$ in order to stabilize the diffusion model \cite{kim2022soft}, where $\epsilon=10^{-5}$ and $T=1$. 
	
	With all hyperparameters identical to \citet{song2021maximum}, however, we could not achieve the reported performance. Table \ref{tab:comparison_with_reported} compares the reported performance and the model performance trained on out implementation, of which structure is heavily based on the released code of \cite{song2020score}. Due to the discrepancy between the reported and the regenerated performances, we compare our INDM to the regenerated performance as default to investigate the effect of nonlinear diffusion in a fair setting. Throughout the training, we used 4$\times$ NVIDIA RTX 3090.
	
	\subsubsection{Variance Reduction}
	
	\textbf{Flow Training} When we train the flow network with $\mathcal{L}\big(\{\mathbf{x}_{t}\}_{t=0}^{T},g^{2};\{\bm{\phi},\bm{\theta}\})$, this NELBO contains the integration of 
	\begin{align*}
	\mathcal{L}\big(\{\mathbf{z}_{t}\}_{t=0}^{T},g^{2};\bm{\theta}\big)=\frac{1}{2}\int_{0}^{T}g^{2}(t)\mathbb{E}_{\mathbf{z}_{0},\mathbf{z}_{t}}\big[\Vert\mathbf{s}_{\bm{\theta}}(\mathbf{z}_{t},t)\big]\diff t,
	\end{align*}
	up to a constant. Suppose $\mathcal{L}_{t}\big(\{\mathbf{z}_{t}\}_{t=0}^{T};\bm{\theta}\big)$ to be $\frac{1}{2}\mathbb{E}_{\mathbf{z}_{0},\mathbf{z}_{t}}[\Vert\mathbf{s}_{\bm{\theta}}(\mathbf{z}_{t},t)-\nabla_{\mathbf{z}_{t}}\log{p_{0t}(\mathbf{z}_{t}\vert\mathbf{x}_{0})}\Vert_{2}^{2}]$. Previous works on diffusion models \citep{nichol2021improved,huang2021variational,song2021maximum, kim2022soft} show that the estimation variance is largely reduced with the importance sampling, which could improve the model performance \citep{song2021maximum}, and we apply this importance sampling throughout the experiments for NLL setting. Concretely, the importance sampling chooses an importance weight that is proportional to $\frac{g^{2}(t)}{\sigma^{2}(t)}$, and estimates the integration by $\mathcal{L}\big(\{\mathbf{z}_{t}\}_{t=0}^{T},g^{2};\bm{\theta}\big)=\int_{0}^{T}g^{2}(t)\mathcal{L}_{t}\big(\{\mathbf{z}_{t}\}_{t=0}^{T};\bm{\theta}\big)\diff t\approx\sum_{n=1}^{N}\sigma^{2}(t_{n})\mathcal{L}_{t_{n}}\big(\{\mathbf{z}_{t}\}_{t=0}^{T};\bm{\theta}\big)$, where $t_{n}$ is sampled from the importance distribution.
	
	For VESDE, it satisfies $\beta(t)=0$ and $g(t)=\sigma_{min}(\frac{\sigma_{max}}{\sigma_{min}})^{t}\sqrt{2\log{(\frac{\sigma_{max}}{\sigma_{min}})}}$. The transition probability becomes $p_{-\infty t}(\mathbf{z}_{t}\vert\mathbf{z}_{-\infty})=\mathcal{N}(\mathbf{z}_{t};\mathbf{z}_{-\infty},\sigma^{2}(t))$, where $\sigma^{2}(t)=\int_{-\infty}^{t}g^{2}(s)\diff s=\sigma_{min}^{2}\big(\frac{\sigma_{max}}{\sigma_{min}}\big)^{2t}$. Since $\sigma^{2}(t)$ is proportional to $g^{2}(t)$ in VESDE, the importance weight follows the uniform distribution, and the importance sampling is equivalent with choosing the uniform $t$. This is why there is no experimetal setting of VE with NLL.
	
	On the other hand, VPSDE satisfies $\beta(t)=\beta_{min}+(\beta_{max}-\beta_{min})t$ with $g(t)=\sqrt{\beta(t)}$. Then, the transition probability becomes $p_{0t}(\mathbf{z}_{t}\vert\mathbf{z}_{0})=\mathcal{N}(\mathbf{z}_{t};\mu(t)\mathbf{z}_{t},\sigma^{2}(t)\mathbf{I})$, where $\mu(t)=e^{-\frac{1}{2}\int_{0}^{t}\beta(s)\diff s}$ and $\sigma^{2}(t)=1-e^{-\int_{0}^{t}\beta(s)\diff s}$. Thus, VPSDE has the importance weight of $\frac{g^{2}(t)}{\sigma^{2}(t)}=\frac{\beta(t)}{1-e^{-\int_{0}^{t}\beta(s)\diff s}}$.
	
	The Monte-Carlo sample from this importance weight is the solution of the inverse Cumulative Distribution Function (CDF) of the importance distribution as
	\begin{align}\label{eq:inverse_cdf}
	\begin{split}
	&t=F^{-1}(u)\\
	\iff &u=F(t)=\frac{1}{Z}\int_{\epsilon}^{t}\frac{g^{2}(s)}{\sigma^{2}(s)}\diff s=\frac{1}{Z}\big(\mathcal{F}(t)-\mathcal{F}(\epsilon)\big),
	\end{split}
	\end{align}
	where $u$ is a uniform sample from $[0,1]$, $\mathcal{F}(t)$ is the antiderivative of the importance weight given by $\mathcal{F}(t)=\log{(1-e^{-0.5 t^{2}(\beta_{max}-\beta_{min})-t\beta_{min}})}+0.5t^{2}(\beta_{max}-\beta_{min})+t\beta_{min}$, and $Z$ is the normalizing constant given by
	\begin{align*}
	Z=&\int_{\epsilon}^{T}\frac{g^{2}(t)}{\sigma^{2}(t)}\diff t\\
	=&\Big[\log{(1-e^{-0.5 t^{2}(\beta_{max}-\beta_{min})-t\beta_{min}})}+0.5t^{2}(\beta_{max}-\beta_{min})+t\beta_{min}\Big]_{\epsilon}^{T}\\
	=&\log{(1-e^{-0.5T^{2}(\beta_{max}-\beta_{min})-T\beta_{min}})}-\log{(1-e^{-0.5\epsilon^{2}(\beta_{max}-\beta_{min})-\epsilon\beta_{min}})}\\
	&+0.5(T^{2}-\epsilon^{2})(\beta_{max}-\beta_{min})+(T-\epsilon)\beta_{min}\\
	\approx&23.86
	\end{align*}
	for $T=1$ and $\epsilon=10^{-5}$. The solution for the inverse CDF in Eq. \eqref{eq:inverse_cdf} becomes
	\begin{align*}
	&e^{\int_{0}^{t}\beta(s)\diff s}=1+\exp{(Zu+\mathcal{F}(\epsilon))}\\
	&\iff \int_{0}^{t}\beta(s)\diff s=\frac{1}{2}(\beta_{max}-\beta_{min})t^{2}+\beta_{min}t=\log{(1+\exp{(Zu+\mathcal{F}(\epsilon))})}\\
	&\iff t=\frac{-\beta_{min}+\sqrt{\beta_{min}^{2}+2(\beta_{max}-\beta_{min})\log{\big(1+\exp{(Zu+\mathcal{F}(\epsilon))}\big)}}}{\beta_{max}-\beta_{min}}.
	\end{align*}
	The variation of the Monte-Carlo diffusion time depends on the uniform sample of $u$.
	
	\begin{align*}
	&\int_{0}^{t}\beta(s)\diff s=\log{\bigg(\frac{1-\sigma_{min}^{2}}{1-\sigma_{min}^{2}\big(\frac{\sigma_{max}}{\sigma_{min}}\big)^{t}}\bigg)}=\log{\big(1+\exp{(Zu+\mathcal{F}(\epsilon))}\big)}\\
	&\iff 1-\sigma_{min}^{2}\Big( \frac{\sigma_{max}}{\sigma_{min}} \Big)^{t}=\frac{1-\sigma_{min}^{2}}{1+e^{Zu+\mathcal{F}(\epsilon)}}\\
	&\iff \sigma_{min}^{2}\Big(\frac{\sigma_{max}^{2}}{\sigma_{min}^{2}}\Big)^{t}=\frac{e^{Zu+\mathcal{F}(\epsilon)}+\sigma_{min}^{2}}{1+e^{Zu+\mathcal{F}(\epsilon)}}\\
	&\iff t\log{\frac{\sigma_{max}^{2}}{\sigma_{min}^{2}}}=\log{\Big( e^{Zu+\mathcal{F}(\epsilon)}+\sigma_{min}^{2} \Big)}-\log{\Big(1+e^{Zu+\mathcal{F}(\epsilon)}\Big)}-\log{\sigma_{min}^{2}}.
	\end{align*}
	
	\begin{table}[t]
		\caption{Ablation study on the stopping sampling time trained on DDPM++ (VP) in CIFAR-10.}
		\label{tab:sampling_eps_search}
		\scriptsize
		\centering
		\begin{tabular}{lccc}
			\toprule
			Model & FID ($t_{min}=10^{-3}$) & FID ($t_{min}=10^{-4}$) & FID ($t_{min}=10^{-5}$)\\
			\midrule
			INDM (deep, VP, NLL) & 5.94 & 5.74 & 5.71 \\
			\bottomrule
		\end{tabular}
	\end{table}
	
	\begin{table}[t]
		\caption{Ablation study on the SNR trained on NCSN++ (VE) in CIFAR-10. The performances are FID-5k scores.}
		\label{tab:snr_search}
		\tiny
		\centering
		\begin{tabular}{lccccc}
			\toprule
			\multirow{2}{*}[-2pt]{Model} & \multicolumn{5}{c}{Signal-to-Natio Ratio (SNR)}\\\cmidrule(lr){2-6}
			& 0.13 & 0.14 & 0.15 & 0.16 & 0.17\\\midrule
			INDM (VE, FID) & 7.24 & 7.12 & 7.20 & 7.25 & 7.34 \\
			\bottomrule
		\end{tabular}
	\end{table}
	
	\begin{table}[t]
		\caption{Ablation study on the temperature for PC sampler trained on DDPM++ (VP) in CIFAR-10. The performances are FID scores. Contrary to \citet{kingma2018glow}, the temperature bigger than 1 works the best.}
		\label{tab:temperature_search}
		\tiny
		\centering
		\begin{tabular}{lcccccc}
			\toprule
			\multirow{2}{*}[-2pt]{Model} & \multicolumn{6}{c}{Temperature}\\\cmidrule(lr){2-7}
			& 1 & 1.03 & 1.04 & 1.05 & 1.1 & 1.2\\\midrule
			INDM (VP, FID) & 2.92 & 2.91 & 2.90 & 2.90 & 2.91 & 3.09 \\
			\bottomrule
		\end{tabular}
	\end{table}
	
	\begin{table}[t]
		\caption{Ablation study on the final denoising step trained on VESDE in CIFAR-10. The performances are FID scores.}
		\label{tab:line_search}
		\tiny
		\centering
		\begin{tabular}{lccccccc}
			\toprule
			Model & FID ($\mathbf{x}_{\epsilon}$) & FID ($\mathbf{x}_{-0.25}$) & FID ($\mathbf{x}_{-0.5}$) & FID ($\mathbf{x}_{-0.75}$) & FID ($\mathbf{x}_{-1}$) & FID ($\mathbf{x}_{-1.5}$) & FID ($\mathbf{x}_{-\infty}$)\\
			\midrule
			NCSN++ (VE, FID) & 11.7 & 8.40 & 40.8 & 65.7 & 85.8 & 118 & 2.46 \\\midrule
			\cc{15}INDM (VE, FID) & \cc{15}2.40 & \cc{15}2.33 & \cc{15}2.31 & \cc{15}2.29 & \cc{15}2.29 & \cc{15}2.34 & \cc{15}2.37 \\\cmidrule(lr){1-1}
			\cc{15}INDM (VE, deep, FID) & \cc{15}2.35 & \cc{15}2.29 & \cc{15}2.28 & \cc{15}2.29 & \cc{15}2.29 & \cc{15}2.36 & \cc{15}2.33 \\
			\bottomrule
		\end{tabular}
	\end{table}
	
	\subsubsection{Sampling Tricks to Improve FID}\label{appendix:sampling_tricks}
	For ODE sampler, we use Runge Kutta 45 method \cite{shampine1986some} for the ODE solver. Since the score network was not trained beneath the truncation time, i.e., $\mathbf{s}_{\bm{\theta}}(\mathbf{z}_{t},t)$ has not been trained on $t\in[0,\epsilon)$, keep denoising up to the zero diffusion time would harm the sample fidelity. If $t_{min}$ is the stopping diffusion time of the ODE, one predictor step from $t_{min}$ to $0$ is applied to the noised sample, $\mathbf{z}_{t_{min}}$, in order to eliminate the residual noise in $\mathbf{z}_{t_{min}}$ to $\mathbf{z}_{0}$ \cite{jolicoeur2020adversarial}. Table \ref{tab:sampling_eps_search} searches the optimal stopping diffusion time, and it shows that the truncation time ($10^{-5}$) turns out to be the optimal stopping time. Throughout the paper, we report the FID (ODE) performance of our INDM with the training truncation time ($10^{-5}$). For VESDE, the ODE sampler fails to generate realistic images, so we do not report sample generation performance.
	
	\begin{wrapfigure}{r}{0.3\textwidth}
		\vskip -0.15in
		\begin{subfigure}{\linewidth}
			\centering
			\includegraphics[width=\linewidth]{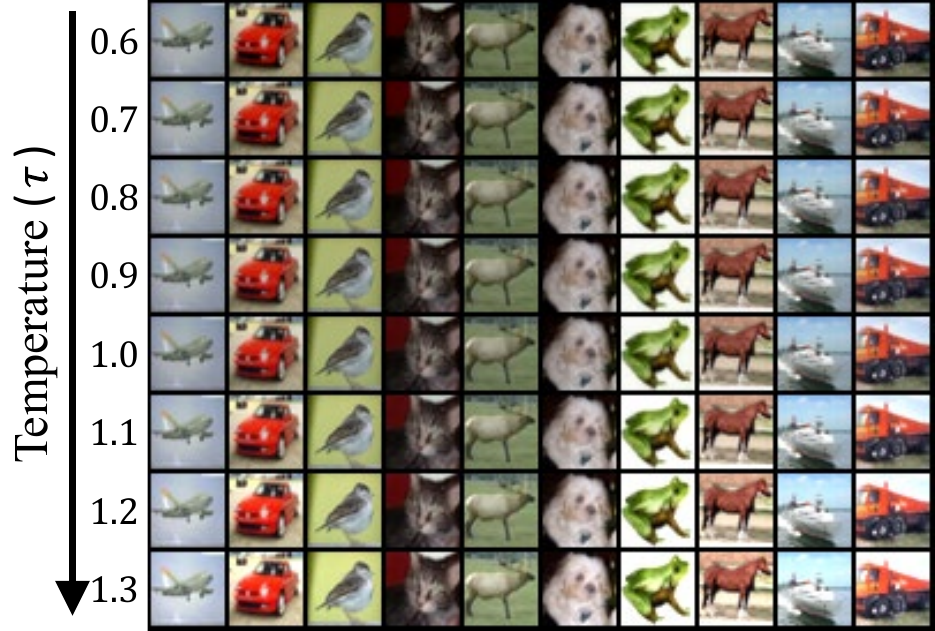}
		\end{subfigure}
		\caption{Ablation study for the flow temperature.}
		\label{fig:temperature}
		\vskip -0.3in
	\end{wrapfigure}
	In PC sampler, for the predictor, we use the Reverse Diffusion Predictor for VESDE and the Euler-Maruyama Predictor for VPSDE. For the corrector, we use the Langevin dynamics \cite{song2019generative} for VESDE, and we do not use any corrector for VPSDE. We use 1) Signal-to-Noise Ratio (SNR) scheduling, 2) temperature scheduling, 3) stopping time scheduling, and 4) data-adaptive prior than a fixed prior to improve FID. First, Table \ref{tab:snr_search} presents that the optimal SNR is 0.14, which is slightly different from the optimal SNR of 0.16 in the linear diffusion \cite{song2020score}. We use SNR of 0.14 as default in our PC sampling. 
	
	Second, as introduced in \citet{kingma2018glow}, we scale the generated latent, $\mathbf{z}_{0}^{\bm{\theta}}$, by multiplying the temperature. Table \ref{tab:temperature_search} presents that the optimal temperature for VPSDE is $1.04\sim 1.05$ in terms of FID on INDM (VP, FID) setting. We use the temperature of 1 for the remaining settings except INDM (VP, FID). With temperature $\tau$, the normalizing flow puts its latent input scaled by $\tau$ to the flow network. In Figure \ref{fig:temperature}, the image color with a higher temperature tends to be brighter, and we find that the optimal temperature depends on the experimental settings.
	
	Third, the stopping time scheduling is a method that manipulate the final denoising step. To attain the variance of VESDE as $\sigma^{2}(t)=\sigma_{min}^{2}(\frac{\sigma_{max}^{2}}{\sigma_{min}^{2}})^{2t}$, we should start the diffusion process of $\diff\mathbf{z}_{t}=\sigma^{2}(t)\diff\mathbf{w}_{t}$ at $t=-\infty$ because 
	\begin{align}\label{eq:vesde_variance}
	\sigma^{2}(t)=\int_{t_{0}}^{t}g^{2}(s)\diff s=\sigma_{min}^{2}\Big(\frac{\sigma_{max}^{2}}{\sigma_{min}^{2}}\Big)^{2t}
	\end{align}
	implies $t_{0}=-\infty$. If the generative SDE is $\diff\mathbf{z}_{t}=g^{2}(t)\mathbf{s}_{\bm{\theta}}(\mathbf{z}_{t},t)\diff \bar{t}+\sigma^{2}(t)\diff\mathbf{\bar{w}}_{t}$, then the Euler-Maruyama discretization is 
	\begin{align}\label{eq:Euler-Maruyama_discretization}
	\mathbf{z}_{t_{i}}\leftarrow\mathbf{z}_{t_{i+1}}+g^{2}(t_{i+1})(t_{i}-t_{i+1})\mathbf{s}_{\bm{\theta}}(\mathbf{z}_{t_{i+1}},t_{i+1})+\sqrt{\sigma(t_{i+1})^{2}-\sigma(t_{i})^{2}}\bm{\epsilon},
	\end{align}
	where $\bm{\epsilon}\sim\mathcal{N}(0,\mathbf{I})$. However, since the initial time of VESDE is $t=-\infty$, denoising the noised sample with the Euler-Maruyama discretization would incur arbitrary large error at the final step that denoises from $t=\epsilon$ to $t=-\infty$. Therefore, \citet{song2020score} suggested the reverse diffusion predictor that denoises by
	\begin{align}\label{eq:reverse_diffusion_discretization}
	\mathbf{z}_{\sigma^{-1}(\sigma_{i})}\leftarrow\mathbf{z}_{\sigma^{-1}(\sigma_{i+1})}+(\sigma_{i+1}^{2}-\sigma_{i}^{2})\mathbf{s}_{\bm{\theta}}(\mathbf{z}_{\sigma^{-1}(\sigma_{i+1})},\sigma_{i+1})+\sqrt{\sigma_{i+1}^{2}-\sigma_{i}^{2}}\bm{\epsilon},
	\end{align}
	which is equivalent to the Euler-Maruyama discretization if $t_{i}-t_{i+1}$ is small enough (because $\Delta \sigma^{2}(t)\approx g^{2}(t)\Delta t$ by Eq. \eqref{eq:vesde_variance}). The difference of Eqs. \eqref{eq:Euler-Maruyama_discretization} and \eqref{eq:reverse_diffusion_discretization} is minor as long as we denoise on the range of $[\epsilon,T]$, but only Eq. \eqref{eq:reverse_diffusion_discretization} enables to denoise from $t=\epsilon$ to $t=-\infty$.
	
	However, it turns out that the direction of the score network is not aligned to the direction of the data score near $t\approx 0$, so $\mathbf{s}_{\bm{\theta}}(\mathbf{z}_{\epsilon},\sigma_{min})$ would not be accurate enough to the perturbed data score. Therefore, the final denoising step of
	\begin{align*}
	\mathbf{z}_{-\infty}\leftarrow\mathbf{z}_{\epsilon}+\sigma_{min}^{2}\mathbf{s}_{\bm{\theta}}(\mathbf{z}_{\epsilon},\sigma_{min})+\sigma_{min}\bm{\epsilon},
	\end{align*}
	might not be mostly effective. This leads us to try the final step as
	\begin{align*}
	\mathbf{z}_{\epsilon-\delta}=\mathbf{z}_{\epsilon}+\frac{1}{2}\Big(\sigma^{2}(\epsilon)-\sigma^{2}(\epsilon-\delta)\Big)\mathbf{s}_{\bm{\theta}}(\mathbf{z}_{\epsilon},\sigma_{min}),
	\end{align*}
	for various $\delta\ge 0$. After the denoising up to $\mathbf{z}_{\epsilon-\delta}$, we apply the inverse of the flow network to obtain $\mathbf{x}_{\epsilon-\delta}=\mathbf{h}_{\bm{\phi}}^{-1}(\mathbf{z}_{\epsilon-\delta})$, and Table \ref{tab:line_search} presents that there is a sweet spot ($\mathbf{x}_{-0.5}\sim\mathbf{x}_{-0.75}$) that works the best in terms of FID. We report the line searched FID performance for each of VESDE setting. 
	
	Lastly, we use $p_{T}^{\bm{\phi}}$ instead of $\pi$ to sample from INDM (VE). This data-adaptive prior is particularly beneficial on the experiment of VESDE. In INDM (VE), the data-adaptive prior reduces FID-5k from 8.14 to 7.52, so we use this technique by default throughout out performance report. In INDM (VP), this technique is not effective, and we use the vanilla prior distribution. The reason why the data-adaptive prior is effective in VESDE is because the discrepancy of VESDE between $p_{T}^{\bm{\phi}}$ and $\pi$ is significantly larger than VPSDE, see Figure 5 of \citet{chen2021likelihood}.
	
	We compute FID \citep{heusel2017gans} for CIFAR-10 based on the statistics released by \citet{song2020score}\footnote{\url{https://github.com/yang-song/score_sde_pytorch}}, which used the modified Inception V1 network\footnote{\url{https://tfhub.dev/tensorflow/tfgan/eval/inception/1}} in order to compare INDM to the baselines \citep{song2020score, song2021maximum} in a fair setting. On the other hand, for the CelebA dataset, we compute the clean-FID \citep{parmar2022aliased} that provides consistently antialiased performance.
	
	\begin{figure}[t]
		\centering
		\includegraphics[width=\linewidth]{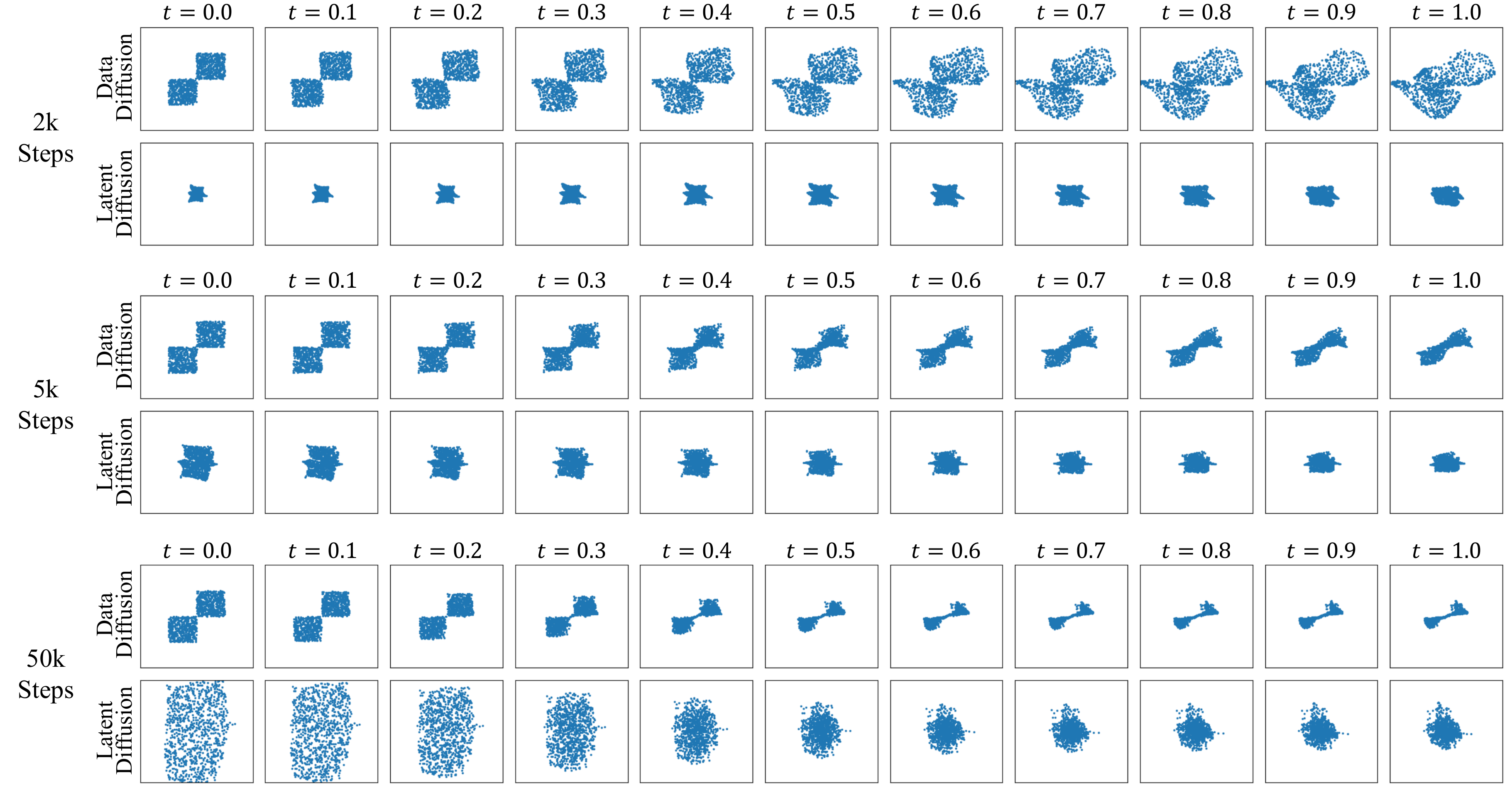}
		\caption{Comparison of data and latent diffusions by training steps of Checkerboard.}
		\label{fig:checkerboard}
	\end{figure}
	
	\subsubsection{Interpolation Task}\label{appendix:interpolation__}
	
	For the interpolation task, we provide the line-by-line algorithm in Algorithm \ref{alg:interpolation}. We train with the likelihood weighting as default for our experiment on the dataset interpolation. The interpolation loss of $\mathcal{L}_{int}$ consumes 0.2Gb of GPU memory, and the INDM loss of $\mathcal{L}_{INDM}$ takes 2.5Gb of GPU memory in the EMNIST-MNIST experiment.
	\begingroup
	\renewcommand\thealgorithm{2}
	\begin{algorithm}[H]
		\centering
		\caption{Data Interpolation of INDM}\label{alg:interpolation}
		\begin{algorithmic}[1]
			\Repeat
			\State Compute $\mathcal{L}_{INDM}=\mathcal{L}(\{\mathbf{x}_{t}\}_{t=0}^{T}, g^{2};\{\bm{\phi},\bm{\theta}\})$ for $\mathbf{x}_{0}\sim p_{r}^{(1)}$
			\State Compute $\mathcal{L}_{int}=\mathbb{E}_{p_{r}^{(2)}}[-\log{p_{\bm{\phi}}(\mathbf{y})}]$ for $\mathbf{y}\sim p_{r}^{(2)}$
			\State Compute $\mathcal{L}_{tot}=\mathcal{L}_{INDM}+\mathcal{L}_{int}$
			\State Update $\bm{\phi}\leftarrow \bm{\phi}-\frac{\partial\mathcal{L}_{tot}}{\partial\bm{\phi}}$
			\State Update $\bm{\theta}\leftarrow \bm{\theta}-\frac{\partial\mathcal{L}_{tot}}{\partial\bm{\theta}}$
			\Until {converged}
		\end{algorithmic}
	\end{algorithm}
	\endgroup
	
	\subsection{Effect of Pre-training}\label{appendix:pretraining}
	
	\begin{wraptable}{r}{0.5\textwidth}
		\vskip -0.38in
		\caption{Ablation study on pre-training.}
		\label{tab:pretraining}
		\scriptsize
		\centering
		\begin{tabular}{lccccc}
			\toprule
			$\#$ Pre-training Steps & 100k & 200k & 300k & 400k & 500k\\\midrule
			NLL & 3.00 & 2.99 & 2.99 & 2.99 & 2.98 \\
			FID & 7.39 & 7.31 & 6.80 & 6.65 & 6.22 \\
			\bottomrule
		\end{tabular}
		\vskip -0.23in
	\end{wraptable}
	We find that training INDM with a pre-trained score network of linear diffusion models improves FID. Table \ref{tab:pretraining} conducts the ablation study on the number of pre-training steps. We pre-train the score network with DDPM++ (VP, NLL) for five variations of pre-training steps (100k/200k/300k/400k/500k), and we train flow+score networks for 350k steps further with NLL setting ($\lambda=g^{2}$). Table \ref{tab:pretraining} empirically demonstrates that it is better to search the nonlinearity of the data process near the linear process. For this clear empirical advantage of pre-training, we report the quantitative performances in Section \ref{sec:experiments} with pre-training.
	
	\subsection{Training Time}\label{appendix:training_time}
	
	\begin{table*}[t]
		\caption{Elapsed time per a training step by discretization.}
		\label{tab:elapsed_time}
		\scriptsize
		\centering
		\begin{tabular}{lcccc}
			\toprule
			Model & Complexity & $N=100$ & $N=1,000$ & $N=\infty$ \\\midrule
			DDPM & $O(1)$ & 0.27 & 0.27 & 0.27 \\
			SBP (w/o experience memory) & $O(N)$ & 2.83 & 23.3 & $\infty$ \\
			SBP (w/ experience memory) & $O(N)$ & 0.52 & 2.39 & $\infty$ \\
			DiffFlow & $O(N)$ & 18.45 & 180.88 & $\infty$ \\
			INDM & $O(1)$ & 1.69 & 1.69 & 1.69 \\
			\bottomrule
		\end{tabular}
	\end{table*}
	
	\begin{table*}[t]
		\caption{Total training time in a single GPU days.}
		\label{tab:training_time}
		\scriptsize
		\centering
		\begin{tabular}{lcccccc}
			\toprule
			Model & Total Training Time (GPU Days) & Training Steps & GPU Spec & $\#$GPUs & NLL & FID \\\midrule
			DDPM++ & 5 & 500k & P40 & 1 & 3.03 & 6.70 \\
			LSGM & 44 & 450k & RTX 3090 & 8 & 2.87 & 6.89 \\
			SBP & 3 & 260k & RTX 3090 & - & 2.98 & 3.18 \\
			DiffFlow & 32 & 100k & RTX 2080 & 8 & 3.04 & 14.14 \\
			INDM (including pre-training time) & 25 & 700k & P40 & 4 & 2.98 & 6.01 \\
			INDM (w/o pre-training) & 60 & 600k & P40 & 4 & 2.98 & 8.49 \\
			\bottomrule
		\end{tabular}
	\end{table*}
	
	Table \ref{tab:elapsed_time} presents the elapsed time per a training step by the number of discretization on CIFAR-10. In contrast to INDM which is invariant on the choice of $N$, the training time of SBP and DiffFlow is not scalable for their $O(N)$ complexities. The training time is measured under the identical computing resource (1x NVIDIA RTX 3090/Intel I7 3.8GHz) and the same batch size (32) to compare INDM with baselines in a fair setting.
	
	Table \ref{tab:training_time} compares INDM with baselines with respect to a single GPU-time for the total training time on CIFAR-10. The remaining columns including training steps, GPU Spec, NLL, and FID are reported for the reference. For DiffFlow, we present the reported GPU days in the paper. For LSGM and SBP, we estimate the elapsed time with the released training configuration in their papers and GitHub repositories. For DDPM++ and INDM, we report the elapsed time from our own experiments. From the table, the overall training time of INDM/DiffFlow/LSGM remains at a similar scale. SBP is the fastest algorithm because of the experience replay memory technique. Note that a completely fair comparison between algorithms is infeasible because the training setup (e.g. $\#$GPUs, training steps, network size …) varies by algorithms. Also, P40 is strictly slower than RTX series GPUs.
	
	\begin{wrapfigure}{r}{0.5\textwidth}
		\vskip -0.45in	
		\begin{subfigure}{0.48\linewidth}
			\includegraphics[width=\linewidth]{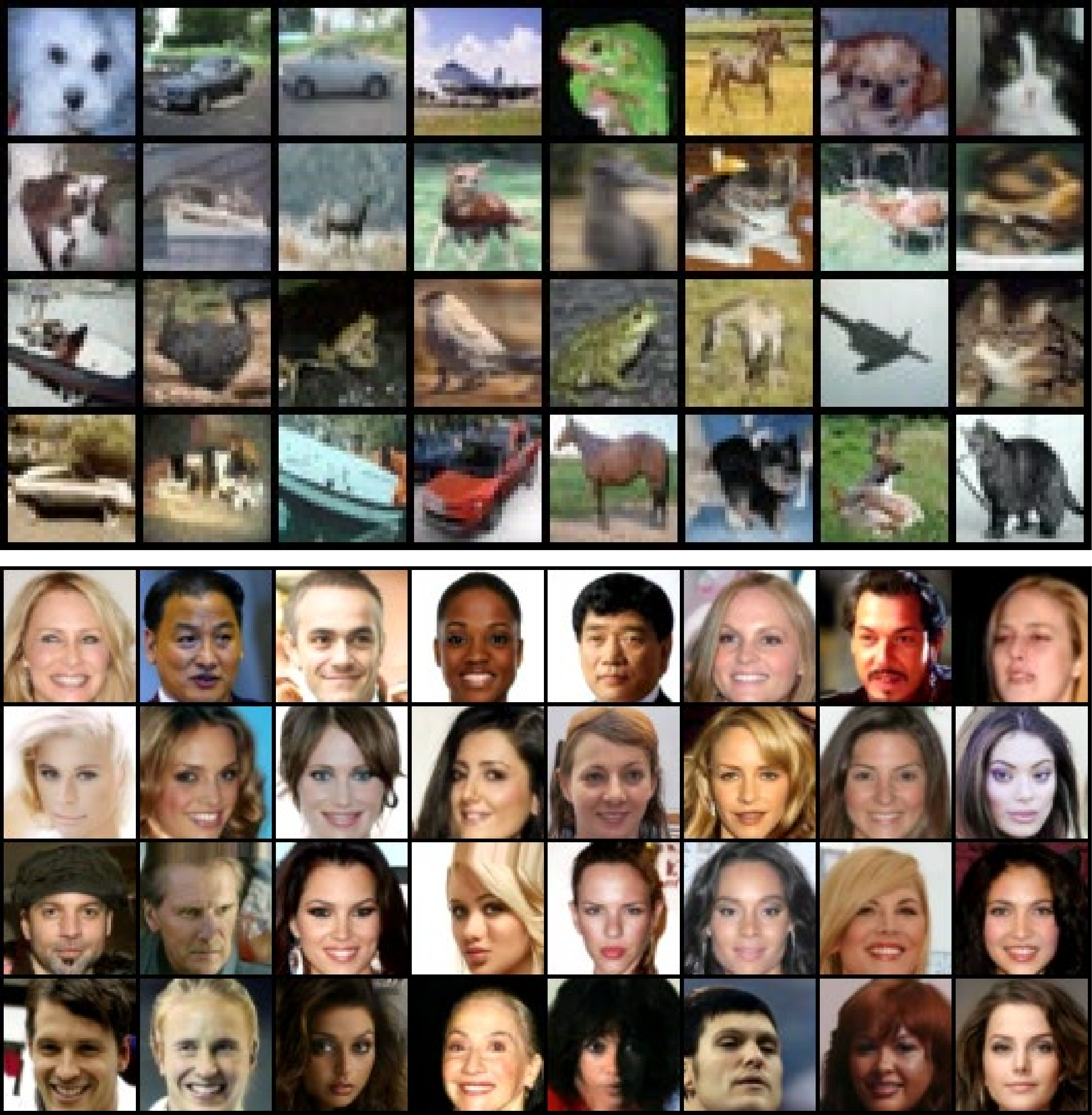}
			\subcaption{Samples from $\mathbf{x}_{0}^{\bm{\phi},\bm{\theta}}$}
		\end{subfigure}
		\begin{subfigure}{0.48\linewidth}
			\includegraphics[width=\linewidth]{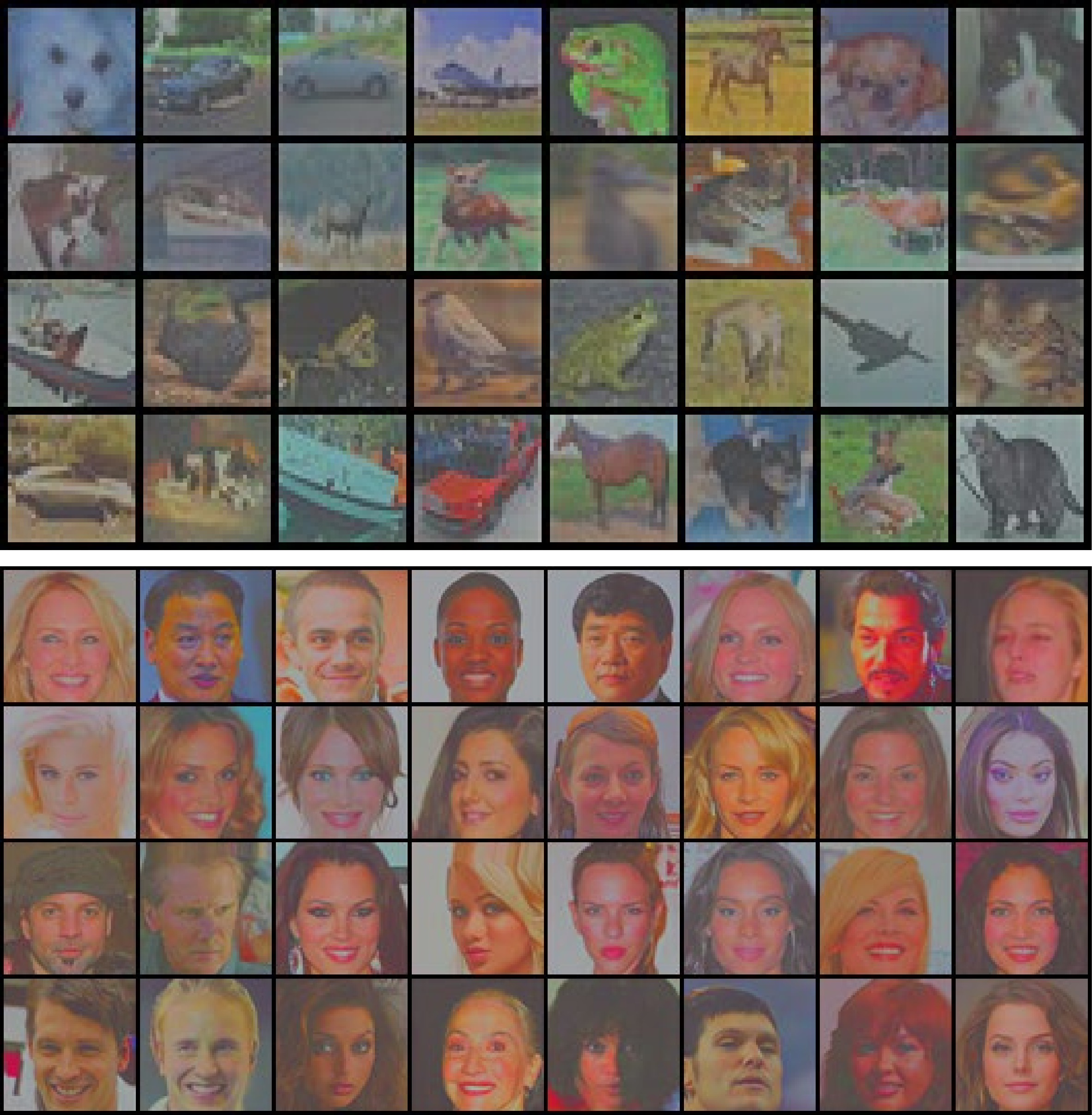}
			\subcaption{Samples from $\mathbf{z}_{0}^{\bm{\theta}}$}
		\end{subfigure}
		\vskip -0.05in
		\caption{Samples from the data space and latent space on CIFAR-10 and CelebA.}
		\label{fig:samples_}
		\vskip -0.15in
	\end{wrapfigure}
	\subsection{Visualization of Latent}
	
	\subsubsection{Visualization of 2d Latent Manifold}
	
	Figure \ref{fig:checkerboard} illustrates the data and latent manifolds of the 2d checkerboard dataset by training steps. The data manifold has the singularity at the origin, but this singularity disappears in the latent manifold after the training.
	
	\subsubsection{Visualization of High-dimensional Latent Vector on Benchmark Datasets}\label{appendix:visualization}
	
	Figure \ref{fig:samples_} illustrates the samples from (a) the data space and (b) the latent space. To visualize the latent vectors, we normalize the latent value into the $[0,1]^{d}$ space.

	\subsection{Nonlinear Diffusion Coefficient}\label{appendix:nonlinear_term}
	
	\begin{wrapfigure}{r}{0.6\textwidth}
		\vskip -0.2in	
		\begin{subfigure}{0.48\linewidth}
			\includegraphics[width=\linewidth]{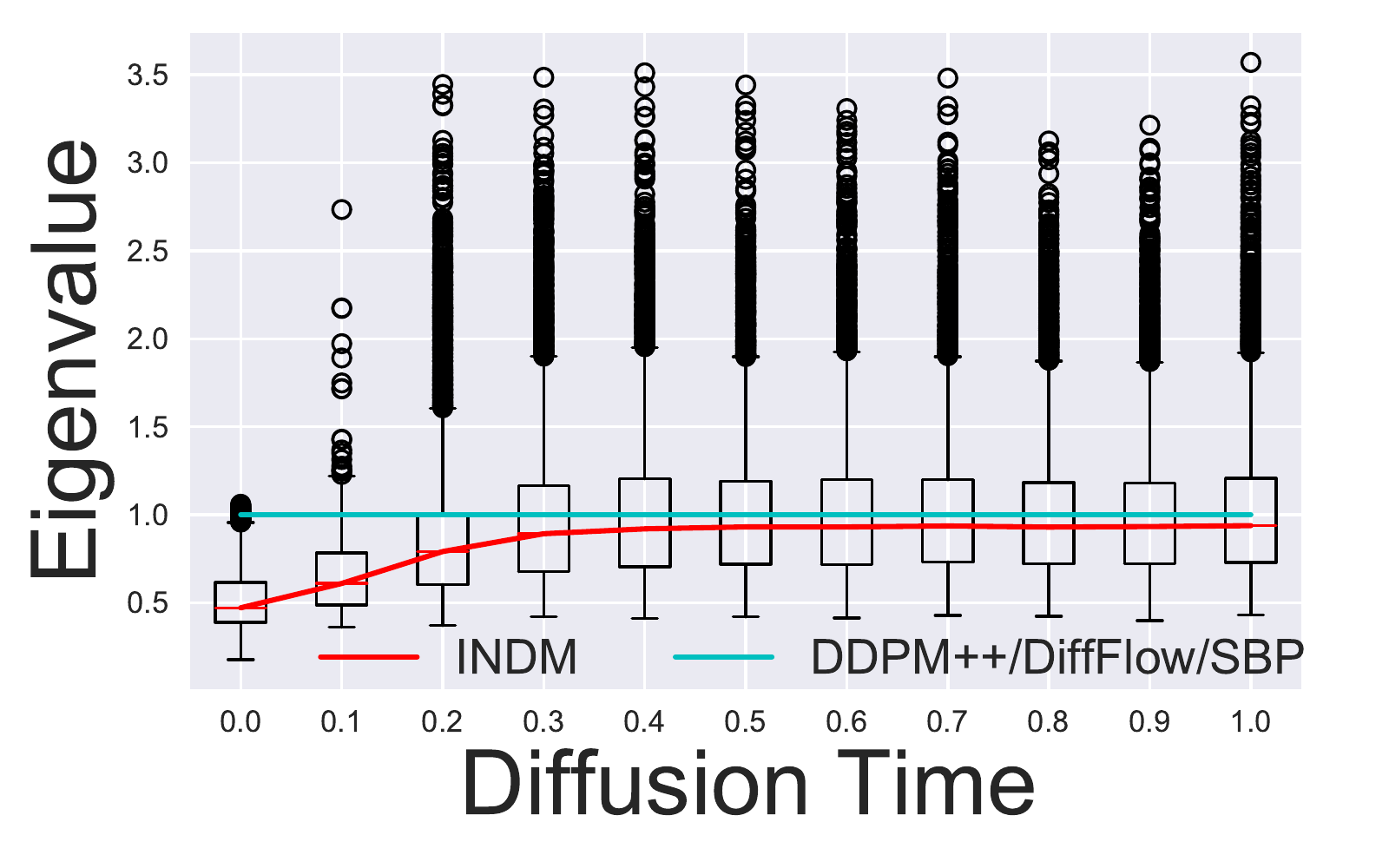}
			\subcaption{Eigenvalue by $t$}
		\end{subfigure}
		\begin{subfigure}{0.48\linewidth}
			\includegraphics[width=\linewidth]{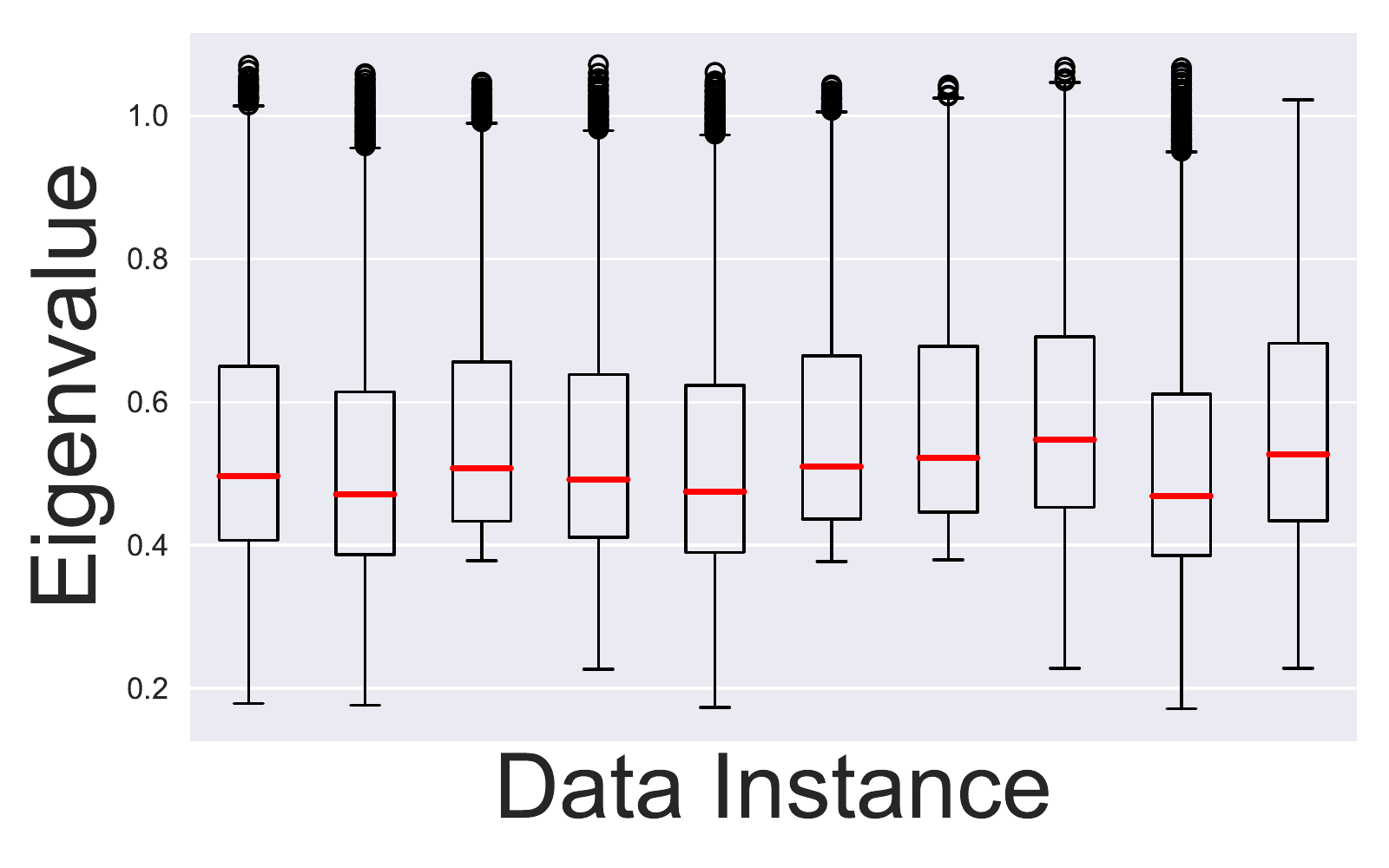}
			\subcaption{Eigenvalue by $\mathbf{x}_{t}$}
		\end{subfigure}
		\vskip -0.05in
		\caption{Eigenvalue of $\mathbf{G}\mathbf{G}^{T}/g^{2}$ on CIFAR-10.}
		\label{fig:eigenvalue}
		\vskip -0.15in
	\end{wrapfigure}
	INDM trains the volatility term, $\mathbf{G}_{\bm{\phi}}$, which was fixed across previous research, except LSGM. The exact form of $\mathbf{G}_{\bm{\phi}}$ in LSGM, however, is not derivable, so we exclude comparing LSGM in this section. As stated in Section \ref{sec:motivation}, the noise distribution of a diffusion process is $\mathcal{N}(0,\mathbf{G}(\mathbf{x}_{t},t)\mathbf{G}^{T}(\mathbf{x}_{t},t))$, which is anisotropic by the input data, $\mathbf{x}_{t}$, and time, $t$. The influence of diffusion time on this covariance matrix is illustrated in Figure \ref{fig:eigenvalue}-(a). It presents the box plot of the eigenvalue distribution of the (normalized) covariance, $\mathbf{G}_{\bm{\phi}}(\mathbf{x}_{t},t)\mathbf{G}_{\bm{\phi}}^{T}(\mathbf{x}_{t},t)/g^{2}(t)$, on CIFAR-10, from $t=0$ to $t=T$. All the eigenvalues of previous research collapse to one as they share the isotropic covariance matrix, $g^{2}(t)\mathbf{I}$. On the other hand, the eigenvalues of INDM is dispersive throughout the diffusion time. As the distribution becomes more dispersive, the covariance matrix becomes more unisotropic, and Figure \ref{fig:eigenvalue}-(a) implies that the learnt diffusion process is under a highly nonlinear noise perturbation in a range of large diffusion time. The covariance matrix also depends on the input data, and Figure \ref{fig:eigenvalue}-(b) illustrates the eigenvalue distribution of the covariance at distinctive data instances of $\mathbf{x}_{t}$ at $t=0$. The eigenvalue distribution varies by instance, implying that data is diffused inhomogeneously by its location.
	
	\subsection{Relative Energy}\label{appendix:admissibility}
	
	\begin{wraptable}{r}{0.25\textwidth}
		\vskip 0.00in
		\caption{\textit{Relative} Energy.}
		\label{tab:relative_energy}
		\vskip -0.05in
		\tiny
		\centering
		\begin{tabular}{cc}
			\toprule
			Model & \textit{Relative} Energy \\\midrule
			DDPM++ & 1.60 \\
			\cc{15}INDM & \cc{15}1.23 \\
			\bottomrule
		\end{tabular}
		\vskip -0.2in
	\end{wraptable}
	Each flow network parameter constructs a different latent trajectory, so training the flow network has the effect of shifting the diffusion bridge. To check if learning the flow network is helpful for the transportation cost or not, recall the Benamou-Brenier formula \cite{benamou2000computational, villani2009optimal}, which is a dual formulation of the Wasserstein distance \cite{villani2009optimal, sriperumbudur2010hilbert} that the optimal transportation cost is the least kinetic energy out of all admissible transportation plans: $W_{2}^{2}(p,q)=\inf_{\{p_{t},\mathbf{v}_{t}\}_{t}}\bigg\{\mathcal{K}(\{p_{t},\mathbf{v}_{t}\}_{t});\underbrace{\frac{\partial p_{t}}{\partial t}+\text{div}(p_{t}\mathbf{v}_{t})=0}_{\text{continuity equation}}, \underbrace{p_{0}=p, p_{T}=q}_{\text{boundary conditions}}\bigg\}$, where $\mathcal{K}(\{p_{t},\mathbf{v}_{t}\}_{t}):=\int_{0}^{T}\int p_{t}(\mathbf{x})\Vert\mathbf{v}_{t}(\mathbf{x})\Vert_{2}^{2}\diff\mathbf{x}\diff t/T$ is the kinetic energy of the transportation. The continuity equation (that guarantees the conservation of mass along time transition \cite{evans1998partial}) and the boundary conditions determine the set of admissible trajectories, and the forward diffusion constructs an admissible trajectory. We \textit{quantify} how much a trajectory is close to the optimal transport as the \textit{relative} energy, given by $R(\bm{\phi})=\frac{\mathcal{K}(\{p_{t}^{\bm{\phi}},\mathbf{v}_{t}^{\bm{\phi}}\}_{t})}{W_{2}^{2}(p_{0}^{\bm{\phi}},p_{T}^{\bm{\phi}})}$. Table \ref{tab:relative_energy} shows that INDM's latent diffusion is more close to the optimal transport than DDPM++ on CIFAR-10.
	
	\subsection{Full Quantitative Tables}\label{appendix:quantitative_tables}
	
	Tables \ref{tab:performance_cifar10_full}, \ref{tab:performance_cifar10_appendix}, and \ref{tab:performance_celeba_appendix} gives the full details of the quantitative comparisons to baseline models.
	
	\subsection{Random samples}\label{appendix:samples}
	
	Figures \ref{fig:CIFAR10_VE_FID_samples_tau_1.05} and \ref{fig:CelebA_VP_FID_samples_tau_1.11} show the non cherry-picked random samples from INDM (VE, FID) on CIFAR-10 and INDM (VP, FID) on CelebA, respectively.
	
	\section{Proofs of Theorems and Propositions}\label{appendix:proofs}
	
	\begingroup
	\renewcommand\thetheorem{1}
	\begin{theorem}\label{thm_app:1}
		Suppose that $p_{\bm{\phi},\bm{\theta}}(\mathbf{x}_{0})$ is the likelihood of a generative random variable $\mathbf{x}_{0}^{\bm{\phi},\bm{\theta}}$. Then, the negative log-likelihood is bounded by
		\begin{align*}
		\mathbb{E}_{p_{r}(\mathbf{x}_{0})}\big[-\log{p_{\bm{\phi},\bm{\theta}}(\mathbf{x}_{0})}\big]\le\mathcal{L}\big(\{\mathbf{x}_{t}\}_{t=0}^{T},g^{2};\{\bm{\phi},\bm{\theta}\}\big),
		\end{align*}
		where
		\begin{align*}
		\begin{split}
		&\mathcal{L}\big(\{\mathbf{x}_{t}\}_{t=0}^{T},g^{2};\{\bm{\phi},\bm{\theta}\}\big)=-\mathbb{E}_{p_{r}(\mathbf{x}_{0})}\big[\log{\big\vert\det\big(\nabla_{\mathbf{x}_{0}}\mathbf{h}_{\bm{\phi}}\big)\big\vert}\big]\\
		&\quad\quad\quad\quad\quad+\mathcal{L}\big(\{\mathbf{z}_{t}\}_{t=0}^{T},g^{2};\bm{\theta}\big)-\mathbb{E}_{\mathbf{z}_{T}}\big[\log{\pi(\mathbf{z}_{T})}\big]+\frac{d}{2}\int_{0}^{T}\beta(t)-\frac{g^{2}(t)}{\sigma^{2}(t)}\diff t,
		\end{split}
		\end{align*}
		with $\mathcal{L}\big(\{\mathbf{z}_{t}\}_{t=0}^{T},g^{2};\bm{\theta}\big):=\frac{1}{2}\int_{0}^{T}g^{2}(t)\mathbb{E}_{\mathbf{z}_{0},\mathbf{z}_{t}}\big[\Vert\mathbf{s}_{\bm{\theta}}(\mathbf{z}_{t},t)-\nabla_{\mathbf{z}_{t}}\log{p_{0t}(\mathbf{z}_{t}\vert\mathbf{z}_{0})}\Vert_{2}^{2}\big]\diff t$. Here, $p_{0t}(\mathbf{z}_{t}\vert\mathbf{z}_{0})$ is the transition probability of the forward linear diffusion process on latent space.
	\end{theorem}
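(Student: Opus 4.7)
The plan is to exploit the fact that INDM is, by construction, a linear diffusion on the latent space pushed back to data space by an invertible map $\mathbf{h}_{\bm{\phi}}$. Because $\mathbf{h}_{\bm{\phi}}$ is a bijection with full-rank Jacobian, the data-space and latent-space path measures differ only by a deterministic change of variables, so every quantity on the data side can be rewritten as an equivalent quantity on the latent side, where the existing theory of \citet{song2021maximum} for linear SDEs applies directly. The two displayed forms of $\mathcal{L}$ correspond to two equivalent viewpoints: Eq. (1) is the KL between forward and generative path measures on data space, and Eq. (2) is its tractable decomposition into a flow log-determinant, a denoising score loss on the latent diffusion, and a cross-entropy against the prior.

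First I would establish the inequality itself. By the change-of-variables formula, $\log p_{\bm{\phi},\bm{\theta}}(\mathbf{x}_{0}) = \log q_{\bm{\theta}}(\mathbf{h}_{\bm{\phi}}(\mathbf{x}_{0})) + \log\lvert\det\nabla\mathbf{h}_{\bm{\phi}}(\mathbf{x}_{0})\rvert$, where $q_{\bm{\theta}}$ is the latent generative density at $t=0$. Taking expectation under $p_{r}$ and noting that $\mathbf{h}_{\bm{\phi}}{}_{\#}p_{r}=p_{0}^{\bm{\phi}}$ is the initial distribution of the latent forward diffusion, I reduce the data NELBO to $\mathbb{E}_{p_{0}^{\bm{\phi}}}[-\log q_{\bm{\theta}}]-\mathbb{E}_{p_{r}}[\log\lvert\det\nabla\mathbf{h}_{\bm{\phi}}\rvert]$. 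The first term is precisely the negative log-likelihood of a linear VP-style diffusion model with data distribution $p_{0}^{\bm{\phi}}$, so I can invoke the variational bound of \citet{song2021maximum}
\[
\mathbb{E}_{p_{0}^{\bm{\phi}}}[-\log q_{\bm{\theta}}(\mathbf{z}_{0})]\le \mathcal{L}(\{\mathbf{z}_{t}^{\bm{\phi}}\}_{t=0}^{T},g^{2};\bm{\theta})-\mathbb{E}_{\mathbf{z}_{T}^{\bm{\phi}}}[\log\pi(\mathbf{z}_{T}^{\bm{\phi}})]+C,
\]
where the constant $C=\tfrac{d}{2}\int_{0}^{T}\beta(t)-g^{2}(t)/\sigma^{2}(t)\,\diff t$ collects the Gaussian-entropy offsets from rewriting the true-score loss as a denoising-score loss with transition kernel $p_{0t}(\mathbf{z}_{t}\vert\mathbf{z}_{0})$. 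Adding back $-\mathbb{E}_{p_{r}}[\log\lvert\det\nabla\mathbf{h}_{\bm{\phi}}\rvert]$ yields Eq. (2).

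Next I would identify Eq. (1) with Eq. (2). Applying Girsanov's theorem and the martingale property to the latent forward and generative SDEs (both having the same volatility $g(t)$) gives $D_{KL}(\bm{\mu}_{\bm{\phi}}^{\mathbf{z}}\Vert\bm{\nu}_{\bm{\phi},\bm{\theta}}^{\mathbf{z}})=\tfrac{1}{2}\int_{0}^{T}g^{2}(t)\mathbb{E}_{\mathbf{z}_{t}^{\bm{\phi}}}[\lVert\mathbf{s}_{\bm{\theta}}-\mathbf{s}_{\bm{\phi}}\rVert_{2}^{2}]\,\diff t+D_{KL}(p_{T}^{\bm{\phi}}\Vert\pi)$. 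Because $\mathbf{h}_{\bm{\phi}}$ is a bijection, the data-space path measures $\bm{\mu}_{\bm{\phi}}$ and $\bm{\nu}_{\bm{\phi},\bm{\theta}}$ are obtained by pushing $\bm{\mu}_{\bm{\phi}}^{\mathbf{z}}$ and $\bm{\nu}_{\bm{\phi},\bm{\theta}}^{\mathbf{z}}$ through the same deterministic map at every $t$; the KL divergence is invariant under such a common invertible pushforward, so $D_{KL}(\bm{\mu}_{\bm{\phi}}\Vert\bm{\nu}_{\bm{\phi},\bm{\theta}})=D_{KL}(\bm{\mu}_{\bm{\phi}}^{\mathbf{z}}\Vert\bm{\nu}_{\bm{\phi},\bm{\theta}}^{\mathbf{z}})$, which is exactly the right-hand side of Eq. (1). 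Finally, substituting the standard denoising-score identity, which expresses the true-score loss against $\mathbf{s}_{\bm{\phi}}$ as the tractable loss against $\nabla\log p_{0t}(\cdot\vert\mathbf{z}_{0}^{\bm{\phi}})$ up to a constant depending only on $\beta,g,\sigma$, connects Eq. (1) to Eq. (2).

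The main obstacle I anticipate is the careful bookkeeping for the invariance of the KL under the flow pushforward at the level of path measures (as opposed to finite-dimensional marginals), since one must argue that the Radon-Nikodym derivative between the pushed measures agrees almost surely with the pullback of the Radon-Nikodym derivative on the latent side. Once that is granted, the rest is algebraic manipulation: applying Girsanov on the latent side where the SDE is genuinely linear with closed-form Gaussian transitions, and then absorbing the divergence/entropy correction into the explicit constant $\tfrac{d}{2}\int_{0}^{T}\beta(t)-g^{2}(t)/\sigma^{2}(t)\,\diff t$ that the theorem statement explicitly drops in the main text.
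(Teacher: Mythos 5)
Your proposal is correct and lands on essentially the same argument as the paper: reduce to the latent side via the invertibility of $\mathbf{h}_{\bm{\phi}}$, then invoke the linear-diffusion variational bound of \citet{song2021maximum}. The bookkeeping differs slightly and, if anything, is a touch more direct than the paper's. The paper writes $\mathrm{NLL}=D_{KL}(p_r\Vert p_{\bm{\phi},\bm{\theta}})+\mathcal{H}(p_r)$, changes variables on the entropy $\mathcal{H}(p_r)=-\mathbb{E}[\log\lvert\det\nabla\mathbf{h}_{\bm{\phi}}\rvert]+\mathcal{H}(p_0^{\bm{\phi}})$, and then works through Theorem 4 and Theorem 1 of \citet{song2021maximum} explicitly on the latent side. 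You instead change variables on the generative density directly, $p_{\bm{\phi},\bm{\theta}}(\mathbf{x}_0)=q_{\bm{\theta}}(\mathbf{h}_{\bm{\phi}}(\mathbf{x}_0))\lvert\det\nabla\mathbf{h}_{\bm{\phi}}(\mathbf{x}_0)\rvert$, which gives $\mathrm{NLL}=\mathbb{E}_{p_0^{\bm{\phi}}}[-\log q_{\bm{\theta}}]-\mathbb{E}_{p_r}[\log\lvert\det\nabla\mathbf{h}_{\bm{\phi}}\rvert]$ in one line, and then black-box the latent NLL bound. The two decompositions are algebraically the same (since $\mathbb{E}_{p_0^{\bm{\phi}}}[-\log q_{\bm{\theta}}]=D_{KL}(p_0^{\bm{\phi}}\Vert q_{\bm{\theta}})+\mathcal{H}(p_0^{\bm{\phi}})$), but your route has the nice feature that for the tractable form of the bound you never need the path-measure pushforward invariance $D_{KL}(\bm{\mu}_{\bm{\phi}}\Vert\bm{\nu}_{\bm{\phi},\bm{\theta}})=D_{KL}(\bm{\mu}_{\bm{\phi}}^{\mathbf{z}}\Vert\bm{\nu}_{\bm{\theta}}^{\mathbf{z}})$ that you flag as the main obstacle; that identity is only needed to interpret the bound as a data-space KL (the main-text Eq.~\eqref{main_eq:nelbo}), not to prove the appendix form. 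One small imprecision: the constant $\tfrac{d}{2}\int_0^T\beta(t)-g^2(t)/\sigma^2(t)\,\diff t$ is not entirely a denoising-rewrite artifact. The $-\tfrac{d}{2}\int g^2/\sigma^2$ piece indeed comes from $\mathbb{E}[\Vert\nabla\log p_{0t}(\mathbf{z}_t\vert\mathbf{z}_0)\Vert_2^2]=d/\sigma^2(t)$, but the $+\tfrac{d}{2}\int\beta$ piece comes from the drift divergence $\mathrm{div}(-\tfrac12\beta(t)\mathbf{z})=-\tfrac{d}{2}\beta(t)$ entering the entropy identity $\mathcal{H}(p_0^{\bm{\phi}})-\mathcal{H}(p_T^{\bm{\phi}})$ (Theorem 4 of \citet{song2021maximum}). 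This does not affect correctness, only attribution.
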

	\endgroup
	
	\begin{proof}[Proof of Theorem \ref{thm_app:1}]
		From the change of variable, the transformation of $\mathbf{z}_{0}=\mathbf{h}_{\bm{\phi}}(\mathbf{x}_{0})$ induces
		\begin{align*}
		p_{r}(\mathbf{x}_{0})=\frac{p_{0}(\mathbf{z}_{0})}{\Big\vert\det\Big(\frac{\partial\mathbf{h}_{\bm{\phi}}}{\partial\mathbf{x}_{0}}\Big)\Big\vert^{-1}},
		\end{align*}
		and thus the entropy of the data distribution becomes
		\begin{eqnarray*}
			\lefteqn{\mathcal{H}(p_{r})=-\int p_{r}(\mathbf{x}_{0})\log{p_{r}(\mathbf{x}_{0})}\diff\mathbf{x}_{0}}\\
			&&=-\int p_{0}(\mathbf{z}_{0})\log{\frac{p_{0}(\mathbf{z}_{0})}{\Big\vert\det\Big(\frac{\partial\mathbf{h}_{\bm{\phi}}}{\partial\mathbf{x}_{0}}\Big)\Big\vert^{-1}}}\diff\mathbf{z}_{0}\\
			&&=-\int p_{r}(\mathbf{x}_{0})\log{\Big\vert\det\Big(\frac{\partial\mathbf{h}_{\bm{\phi}}}{\partial\mathbf{x}_{0}}\Big)\Big\vert}\diff\mathbf{x}_{0}-\int p_{0}(\mathbf{z}_{0})\log{p_{0}(\mathbf{z}_{0})}\diff\mathbf{z}_{0}\\
			&&=-\mathbb{E}_{p_{r}(\mathbf{x}_{0})}\Big[\log{\Big\vert\det\Big(\frac{\partial\mathbf{h}_{\bm{\phi}}}{\partial\mathbf{x}_{0}}\Big)\Big\vert}\Big]-\int p_{0}(\mathbf{z}_{0})\log{p_{0}(\mathbf{z}_{0})}\diff\mathbf{z}_{0}\\
			&&=-\mathbb{E}_{p_{r}(\mathbf{x}_{0})}\Big[\log{\Big\vert\det\Big(\frac{\partial\mathbf{h}_{\bm{\phi}}}{\partial\mathbf{x}_{0}}\Big)\Big\vert}\Big]+\mathcal{H}(p_{0}).
		\end{eqnarray*}
		
		From Theorem 4 of \citet{song2021maximum}, the entropy at $t=0$ equals to
		\begin{align*}
		\mathcal{H}(p_{0})=\mathcal{H}(p_{T})-\frac{1}{2}\int_{0}^{T}\mathbb{E}_{p_{t}(\mathbf{z}_{t})}\big[2\nabla_{\mathbf{z}_{t}}\cdot\mathbf{f}(\mathbf{z}_{t},t)+g^{2}(t)\Vert\nabla_{\mathbf{z}_{t}}\log{p_{t}(\mathbf{z}_{t})}\Vert_{2}^{2}\big]\diff t,
		\end{align*}
		where $\mathbf{f}(\mathbf{z}_{t},t)$ is a drift term of the diffusion for $\mathbf{z}_{t}$ and $p_{t}$ is the probability distribution of $\mathbf{z}_{t}$. Therefore, the negative log-likelihood becomes
		\begin{eqnarray*}
			\lefteqn{-\mathbb{E}_{p_{r}(\mathbf{x}_{0})}\big[\log{p_{\bm{\phi},\bm{\theta}}(\mathbf{x}_{0})}\big]=D_{KL}(p_{r}\Vert p_{\bm{\phi},\bm{\theta}})+\mathcal{H}(p_{r})}\\
			&&\le D_{KL}(\bm{\mu}_{\bm{\phi}}(\{\mathbf{x}_{t}\})\Vert\bm{\nu}_{\bm{\phi},\bm{\theta}}(\{\mathbf{x}_{t}\}))+\mathcal{H}(p_{r})\\
			&&= D_{KL}(\bm{\mu}_{\bm{\phi}}(\{\mathbf{x}_{t}\})\Vert\bm{\nu}_{\bm{\phi},\bm{\theta}}(\{\mathbf{x}_{t}\}))-\mathbb{E}_{p_{r}(\mathbf{x}_{0})}\Big[\log{\Big\vert\det\Big(\frac{\partial\mathbf{h}_{\bm{\phi}}}{\partial\mathbf{x}_{0}}\Big)\Big\vert}\Big]+\mathcal{H}(p_{0})\\
			&&= D_{KL}(\bm{\mu}_{\bm{\phi}}(\{\mathbf{z}_{t}\})\Vert\bm{\nu}_{\bm{\theta}}(\{\mathbf{z}_{t}\}))-\mathbb{E}_{p_{r}(\mathbf{x}_{0})}\Big[\log{\Big\vert\det\Big(\frac{\partial\mathbf{h}_{\bm{\phi}}}{\partial\mathbf{x}_{0}}\Big)\Big\vert}\Big]+\mathcal{H}(p_{T})\\
			&&\quad-\frac{1}{2}\int_{0}^{T}\mathbb{E}_{\mathbf{z}_{t}^{\bm{\phi}}}\big[-d\beta(t)+g^{2}(t)\Vert\nabla_{\mathbf{z}_{t}}\log{p_{t}(\mathbf{z}_{t})}\Vert_{2}^{2}\big]\diff t.
		\end{eqnarray*}
		Now, from Theorem 1 of \cite{song2021maximum}, the KL-divergence between the path measures becomes
		\begin{align}\label{eq:nelbo}
		D_{KL}(\bm{\mu}_{\bm{\phi}}(\{\mathbf{z}_{t}\})\Vert\bm{\nu}_{\bm{\theta}}(\{\mathbf{z}_{t}\}))&=D_{KL}(p_{T}(\mathbf{z}_{T})\Vert\pi(\mathbf{z}_{T}))\\
		&\quad+\frac{1}{2}\int_{0}^{T}g^{2}(t)\mathbb{E}_{p_{t}(\mathbf{z}_{t})}\big[\Vert\mathbf{s}_{\bm{\theta}}(\mathbf{z}_{t},t)-\nabla_{\mathbf{z}_{t}}\log{p_{t}(\mathbf{z}_{t})}\Vert_{2}^{2}\big]\diff t,
		\end{align}
		so if we plug in this into the negative log-likelihood, we yield the following:
		\begin{eqnarray*}
			\lefteqn{-\mathbb{E}_{p_{r}(\mathbf{x}_{0})}\big[\log{p_{\bm{\phi},\bm{\theta}}(\mathbf{x}_{0})}\big]}\\
			&&\le -\mathbb{E}_{p_{r}(\mathbf{x}_{0})}\Big[\log{\Big\vert\det\Big(\frac{\partial\mathbf{h}_{\bm{\phi}}}{\partial\mathbf{x}_{0}}\Big)\Big\vert}\Big] + \frac{1}{2}\int_{0}^{T}g^{2}(t)\mathbb{E}_{\mathbf{z}_{t}}\big[\Vert\mathbf{s}_{\bm{\theta}}(\mathbf{z}_{t},t)-\nabla_{\mathbf{z}_{t}}\log{p_{t}(\mathbf{z}_{t})}\Vert_{2}^{2}\big]\diff t\\
			&&\quad +D_{KL}(p_{T}\Vert\pi) +\mathcal{H}(p_{T})-\frac{1}{2}\int_{0}^{T}\mathbb{E}_{\mathbf{z}_{t}}\big[-d\beta(t)+g^{2}(t)\Vert\nabla_{\mathbf{z}_{t}}\log{p_{t}(\mathbf{z}_{t})}\Vert_{2}^{2}\big]\diff t\\
			&&= -\mathbb{E}_{p_{r}(\mathbf{x}_{0})}\Big[\log{\Big\vert\det\Big(\frac{\partial\mathbf{h}_{\bm{\phi}}}{\partial\mathbf{x}_{0}}\Big)\Big\vert}\Big] -\mathbb{E}_{\mathbf{z}_{T}}\big[\log{\pi(\mathbf{z}_{T})}\big] +\frac{d}{2}\int_{0}^{T}\beta(t)\diff t\\
			&&\quad+\frac{1}{2}\int_{0}^{T}g^{2}(t)\mathbb{E}_{\mathbf{z}_{t}}\big[\Vert\mathbf{s}_{\bm{\theta}}(\mathbf{z}_{t},t)-\nabla_{\mathbf{z}_{t}}\log{p_{t}(\mathbf{z}_{t})}\Vert_{2}^{2}-\Vert\nabla_{\mathbf{z}_{t}}\log{p_{t}(\mathbf{z}_{t})}\Vert_{2}^{2} \big]\diff t
		\end{eqnarray*}
		
		Also, we have
		\begin{eqnarray*}
			\lefteqn{\mathbb{E}_{\mathbf{z}_{t}}\big[\mathbf{s}_{\bm{\theta}}(\mathbf{z}_{t},t)\cdot\nabla_{\mathbf{z}_{t}}\log{p_{t}(\mathbf{z}_{t})}\big]=\int p_{t}(\mathbf{z}_{t})\mathbf{s}_{\bm{\theta}}(\mathbf{z}_{t},t)\cdot\nabla_{\mathbf{z}_{t}}\log{p_{t}(\mathbf{z}_{t})}\diff\mathbf{z}_{t}}\\
			&&=\int \mathbf{s}_{\bm{\theta}}(\mathbf{z}_{t},t)\cdot\nabla_{\mathbf{z}_{t}}p_{t}(\mathbf{z}_{t})\diff\mathbf{z}_{t}\\
			&&=\int \mathbf{s}_{\bm{\theta}}(\mathbf{z}_{t},t)\cdot \int p_{0}(\mathbf{z}_{0})\nabla_{\mathbf{z}_{t}}p_{0t}(\mathbf{z}_{t}\vert\mathbf{z}_{0})\diff\mathbf{z}_{0}\diff\mathbf{z}_{t}\\
			&&=\mathbb{E}_{\mathbf{z}_{0}}\mathbb{E}_{\mathbf{z}_{t}\vert\mathbf{z}_{0}}\big[\mathbf{s}_{\bm{\theta}}(\mathbf{z}_{t},t)\cdot\nabla_{\mathbf{z}_{t}}\log{p_{0t}(\mathbf{z}_{t}\vert\mathbf{z}_{0})}\big]
		\end{eqnarray*}
		
		Therefore,
		\begin{eqnarray*}
			\lefteqn{\frac{1}{2}\int_{0}^{T}g^{2}(t)\mathbb{E}_{\mathbf{z}_{t}}\big[\Vert\mathbf{s}_{\bm{\theta}}(\mathbf{z}_{t},t)-\nabla_{\mathbf{z}_{t}}\log{p_{t}(\mathbf{z}_{t})}\Vert_{2}^{2}-\Vert\nabla_{\mathbf{z}_{t}}\log{p_{t}(\mathbf{z}_{t})}\Vert_{2}^{2}\big]}\\
			&&=\int_{0}^{T}g^{2}(t)\mathbb{E}_{\mathbf{z}_{t}}\Big[\frac{1}{2}\Vert\mathbf{s}_{\bm{\theta}}(\mathbf{z}_{t},t)\Vert_{2}^{2}-\mathbf{s}_{\bm{\theta}}(\mathbf{z}_{t},t)\cdot\nabla_{\mathbf{z}_{t}}\log{p_{t}(\mathbf{z}_{t})}\Big]\\
			&&=\int_{0}^{T}g^{2}(t)\mathbb{E}_{\mathbf{z}_{0}}\mathbb{E}_{\mathbf{z}_{t}\vert\mathbf{z}_{0}}\Big[\frac{1}{2}\Vert\mathbf{s}_{\bm{\theta}}(\mathbf{z}_{t},t)\Vert_{2}^{2}-\mathbf{s}_{\bm{\theta}}(\mathbf{z}_{t},t)\cdot\nabla_{\mathbf{z}_{t}}\log{p_{0t}(\mathbf{z}_{t}\vert\mathbf{z}_{0})}\Big]\\
			&&=\frac{1}{2}\int_{0}^{T}g^{2}(t)\mathbb{E}_{\mathbf{z}_{0}}\mathbb{E}_{\mathbf{z}_{t}\vert\mathbf{z}_{0}}\Big[\Vert\mathbf{s}_{\bm{\theta}}(\mathbf{z}_{t},t)-\nabla_{\mathbf{z}_{t}}\log{p_{0t}(\mathbf{z}_{t}\vert\mathbf{z}_{0})}\Vert_{2}^{2}-\Vert\nabla_{\mathbf{z}_{t}}\log{p_{0t}(\mathbf{z}_{t}\vert\mathbf{z}_{0})}\Vert_{2}^{2}\Big].
		\end{eqnarray*}
		Now, since $p_{0t}(\mathbf{z}_{t}\vert\mathbf{z}_{0})=\mathcal{N}(\mathbf{z}_{t};\mu(t)\mathbf{z}_{t},\sigma^{2}(t)\mathbf{I})$ for $\mu(t)$ and $\sigma^{2}(t)$ determined by $\beta(t)$ and $g(t)$, we have
		\begin{align*}
		\mathbb{E}_{\mathbf{z}_{t}\vert\mathbf{z}_{0}}\big[\Vert\nabla_{\mathbf{z}_{t}}\log{p_{0t}(\mathbf{z}_{t}\vert\mathbf{z}_{0})}\Vert_{2}^{2}\big]=\mathbb{E}_{\mathbf{z}_{t}\vert\mathbf{z}_{0}}\Big[\Big\Vert\frac{\mathbf{z}_{t}-\mu(t)\mathbf{z}_{0}}{\sigma^{2}(t)}\Big\Vert_{2}^{2}\Big]=\mathbb{E}_{\mathcal{N}(\mathbf{z};0,\mathbf{I})}\Big[\frac{\Vert\mathbf{z}\Vert_{2}^{2}}{\sigma^{2}(t)}\Big]=\frac{d}{\sigma^{2}(t)},
		\end{align*}
		and we have the desired result.
	\end{proof}
	
	\begingroup
	\renewcommand\theproposition{1}
	\begin{proposition}
		Suppose $q_{t}^{\bm{\theta}}$ is the marginal distribution of $\nu_{\bm{\theta}}$ at $t$. The variational gap is
		\begin{align*}
		\textup{Gap}\big(\bm{\mu}_{\bm{\phi}}(\{\mathbf{x}_{t}\}),\bm{\nu}_{\bm{\phi},\bm{\theta}}(\{\mathbf{x}_{t}\})\big):=&D_{KL}\big(\bm{\mu}_{\bm{\phi}}(\{\mathbf{x}_{t}\})\Vert\bm{\nu}_{\bm{\phi},\bm{\theta}}(\{\mathbf{x}_{t}\})\big)-D_{KL}\big(p_{0}^{\bm{\phi}}(\mathbf{x}_{0})\Vert q_{0}^{\bm{\theta}}(\mathbf{x}_{0})\big)\\
		=&\frac{1}{2}\int_{0}^{T}g^{2}(t)\mathbb{E}_{p_{t}^{\bm{\phi}}(\mathbf{z}_{t})}\big[\underbrace{\Vert\nabla\log{q_{t}^{\bm{\theta}}(\mathbf{z}_{t})}-\mathbf{s}_{\bm{\theta}}(\mathbf{z}_{t},t)\Vert_{2}^{2}}_{\text{Score-only error}}\big]\diff t.
		\end{align*}
	\end{proposition}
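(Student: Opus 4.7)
}

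The plan is to express the generative path measure $\bm{\nu}_{\bm{\theta}}$ as a forward-in-time SDE so that a direct comparison with the forward path measure $\bm{\mu}_{\bm{\phi}}$ becomes amenable to the Girsanov theorem. By construction, the generative SDE
\begin{align*}
\diff\mathbf{z}_{t}=\bigl[\mathbf{f}(\mathbf{z}_{t},t)-g^{2}(t)\mathbf{s}_{\bm{\theta}}(\mathbf{z}_{t},t)\bigr]\diff\bar{t}+g(t)\diff\bar{\mathbf{w}}_{t}
\end{align*}
has marginals $q_{t}^{\bm{\theta}}$. Using Anderson's reverse-time formula, this reverse-time SDE corresponds to a \emph{forward} SDE with drift $\mathbf{f}(\mathbf{z}_{t},t)+g^{2}(t)\bigl[\nabla\log q_{t}^{\bm{\theta}}(\mathbf{z}_{t})-\mathbf{s}_{\bm{\theta}}(\mathbf{z}_{t},t)\bigr]$ and initial distribution $q_{0}^{\bm{\theta}}$, while retaining the same marginals $q_{t}^{\bm{\theta}}$ and path measure $\bm{\nu}_{\bm{\theta}}$. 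The forward SDE of $\bm{\mu}_{\bm{\phi}}$ carries drift $\mathbf{f}(\mathbf{z}_{t},t)$ and initial distribution $p_{0}^{\bm{\phi}}$.

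Next, I would apply the disintegration property of the KL divergence,
\begin{align*}
D_{KL}(\bm{\mu}_{\bm{\phi}}\Vert\bm{\nu}_{\bm{\theta}})=D_{KL}(p_{0}^{\bm{\phi}}\Vert q_{0}^{\bm{\theta}})+\mathbb{E}_{\mathbf{z}_{0}\sim p_{0}^{\bm{\phi}}}\!\bigl[D_{KL}\bigl(\bm{\mu}_{\bm{\phi}}(\cdot\vert\mathbf{z}_{0})\Vert\bm{\nu}_{\bm{\theta}}(\cdot\vert\mathbf{z}_{0})\bigr)\bigr],
\end{align*}
which separates the initial-distribution mismatch from the conditional path mismatch. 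Since the two forward SDEs share the same diffusion coefficient $g(t)$ and differ only in drift by $g^{2}(t)\bigl(\nabla\log q_{t}^{\bm{\theta}}-\mathbf{s}_{\bm{\theta}}\bigr)$, the Girsanov theorem combined with the martingale property yields
\begin{align*}
\mathbb{E}_{p_{0}^{\bm{\phi}}}\!\bigl[D_{KL}\bigl(\bm{\mu}_{\bm{\phi}}(\cdot\vert\mathbf{z}_{0})\Vert\bm{\nu}_{\bm{\theta}}(\cdot\vert\mathbf{z}_{0})\bigr)\bigr]=\frac{1}{2}\int_{0}^{T}g^{2}(t)\mathbb{E}_{p_{t}^{\bm{\phi}}(\mathbf{z}_{t})}\!\bigl[\Vert\nabla\log q_{t}^{\bm{\theta}}(\mathbf{z}_{t})-\mathbf{s}_{\bm{\theta}}(\mathbf{z}_{t},t)\Vert_{2}^{2}\bigr]\diff t,
\end{align*}
where the expectation under the conditional is absorbed into $p_{t}^{\bm{\phi}}$ by marginalizing over the initial variable. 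Rearranging the disintegration identity delivers exactly the claimed formula for $\textup{Gap}(\bm{\mu}_{\bm{\phi}},\bm{\nu}_{\bm{\phi},\bm{\theta}})$.

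The main obstacle is the rigorous justification of applying Anderson's reverse-time formula to the generative SDE: one needs $q_{t}^{\bm{\theta}}$ to admit a well-defined score $\nabla\log q_{t}^{\bm{\theta}}$ that is continuously differentiable in $\mathbf{z}$, plus enough integrability to make Girsanov's change-of-measure exponent a true martingale rather than a mere local one. Fortunately, these are exactly the regularity hypotheses noted in the remark preceding the statement (continuous differentiability of $\mathbf{s}_{\bm{\theta}}$ and $q_{t}^{\bm{\theta}}$, together with the global assumptions imported from Appendix A of \citet{song2021maximum}), so once these are invoked, the remaining computation is bookkeeping.
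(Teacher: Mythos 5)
Your proof is correct, and it takes a genuinely different route from the paper's. The paper computes the Fokker--Planck evolution of the marginal KL, showing
\begin{align*}
\frac{\partial}{\partial t}D_{KL}(p_{t}^{\bm{\phi}}\Vert q_{t}^{\bm{\theta}})=\frac{g^{2}(t)}{2}\,\mathbb{E}_{p_{t}^{\bm{\phi}}}\Big[\big(\nabla\log\tfrac{p_{t}^{\bm{\phi}}}{q_{t}^{\bm{\theta}}}\big)^{T}\big(2\mathbf{s}_{\bm{\theta}}-\nabla\log p_{t}^{\bm{\phi}}-\nabla\log q_{t}^{\bm{\theta}}\big)\Big],
\end{align*}
integrates this from $0$ to $T$ to obtain $D_{KL}(p_{0}^{\bm{\phi}}\Vert q_{0}^{\bm{\theta}})$, and then subtracts from the Girsanov formula for $D_{KL}(\bm{\mu}_{\bm{\phi}}\Vert\bm{\nu}_{\bm{\theta}})$; the resulting integrand collapses algebraically (with $a=\nabla\log p_{t}^{\bm{\phi}}$, $b=\nabla\log q_{t}^{\bm{\theta}}$, $c=\mathbf{s}_{\bm{\theta}}$, one checks $\Vert a-c\Vert^{2}-(a-b)^{T}(a+b-2c)=\Vert b-c\Vert^{2}$). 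You instead rewrite the generative path measure as a forward SDE with drift $\mathbf{f}+g^{2}(\nabla\log q_{t}^{\bm{\theta}}-\mathbf{s}_{\bm{\theta}})$ and initial law $q_{0}^{\bm{\theta}}$, then invoke the chain rule (disintegration) of KL so the initial-marginal term splits off at once, and hit the conditional piece with Girsanov. Your route is more structural and makes the interpretation of the gap as a pure conditional-path mismatch transparent in one stroke, bypassing the pointwise Fokker--Planck/integration-by-parts bookkeeping. The price is the extra justification you correctly flag: the forward representation of $\bm{\nu}_{\bm{\theta}}$ relies on time-reversal (Anderson's formula) applied with $q_{t}^{\bm{\theta}}$ as the marginal, which needs $q_{t}^{\bm{\theta}}$ to be $C^{1}$ in $\mathbf{z}$ and the Girsanov exponent to be a true martingale; the paper's route only needs differentiability of the marginals for the Fokker--Planck equation plus the integration-by-parts steps it already imports from \citet{song2021maximum}. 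Both sets of conditions are covered by the remark preceding the statement, so the two proofs are interchangeable in this setting.
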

	\endgroup
	\begin{proof}[\textbf{Proof of Proposition \ref{thm:2}}]
		Suppose $q_{t}^{\bm{\theta}}$ is a marginal distribution of the path measure of the generative SDE given by
		\begin{align}\label{eq:generative_latent_sde}
		\diff\mathbf{z}_{t}=\left[ \mathbf{f}(\mathbf{z}_{t},t)-g^{2}(t)\mathbf{s}_{\bm{\theta}}(\mathbf{z}_{t},t) \right]\diff \bar{t}+g(t)\diff\mathbf{\bar{w}}_{t}.
		\end{align}
		The Fokker-Planck equation of the above generative SDE satisfies
		\begin{align*}
		\frac{\partial q_{t}^{\bm{\theta}}}{\partial t}(\mathbf{z}_{t})=&-\sum_{i=1}^{d}\frac{\partial}{\partial z_{i}}\left(\left[ f_{i}(\mathbf{z}_{t},t)-g^{2}(t)\big(\mathbf{s}_{\bm{\theta}}(\mathbf{z}_{t},t)\big)_{i} \right]q_{t}^{\bm{\theta}}(\mathbf{z}_{t})\right)-\frac{g^{2}(t)}{2}\sum_{i=1}^{d}\frac{\partial^{2}}{\partial z_{i}^{2}}\big[q_{t}^{\bm{\theta}}(\mathbf{z}_{t})\big]\\
		=&\text{div}\left(\Big(-\mathbf{f}(\mathbf{z}_{t},t)+g^{2}(t)\mathbf{s}_{\bm{\theta}}(\mathbf{z}_{t},t)-\frac{g^{2}(t)}{2}\nabla \log{q_{t}^{\bm{\theta}}(\mathbf{z}_{t})}\Big)q_{t}^{\bm{\theta}}(\mathbf{z}_{t})\right)\numberthis\label{eq:generative_ode}.
		\end{align*}
		
		On the other hand, if $p_{t}^{\bm{\phi}}$ is the marginal distribution of the path measure of the forward SDE given by
		\begin{align*}
		\diff\mathbf{z}_{t}=\mathbf{f}(\mathbf{z}_{t},t)\diff t+g(t)\diff\mathbf{w}_{t},
		\end{align*}
		then the corresponding Fokker-Planck equation becomes
		\begin{align}\label{eq:reverse_ode}
		\frac{\partial p_{t}^{\bm{\phi}}}{\partial t}(\mathbf{z}_{t})=\text{div}\left( \Big(-\mathbf{f}(\mathbf{z}_{t},t)+\frac{g^{2}(t)}{2}\nabla \log{p_{t}^{\bm{\phi}}(\mathbf{z}_{t},t)}\Big)p_{t}^{\bm{\phi}}(\mathbf{z}_{t}) \right).
		\end{align}
		
		Combining Eq. \eqref{eq:generative_ode} with Eq. \eqref{eq:reverse_ode} and using the integration by parts, the derivative of the KL divergence becomes
		\begin{eqnarray*}
			\lefteqn{\frac{\partial D_{KL}(p_{t}^{\bm{\phi}}\Vert q_{t}^{\bm{\theta}})}{\partial t}=\frac{\partial}{\partial t}\int p_{t}^{\bm{\phi}}(\mathbf{z}_{t})\log{\frac{p_{t}^{\bm{\phi}}(\mathbf{z}_{t})}{q_{t}^{\bm{\theta}}(\mathbf{z}_{t})}}\diff \mathbf{z}_{t}}&\\
			&&=\int \frac{\partial p_{t}^{\bm{\phi}}}{\partial t}(\mathbf{z}_{t})\log{\frac{p_{t}^{\bm{\phi}}(\mathbf{z}_{t})}{q_{t}^{\bm{\theta}}(\mathbf{z}_{t})}}\diff \mathbf{z}_{t}-\int \frac{\partial q_{t}^{\bm{\theta}}}{\partial t}(\mathbf{z}_{t})\frac{p_{t}^{\bm{\phi}}(\mathbf{z}_{t})}{q_{t}^{\bm{\theta}}(\mathbf{z}_{t})}\diff\mathbf{z}_{t}\\
			&&=-\int p_{t}^{\bm{\phi}}(\mathbf{z}_{t})\left(-\mathbf{f}(\mathbf{z}_{t},t)+\frac{g^{2}(t)}{2}\nabla \log{p_{t}^{\bm{\phi}}(\mathbf{z}_{t})}\right)^{T}\nabla\log{\frac{p_{t}^{\bm{\phi}}(\mathbf{z}_{t})}{q_{t}^{\bm{\theta}}(\mathbf{z}_{t})}}\diff\mathbf{z}_{t}\\
			&&\quad+\int p_{t}^{\bm{\phi}}(\mathbf{z}_{t})\left(-\mathbf{f}(\mathbf{z}_{t},t)+g^{2}(t)\mathbf{s}_{\bm{\theta}}(\mathbf{z}_{t},t)-\frac{g^{2}(t)}{2}\nabla \log{q_{t}^{\bm{\theta}}(\mathbf{z}_{t})}\right)^{T}\nabla\log{\frac{p_{t}^{\bm{\phi}}(\mathbf{z}_{t})}{q_{t}^{\bm{\theta}}(\mathbf{z}_{t})}}\diff\mathbf{z}_{t}\\
			&&=\frac{g^{2}(t)}{2}\int p_{t}^{\bm{\phi}}(\mathbf{z}_{t})\left(\nabla\log{\frac{p_{t}^{\bm{\phi}}(\mathbf{z}_{t})}{q_{t}^{\bm{\theta}}(\mathbf{z}_{t})}}\right)^{T}\left(2\mathbf{s}_{\bm{\theta}}(\mathbf{z}_{t},t)-\nabla\log{p_{t}^{\bm{\phi}}(\mathbf{z}_{t})}-\nabla\log{q_{t}^{\bm{\theta}}(\mathbf{z}_{t})}\right)\diff\mathbf{z}_{t}.
		\end{eqnarray*}
		Integrating the above derivative, we get the KL divergence of
		\begin{align}
		&D_{KL}\big(p_{0}^{\bm{\phi}}(\mathbf{z}_{0})\Vert q_{0}^{\bm{\theta}}(\mathbf{z}_{0})\big)=-\int_{0}^{T} \frac{\partial D_{KL}(p_{t}^{\bm{\phi}}\Vert q_{t}^{\bm{\theta}})}{\partial t}\diff t+D_{KL}(p_{T}^{\bm{\phi}}\Vert q_{T}^{\bm{\theta}})\label{eq:indm_kl}\\
		&=\int_{0}^{T}\frac{g^{2}(t)}{2}\mathbb{E}_{\mathbf{z}_{t}^{\bm{\phi}}}\left[ (\nabla\log{p_{t}^{\bm{\phi}}}-\nabla\log{q_{t}^{\bm{\theta}}})^{T}(\nabla\log{p_{t}^{\bm{\phi}}}+\nabla\log{q_{t}^{\bm{\theta}}}-2\mathbf{s}_{\bm{\theta}}) \right]\diff t+D_{KL}(p_{T}^{\bm{\phi}}\Vert q_{T}^{\bm{\theta}}).\nonumber
		\end{align}
		
		Also, from Eq. \eqref{eq:nelbo}, we have 
		\begin{eqnarray}
		\begin{split}\label{eq:indm_nelbo}
		\lefteqn{D_{KL}\big(\bm{\mu}_{\bm{\phi}}(\{\mathbf{x}_{t}\})\Vert\bm{\nu}_{\bm{\phi},\bm{\theta}}(\{\mathbf{x}_{t}\})\big)=D_{KL}\big(\bm{\mu}_{\bm{\phi}}(\{\mathbf{z}_{t}\})\Vert\bm{\nu}_{\bm{\theta}}(\{\mathbf{z}_{t}\})\big)}&\\
		&&=\int_{0}^{T}\frac{g^{2}(t)}{2}\mathbb{E}_{p_{t}^{\bm{\phi}}(\mathbf{z}_{t})}\big[\Vert\nabla\log{p_{t}^{\bm{\phi}}(\mathbf{z}_{t})}-\mathbf{s}_{\bm{\theta}}(\mathbf{z}_{t},t)\Vert_{2}^{2}\big]\diff t+D_{KL}(p_{T}^{\bm{\phi}}\Vert q_{T}^{\bm{\theta}}).
		\end{split}
		\end{eqnarray}
		
		By subtracting Eq. \eqref{eq:indm_kl} from Eq. \eqref{eq:indm_nelbo}, we get the desired result:
		\begin{align*}
		&\text{Gap}(\bm{\mu}_{\bm{\phi}},\bm{\nu}_{\bm{\phi},\bm{\theta}})=D_{KL}\big(\bm{\mu}_{\bm{\phi}}(\{\mathbf{x}_{t}\})\Vert\bm{\nu}_{\bm{\phi},\bm{\theta}}(\{\mathbf{x}_{t}\})\big)-D_{KL}\big(p_{r}(\mathbf{x}_{0})\Vert p_{\bm{\phi},\bm{\theta}}(\mathbf{x}_{0})\big)\\
		&=D_{KL}\big(\bm{\mu}_{\bm{\phi}}(\{\mathbf{z}_{t}\})\Vert\bm{\nu}_{\bm{\theta}}(\{\mathbf{z}_{t}\})\big)-D_{KL}\big(p_{0}^{\bm{\phi}}(\mathbf{z}_{0})\Vert q_{0}^{\bm{\theta}}(\mathbf{z}_{0})\big)\\	
		&=\int\frac{g^{2}(t)}{2}\mathbb{E}_{\mathbf{z}_{t}^{\bm{\phi}}}\bigg[\Vert\nabla\log{p_{t}^{\bm{\phi}}}-\mathbf{s}_{\bm{\theta}}\Vert_{2}^{2}-(\nabla\log{p_{t}^{\bm{\phi}}}-\nabla\log{q_{t}^{\bm{\theta}}})^{T}(\nabla\log{p_{t}^{\bm{\phi}}}+\nabla\log{q_{t}^{\bm{\theta}}}-2\mathbf{s}_{\bm{\theta}})\bigg]\diff t\\
		&=\int\frac{g^{2}(t)}{2}\mathbb{E}_{p_{t}^{\bm{\phi}}(\mathbf{z}_{t})}\left[ \Vert\nabla\log{q_{t}^{\bm{\theta}}}-\mathbf{s}_{\bm{\theta}}\Vert_{2}^{2} \right]\diff t.
		\end{align*}
	\end{proof}
	
	\begingroup
	\renewcommand\thetheorem{2}
	\begin{theorem}\label{cor_app:2}
		$\textup{Gap}(\bm{\mu}_{\bm{\phi}},\bm{\nu}_{\bm{\phi},\bm{\theta}})=0$ if and only if $\mathbf{s}_{\bm{\theta}}\in\mathbf{S}_{sol}$. 
	\end{theorem}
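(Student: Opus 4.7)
The plan is to leverage Proposition~\ref{thm:2}, which already reduces the gap to the quantity $\frac{1}{2}\int_0^T g^2(t)\mathbb{E}_{p_t^{\bm{\phi}}}[\Vert\nabla\log q_t^{\bm{\theta}}-\mathbf{s}_{\bm{\theta}}\Vert_2^2]\diff t$. Since this integrand is non-negative, $\textup{Gap}(\bm{\mu}_{\bm{\phi}},\bm{\nu}_{\bm{\phi},\bm{\theta}})=0$ is equivalent to $\mathbf{s}_{\bm{\theta}}(\mathbf{z},t)=\nabla\log q_t^{\bm{\theta}}(\mathbf{z})$ holding $p_t^{\bm{\phi}}$-a.e.\ for almost every $t\in(0,T]$. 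Using the continuity assumptions already in force in Appendix~\ref{appendix:variational_gap} (both $\mathbf{s}_{\bm{\theta}}$ and $q_t^{\bm{\theta}}$ are continuously differentiable) together with the full-support property of $p_t^{\bm{\phi}}$ for $t>0$ (it is obtained by convolving $p_0^{\bm{\phi}}$ with a non-degenerate Gaussian kernel under VPSDE), this a.e.\ identity upgrades to pointwise equality $\mathbf{s}_{\bm{\theta}}(\mathbf{z},t)=\nabla\log q_t^{\bm{\theta}}(\mathbf{z})$ on the relevant domain.

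For the \emph{if} direction, I will simply invoke Lemma~\ref{lemma:2} (Theorem~2 of \citet{song2021maximum}), which gives $\textup{Gap}(\bm{\mu}_{\bm{\phi}},\bm{\nu}_{\bm{\phi},\bm{\theta}})=0$ whenever $\mathbf{s}_{\bm{\theta}}\in\mathbf{S}_{sol}$. This is the half of the statement already available in the literature, so no new work is required here.

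For the \emph{only-if} direction, the substitution $\mathbf{s}_{\bm{\theta}}=\nabla\log q_t^{\bm{\theta}}$ is the crucial manipulation. Plugging this identity into the Fokker-Planck equation for $q_t^{\bm{\theta}}$ derived in Eq.~\eqref{eq:generative_ode},
\begin{align*}
\frac{\partial q_t^{\bm{\theta}}}{\partial t}=\textup{div}\!\left(\Big(-\mathbf{f}+g^2\mathbf{s}_{\bm{\theta}}-\tfrac{g^2}{2}\nabla\log q_t^{\bm{\theta}}\Big)q_t^{\bm{\theta}}\right),
\end{align*}
the $g^2$-terms partially cancel and the equation collapses to $\partial_t q_t^{\bm{\theta}}=-\textup{div}(\mathbf{f}\,q_t^{\bm{\theta}})+\tfrac{g^2}{2}\Delta q_t^{\bm{\theta}}$. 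This is exactly the forward Fokker-Planck equation associated with $\diff\mathbf{z}_t=\mathbf{f}(\mathbf{z}_t,t)\diff t+g(t)\diff\mathbf{w}_t$. Therefore $\{q_t^{\bm{\theta}}\}_{t\in[0,T]}$ is the marginal family of the forward linear diffusion initialized at $q_0^{\bm{\theta}}$, and by Definition~\ref{def:1} this places $\nabla\log q_t^{\bm{\theta}}=\mathbf{s}_{\bm{\theta}}$ in $\mathbf{S}_{sol}$.

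The main technical obstacle I anticipate is the upgrade from the $p_t^{\bm{\phi}}$-a.e.\ identity to a pointwise one valid on all of $\mathbb{R}^d\times(0,T]$, because the Fokker-Planck argument in the only-if step needs the equality as a PDE identity, not merely an $L^2(p_t^{\bm{\phi}})$ statement. As sketched above, this is handled by the continuity of $\mathbf{s}_{\bm{\theta}}$ and $\nabla\log q_t^{\bm{\theta}}$ together with the density positivity of $p_t^{\bm{\phi}}$ for $t>0$; a brief argument suffices, and the boundary case $t=0$ is absorbed by continuity. Everything else is essentially a bookkeeping exercise on top of the already-derived Proposition~\ref{thm:2} and Lemma~\ref{lemma:2}.
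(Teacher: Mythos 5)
Your proposal is correct and follows essentially the same route as the paper's own proof: both halves rely on Lemma~\ref{lemma:2} for the \emph{if} direction and on Proposition~\ref{thm:2} plus the continuity/full-support upgrade to pointwise equality for the \emph{only-if} direction, with the final step being the observation that $\mathbf{s}_{\bm{\theta}}=\nabla\log q_t^{\bm{\theta}}$ turns the generative Fokker--Planck equation into the forward one, so $q_t^{\bm{\theta}}$ solves the forward Kolmogorov equation and $\mathbf{s}_{\bm{\theta}}\in\mathbf{S}_{sol}$. The only cosmetic difference is that you substitute directly into the Fokker--Planck equation \eqref{eq:generative_ode} while the paper passes through the SDE \eqref{eq:cor_gen_sde}; these are the same computation.
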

	\endgroup
	\begin{proof}[\textbf{Proof of Theorem \ref{cor_app:2}}]
		($\Rightarrow$) Suppose the variational gap is zero. Then, as the support of $p_{t}^{\bm{\phi}}$ is the whole space of $\mathbb{R}^{d}$, Theorem \ref{thm:2} implies that $\mathbf{s}_{\bm{\theta}}(\mathbf{z}_{t},t)=\nabla\log{q_{t}^{\bm{\theta}}(\mathbf{z}_{t})}$ almost everywhere, for any $t>0$. To check if $\mathbf{s}_{\bm{\theta}}(\mathbf{z}_{0},0)=\nabla\log{q_{0}^{\bm{\theta}}(\mathbf{z}_{0})}$ at $t=0$, suppose $\mathbf{s}_{\bm{\theta}}(\mathbf{z}_{0},0)\neq\nabla\log{q_{0}^{\bm{\theta}}(\mathbf{z}_{0})}$ on a set of positive measure. Then, from the continuity of $\mathbf{s}_{\bm{\theta}}$ and $\nabla\log{q_{t}^{\bm{\theta}}}$, we have $\mathbf{s}_{\bm{\theta}}(\mathbf{z}_{s},s)\neq\nabla\log{q_{s}^{\bm{\theta}}(\mathbf{z}_{s})}$ on $s<t_{0}$ for some $t_{0}$. Therefore, for any $t\in[0,T]$, we conclude that $\mathbf{s}_{\bm{\theta}}(\mathbf{z}_{t},t)=\nabla\log{q_{t}^{\bm{\theta}}(\mathbf{z}_{t})}$ almost everywhere and Eq. \eqref{eq:generative_latent_sde} becomes
		\begin{align}\label{eq:cor_gen_sde}
		\diff\mathbf{z}_{t}=\left[\mathbf{f}(\mathbf{z}_{t},t)-g^{2}(t)\nabla\log{q_{t}^{\bm{\theta}}(\mathbf{z}_{t})}\right]\diff\bar{t}+g(t)\diff\mathbf{\bar{w}}_{t}.
		\end{align}
		As the Fokker-Planck equation of the SDE of Eq. \eqref{eq:cor_gen_sde} becomes
		\begin{align*}
		\frac{\partial q_{t}^{\bm{\theta}}}{\partial t}(\mathbf{z}_{t})=\text{div}\left(\Big(-\mathbf{f}(\mathbf{z}_{t},t)+\frac{g^{2}(t)}{2}\nabla \log{q_{t}^{\bm{\theta}}(\mathbf{z}_{t})}\Big)q_{t}^{\bm{\theta}}(\mathbf{z}_{t})\right),
		\end{align*}
		which coincide with the Fokker-Planck equation of the forward SDE of $\diff\mathbf{z}_{t}=\mathbf{f}(\mathbf{z}_{t},t)\diff t+g(t)\diff\mathbf{w}_{t}$, we conclude $\mathbf{s}_{\bm{\theta}}\in\mathbf{S}_{sol}$ by definition.
		
		($\Leftarrow$) holds from Lemma \ref{lemma:2}.
	\end{proof}
	
	\begingroup
	\renewcommand\thetheorem{3}
	\begin{theorem}
		For any fixed $\mathbf{s}_{\bm{\bar{\theta}}}\in\mathbf{S}_{sol}$, if $\bm{\phi}^{*}\in\argmin_{\bm{\phi}}{D_{KL}(\bm{\mu}_{\bm{\phi}}\Vert \bm{\nu}_{\bm{\phi},\bm{\bar{\theta}}})}$, then $\mathbf{s}_{\bm{\phi}^{*}}(\mathbf{z}_{t},t)=\nabla\log{p_{t}^{\bm{\phi}^{*}}(\mathbf{z}_{t})}=\mathbf{s}_{\bm{\bar{\theta}}}(\mathbf{z}_{t},t)$, and $D_{KL}(\bm{\mu}_{\bm{\phi}^{*}}\Vert \bm{\nu}_{\bm{\phi}^{*},\bm{\bar{\theta}}})=D_{KL}(p_{r}\Vert p_{\bm{\phi}^{*},\bm{\bar{\theta}}})=\text{Gap}(\bm{\mu}_{\bm{\phi}^{*}},\bm{\nu}_{\bm{\phi}^{*},\bm{\bar{\theta}}})=0$.
	\end{theorem}
	\endgroup
	\begin{proof}[\textbf{Proof of Theorem \ref{thm:3}}]
		If $\mathbf{s}_{\bm{\bar{\theta}}}\in\mathbf{S}_{sol}$, there exists $q_{0}$ such that $\mathbf{s}_{\bm{\bar{\theta}}}(\mathbf{z}_{t},t)=\nabla\log{q_{t}(\mathbf{z}_{t})}$, where $\mathbf{z}_{t}\sim q_{t}$ is governed by $\diff\mathbf{z}_{t}=\mathbf{f}(\mathbf{z}_{t},t)\diff t+g(t)\diff\mathbf{w}_{t}$ that starts from $\mathbf{z}_{0}\sim q_{0}$. This implies that the generative path measure of $\bm{\nu}_{\bm{\phi},\bm{\bar{\theta}}}$ coincides with some forward path measure. On the other hand, the forward latent diffusion is also governed by $\diff\mathbf{z}_{t}=\mathbf{f}(\mathbf{z}_{t},t)\diff t+g(t)\diff\mathbf{w}_{t}$ that starts from $\mathbf{z}_{0}\sim p_{0}^{\bm{\phi}}$. Therefore, if $p_{0}^{\bm{\phi}}=q_{0}$ almost everywhere, then the generative path measure of $\bm{\nu}_{\bm{\phi},\bm{\bar{\theta}}}$ coincides with the forward path measure of $\bm{\mu}_{\bm{\phi}}$, and it holds that $D_{KL}(\bm{\mu}_{\bm{\phi}}\Vert \bm{\nu}_{\bm{\phi},\bm{\bar{\theta}}})=\int_{0}^{T}\frac{g^{2}(t)}{2}\mathbb{E}[\Vert\nabla\log{p_{t}^{\bm{\phi}}}-\nabla\log{q_{t}}\Vert_{2}^{2}]\diff t+D_{KL}(p_{T}^{\bm{\phi}}\Vert q_{T})=0$. If $p_{0}^{\bm{\phi}}\neq q_{0}$ on a set of positive measure $A$, then $D_{KL}(\bm{\mu}_{\bm{\phi}}\Vert\bm{\nu}_{\bm{\phi},\bm{\bar{\theta}}})=\int_{0}^{T}\frac{g^{2}(t)}{2}\mathbb{E}[\Vert\nabla\log{p_{t}^{\bm{\phi}}}-\nabla\log{q_{t}}\Vert_{2}^{2}]\diff t+D_{KL}(p_{T}^{\bm{\phi}}\Vert q_{T})$ is strictly positive because $\Vert\nabla\log{p_{t}^{\bm{\phi}}}-\nabla\log{q_{t}}\Vert_{2}^{2}>0$ on $A$, for any $t$. This leads that if $\bm{\phi}^{*}\in\argmin_{\bm{\phi}}{D_{KL}(\bm{\mu}_{\bm{\phi}}\Vert \bm{\nu}_{\bm{\phi},\bm{\bar{\theta}}})}$, then $D_{KL}(\bm{\mu}_{\bm{\phi}^{*}}\Vert \bm{\nu}_{\bm{\phi}^{*},\bm{\bar{\theta}}})= 0$, and $p_{0}^{\bm{\phi}^{*}}=q_{0}$ almost everywhere. Therefore, we get the desired result because $0=D_{KL}(\bm{\mu}_{\bm{\phi}^{*}}\Vert\bm{\nu}_{\bm{\phi}^{*},\bm{\bar{\theta}}})\ge D_{KL}(p_{r}\Vert p_{\bm{\phi}^{*},\bm{\bar{\theta}}})\ge 0$.
	\end{proof}
	
	\begingroup
	\renewcommand\theproposition{2}
	\begin{proposition}
		$\mathbf{s}_{\bm{\theta}}\in\mathbf{S}_{div}$ if and only if $\nabla_{\mathbf{z}_{t}}\mathbf{s}_{\bm{\theta}}(\mathbf{z}_{t},t)$ is symmetric.
	\end{proposition}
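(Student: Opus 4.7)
The plan is to exploit the Helmholtz decomposition of Lemma \ref{lemma:1} together with the classical fact that a smooth vector field on a simply connected domain is a gradient field if and only if its Jacobian is symmetric. Writing $\mathbf{s}_{\bm{\theta}}(\mathbf{z},t) = \nabla\log p_{t}^{\bm{\theta}}(\mathbf{z}) + \mathbf{u}_{t}^{\bm{\theta}}(\mathbf{z})$ with $\textup{div}(\mathbf{u}_{t}^{\bm{\theta}}) \equiv 0$, membership $\mathbf{s}_{\bm{\theta}} \in \mathbf{S}_{div}$ is by definition equivalent to $\mathbf{u}_{t}^{\bm{\theta}} \equiv 0$, so the goal reduces to characterizing when the rotational component vanishes in terms of the Jacobian.

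For the forward direction, if $\mathbf{s}_{\bm{\theta}} \in \mathbf{S}_{div}$ then $\mathbf{s}_{\bm{\theta}} = \nabla\log p_{t}^{\bm{\theta}}$, so $\nabla_{\mathbf{z}}\mathbf{s}_{\bm{\theta}}$ is the Hessian $\nabla_{\mathbf{z}}^{2}\log p_{t}^{\bm{\theta}}$. Under the regularity standing assumption of Appendix \ref{appendix:variational_gap} (both $\mathbf{s}_{\bm{\theta}}$ and $p_{t}^{\bm{\theta}}$ are continuously differentiable enough for Schwarz's theorem), this Hessian is automatically symmetric. This direction is essentially a one-line computation.

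For the converse, I would apply the Jacobian to the Helmholtz decomposition to obtain $\nabla_{\mathbf{z}}\mathbf{s}_{\bm{\theta}} = \nabla_{\mathbf{z}}^{2}\log p_{t}^{\bm{\theta}} + \nabla_{\mathbf{z}}\mathbf{u}_{t}^{\bm{\theta}}$. The Hessian term is symmetric by Schwarz, and the left-hand side is symmetric by hypothesis, so $\nabla_{\mathbf{z}}\mathbf{u}_{t}^{\bm{\theta}}$ must be symmetric. Since $\mathbb{R}^{d}$ is simply connected, Poincar\'e's lemma then yields a scalar potential $\chi_{t}$ with $\mathbf{u}_{t}^{\bm{\theta}} = \nabla_{\mathbf{z}}\chi_{t}$. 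Combining with the divergence-free property $\textup{div}(\mathbf{u}_{t}^{\bm{\theta}}) = 0$ gives $\Delta\chi_{t} \equiv 0$, so $\chi_{t}$ is harmonic on $\mathbb{R}^{d}$.

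The main obstacle, and the only nontrivial step, is concluding $\mathbf{u}_{t}^{\bm{\theta}} \equiv 0$ from harmonicity of $\chi_{t}$. Here I would invoke the decay hypothesis built into Lemma \ref{lemma:1}: vector fields in the admissible class decay faster than $\Vert\mathbf{z}\Vert^{-c}$ at infinity, hence $\nabla\chi_{t} = \mathbf{u}_{t}^{\bm{\theta}} \to 0$ as $\Vert\mathbf{z}\Vert \to \infty$. A Liouville-type argument for harmonic functions with bounded (in fact vanishing) gradient then forces $\chi_{t}$ to be constant, giving $\mathbf{u}_{t}^{\bm{\theta}} \equiv 0$ and hence $\mathbf{s}_{\bm{\theta}} = \nabla\log p_{t}^{\bm{\theta}} \in \mathbf{S}_{div}$. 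Alternatively, one could appeal directly to the uniqueness of the Helmholtz decomposition stated in Lemma \ref{lemma:1}, which packages this Liouville step; I would likely use that shortcut in the final write-up to keep the proof compact.
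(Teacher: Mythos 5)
Your proof is correct and follows essentially the same route as the paper's, which is a two-line argument: the $\mathbf{S}_{div}\Rightarrow$ symmetric direction is the Schwarz/Hessian observation, and the converse is dismissed with ``if $\nabla\mathbf{s}_{\bm{\theta}}$ is symmetric, then $\mathbf{s}_{\bm{\theta}}$ is a 1-form, and $\mathbf{s}_{\bm{\theta}}\in\mathbf{S}_{div}$.'' Your write-up supplies exactly the details the paper compresses into that last clause---Poincar\'e on $\mathbb{R}^{d}$ to pass from closed to exact, and the harmonicity-plus-decay (Liouville) step, or equivalently uniqueness of the Helmholtz decomposition in Lemma \ref{lemma:1}, to kill the rotational part---so it is a fleshed-out version of the same argument rather than a different one.
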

	\endgroup
	\begin{proof}[\textbf{Proof of Proposition \ref{prop:1}}]
		If $\nabla_{\mathbf{z}_{t}}\mathbf{s}_{\bm{\theta}}(\mathbf{z}_{t},t)$ is symmetric, then $\mathbf{s}_{\bm{\theta}}(\mathbf{z}_{t},t)$ is a 1-form, and $\mathbf{s}_{\bm{\theta}}\in\mathbf{S}_{div}$. If $\mathbf{s}_{\bm{\theta}}\in\mathbf{S}_{div}$, then there exists $p_{t}$ such that $\mathbf{s}_{\bm{\theta}}(\mathbf{z}_{t},t)=\nabla\log{p_{t}(\mathbf{z}_{t})}$. Thus, $\nabla\mathbf{s}_{\bm{\theta}}=\nabla^{2}\log{p_{t}}$, which is symmetric.
	\end{proof}
	
	\begingroup
	\renewcommand\theproposition{3}
	\begin{proposition}
		A matrix $A\in\mathbb{R}^{d\times d}$ is symmetric if and only if $\mathbb{E}_{\bm{\epsilon}_{1},\bm{\epsilon}_{2}\sim\mathcal{N}(0,\mathbf{I})}\left[ (\bm{\epsilon}_{2}^{T}(A-A^{T})\bm{\epsilon}_{1})^{2}\right]=0$.
	\end{proposition}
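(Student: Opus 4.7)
The plan is to reduce the statement to the fact that the Frobenius norm is a norm, with the expectation computing exactly $\|A-A^T\|_F^2$ up to a positive scalar. Set $B := A - A^T$, so that $A$ is symmetric iff $B = 0$. Since the Frobenius norm satisfies $\|B\|_F^2 = \sum_{i,j} B_{ij}^2$, vanishing of this sum is equivalent to $B = 0$. So the entire task reduces to showing
\[
\mathbb{E}_{\bm{\epsilon}_1,\bm{\epsilon}_2 \sim \mathcal{N}(0,\mathbf{I})}\!\left[(\bm{\epsilon}_2^T B \bm{\epsilon}_1)^2\right] = \|B\|_F^2.
\]

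First I would expand the quadratic form componentwise:
\[
(\bm{\epsilon}_2^T B \bm{\epsilon}_1)^2 = \sum_{i,j,k,l} B_{ij} B_{kl}\, \epsilon_{2,i}\epsilon_{1,j}\epsilon_{2,k}\epsilon_{1,l}.
\]
Next I would use independence of $\bm{\epsilon}_1$ and $\bm{\epsilon}_2$ to factor the expectation into $\mathbb{E}[\epsilon_{2,i}\epsilon_{2,k}]\cdot\mathbb{E}[\epsilon_{1,j}\epsilon_{1,l}]$. Since the components of a standard Gaussian are i.i.d.\ with mean zero and unit variance, each factor equals $\delta_{ik}$ and $\delta_{jl}$ respectively. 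Collapsing the sums gives $\sum_{i,j} B_{ij}^2 = \|B\|_F^2$, completing the identity.

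The forward direction (symmetric $\Rightarrow$ zero expectation) is immediate since $B=0$. The reverse direction follows because $\|B\|_F^2 = 0$ forces $B_{ij} = 0$ for all $i,j$, i.e., $A = A^T$. I do not anticipate any real obstacle: the only subtlety is the bookkeeping in the fourth-order moment computation, which is trivialized by independence across $\bm{\epsilon}_1$ and $\bm{\epsilon}_2$ (so no genuinely fourth-order Gaussian moments ever appear, only products of second-order ones). Note also that this argument is a special case of the stronger quantitative identity to be established in Proposition \ref{prop:3}, which evaluates the same expectation for more general zero-mean $U$; once that proposition is proved, Proposition \ref{prop:2} follows instantly by specializing to $U \sim \mathcal{N}(0,1)$.
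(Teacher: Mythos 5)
Your proof is correct, but it takes a genuinely different (and more explicit) route than the paper's own argument for this particular proposition. The paper's proof of Proposition~2 is terse: it only observes the identity $\bm{\epsilon}_2^T A \bm{\epsilon}_1 - \bm{\epsilon}_1^T A \bm{\epsilon}_2 = \bm{\epsilon}_2^T(A - A^T)\bm{\epsilon}_1$ (a scalar equals its transpose) and then asserts the equivalence, implicitly leaning on the nonnegativity of the squared integrand and the full support of the Gaussian pair to force the bilinear form to vanish identically, hence $A = A^T$. You instead compute the expectation in closed form, showing it equals $\|A - A^T\|_F^2$ via independence and the second-moment identities $\mathbb{E}[\epsilon_{i}\epsilon_{k}]=\delta_{ik}$, and reduce the equivalence to the definiteness of the Frobenius norm. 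That is precisely the calculation the paper carries out later for the stronger Proposition~\ref{prop:3}, and your closing remark that Proposition~\ref{prop:2} follows instantly from Proposition~\ref{prop:3} by specializing $U \sim \mathcal{N}(0,1)$ (giving $\mathbb{E}[U^2]=1$) is a cleaner way to organize the two results than the paper's presentation, which proves Proposition~2 independently despite immediately superseding it. The tradeoff: the paper's proof of Proposition~2 is shorter and self-contained, but leaves the key "vanishing integrand implies symmetric matrix" step implicit; your version front-loads the moment bookkeeping but is fully rigorous and unifies the two propositions.
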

	\endgroup
	\begin{proof}[\textbf{Proof of Proposition \ref{prop:2}}]
		As
		\begin{align*}
		\mathbb{E}_{\bm{\epsilon}_{1},\bm{\epsilon}_{2}\sim\mathcal{N}(0,\mathbf{I})}\big[(\bm{\epsilon}_{2}^{T}A\bm{\epsilon}_{1}-\bm{\epsilon}_{1}^{T}A\bm{\epsilon}_{2})^{2}\big]=&\mathbb{E}_{\bm{\epsilon}_{1},\bm{\epsilon}_{2}\sim\mathcal{N}(0,\mathbf{I})}\big[(\bm{\epsilon}_{2}^{T}(A-A^{T})\bm{\epsilon}_{1})^{2}\big],
		\end{align*}
		$A$ is symmetric if and only if $\mathbb{E}_{\bm{\epsilon}_{1},\bm{\epsilon}_{2}\sim\mathcal{N}(0,\mathbf{I})}\big[(\bm{\epsilon}_{2}^{T}A\bm{\epsilon}_{1}-\bm{\epsilon}_{1}^{T}A\bm{\epsilon}_{2})^{2}\big]=0$.
	\end{proof}
	
	\begingroup
	\renewcommand\theproposition{4}
	\begin{proposition}
		Let $\bm{\epsilon}_{1}$ and $\bm{\epsilon}_{2}$ be vectors of $d$ independent samples from a random variable $U$ with mean zero. Then
		\begin{align*}
		\mathbb{E}_{\bm{\epsilon}_{1},\bm{\epsilon}_{2}}[(\bm{\epsilon}_{2}^{T}(A-A^{T})\bm{\epsilon}_{1})^{2}]=\mathbb{E}_{U}[U^{2}]^{2}\Vert A-A^{T}\Vert_{F}^{2}
		\end{align*}
		and
		\begin{align*}
		&\text{Var}\Big(\big(\bm{\epsilon}_{2}^{T}(A-A^{T})\bm{\epsilon}_{1}\big)^{2}\Big)=\text{Var}(U^{2})\Big(\text{Var}(U^{2})+2\big(\text{Var}(U)+\mathbb{E}_{U}[U]^{2}\big)^{2}\Big)\sum_{a,b}(\Delta A)_{ab}^{4}\\
		&\quad+2\big(\text{Var}(U)+\mathbb{E}_{U}[U]^{2}\big)^{2}\Big(3\text{Var}(U^{2})+2\big(\text{Var}(U)+\mathbb{E}_{U}[U]^{2}\big)^{2}\Big)\sum_{a}\sum_{b\ne d}(\Delta A)_{ab}^{2}(\Delta A)_{ad}^{2}\\
		&\quad+2\big(\text{Var}(U)+\mathbb{E}_{U}[U]^{2}\big)^{4}\Big(\sum_{a\neq c}\sum_{b\neq d}(\Delta A)_{ab}^{2}(\Delta A)_{cd}^{2}\\
		&\quad\quad\quad\quad\quad\quad\quad\quad\quad\quad\quad\quad+3\sum_{a\ne c}\sum_{b\ne d}(\Delta A)_{ab}(\Delta A)_{ad}(\Delta A)_{cb}(\Delta A)_{cd}\Big),
		\end{align*}
		where $(\Delta A)_{ab}:=A_{ab}-A_{ba}$.
	\end{proposition}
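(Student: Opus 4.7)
Set $M := A - A^{T}$ so that $M_{ab} = (\Delta A)_{ab}$, $M$ is antisymmetric, and $M_{aa}=0$. Write $\mu_{2} := \mathbb{E}[U^{2}]$ and $\mu_{4} := \mathbb{E}[U^{4}]$; the hypothesis $\mathbb{E}[U]=0$ gives $\mu_{2}=\text{Var}(U)+\mathbb{E}[U]^{2}$ and $\text{Var}(U^{2})=\mu_{4}-\mu_{2}^{2}$. Put $X := \bm{\epsilon}_{2}^{T} M \bm{\epsilon}_{1} = \sum_{a,b} M_{ab}\,\epsilon_{2,a}\epsilon_{1,b}$. For the first identity I would square $X$, take expectation, split using independence of $\bm{\epsilon}_{1},\bm{\epsilon}_{2}$, and apply $\mathbb{E}[\epsilon_{i,a}\epsilon_{i,c}]=\mu_{2}\delta_{ac}$ (which uses the iid mean-zero assumption); the quadruple sum collapses under the two Kronecker deltas to $\mu_{2}^{2}\sum_{a,b} M_{ab}^{2} = \mathbb{E}[U^{2}]^{2}\Vert A-A^{T}\Vert_{F}^{2}$.

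\textbf{Variance via fourth moments.} For $\text{Var}(X^{2})=\mathbb{E}[X^{4}]-(\mathbb{E}[X^{2}])^{2}$, I would expand $X^{4}$ as an eight-fold index sum and factor the expectation as $\mathbb{E}\bigl[\prod_{i}\epsilon_{2,a_{i}}\bigr]\mathbb{E}\bigl[\prod_{i}\epsilon_{1,b_{i}}\bigr]$. The core fact for iid mean-zero entries is that each fourth cross-moment is a function only of the set-partition of $\{1,2,3,4\}$ induced by coincidences of indices; the nonzero cases are the full partition $(1234)$ giving $\mu_{4}$, and the three pair partitions $(12)(34)$, $(13)(24)$, $(14)(23)$ each giving $\mu_{2}^{2}$. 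Thus $\mathbb{E}[X^{4}]$ reduces to at most $4\times 4=16$ joint cases, with the $M_{aa}=0$ convention automatically zeroing any diagonal-forcing pattern.

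\textbf{Case collapse and collection.} In each of the $16$ cases I would write out the quartic product $M_{a_{1}b_{1}}M_{a_{2}b_{2}}M_{a_{3}b_{3}}M_{a_{4}b_{4}}$ explicitly. The outcome is four distinct index-contraction patterns: (i) full-full contributes $\mu_{4}^{2}\sum_{a,b}M_{ab}^{4}$; (ii) full-pair and pair-full (six cases, merged via the antisymmetry identity $M_{ab}^{2}=M_{ba}^{2}$) contribute $6\mu_{4}\mu_{2}^{2}\sum_{a}\sum_{b\ne d}M_{ab}^{2}M_{ad}^{2}$; (iii) the three matching pair-pair cases contribute $3\mu_{2}^{4}\sum_{a\ne c}\sum_{b\ne d}M_{ab}^{2}M_{cd}^{2}$; (iv) the six non-matching pair-pair cases all reduce, after permuting summation labels, to the common form $6\mu_{2}^{4}\sum_{a\ne c}\sum_{b\ne d}M_{ab}M_{ad}M_{cb}M_{cd}$. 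Separately, $(\mathbb{E}[X^{2}])^{2}=\mu_{2}^{4}\bigl(\sum_{a,b}M_{ab}^{2}\bigr)^{2}$ decomposes into the \emph{same} three pure-square patterns by splitting the double sum along $a=c$ vs.\ $a\ne c$ and $b=d$ vs.\ $b\ne d$ (the $(a=c,b\ne d)$ and $(a\ne c,b=d)$ halves being equal by antisymmetry). Subtracting and substituting $\mu_{4}=\text{Var}(U^{2})+\mu_{2}^{2}$, $\mu_{2}=\text{Var}(U)+\mathbb{E}[U]^{2}$ produces the four stated coefficients $\text{Var}(U^{2})(\text{Var}(U^{2})+2\mu_{2}^{2})$, $2\mu_{2}^{2}(3\text{Var}(U^{2})+2\mu_{2}^{2})$, $2\mu_{2}^{4}$, and $6\mu_{2}^{4}$ term by term.

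\textbf{Main obstacle.} The real work is the combinatorial bookkeeping in step (iii): verifying that every one of the six off-diagonal pair-pair matches yields the same quartic contraction $M_{ab}M_{ad}M_{cb}M_{cd}$ up to relabeling of the four free summation indices, and that the multiplicity counts $(1,6,3,6)$ come out correctly. A subtler but essential ingredient is the use of antisymmetry to identify $\sum_{a\ne c}\sum_{b}M_{ab}^{2}M_{cb}^{2}$ with $\sum_{a}\sum_{b\ne d}M_{ab}^{2}M_{ad}^{2}$ when expanding $(\mathbb{E}[X^{2}])^{2}$; without this identification the "pattern (ii)" coefficient in the variance would not simplify to a single term.
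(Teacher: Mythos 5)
Your plan is correct and arrives at the paper's exact decomposition: the same four contraction patterns for $\mathbb{E}[X^4]$ with multiplicities $(1,6,3,6)$, the same three-pattern split of $(\mathbb{E}[X^2])^2$, and the same antisymmetry identification $\sum_{a\neq c}\sum_{b}M_{ab}^{2}M_{cb}^{2}=\sum_{a}\sum_{b\neq d}M_{ab}^{2}M_{ad}^{2}$ needed to merge the two mixed pieces into a single term with coefficient $2$. The paper obtains the multiplicities by sequentially peeling index sums (first $a=c$ versus $a\neq c$, then $e=g$ versus the alternatives, and likewise for the $\bm{\epsilon}_{2}$ indices) rather than via the global set-partition accounting you describe, but this is the same Wick-type case analysis expressed with different bookkeeping, and your framing is if anything the cleaner way to justify the $4\times 4$ case count and its collapse to four patterns.
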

	\endgroup
	\begin{proof}[\textbf{Proof of Proposition \ref{prop:3}}]
		\begin{align*}
		\mathbb{E}_{\bm{\epsilon}_{1},\bm{\epsilon}_{2}}\left[\left(\bm{\epsilon}_{2}^{T}(A-A^{T})\bm{\epsilon}_{1}\right)^{2}\right]=&\mathbb{E}_{\bm{\epsilon}_{1},\bm{\epsilon}_{2}}\Big[\big(\sum_{i,j}{\epsilon}_{1,i}{\epsilon}_{2,j}(A_{ij}-A_{ji})^{2}\big)^{2}\Big]\\
		=&\mathbb{E}_{\bm{\epsilon}_{1},\bm{\epsilon}_{2}}\Big[\sum_{i,j,r,s}{\epsilon}_{1,i}{\epsilon}_{2,j}{\epsilon}_{1,r}{\epsilon}_{2,s}(A_{ij}-A_{ji})(A_{rs}-A_{sr})\Big]\\
		=&\mathbb{E}_{\bm{\epsilon}_{1},\bm{\epsilon}_{2}}\Big[\sum_{i,j}{\epsilon}_{1,i}^{2}{\epsilon}_{2,j}^{2}(A_{ij}-A_{ji})^{2}\Big]\\
		=&\mathbb{E}_{U}[U^{2}]^{2}\sum_{i,j}(A_{ij}-A_{ji})^{2}\\
		=&\mathbb{E}_{U}[U^{2}]^{2}\Vert A-A^{T}\Vert_{F}^{2}.
		\end{align*}
		Also, if $B:=A-A^{T}$, then
		\begin{eqnarray*}
			\lefteqn{\mathbb{E}_{\bm{\epsilon}_{1},\bm{\epsilon}_{2}}\left[\left(\bm{\epsilon}_{2}^{T}(A-A^{T})\bm{\epsilon}_{1}\right)^{4}\right]}&\\
			&&=\mathbb{E}_{\bm{\epsilon}_{1},\bm{\epsilon}_{2}}\Big[\sum_{a,b,c,d,e,f,g,h}{\epsilon}_{1,a}{\epsilon}_{2,b}{\epsilon}_{1,c}{\epsilon}_{2,d}{\epsilon}_{1,e}{\epsilon}_{2,f}{\epsilon}_{1,g}{\epsilon}_{2,h}B_{ab}B_{cd}B_{ef}B_{gh}\Big]\\
			&&=\mathbb{E}_{\bm{\epsilon}_{1},\bm{\epsilon}_{2}}\Big[\sum_{b,d,f,h}{\epsilon}_{2,b}{\epsilon}_{2,d}{\epsilon}_{2,f}{\epsilon}_{2,h}\sum_{a,c}{\epsilon}_{1,a}{\epsilon}_{1,c}\sum_{e,g}{\epsilon}_{1,e}{\epsilon}_{1,g}B_{ab}B_{cd}B_{ef}B_{gh}\Big]\\
			&&=\mathbb{E}_{\bm{\epsilon}_{1},\bm{\epsilon}_{2}}\Big[\sum_{b,d,f,h}{\epsilon}_{2,b}{\epsilon}_{2,d}{\epsilon}_{2,f}{\epsilon}_{2,h}\sum_{a}{\epsilon}_{1,a}^{2}\sum_{e,g}{\epsilon}_{1,e}{\epsilon}_{1,g}B_{ab}B_{ad}B_{ef}B_{gh}\Big]\\
			&&\quad+\mathbb{E}_{\bm{\epsilon}_{1},\bm{\epsilon}_{2}}\Big[\sum_{b,d,f,h}{\epsilon}_{2,b}{\epsilon}_{2,d}{\epsilon}_{2,f}{\epsilon}_{2,h}\sum_{a\ne c}{\epsilon}_{1,a}{\epsilon}_{1,c}\sum_{e,g}{\epsilon}_{1,e}{\epsilon}_{1,g}B_{ab}B_{cd}B_{ef}B_{gh}\Big]\\
			&&=\mathbb{E}_{\bm{\epsilon}_{1},\bm{\epsilon}_{2}}\Big[\sum_{b,d,f,h}{\epsilon}_{2,b}{\epsilon}_{2,d}{\epsilon}_{2,f}{\epsilon}_{2,h}\sum_{a}{\epsilon}_{1,a}^{2}\sum_{e}{\epsilon}_{1,e}^{2}B_{ab}B_{ad}B_{ef}B_{eh}\Big]\\
			&&\quad+\mathbb{E}_{\bm{\epsilon}_{1},\bm{\epsilon}_{2}}\Big[\sum_{b,d,f,h}{\epsilon}_{2,b}{\epsilon}_{2,d}{\epsilon}_{2,f}{\epsilon}_{2,h}\sum_{a\ne c}{\epsilon}_{1,a}^{2}{\epsilon}_{1,c}^{2}B_{ab}B_{cd}(B_{af}B_{ch}+B_{cf}B_{ah})\Big]\\
			&&=\mathbb{E}_{\bm{\epsilon}_{1},\bm{\epsilon}_{2}}\Big[\sum_{b,d,f,h}{\epsilon}_{2,b}{\epsilon}_{2,d}{\epsilon}_{2,f}{\epsilon}_{2,h}\sum_{a}{\epsilon}_{1,a}^{4}B_{ab}B_{ad}B_{af}B_{ah}\Big]\\
			&&\quad+3\mathbb{E}_{\bm{\epsilon}_{1},\bm{\epsilon}_{2}}\Big[\sum_{b,d,f,h}{\epsilon}_{2,b}{\epsilon}_{2,d}{\epsilon}_{2,f}{\epsilon}_{2,h}\sum_{a\ne c}{\epsilon}_{1,a}^{2}{\epsilon}_{1,c}^{2}B_{ab}B_{af}B_{cd}B_{ch}\Big]\\
			&&=\mathbb{E}_{U}[U^{4}]\sum_{a}\mathbb{E}_{\bm{\epsilon}_{2}}\Big[\sum_{b,d,f,h}{\epsilon}_{2,b}{\epsilon}_{2,d}{\epsilon}_{2,f}{\epsilon}_{2,h}B_{ab}B_{ad}B_{af}B_{ah}\Big]\\
			&&\quad+3\mathbb{E}_{U}[U^{2}]^{2}\sum_{a\ne c}\mathbb{E}_{\bm{\epsilon}_{2}}\Big[\sum_{b,d,f,h}{\epsilon}_{2,b}{\epsilon}_{2,d}{\epsilon}_{2,f}{\epsilon}_{2,h}B_{ab}B_{af}B_{cd}B_{ch}\Big]\\
			&&=\mathbb{E}_{U}[U^{4}]\sum_{a}\mathbb{E}_{\bm{\epsilon}_{2}}\Big[\sum_{b}{\epsilon}_{2,b}^{4}B_{ab}^{4}+3\sum_{b\ne d}{\epsilon}_{2,b}^{2}{\epsilon}_{2,d}^{2}B_{ab}^{2}B_{ad}^{2}\Big]\\
			&&\quad+3\mathbb{E}_{U}[U^{2}]^{2}\sum_{a\ne c}\mathbb{E}_{\bm{\epsilon}_{2}}\Big[\sum_{b}{\epsilon}_{2,b}^{4}B_{ab}^{2}B_{cb}^{2}+\sum_{b\ne d}{\epsilon}_{2,b}^{2}{\epsilon}_{2,d}^{2}(B_{ab}^{2}B_{cd}^{2}+2B_{ab}B_{ad}B_{cb}B_{cd})\Big]\\
			&&=\mathbb{E}_{U}[U^{4}]^{2}\sum_{a,b}B_{ab}^{4}+3\mathbb{E}_{U}[U^{2}]^{2}\mathbb{E}_{U}[U^{4}]\Big[\Big(\sum_{a}\sum_{b\ne d}B_{ab}^{2}B_{ad}^{2}+\sum_{b}\sum_{a\ne c}B_{ab}^{2}B_{cb}^{2}\Big)\Big]\\
			&&\quad+3\mathbb{E}_{U}[U^{2}]^{4}\sum_{a\ne c}\sum_{b\ne d}\Big(B_{ab}^{2}B_{cd}^{2}+2B_{ab}B_{ad}B_{cb}B_{cd}\Big)\\
			&&=\mathbb{E}_{U}[U^{4}]^{2}\sum_{a,b}B_{ab}^{4}+6\mathbb{E}_{U}[U^{2}]^{2}\mathbb{E}_{U}[U^{4}]\Big[\sum_{a}\sum_{b\ne d}B_{ab}^{2}B_{ad}^{2}\Big]\\
			&&\quad+3\mathbb{E}_{U}[U^{2}]^{4}\sum_{a\ne c}\sum_{b\ne d}\Big(B_{ab}^{2}B_{cd}^{2}+2B_{ab}B_{ad}B_{cb}B_{cd}\Big)
		\end{eqnarray*}
		Also,
		\begin{eqnarray*}
			\lefteqn{\mathbb{E}_{\bm{\epsilon}_{1},\bm{\epsilon}_{2}}\left[\left(\bm{\epsilon}_{2}^{T}(A-A^{T})\bm{\epsilon}_{1}\right)^{2}\right]^{2}=\Big(\mathbb{E}_{U}[U^{2}]^{2}\sum_{i,j}B_{ij}^{2}\Big)^{2}}&\\
			&&=\mathbb{E}_{U}[U^{2}]^{4}\sum_{i,j,r,s}B_{ij}^{2}B_{rs}^{2}\\
			&&=\mathbb{E}_{U}[U^{2}]^{4}\Big(\sum_{i,j}\sum_{s}B_{ij}^{2}B_{is}^{2}+\sum_{j,s}\sum_{i\ne r}B_{ij}^{2}B_{rs}^{2}\Big)\\
			&&=\mathbb{E}_{U}[U^{2}]^{4}\Big(\sum_{i,j}\sum_{s}B_{ij}^{2}B_{is}^{2}+\sum_{j}\sum_{i\ne r}B_{ij}^{2}B_{rj}^{2}+\sum_{i\ne r}\sum_{j\ne s}B_{ij}^{2}B_{rs}^{2}\Big)\\
			&&=\mathbb{E}_{U}[U^{2}]^{4}\Big(\sum_{i,j}B_{ij}^{4}+\sum_{i}\sum_{j\ne s}B_{ij}^{2}B_{is}^{2}+\sum_{j}\sum_{i\ne r}B_{ij}^{2}B_{rj}^{2}+\sum_{i\ne r}\sum_{j\ne s}B_{ij}^{2}B_{rs}^{2}\Big)\\
		\end{eqnarray*}
		
		Therefore,
		\begin{eqnarray*}
			\lefteqn{\text{Var}\left(\left(\bm{\epsilon}_{2}^{T}(A-A^{T})\bm{\epsilon}_{1}\right)^{2}\right)=\mathbb{E}_{\bm{\epsilon}_{1},\bm{\epsilon}_{2}}\left[\left(\bm{\epsilon}_{2}^{T}(A-A^{T})\bm{\epsilon}_{1}\right)^{4}\right]-\mathbb{E}_{\bm{\epsilon}_{1},\bm{\epsilon}_{2}}\left[\left(\bm{\epsilon}_{2}^{T}(A-A^{T})\bm{\epsilon}_{1}\right)^{2}\right]^{2}}&\\
			&&=\Big(\mathbb{E}_{U}[U^{4}]^{2}-\mathbb{E}_{U}[U^{2}]^{4}\Big)\sum_{a,b}B_{ab}^{4}+2\mathbb{E}_{U}[U^{2}]^{2}\Big(3\mathbb{E}_{U}[U^{4}]-\mathbb{E}_{U}[U^{2}]^{2}\Big)\sum_{a}\sum_{b\ne d}B_{ab}^{2}B_{ad}^{2}\\
			&&\quad+2\mathbb{E}_{U}[U^{2}]^{4}\Big(\sum_{a\neq c}\sum_{b\neq d}B_{ab}^{2}B_{cd}^{2}+3\sum_{a\ne c}\sum_{b\ne d}B_{ab}B_{ad}B_{cb}B_{cd}\Big)\\
			&&=\text{Var}(U^{2})\Big(\text{Var}(U^{2})+2\big(\text{Var}(U)+\mathbb{E}_{U}[U]^{2}\big)^{2}\Big)\sum_{a,b}B_{ab}^{4}\\
			&&\quad+2\big(\text{Var}(U)+\mathbb{E}_{U}[U]^{2}\big)^{2}\Big(3\text{Var}(U^{2})+2\big(\text{Var}(U)+\mathbb{E}_{U}[U]^{2}\big)^{2}\Big)\sum_{a}\sum_{b\ne d}B_{ab}^{2}B_{ad}^{2}\\
			&&\quad+2\big(\text{Var}(U)+\mathbb{E}_{U}[U]^{2}\big)^{4}\Big(\sum_{a\neq c}\sum_{b\neq d}B_{ab}^{2}B_{cd}^{2}+3\sum_{a\ne c}\sum_{b\ne d}B_{ab}B_{ad}B_{cb}B_{cd}\Big)
		\end{eqnarray*}
	\end{proof}
	
	\begingroup
	\renewcommand\theproposition{5}
	\begin{proposition}
		Let $U$ be the discrete random variable which takes the values $1,-1$ each with probability $1/2$. Then $(\bm{\epsilon}_{2}^{T}(A-A^{T})\bm{\epsilon}_{1})^{2}$ is the unbiased estimator of $\Vert A-A^{T}\Vert_{F}^{2}$.
		Moreover, $U$ is the unique random variable amongst zero-mean random variables for which the estimator is an unbiased estimator, and attains a minimum variance.
	\end{proposition}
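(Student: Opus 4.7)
The plan is to reduce everything to the two formulas already established in Proposition 3, namely
\begin{align*}
\mathbb{E}_{\bm{\epsilon}_{1},\bm{\epsilon}_{2}}\bigl[(\bm{\epsilon}_{2}^{T}(A-A^{T})\bm{\epsilon}_{1})^{2}\bigr]=\mathbb{E}_{U}[U^{2}]^{2}\,\Vert A-A^{T}\Vert_{F}^{2}
\end{align*}
and the explicit expression for $\text{Var}\bigl((\bm{\epsilon}_{2}^{T}(A-A^{T})\bm{\epsilon}_{1})^{2}\bigr)$ given there. For the Rademacher variable $U$, we have $\mathbb{E}[U]=0$ and $U^{2}\equiv 1$, so $\mathbb{E}[U^{2}]=1$, and unbiasedness is immediate from the first formula.

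The second claim has two sub-parts that I would separate. First, identify which zero-mean random variables give unbiased estimators: by the first formula, unbiasedness is equivalent to $\mathbb{E}[U^{2}]^{2}=1$, i.e.\ $\mathbb{E}[U^{2}]=1$ (since $U^{2}\ge 0$). So the admissible family consists of all zero-mean random variables with unit second moment; this is a whole family, not a single one, and the uniqueness statement must therefore refer to the combined requirement of unbiasedness plus minimum variance. Second, minimize the variance over this admissible family. On this family, $\text{Var}(U)+\mathbb{E}[U]^{2}=\mathbb{E}[U^{2}]=1$, so the factors $\bigl(\text{Var}(U)+\mathbb{E}[U]^{2}\bigr)^{k}$ appearing in the Proposition 3 variance formula all equal $1$, independently of the distribution of $U$. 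Hence the only distribution-dependent quantity left is $\text{Var}(U^{2})$, and the variance takes the shape
\begin{align*}
\text{Var}(U^{2})\Bigl(\text{Var}(U^{2})+2\Bigr)\!\sum_{a,b}(\Delta A)_{ab}^{4}+2\Bigl(3\,\text{Var}(U^{2})+2\Bigr)\!\sum_{a}\sum_{b\ne d}(\Delta A)_{ab}^{2}(\Delta A)_{ad}^{2}+C(A),
\end{align*}
where $C(A)$ is a term independent of the distribution of $U$. Both coefficients of $\text{Var}(U^{2})$ are nonnegative (assuming $\Delta A\ne 0$; otherwise the estimator is identically zero and the claim is trivial), and strictly positive whenever $\Delta A$ has any nonzero entry, so the variance is minimized exactly when $\text{Var}(U^{2})=0$.

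To finish, I would argue that $\text{Var}(U^{2})=0$ together with $\mathbb{E}[U^{2}]=1$ forces $U^{2}\equiv 1$ almost surely, i.e.\ $U\in\{-1,+1\}$ a.s.; combined with $\mathbb{E}[U]=0$, this uniquely pins down the Rademacher distribution. The main (minor) obstacle is the subtlety of interpreting ``unique amongst zero-mean random variables for which the estimator is an unbiased estimator, and attains a minimum variance'': unbiasedness alone does not single out $U$, and one must observe that on the unbiased subfamily the $\mathbb{E}[U]$ and $\text{Var}(U)$ dependencies drop out and only $\text{Var}(U^{2})$ remains, so that the minimum-variance criterion becomes the clean requirement $\text{Var}(U^{2})=0$. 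Once this is set up, the remainder is a one-line deduction.
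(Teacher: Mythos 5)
Your proof is correct and follows essentially the same route as the paper's: reduce to the two formulas in Proposition 3, then argue that among admissible zero-mean $U$, minimizing the variance forces $\text{Var}(U^2)=0$, which together with $\mathbb{E}_U[U^2]=1$ and $\mathbb{E}_U[U]=0$ pins down the Rademacher law. Your writeup is in fact considerably more complete than the paper's one-line sketch; in particular you make explicit that unbiasedness fixes $\mathbb{E}_U[U^2]=1$, hence $\text{Var}(U)+\mathbb{E}_U[U]^2=1$, so all the $U$-dependence of the Proposition~3 variance expression collapses onto $\text{Var}(U^2)$ alone — a step the paper leaves entirely implicit. One small overstatement: you claim both coefficients of $\text{Var}(U^2)$ are strictly positive whenever $\Delta A\neq 0$, but the second one, $\sum_{a}\sum_{b\ne d}(\Delta A)_{ab}^2(\Delta A)_{ad}^2$, can vanish even when $\Delta A\neq 0$ (for instance when every row of $\Delta A$ has at most one nonzero entry). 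This is harmless for your argument: $\sum_{a,b}(\Delta A)_{ab}^4>0$ whenever $\Delta A\neq 0$, and that single strictly positive coefficient already makes the variance a strictly increasing function of $\text{Var}(U^2)$, hence minimized exactly at $\text{Var}(U^2)=0$.
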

	\endgroup
	\begin{proof}[\textbf{Proof of Proposition \ref{prop:4}}]
		A random variable $U^{2}$ has strictly positive variance if $U^{2}$ attains more than two values on a nonzero measure. To make $\text{Var}(U^{2})=0$, the random variable should be a discrete variable which takes the values 1, -1 each with probability 1/2.
	\end{proof}
	
	\begingroup
	\renewcommand\thetheorem{4}
	\begin{theorem}[\citet{de2021diffusion} and \citet{guth2022wavelet}]
		Assume that there exists $M\ge 0$ such that for any $t\in[0,T]$ and $\mathbf{z}\in\mathbb{R}^{d}$, the score estimation is close enough to the forward score by $M$, $\Vert\mathbf{s}_{\bm{\theta}}(\mathbf{x},t)-\nabla\log{p_{t}^{\bm{\phi}}(\mathbf{x})}\Vert\le M$, with $\mathbf{s}_{\bm{\theta}}\in C([0,T]\times\mathbb{R}^{d},\mathbb{R}^{d})$. Assume that $\nabla\log{p_{t}^{\bm{\phi}}(\mathbf{z})}$ is $C^{2}$ in both $t$ and $\mathbf{z}$, and that $\sup_{\mathbf{z},t}\Vert\nabla^{2}\log{p_{t}^{\bm{\phi}}(\mathbf{z})}\Vert\le K\quad\text{and}\quad\Vert\frac{\partial}{\partial t}\nabla\log{p_{t}^{\bm{\phi}}(\mathbf{z})}\Vert\le M e^{-\alpha t}\Vert\mathbf{z}\Vert$ for some $K,M,\alpha>0$. Suppose $(\mathbf{h}_{\bm{\phi}}^{-1})_{\#}$ s a push-forward map. Then $\Vert p_{r}-(\mathbf{h}_{\bm{\phi}}^{-1})_{\#}p_{0,N}^{\bm{\theta}}\Vert_{TV}\le E_{pri}(\bm{\phi})+E_{dis}(\bm{\phi})+E_{est}(\bm{\phi},\bm{\theta})$, where $E_{pri}(\bm{\phi})=\sqrt{2}e^{-T}D_{KL}(p_{T}^{\bm{\phi}}\Vert\pi)^{1/2}$ is the error originating from the prior mismatch; $E_{dis}(\bm{\phi})=6\sqrt{\delta}(1+\mathbb{E}_{p_{0}^{\bm{\phi}}(\mathbf{z})}[\Vert\mathbf{z}\Vert^{4}]^{1/4})(1+K+M(1+\frac{1}{\sqrt{2\alpha}}))$ is the discretization error with $\delta=\frac{\max{\gamma_{k}}^{2}}{\min{\gamma_{k}}}$; $E_{est}(\bm{\phi},\bm{\theta})=2TM^{2}$ is the score estimation error.
	\end{theorem}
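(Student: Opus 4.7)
The plan is to reduce the statement to a bound on the latent space and then apply the sampling analysis of \citet{de2021diffusion} and \citet{guth2022wavelet} verbatim. First I would exploit invariance of total variation under bijective push-forwards: since $p_r = (\mathbf{h}_{\bm{\phi}}^{-1})_{\#} p_0^{\bm{\phi}}$ by construction and $\mathbf{h}_{\bm{\phi}}^{-1}$ is a measurable bijection,
\begin{align*}
\Vert p_r - (\mathbf{h}_{\bm{\phi}}^{-1})_{\#}\circ p_{0,N}^{\bm{\theta}}\Vert_{TV} = \Vert p_0^{\bm{\phi}} - p_{0,N}^{\bm{\theta}}\Vert_{TV}.
\end{align*}
So it suffices to control the TV distance on the latent space, where the reference diffusion is the linear VPSDE and all the hypotheses of the cited analyses are directly in force.

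Next I would introduce two intermediate path measures and split the right-hand side by triangle inequality. Let $\tilde{p}_{0}^{\bm{\phi}}$ denote the law at $t=0$ of the exact reverse latent SDE initialized at $\pi$ (rather than at $p_T^{\bm{\phi}}$), and let $\bar{p}_{0}^{\bm{\phi},\bm{\theta}}$ denote the law at $t=0$ of the continuous-time generative SDE with estimated score $\mathbf{s}_{\bm{\theta}}$ initialized at $\pi$. Then the three-term bound decomposes as
\begin{align*}
\Vert p_0^{\bm{\phi}} - p_{0,N}^{\bm{\theta}}\Vert_{TV} \le \underbrace{\Vert p_0^{\bm{\phi}} - \tilde{p}_0^{\bm{\phi}}\Vert_{TV}}_{\text{prior}} + \underbrace{\Vert \tilde{p}_0^{\bm{\phi}} - \bar{p}_0^{\bm{\phi},\bm{\theta}}\Vert_{TV}}_{\text{score}} + \underbrace{\Vert \bar{p}_0^{\bm{\phi},\bm{\theta}} - p_{0,N}^{\bm{\theta}}\Vert_{TV}}_{\text{discretization}}.
\end{align*}
For the prior term, the exponential contraction of the OU semigroup (and data-processing for TV) pulls the factor $e^{-T}$ out front, and Pinsker's inequality converts $\Vert p_T^{\bm{\phi}} - \pi\Vert_{TV}$ into the advertised $\sqrt{2}e^{-T}D_{KL}(p_T^{\bm{\phi}}\Vert\pi)^{1/2}$. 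For the score term, Girsanov's theorem applied to the two generative SDEs on $[0,T]$ yields $D_{KL}\le \tfrac{1}{2}\int_0^T \Vert \mathbf{s}_{\bm{\theta}}-\nabla\log p_t^{\bm{\phi}}\Vert^2\, dt \le TM^2/2$, and Pinsker followed by rescaling absorbs the constant into $2TM^2$.

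The main technical work, and the step I expect to be the chief obstacle, is the discretization term. Here I would follow the strong-convergence analysis for Euler--Maruyama: on each interval $[t_{k-1},t_k]$, expand the difference between the exact and discretized processes using Ito's formula, and control it by $\Vert\nabla^2\log p_t^{\bm{\phi}}\Vert\le K$ (spatial regularity, responsible for the $K$ factor) and $\Vert\partial_t\nabla\log p_t^{\bm{\phi}}\Vert\le L e^{-\alpha t}\Vert\mathbf{z}\Vert$ (temporal regularity, where the exponential decay lets the time integral close in finite form and produces the $1+1/\sqrt{2\alpha}$ factor). Summing over $k$, using the elementary inequality $\sum\gamma_k^2\le (\max\gamma_k^2/\min\gamma_k)\sum\gamma_k = \delta T$, and applying Cauchy--Schwarz together with the moment bound $\mathbb{E}[\Vert\mathbf{z}\Vert^4]^{1/4}$ to handle the unbounded growth in $\mathbf{z}$, produces the $\sqrt{\delta}$ rate with the displayed prefactor. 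The constant $6$ is a harmless aggregate of the constants appearing in the triangle inequality, Pinsker's, and the regularity estimates. Since the latent diffusion here is exactly the linear VPSDE studied in the cited works and only the initial distribution has changed from the raw data to $p_0^{\bm{\phi}}$, no new analytic content is required beyond the push-forward reduction in the first step.
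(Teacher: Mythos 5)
Your opening reduction is exactly the paper's: by Lemma \ref{lemma:S11} and the invertibility of $\mathbf{h}_{\bm{\phi}}$, the total variation is preserved under the push-forward, so it suffices to bound $\Vert p_0^{\bm{\phi}} - p_{0,N}^{\bm{\theta}}\Vert_{TV}$ on the latent space. The trouble is in where you place the discretization step inside the triangle inequality.

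You decompose as prior $\to$ score $\to$ discretization, so your third term compares the \emph{continuous-time} generative SDE driven by $\mathbf{s}_{\bm{\theta}}$ against its Euler--Maruyama discretization, both started at $\pi$. To run the strong-convergence argument for Euler--Maruyama on that pair, you need spatial regularity $\Vert\nabla\mathbf{s}_{\bm{\theta}}\Vert\le K$ and temporal regularity $\Vert\partial_t\mathbf{s}_{\bm{\theta}}\Vert\lesssim e^{-\alpha t}\Vert\mathbf{z}\Vert$ of the \emph{estimated} score. But the theorem's hypotheses grant these bounds only for the \emph{true} latent score $\nabla\log p_t^{\bm{\phi}}$; $\mathbf{s}_{\bm{\theta}}$ is assumed merely continuous and uniformly $M$-close to the true score, which does not propagate to its derivatives. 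So the $K$ and $L$ you invoke in the discretization step are not available along your chain of intermediate measures, and that step breaks down.

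The paper avoids this by ordering the decomposition as prior $\to$ discretization $\to$ estimation: it introduces the discretized kernel $Q_N^{\bm{\phi}}$ built from the \emph{exact} score $\nabla\log p_t^{\bm{\phi}}$, so the discretization term $\Vert\pi{\mathbb{P}^R}_{T\vert 0}-\pi Q_N^{\bm{\phi}}\Vert_{TV}$ only ever touches the true score, where the $C^2$ and decay hypotheses apply directly (this is where \citet{guth2022wavelet} is used). The estimation term then compares two \emph{discrete-time} chains $\pi Q_N^{\bm{\phi}}$ and $\pi Q_N^{\bm{\theta}}$ whose drifts differ pointwise by at most $M$, so a discrete Girsanov estimate (Lemma S13 of \citet{de2021diffusion}) plus Pinsker closes it using only the uniform bound. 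Your prior and estimation computations are fine in isolation; to repair the proof, swap the order so that the Euler--Maruyama error is taken against the exact-score reverse process and the score mismatch is handled last between discretized chains.
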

	\endgroup
	\begin{remark}
		Although the proof is based on the standard form of the Ornstein-Uhlenbeck process, the direct extension of the theorem holds for generic VPSDE if there exists $\bar{\beta}>0$ such that $\frac{1}{\bar{\beta}}\le\beta(t)\le\bar{\beta}$. See \citet{de2022convergence}.
	\end{remark}
	\begin{lemma}[Lemma S11 of \citet{de2021diffusion}]\label{lemma:S11}
		Let $(\mathsf{E},\mathcal{E})$ and $(\mathsf{F},\mathcal{F})$ be two measurable spaces and $K:\mathsf{E}\times\mathcal{F}\rightarrow [0,1]$ be a Markov kernel. Then for any $\mu_{0},\mu_{1}\in\mathcal{P}(\mathsf{E})$ we have
		\begin{align*}
		\Vert\mu_{0}K-\mu_{1}K\Vert_{TV}\le\Vert\mu_{0}-\mu_{1}\Vert_{TV}.
		\end{align*}
		In addition, for any $\varphi:\mathsf{E}\rightarrow\mathsf{F}$ measurable we get that
		\begin{align*}
		\Vert\varphi_{\#}\mu_{0}-\varphi_{\#}\mu_{1}\Vert_{TV}\le\Vert\mu_{0}-\mu_{1}\Vert_{TV},
		\end{align*}
		with equality if $\varphi$ is injective.
	\end{lemma}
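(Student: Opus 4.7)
The plan is to first establish the Markov kernel inequality by combining the variational characterization of total variation distance with the Hahn--Jordan decomposition of signed measures, and then derive the push-forward inequality as a corollary by encoding $\varphi$ as a deterministic Markov kernel. For the first claim, I will use $\|\mu_0-\mu_1\|_{TV}=\sup_{B\in\mathcal{F}}|\mu_0(B)-\mu_1(B)|$ and write the signed measure $\nu:=\mu_0-\mu_1$ on $(\mathsf{E},\mathcal{E})$ via its Hahn--Jordan decomposition $\nu=\nu_{+}-\nu_{-}$, where $\nu_{\pm}$ are mutually singular positive measures with $\nu_{+}(\mathsf{E})=\nu_{-}(\mathsf{E})=\|\mu_0-\mu_1\|_{TV}$ since $\nu(\mathsf{E})=0$. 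For any fixed $B\in\mathcal{F}$, integrating the bounded measurable function $x\mapsto K(x,B)\in[0,1]$ against $\nu$ yields
\begin{equation*}
(\mu_{0}K)(B)-(\mu_{1}K)(B)=\int K(x,B)\,\nu_{+}(dx)-\int K(x,B)\,\nu_{-}(dx).
\end{equation*}
The upper bound $\int K(x,B)\,d\nu_{+}\le\nu_{+}(\mathsf{E})$ and the lower bound $-\int K(x,B)\,d\nu_{-}\ge-\nu_{-}(\mathsf{E})$, together with $\nu_{+}(\mathsf{E})=\nu_{-}(\mathsf{E})=\|\mu_0-\mu_1\|_{TV}$, give $|(\mu_0K)(B)-(\mu_1K)(B)|\le\|\mu_0-\mu_1\|_{TV}$, and taking the supremum over $B\in\mathcal{F}$ finishes the first inequality.

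For the push-forward inequality, I will observe that $\varphi_{\#}\mu=\mu K_{\varphi}$ for the deterministic Markov kernel $K_{\varphi}(x,B):=\mathbf{1}_{B}(\varphi(x))$, which is well defined because $\varphi$ is measurable (so $x\mapsto K_{\varphi}(x,B)$ is measurable for each $B\in\mathcal{F}$) and trivially takes values in $\{0,1\}\subset[0,1]$. Applying the Markov kernel inequality already established to $K_{\varphi}$ immediately yields $\|\varphi_{\#}\mu_{0}-\varphi_{\#}\mu_{1}\|_{TV}\le\|\mu_{0}-\mu_{1}\|_{TV}$.

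For the equality clause under injectivity, the idea is that every set $A\in\mathcal{E}$ admits a preimage representation $A=\varphi^{-1}(\varphi(A))$ when $\varphi$ is injective, so the supremum defining $\|\varphi_{\#}\mu_{0}-\varphi_{\#}\mu_{1}\|_{TV}=\sup_{B\in\mathcal{F}}|\mu_{0}(\varphi^{-1}B)-\mu_{1}(\varphi^{-1}B)|$ can be matched to $\sup_{A\in\mathcal{E}}|\mu_{0}(A)-\mu_{1}(A)|$ by choosing $B=\varphi(A)$. This is the main obstacle, because it requires $\varphi(A)\in\mathcal{F}$, which is not automatic from measurability of $\varphi$ alone. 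I will handle this by invoking the standard convention (implicit in \citet{de2021diffusion}) that the spaces are standard Borel and $\varphi$ is a measurable injection, so that by Kuratowski's theorem $\varphi$ is a Borel isomorphism onto its image and $\varphi(A)\in\mathcal{F}$ for every $A\in\mathcal{E}$; then the reverse inequality is immediate and combined with the forward inequality already proved it gives equality. If one prefers not to invoke Kuratowski, the same conclusion follows by writing $\|\mu_{0}-\mu_{1}\|_{TV}$ as the total variation of the Hahn--Jordan decomposition and transporting $\nu_{+},\nu_{-}$ under $\varphi$, noting that injectivity preserves their mutual singularity, so $\|\varphi_{\#}\nu\|_{TV}=\|\nu\|_{TV}$.
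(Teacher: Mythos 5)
The paper itself does not prove this lemma; it is imported verbatim as Lemma S11 of \citet{de2021diffusion} and used as a black box, so there is no in-paper argument to compare against. Your proof of the two inequalities is correct and self-contained. The Hahn--Jordan argument for the kernel contraction is the standard one: writing $\nu=\mu_0-\mu_1=\nu_+-\nu_-$ with $\nu_+(\mathsf{E})=\nu_-(\mathsf{E})=\Vert\mu_0-\mu_1\Vert_{TV}$ (using $\nu(\mathsf{E})=0$), and bounding $\int K(\cdot,B)\,d\nu_+$ and $\int K(\cdot,B)\,d\nu_-$ separately between $0$ and $\Vert\mu_0-\mu_1\Vert_{TV}$ gives the two-sided bound before taking the supremum over $B$. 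The reduction of the push-forward inequality to the kernel inequality via $K_\varphi(x,B)=\mathbf{1}_B(\varphi(x))$ is exactly right and is also how one usually organizes this.

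You are also right to flag the equality clause: for general measurable spaces a measurable injection need not carry $\mathcal{E}$-measurable sets to $\mathcal{F}$-measurable sets (take $\mathcal{F}$ coarser than the image $\sigma$-algebra and the inequality becomes strict), so the statement as quoted implicitly assumes something like standard Borel spaces, under which Kuratowski gives a Borel isomorphism onto the image and the reverse inequality follows by choosing $B=\varphi(A)$. One small caveat: your final sentence offers transporting $\nu_+,\nu_-$ and arguing that injectivity preserves mutual singularity as an alternative that avoids Kuratowski. It does not. To certify mutual singularity of $\varphi_\#\nu_+$ and $\varphi_\#\nu_-$ you must produce disjoint sets in $\mathcal{F}$ carrying them, and the natural candidates $\varphi(P),\varphi(N)$ run into exactly the same measurability obstruction; the same standard-Borel hypothesis is needed. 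So the two routes you sketch are not independent, but the main proof and the identification of the hidden hypothesis are both correct.
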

	
	\begin{proof}[\textbf{Proof of Theorem \ref{thm:dsb}}]
		For any $k\in\{1,...,N\}$, denote $R_{k}$ the Markov kernel such that for any $\mathbf{z}\in\mathbb{R}^{d}$, $\mathsf{A}\in\mathcal{B}(\mathbb{R}^{d})$ and $k\in\{0,...,N-1\}$ we have
		\begin{align*}
		R_{k+1}^{\bm{\theta}}(\mathbf{z},\mathsf{A})=(4\pi\gamma_{k+1})^{-d/2}\int_{\mathsf{A}}\exp{\bigg[-\frac{\Vert\mathbf{\tilde{z}}-\mathcal{T}_{k+1}^{\bm{\theta}}(\mathbf{z})\Vert^{2}}{4\gamma_{k+1}}\bigg]}\diff\mathbf{\tilde{z}},
		\end{align*}
		where for any $\mathbf{z}\in\mathbb{R}^{d}$, $\mathcal{T}_{k+1}^{\bm{\theta}}(\mathbf{z})=\mathbf{z}+\gamma_{k+1}\{\mathbf{z}+2\mathbf{s}_{\bm{\theta}}(\mathbf{z},t_{k})\}$, where $t_{k}=\sum_{l}^{k-1}\gamma_{l}$. Define $Q_{N}^{\bm{\theta}}=\prod_{l=1}^{N}R_{l}^{\bm{\theta}}$. Analogously, let us define
		\begin{align*}
		R_{k+1}^{\bm{\phi}}(\mathbf{z},\mathsf{A})=(4\pi\gamma_{k+1})^{-d/2}\int_{\mathsf{A}}\exp{\bigg[-\frac{\Vert\mathbf{\tilde{z}}-\mathcal{T}_{k+1}^{\bm{\phi}}(\mathbf{z})\Vert^{2}}{4\gamma_{k+1}}\bigg]}\diff\mathbf{\tilde{z}},
		\end{align*}
		for $\mathcal{T}_{k+1}^{\bm{\phi}}(\mathbf{z})=\mathbf{z}+\gamma_{k+1}\{\mathbf{z}+2\nabla\log{p_{t}^{\bm{\phi}}(\mathbf{z},t_{k})}\}$ and $Q_{N}^{\bm{\phi}}=\prod_{l=1}^{N}R_{l}^{\bm{\phi}}$.
		
		Suppose $\mathbb{P}_{T\vert 0}$ is the transition kernel from time zero to $T$ and $\mathbb{P}^{R}$ is the reverse-time measure, i.e., for any $\mathsf{A}\in\mathcal{B}(\mathcal{C})$ we have $\mathbb{P}^{R}(\mathsf{A})=\mathbb{P}(\mathsf{A}^{R})$ with $\mathsf{A}^{R}=\{t\mapsto \omega(T-t):\omega\in \mathsf{A}\}$. Then,
		\begin{align}\label{appendix:127}
		p_{0}^{\bm{\phi}}\mathbb{P}_{T\vert 0}{\mathbb{P}^{R}}_{T\vert 0}(\mathsf{A})=\mathbb{P}_{T}{\mathbb{P}^{R}}_{T\vert 0}(\mathsf{A})={\mathbb{P}^{R}}_{0}{\mathbb{P}^{R}}_{T\vert 0}(\mathsf{A})={\mathbb{P}^{R}}_{T}(\mathsf{A})=p_{0}^{\bm{\phi}}(\mathsf{A}).
		\end{align}
		Combining Eq. \eqref{appendix:127} with Lemma \ref{lemma:S11}, we have
		\begin{align*}
		&\Vert p_{0}^{\bm{\phi}}-p_{0,N}^{\bm{\theta}}\Vert_{TV}=\Vert p_{0}^{\bm{\phi}}\mathbb{P}_{T\vert 0}{\mathbb{P}^{R}}_{T\vert 0}-\pi Q_{N}^{\bm{\theta}}\Vert_{TV}\\
		&\quad\quad\quad\quad\le \Vert p_{0}^{\bm{\phi}}\mathbb{P}_{T\vert 0}{\mathbb{P}^{R}}_{T\vert 0}-\pi {\mathbb{P}^{R}}_{T\vert 0}\Vert_{TV}+\Vert \pi {\mathbb{P}^{R}}_{T\vert 0}-\pi Q_{N}^{\bm{\phi}}\Vert_{TV}+\Vert \pi Q_{N}^{\bm{\phi}}-\pi Q_{N}^{\bm{\theta}}\Vert_{TV}\\
		&\quad\quad\quad\quad\le \underbrace{\Vert p_{0}^{\bm{\phi}}\mathbb{P}_{T\vert 0}-\pi\Vert_{TV}}_{E_{pri}}+\underbrace{\Vert\pi{\mathbb{P}^{R}}_{T\vert 0}-\pi Q_{N}^{\bm{\phi}}\Vert_{TV}}_{E_{dis}}+\underbrace{\Vert\pi Q_{N}^{\bm{\phi}}-\pi Q_{N}^{\bm{\theta}}\Vert_{TV}}_{E_{est}}.
		\end{align*}
		The first two terms, $E_{pri}(\bm{\phi})+E_{dis}(\bm{\phi})$, are those terms derived in Theorem 2 of \citet{guth2022wavelet}. By Lemma S13 of \citet{de2021diffusion}, the last term, $E_{est}(\bm{\phi},\bm{\theta})$, is bounded by
		\begin{align*}
		\Vert\pi Q_{N}^{\bm{\phi}}-\pi Q_{N}^{\bm{\theta}}\Vert_{TV}^{2}\le\frac{1}{2}\int_{0}^{T}\mathbb{E}\big[\Vert b_{\bm{\phi}}(\{\mathbf{z}_{t}\}_{t=0}^{T},t)-b_{\bm{\theta}}(\{\mathbf{z}_{t}\}_{t=0}^{T},t)\Vert^{2}\big]\diff t,
		\end{align*}
		where $b_{\bm{\phi}}(\{\mathbf{z}_{t}\}_{t=0}^{T},t)=\sum_{k=0}^{N-1}1_{[t_{k},t_{k+1})}(t)\{\mathbf{z}_{t_{k}}+2\log{p_{t}^{\bm{\phi}}(\mathbf{z}_{t_{k}})}\}$ and $b_{\bm{\theta}}(\{\mathbf{z}_{t}\}_{t=0}^{T},t)=\sum_{k=0}^{N-1}1_{[t_{k},t_{k+1})}(t)\{\mathbf{z}_{t_{k}}+2\mathbf{s}_{\bm{\theta}}(\mathbf{z}_{t_{k}},t_{k})\}$ are the drift terms of piecewise generative processes, given by
		\begin{align*}
		\diff\mathbf{z}_{t}=\Big[-\mathbf{z}_{t}-2\nabla\log{p_{t_{k}}^{\bm{\phi}}(\mathbf{z}_{t_{k}})}\Big]\diff\bar{t}+g(t)\diff\mathbf{\bar{w}}_{t}
		\end{align*}
		and
		\begin{align*}
		\diff\mathbf{z}_{t}=\Big[-\mathbf{z}_{t}-2\mathbf{s}_{\bm{\theta}}(\mathbf{z}_{t_{k}},t_{k})\Big]\diff\bar{t}+g(t)\diff\mathbf{\bar{w}}_{t}
		\end{align*}
		defined each of the interval $[t_{k},t_{k+1}]$ for $k=0,...,N-1$, respectively. Therefore, $E_{est}(\bm{\phi},\bm{\theta})$ is bounded by
		\begin{align*}
		E_{est}(\bm{\phi},\bm{\theta})&\le \frac{1}{2}\int_{0}^{T}\mathbb{E}\big[\Vert b_{\bm{\phi}}(\{\mathbf{z}_{t}\}_{t=0}^{T},t)-b_{\bm{\theta}}(\{\mathbf{z}_{t}\}_{t=0}^{T},t)\Vert^{2}\big]\diff t\\
		&= 2\sum_{k=0}^{N-1}\int_{t_{k}}^{t_{k+1}}\mathbb{E}\big[\Vert\nabla\log{p_{t_{k}}^{\bm{\phi}}(\mathbf{z}_{t_{k}})}-\mathbf{s}_{\bm{\theta}}(\mathbf{z}_{t_{k}},t_{k})\Vert^{2}\big]\diff t\\
		&\le 2TM^{2}.
		\end{align*}
		
		Now, from Lemma \ref{lemma:S11} and the invertibility of the flow transformation, we have
		\begin{align*}
		\Vert p_{r}-(\mathbf{h}_{\bm{\phi}}^{-1})_{\#}\circ p_{0,N}^{\bm{\theta}}\Vert_{TV}=\Vert (\mathbf{h}_{\bm{\phi}})_{\#}\circ p_{r}-p_{0,N}^{\bm{\theta}}\Vert_{TV}=\Vert p_{0}^{\bm{\phi}}-p_{0,N}^{\bm{\theta}}\Vert_{TV},
		\end{align*}
		which completes the proof.
	\end{proof}
	
	\begin{table*}[t]
		\caption{Performance comparison to linear/nonlinear diffusion models on CIFAR-10. We report both before/after correction of density estimation performances. We report the baseline performances of linear diffusions by training our PyTorch implementation based on \citet{song2020score, song2021maximum} with identical hyperparameters and networks on both linear/nonlinear diffusions in order to quantify the effect of nonlinearity in a fair setting. Boldface numbers represent the best performance in a column, and underlined numbers represent the second best.}
		\label{tab:performance_cifar10_full}
		\tiny
		\centering
		\begin{adjustbox}{max width=\textwidth}
			\begin{tabular}{c|lc@{\hskip 0.3cm}r|cccccccc}
				\toprule
				\multirow{3}{*}{SDE} & \multirow{3}{*}{Model} & \multirow{3}{*}{\shortstack{Nonlinear Data\\Diffusion}} & \multirow{3}{*}{$\#$ Params} & \multicolumn{2}{c|}{NLL ($\downarrow$)} & \multicolumn{2}{c|}{NELBO ($\downarrow$)} & \multicolumn{2}{c|}{Gap ($\downarrow$)} & \multicolumn{2}{c}{FID ($\downarrow$)} \\
				&&&& \multirow{2}{*}{\shortstack{after\\correction}} & \multicolumn{1}{c|}{\multirow{2}{*}{\shortstack{before\\correction}}} &\multicolumn{1}{c}{w/ residual} & \multicolumn{1}{c|}{w/o residual} &\multicolumn{2}{c|}{(=NELBO-NLL)}& \multirow{2}{*}{ODE} & \multirow{2}{*}{PC} \\
				&&&& & \multicolumn{1}{c|}{} & \multicolumn{1}{c}{(after)} & \multicolumn{1}{c|}{(before)} & after & \multicolumn{1}{c|}{before} & &\\\midrule
				\multirow{4}{*}[-2pt]{VE} & NCSN++ (FID) & \xmark & 63M & 4.86 & 3.66 & 4.89 & 4.45 & 0.03 & 0.79 & - & 2.38 \\
				& \cc{15}INDM (FID) & \cc{15}\cmark & \cc{15}76M & \cc{15}3.22 & \cc{15}3.13 & \cc{15}3.28 & \cc{15}3.24 & \cc{15}0.06 & \cc{15}0.11 & \cc{15}- & \cc{15}2.29 \\\cmidrule(lr){2-12}
				& NCSN++ (deep, FID) & \xmark & 108M & 4.85 & 3.45 & 4.86 & 4.43 & 0.01 & 0.98 & - & \textbf{2.20} \\
				& \cc{15}INDM (deep, FID) & \cc{15}\cmark & \cc{15}118M & \cc{15}3.13 & \cc{15}3.03 & \cc{15}3.14 & \cc{15}3.10 & \cc{15}0.01 & \cc{15}0.07 & \cc{15}- & \cc{15}\underline{2.28} \\\midrule
				\multirow{9}{*}[-7pt]{VP} & DDPM++ (FID) & \xmark & 62M & 3.21 & 3.16 & 3.34 & 3.32 & 0.13 & 0.16 & 3.90 & 2.89 \\
				& \cc{15}INDM (FID) & \cc{15}\cmark & \cc{15}75M & \cc{15}3.17 & \cc{15}3.11 & \cc{15}3.23 & \cc{15}3.18 & \cc{15}0.06 & \cc{15}0.07 & \cc{15}\textbf{3.61} & \cc{15}2.90 \\\cmidrule(lr){2-12}
				& DDPM++ (deep, FID) & \xmark & 108M & 3.19 & 3.13 & 3.32 & 3.29 & 0.13 & 0.16 & 3.69 & 2.64 \\
				& \cc{15}INDM (deep, FID) & \cc{15}\cmark & \cc{15}121M & \cc{15}3.09 & \cc{15}3.02 & \cc{15}3.13 & \cc{15}3.08 & \cc{15}0.04 & \cc{15}0.06 & \cc{15}\underline{3.67} & \cc{15}3.15 \\\cmidrule(lr){2-12}
				& DDPM++ (NLL) & \xmark & 62M & 3.03 & 2.97 & 3.13 & 3.11 & 0.10 & 0.14 & 6.70 & 5.17 \\
				& \cc{15}INDM (NLL) & \cc{15}\cmark & \cc{15}75M & \cc{15}\underline{2.98} & \cc{15}\underline{2.95} & \cc{15}\underline{2.98} & \cc{15}\underline{2.97} & \cc{15}\textbf{0.00} & \cc{15}\textbf{0.02} & \cc{15}6.01 & \cc{15}5.30 \\
				& \cc{15}INDM (NLL, ST) & \cc{15}\cmark & \cc{15}75M & \cc{15}3.01 & \cc{15}2.98 & \cc{15}3.02 & \cc{15}3.01 & \cc{15}0.01 & \cc{15}0.03 & \cc{15}3.88 & \cc{15}3.25 \\\cmidrule(lr){2-12}
				& DDPM++ (deep, NLL) & \xmark & 108M & 3.01 & 2.95 & 3.11 & 3.09 & 0.10 & 0.14 & 6.43 & 4.88 \\
				& \cc{15}INDM (deep, NLL) & \cc{15}\cmark & \cc{15}121M & \cc{15}\textbf{2.97} & \cc{15}\textbf{2.94} & \cc{15}\textbf{2.97} & \cc{15}\textbf{2.96} & \cc{15}\textbf{0.00} & \cc{15}\textbf{0.02} & \cc{15}5.71 & \cc{15}4.79 \\
				\bottomrule
			\end{tabular}
		\end{adjustbox}
	\end{table*}
	
	\begin{table*}[t]
		\caption{Performance comparison on CIFAR-10.}
		\label{tab:performance_cifar10_appendix}
		\tiny
		\centering
		\begin{adjustbox}{max width=\textwidth}
			\begin{tabular}{cccl|cccccccc}
				\toprule
				\multirow{3}{*}{Class} & \multirow{3}{*}{SDE} & \multirow{3}{*}{Type} & \multirow{3}{*}{Model} & \multicolumn{2}{c|}{NLL ($\downarrow$)} & \multicolumn{2}{c|}{NELBO ($\downarrow$)} & \multicolumn{2}{c|}{Gap ($\downarrow$)} & \multicolumn{2}{c}{FID ($\downarrow$)} \\
				&&&& \multirow{2}{*}{\shortstack{after\\correction}} & \multicolumn{1}{c|}{\multirow{2}{*}{\shortstack{before\\correction}}} &\multicolumn{1}{|c}{w/ residual} & \multicolumn{1}{c|}{w/o residual} &\multicolumn{2}{|c|}{(=NELBO-NLL)}& \multirow{2}{*}{ODE} & \multirow{2}{*}{PC} \\
				&&&& & & \multicolumn{1}{|c}{(after)} & \multicolumn{1}{c|}{(before)} & after & \multicolumn{1}{c|}{before} & &\\\midrule
				\multirow{4}{*}{GAN}  &  	&											& StyleGAN2 + ADA \cite{karras2020training}	 & - & - & - & - & - & - &  \multicolumn{2}{c}{2.92}  \\	 &  	&											& StyleFormer \cite{park2022styleformer}	 & - & - & - & - & - & - &  \multicolumn{2}{c}{2.82}  \\	
				&  	&											& SNGAN + DGflow \cite{ansari2020refining}	 & - & - & - & - & - & - &  \multicolumn{2}{c}{9.62} \\
				&  	&											& TransGAN \cite{jiang2021transgan}		 & - & - & - & - & - & - &  \multicolumn{2}{c}{9.26} \\ \midrule
				\multirow{3}{*}{Autoregressive}  &  	&									& PixcelCNN \cite{van2016pixel}	& 3.14 & - & - & - & - & - &  \multicolumn{2}{c}{65.9}  \\
				&  	&									& PixcelRNN \cite{van2016pixel} & 3.00 & - & - & - & - & - &  \multicolumn{2}{c}{-}  \\
				&  	&									& Sparse Transformer \cite{child2019generating}	& 2.80 & - & - & - & - & - &  \multicolumn{2}{c}{-}  \\ \midrule
				\multirow{6}{*}{Flow}  &  	&											& Glow \cite{kingma2018glow}	& 3.35 &  - & - & - & - & - &  \multicolumn{2}{c}{48.9}  \\
				&  	&											& Residual Flow \cite{chen2019residual}	& 3.28 & - & - & - & - & - &  \multicolumn{2}{c}{46.4}  \\
				&  	&											& Flow++ \cite{ho2019flow++} & 3.28 & - & - & - & - & - &  \multicolumn{2}{c}{46.4}  \\
				&  	&											& Wolf \cite{ma2020decoupling}	 & 3.27 & - & - & - & - & - &  \multicolumn{2}{c}{37.5}  \\
				&  	&											& VFlow \cite{chen2020vflow}	 & 2.98 & - & - & - & - & - &  \multicolumn{2}{c}{-}  \\
				&  	&											& DenseFlow-74-10 \cite{grcic2021densely}	 & 2.98 & - & - & - & - & - &  \multicolumn{2}{c}{34.9}  \\	 \midrule
				\multirow{5}{*}{VAE}  &  	&											& NVAE \cite{vahdat2020nvae}	 & - & - & 2.91  & - & - & - &  \multicolumn{2}{c}{23.5}  \\
				&  	&											& Very Deep VAE \cite{child2020very}	 & - & - & 2.87 & - & - & - &  \multicolumn{2}{c}{-}  \\
				&  	&											& $\delta$-VAE \cite{razavi2018preventing}	 & - & - & 2.83 & - & - & - &  \multicolumn{2}{c}{-}  \\
				&  	&											& DCVAE \cite{parmar2021dual}	 & - & - & - & - & - & - &  \multicolumn{2}{c}{17.9}  \\
				&  	&											& CR-NVAE \cite{sinha2021consistency} 	 & - & - & - & - & - & - &  \multicolumn{2}{c}{2.51}  \\	 \midrule
				\multirow{19}{*}[-1.5em]{Diffusion}  & \multirow{9}{*}{Linear} 	&						& DDPM \cite{ho2020denoising}	 & - & - & 3.75 & - & - & - &\multicolumn{2}{c}{3.17} \\	
				&						&											& NCSNv2 \cite{song2020improved}	& - & - & - & - & - & - &\multicolumn{2}{c}{10.87} \\	
				&						&											& DDIM \cite{song2020denoising}	 & - & - & - & - & - & - &\multicolumn{2}{c}{4.04} \\
				&						&											& IDDPM \cite{nichol2021improved}	 & 3.37 & - & - & - & - & - &\multicolumn{2}{c}{2.90} \\	
				&						&											& VDM \cite{kingma2021variational}	 & \textbf{2.65} & - & - & - & - & - &\multicolumn{2}{c}{7.41} \\	
				&						&											& NCSN++ (FID) \cite{song2020score}	 & 4.85 & 3.45 & 4.86 & 4.43 & 0.01 & 0.98 & - 		& \textbf{2.20} \\		
				& 						&											& DDPM++ (FID) \cite{song2020score}	 & 3.19 & 3.13 & 3.32 & 3.29 & 0.13 & 0.16 & 3.69 	& 2.64		 \\
				& 						&											& DDPM++ (NLL) \cite{song2021maximum}	 & 3.01 & 2.95 & 3.11 & 3.09 & 0.10 & 0.14 & 6.43 	& 4.88 		\\
				& 						&											& CLD-SGM \cite{dockhorn2021score}	 & - & - 	& - 	& 3.31 		& - & - & 2.25 	& - 		\\ \cmidrule(lr){2-12}
				& \multirow{10}{*}[-1.3em]{Nonlinear}	& SBP										& SB-FBSDE \cite{chen2021likelihood}	& - & 2.98 	& - 	& - 		& - & - & - 	& 3.18 		\\ \cmidrule(lr){3-12}
				& 						&\multirow{5}{*}{\shortstack[c]{VAE\\-based}}	& LSGM (FID) \cite{vahdat2021score} & - 	& - 	& 3.45 & 3.43	& - & - & \textbf{2.10} 	& -	\\
				& 						&											& LSGM (NLL)-269M 					 & - 	& - 	& - & 2.97	& - & - & 6.15 	&  -		\\
				&						&											& LSGM (NLL) 						 & - 	& - 	&\textbf{ 2.87} & \textbf{2.87}	& - & - & 6.89 	&  -		\\
				&						&											& LSGM (balanced)-109M					 & - 	& - 	& - & 2.96	& - & - & 4.60 	&  -		\\
				&						&											& LSGM (balanced)					 & - 	& - 	& 2.98 & 2.95	& - & - & 2.17 	&  -		\\ \cmidrule(lr){3-12}
				&						&\multirow{4}{*}[-0.45em]{\shortstack[c]{Flow\\-based}}	& DiffFlow (FID) \cite{zhang2021diffusion} & - 	& - 	& 3.04 & -	& - & - & - 	& 14.14 		\\ \cmidrule(lr){4-12}
				&						&											& \cc{15}INDM (FID) 					 &  \cc{15}3.13 & \cc{15}3.03 & \cc{15}3.14 & \cc{15}3.10 & \cc{15}0.01 & \cc{15}0.07 & \cc{15}- & \cc{15}2.28 		\\ 
				&						&											& \cc{15}INDM (NLL) 						 & \cc{15}2.97 & \cc{15}\textbf{2.94} & \cc{15}2.97 & \cc{15}2.96 & \cc{15}\textbf{0.00} & \cc{15}\textbf{0.02} & \cc{15}5.71 & \cc{15}4.79 \\ 
				&						&											& \cc{15}INDM (ST)					  & \cc{15}3.01 & \cc{15}2.98 & \cc{15}3.02 & \cc{15}3.01 & \cc{15}0.01 & \cc{15}0.03 & \cc{15}3.88 & \cc{15}3.25 \\
				\bottomrule
			\end{tabular}
		\end{adjustbox}
	\end{table*}

	\begin{table*}[t]
		\caption{Performance comparison on CelebA $64\times 64$.}
		\label{tab:performance_celeba_appendix}
		\scriptsize
		\centering
		\begin{tabular}{lc@{\hskip 0.3cm}c@{\hskip 0.3cm}c@{\hskip 0.2cm}c@{\hskip 0.3cm}c@{\hskip 0.3cm}c@{\hskip 0.3cm}c@{\hskip 0.3cm}c@{\hskip 0.3cm}c}
			\toprule
			\multirow{2}{*}{Model} & \multicolumn{2}{c}{NLL ($\downarrow$)} & \multicolumn{2}{c@{\hskip 0.5cm}}{NELBO ($\downarrow$)} & \multicolumn{2}{c@{\hskip 0.5cm}}{Gap ($\downarrow$)} & \multicolumn{2}{c@{\hskip 0.4cm}}{FID ($\downarrow$)} \\
			& after & before & w/ res- & w/o res- & after & before & ODE & PC \\\midrule
			
			UNCSN++ \cite{kim2022soft} & - & \textbf{1.93} & - & - & - & - & - & \textbf{1.92} \\
			DDGM \cite{nachmani2021non} & - & - & - & - & - & - & - & 2.92 \\
			Efficient-VDVAE \cite{hazami2022efficient} & - & \textbf{1.83} & - & - \\
			CR-NVAE \cite{sinha2021consistency} & - & - & 1.86 & - & - & - & - & - \\
			DenseFlow-74-10 \cite{grcic2021densely} & 1.99 & - & - & - & - & - & - & - \\
			StyleFormer \cite{park2022styleformer}  & - & - & - & - & - & - & \multicolumn{2}{c}{3.66}\\ \midrule			
			NCSN++ (VE) & 3.41 & 2.37 & 3.42 & 3.96 & 0.01 & 1.59 & - & 3.95 \\
			\cc{15}INDM (VE, FID) & \cc{15}2.31 & \cc{15}1.95 & \cc{15}2.33 & \cc{15}2.17 & \cc{15}0.02 & \cc{15}0.22 & \cc{15}- & \cc{15}2.54\\\midrule
			DDPM++ (VP, FID) & 2.14 & 2.07 & 2.21 & 2.22 & 0.06 & 0.14 & 2.32 & 3.03 \\
			\cc{15}INDM (VP, FID) & \cc{15}2.27 & \cc{15}2.13 & \cc{15}2.31 & \cc{15}2.20 & \cc{15}0.04 & \cc{15}0.07 & \cc{15}\textbf{1.75} & \cc{15}2.32 \\\cmidrule(lr){2-9}
			DDPM++ (VP, NLL) & 2.00 & 1.93 & 2.09 & 2.09 & 0.09 & 0.16 & 3.95 & 5.31 \\
			\cc{15}INDM (VP, NLL) & \cc{15}2.05 & \cc{15}1.97 & \cc{15}2.05 & \cc{15}2.00 & \cc{15}0.00 & \cc{15}0.03 & \cc{15}3.06 & \cc{15}5.14 \\
			\bottomrule
		\end{tabular}
	\end{table*}
	
	\begin{figure}
		\centering
		\includegraphics[width=\linewidth]{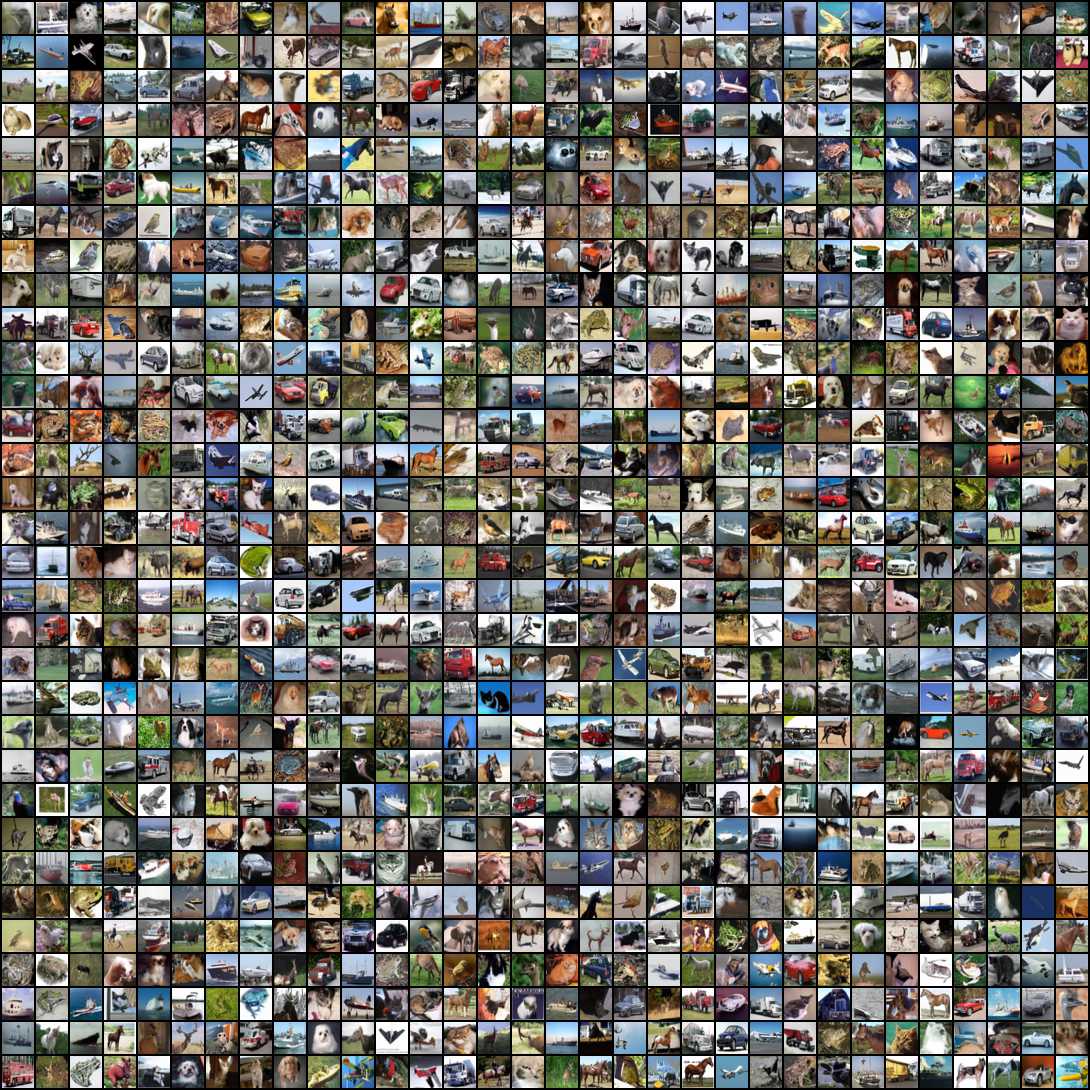}
		\caption{Non cherry-picked random samples from CIFAR-10 trained on INDM (VE, deep, FID).}
		\label{fig:CIFAR10_VE_FID_samples_tau_1.05}
	\end{figure}
	
	\begin{figure}
		\centering
		\includegraphics[width=\linewidth]{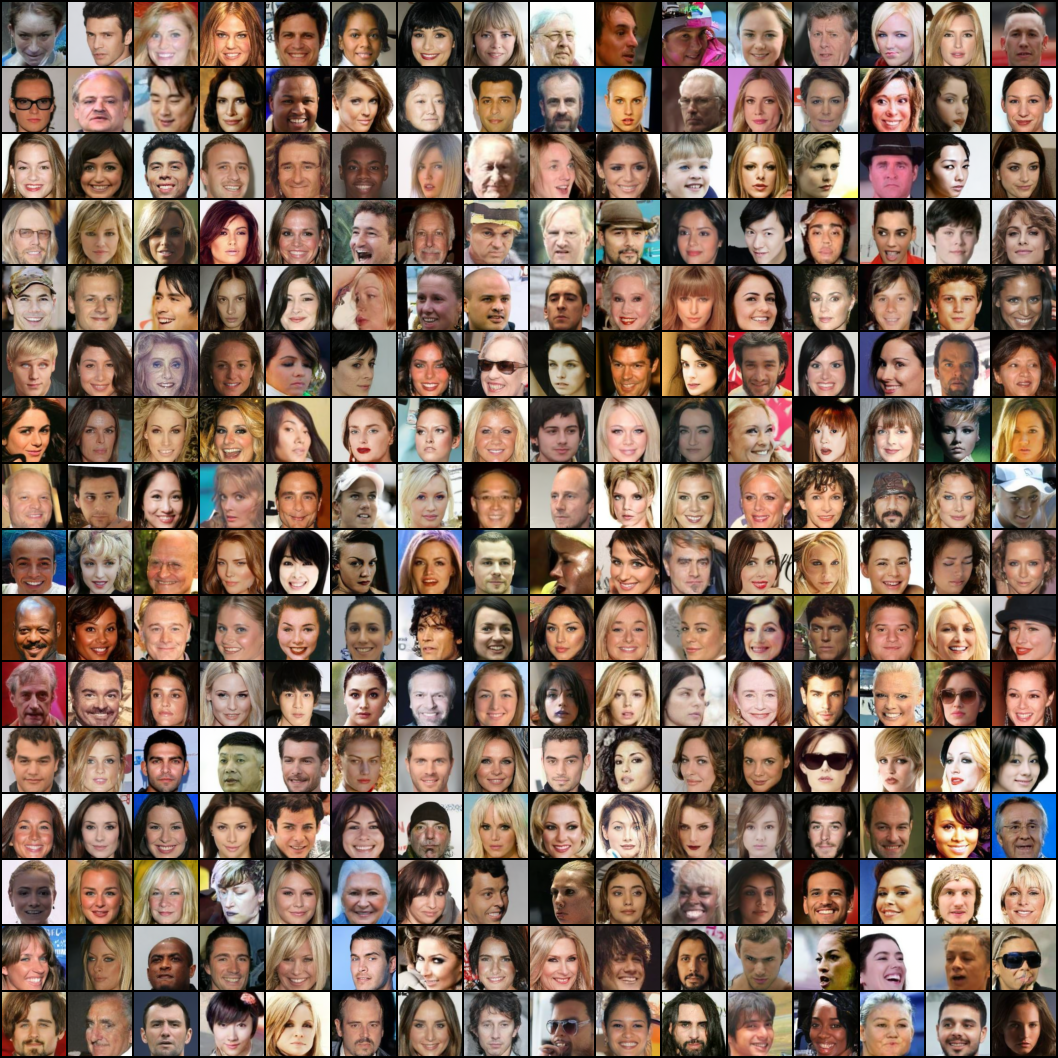}
		\caption{Non cherry-picked random samples fr om CelebA trained on INDM (VP, FID).}
		\label{fig:CelebA_VP_FID_samples_tau_1.11}
	\end{figure}
	

\end{document}